\documentclass{article}

\usepackage{arxiv}
\usepackage{amsmath}
\usepackage[utf8]{inputenc}
\usepackage[T1]{fontenc}
\usepackage{url}
\usepackage{booktabs}
\usepackage{amsfonts}
\usepackage{nicefrac}
\usepackage{microtype}
\usepackage{graphicx}
\usepackage{natbib}
\setcitestyle{authoryear,round,citesep={;},aysep={,},yysep={;}}
\usepackage{amssymb}
\usepackage{amsthm}

\usepackage{bm}
\usepackage{array}
\usepackage{longtable}
\usepackage{float}
\usepackage{xcolor}
\usepackage{hyperref}

\newtheoremstyle{paperstatementstyle}
  {\medskipamount}{\medskipamount}{\itshape}{}{\bfseries}{.}{0.5em}
  {\thmname{#1}\thmnumber{ #2}\thmnote{\normalfont\space(#3)}}
\theoremstyle{paperstatementstyle}
\newtheorem{theorem}{Theorem}
\newtheorem{definition}{Definition}
\newtheorem{lemma}{Lemma}
\newtheorem{corollary}{Corollary}
\newtheorem{proposition}{Proposition}
\newtheorem{assumption}{Assumption}
\newtheorem{condition}{Condition}
\newcommand{\restatedtheoremlabel}{}
\newtheorem*{theoremrestatementinner}{Theorem~\ref{\restatedtheoremlabel}}
\newenvironment{theoremrestatement}[2][]{\def\restatedtheoremlabel{#2}\begin{theoremrestatementinner}[#1]}{\end{theoremrestatementinner}}
\newcommand{\restatedcorollarylabel}{}
\newtheorem*{corollaryrestatementinner}{Corollary~\ref{\restatedcorollarylabel}}
\newenvironment{corollaryrestatement}[2][]{\def\restatedcorollarylabel{#2}\begin{corollaryrestatementinner}[#1]}{\end{corollaryrestatementinner}}
\newcommand{\restatedpropositionlabel}{}
\newtheorem*{propositionrestatementinner}{Proposition~\ref{\restatedpropositionlabel}}

\newtheoremstyle{paperproseformalstyle}
  {\medskipamount}{\medskipamount}{\normalfont}{}{\bfseries}{.}{0.5em}
  {\thmname{#1}\thmnumber{ #2}\thmnote{\normalfont\space(#3)}}
\theoremstyle{paperproseformalstyle}
\newtheorem{remark}{Remark}
\newtheorem{procedure}{Procedure}
\theoremstyle{paperstatementstyle}
\newtheorem{paperrule}{Rule}
\newenvironment{resultparts}[1][roman]{\begin{enumerate}\renewcommand{\theenumi}{\csname #1\endcsname{enumi}}}{\end{enumerate}}
\title{A Theory of Reliable Self-Evolution for Agent Harnesses}
\date{}

\renewcommand{\thefootnote}{$\dagger$}
\author{Qianshu Cai\textsuperscript{1,2}\enspace
  Yonggang Zhang\textsuperscript{2}\thanks{Corresponding authors.}\enspace
  Jun Nie\textsuperscript{1,3}\enspace
  Maohao Ran\textsuperscript{3} \\
  \bfseries\rule{0pt}{24pt}Huajiang Zheng\textsuperscript{2}\enspace
  Jun Song\textsuperscript{3}\enspace
  Xinmei Tian\textsuperscript{1}\enspace
  Yike Guo\textsuperscript{2}\enspace
  Wei Xue\textsuperscript{2}\footnotemark[1] \\[6pt]
  \textsuperscript{1}University of Science and Technology of China \\
  \textsuperscript{2}Hong Kong Generative AI Research \& Development Center; \\
  The Hong Kong University of Science and Technology \\
  \textsuperscript{3}Hong Kong Baptist University
}

\hypersetup{
  hidelinks,
  pdftitle={A Theory of Reliable Self-Evolution for Agent Harnesses},
  pdfsubject={cs.LG},
  pdfkeywords={Harness self-evolution, reachability, modification generation, validation, measurement, reliable adoption, user-task distribution},
}
\hypersetup{pdfauthor={Qianshu Cai, Yonggang Zhang, Jun Nie, Maohao Ran, Huajiang Zheng, Jun Song, Xinmei Tian, Yike Guo, Wei Xue}}

\begin{document}
\addtocontents{toc}{\protect\setcounter{tocdepth}{-1}}
\maketitle

\renewcommand{\thefootnote}{\arabic{footnote}}
\setcounter{footnote}{0}

\begin{abstract}
In harness self-evolution, agents modify their own prompts, code, tools, and orchestration while keeping the underlying language model fixed. Recent work has shown that agents can improve themselves in response to task failures and achieve substantial performance gains. However, gains on failed tasks do not automatically ensure that performance on previously successful tasks is preserved, raising concerns about reliable adoption.
In this work, we provide a theoretically grounded condition under which a self-evolved harness can be reliably adopted.
We then propose a validation rule to make the evolved system satisfy the reliable adoption condition with theoretical guarantees.
Consequently, the system can achieve progressive improvement through evolution.
This leads to a natural question: Does reliable self-evolution have a performance ceiling, and which factors govern this ceiling? Our theoretical results show that the performance ceiling is determined by the costs of verification and evaluation.
On the other hand, self-evolution may stall in practice. In this regard, we show that experimental evidence on agent performance shifts can be used to identify the sources of stagnation.
Our work thus establishes a theoretical framework for understanding and advancing reliable harness self-evolution.
\end{abstract}

\section{Introduction}
\label{sec:introduction}

Harness self-evolution allows an agent to modify its prompts, code, tools, and orchestration in response to task feedback while keeping the underlying language model fixed. These modifications persist across subsequent tasks. G{\"o}del Agent \citep{yin2024godel} and SICA \citep{robeyns2025sica} demonstrated that agents can autonomously rewrite their own logic and code to improve task performance. DGM \citep{zhang2025dgm} uses evidence from failed task executions to improve its tools and workflows, achieving substantial performance gains. HGM \citep{wang2025hgm} advances search over successive agent modifications, obtaining further gains in coding performance. These advances motivate applying harness self-evolution in production systems. MOSS \citep{cai2026moss} brings source-level self-evolution to production-grade agent frameworks such as OpenClaw \citep{steinberger2026openclaw}, testing modifications against failures encountered during use. It requires user approval before deployment and supports rollback if the updated system fails health checks. Even with these safeguards, a concern about reliable adoption remains: the adopted agent may improve on failed tasks while performing worse on previously successful tasks. Recent systems also consider performance beyond the failed tasks that motivate a modification. HarnessX \citep{chen2026harnessx} includes checks for observed regressions on previously solved tasks. Self-Harness \citep{zhang2026selfharness} requires aggregate performance to improve on at least one of its development and held-out task sets without declining on either. We emphasize that a self-evolved harness can be reliably adopted when it improves expected reward over the full user-task distribution. We formulate a sufficient condition for this improvement in terms of gains on failed tasks and controlled changes in the expected rewards on previously successful tasks.

\begin{figure}[t]
\centering
\includegraphics[width=\textwidth]{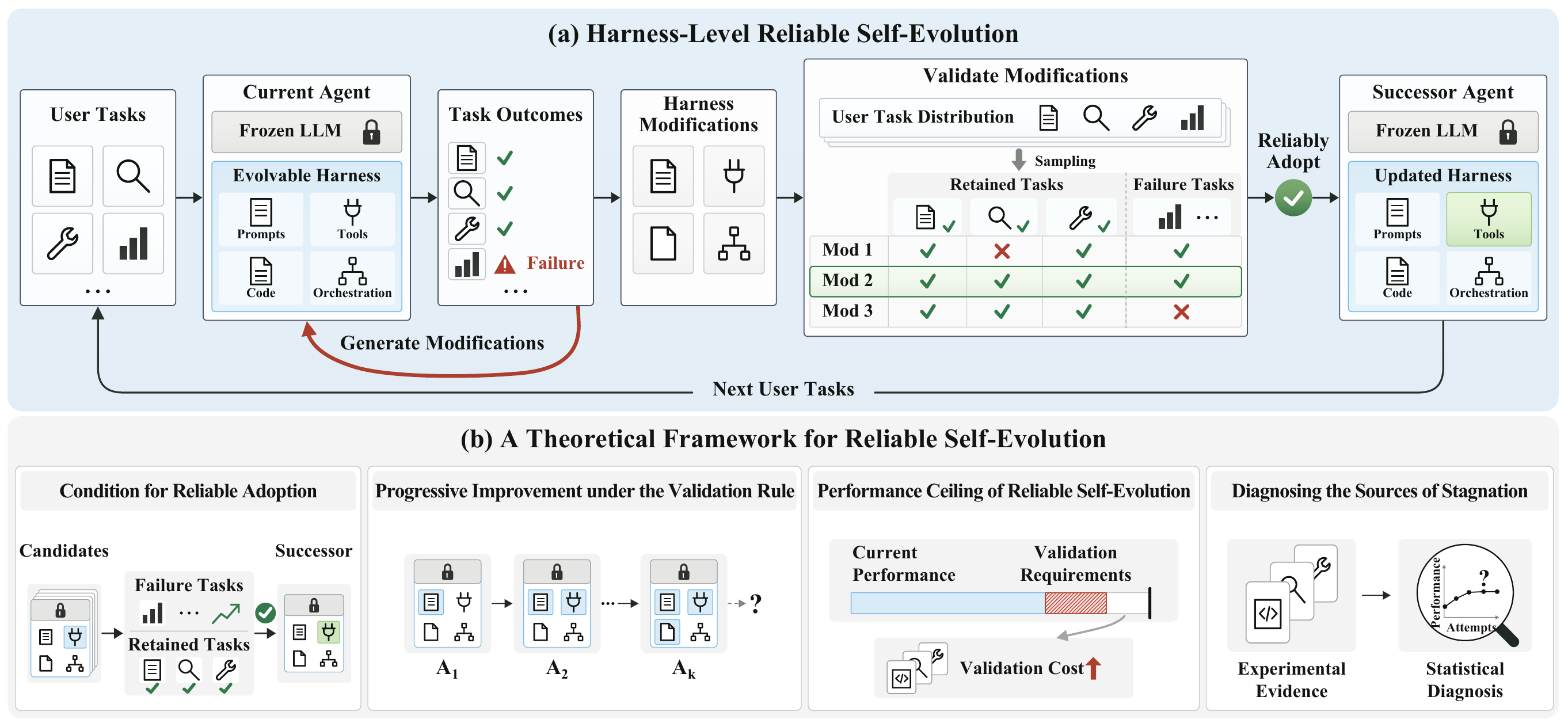}
\caption{\textbf{Harness-level reliable self-evolution and our theoretical framework.} \textbf{(a)} The harness self-evolution process (Section~\ref{sec:formulation}). \textbf{(b)} A condition for reliable adoption (Section~\ref{sec:qualified-modifications}), progressive improvement under the validation rule (Section~\ref{sec:certification}), the performance ceiling (Section~\ref{sec:limits}), and diagnosis of stagnation from experimental evidence (Section~\ref{sec:diagnosis}).}
\label{fig:harness-self-evolution}
\end{figure}

In the harness self-evolution process we study (Figure~\ref{fig:harness-self-evolution}(a)), the current agent generates harness modifications from task failures to produce a set of candidate agents. These candidates are evaluated on a validation set using validation rules. A candidate meeting the validation requirements is adopted as the updated agent and continues to handle user tasks.

In this work, we establish a theoretical framework for understanding and advancing reliable harness self-evolution (Figure~\ref{fig:harness-self-evolution}(b)). We first provide a theoretically grounded condition under which a self-evolved harness can be reliably adopted. We analyze how much improvement on failed tasks is sufficient, with controlled changes in performance on previously successful tasks, to improve expected reward over the full user-task distribution and thereby enable reliable adoption. We then propose a validation rule with theoretical guarantees for reliable adoption under this condition. We also bound the probability of achieving a reliable update through candidate generation and validation. The gains guaranteed for adopted updates accumulate across evolution, and we also bound the average magnitude of task-performance changes across the adopted sequence.

This raises a natural question: does reliable self-evolution have a performance ceiling, and what determines it? We identify a performance ceiling beyond which the proposed validation rule cannot reliably validate further improvement. However, further improvement may still be possible even when this ceiling is reached. We therefore propose an alternative validation rule that uses evidence about each candidate's losses on previously successful tasks and can reliably validate improvement beyond the original ceiling. These results show that the ceiling depends in part on how improvement is verified. We also show that, as expected reward approaches its upper bound, reliably validating smaller gains can require increasing evaluation cost. When reliable self-evolution stagnates in practice, we show how experimental evidence on candidate performance shifts relative to the current agent can help identify the sources of stagnation. We provide theoretical guarantees for estimating reachability---the probability of generating a modification that satisfies the improvement and retention requirements---from additional experimental data. Comparing this evidence with validation and adoption records helps distinguish a low probability of generating qualified modifications from insufficient validation evidence for adopting these modifications.

\section{Reliable Adoption of Self-Evolved Harnesses}\label{sec:safe-self-evolution}

\subsection{Harness Self-Evolution Process}\label{sec:formulation}

We study the following harness self-evolution process (Figure~\ref{fig:harness-self-evolution}(a)). Let \(\mathcal T_{\mathrm{user}}\) and \(\mathcal O\) be the user-task and output spaces, and \(\mathcal D_{\mathrm{user}}\) the user-task distribution. At step \(k\), the current agent \(A_k=(M,C_k)\) has a fixed language model \(M\) and an evolvable harness \(C_k\) of prompts, code, tools, and orchestration. On task \(t\sim\mathcal D_{\mathrm{user}}\), it produces \(o\sim A_k(\cdot\mid t)\). From observed failures and interaction history, the agent generates harness modifications. Let \(\mathcal M\) be the space of valid modifications; each generated \(c\in\mathcal M\) yields a candidate agent \(\widetilde A=A_k\oplus c\). The candidate agents are evaluated on a validation set; a validation rule identifies candidates with sufficient evidence for reliable adoption. Selecting a validated \(c\) gives \(A_{k+1}=A_k\oplus c\); otherwise, \(A_{k+1}=A_k\). The successor handles subsequent tasks and modification generation; Appendices~\ref{app:formal-setting} and~\ref{app:reachability-scope} formalize the assumptions and procedure.

For an agent \(A\) and fixed task reward \(r(t,o)\in[0,1]\), define \(V(A,t):=\mathbb E_{o\sim A(\cdot\mid t)}[r(t,o)]\) and \(J_{\mathrm{user}}(A):=\mathbb E_{t\sim\mathcal D_{\mathrm{user}}}[V(A,t)]\) as its expected rewards on task \(t\) and across user tasks, respectively. Reliable adoption requires \(J_{\mathrm{user}}(A_k\oplus c)>J_{\mathrm{user}}(A_k)\).

\subsection{A Sufficient Condition for Reliable Adoption}\label{sec:qualified-modifications}

To obtain a sufficient condition for reliable adoption, we decompose the change in overall expected reward into contributions from failed and previously successful tasks. Let the binary detector \(\phi(t,o)\in\{0,1\}\) identify a failed output (\(\phi=1\)), while \(r(t,o)\) scores task performance (Definition~\ref{def:failure-detector}, Appendix~\ref{app:failure-weighting}). For the current agent, define the task-level failure probability \(\psi_{A_k}(t):=\mathbb E_{o\sim A_k(\cdot\mid t)}[\phi(t,o)]\) and overall failure rate \(Z_k:=\mathbb E_{t\sim\mathcal D_{\mathrm{user}}}[\psi_{A_k}(t)]\). For \(0<Z_k<1\), the induced failure-task and retained-task distributions are
\begin{equation}
\mathcal D_{F,k}(dt)
:=\frac{\psi_{A_k}(t)}{Z_k}\,\mathcal D_{\mathrm{user}}(dt),
\qquad
\mathcal D_{R,k}(dt)
:=\frac{1-\psi_{A_k}(t)}{1-Z_k}\,\mathcal D_{\mathrm{user}}(dt).
\label{eq:formula-3-7}
\end{equation}
Because outputs can vary across runs, the distributions can overlap on the same task.

For candidate \(\widetilde A\), define the per-task expected-reward difference \(\mathrm{Adv}_{A_k}(\widetilde A,t):=V(\widetilde A,t)-V(A_k,t)\) and its failure-task contribution \(L_{A_k}(\widetilde A):=Z_k\mathbb E_{t\sim\mathcal D_{F,k}}[\mathrm{Adv}_{A_k}(\widetilde A,t)]\). By Lemma~\ref{lem:failure-decomposition} in Appendix~\ref{app:failure-weighting}, the overall expected-reward difference decomposes as
\begin{equation}
J_{\mathrm{user}}(\widetilde A)-J_{\mathrm{user}}(A_k)
=L_{A_k}(\widetilde A)
+(1-Z_k)\mathbb E_{t\sim\mathcal D_{R,k}}[\mathrm{Adv}_{A_k}(\widetilde A,t)].
\label{eq:formula-3-8}
\end{equation}
For retained tasks, let \(D_R(\widetilde A;A_k):=\mathbb E_{t\sim\mathcal D_{R,k}}|\mathrm{Adv}_{A_k}(\widetilde A,t)|\) be the mean absolute expected-reward change. For \(c\in\mathcal M\), write \(L_k(c):=L_{A_k}(A_k\oplus c)\) and \(D_k(c):=D_R(A_k\oplus c;A_k)\).

\begin{definition}[Qualified modification]
\label{def:qualified-modification}
For \(T=(\lambda,\delta)\), where \(\lambda>0\) sets the minimum failure-task contribution and \(\delta\ge0\) the maximum retained-task change, define the set of modifications qualified at step \(k\) by \(\mathcal P_k(T):=\{c\in\mathcal M:L_k(c)\ge\lambda,\ D_k(c)\le\delta\}\).
\end{definition}

\begin{theorem}[Gain bound for reliable adoption]
\label{thm:expected-reward-improvement}
Under Assumptions~\ref{ass:bounded-reward} and~\ref{ass:agent-independent-tasks}, every candidate agent \(\widetilde A\) with \(D_R(\widetilde A;A_k)\le\delta\) satisfies the expected-reward gain bound
\begin{equation}
J_{\mathrm{user}}(\widetilde A)-J_{\mathrm{user}}(A_k)
\ge L_{A_k}(\widetilde A)-(1-Z_k)\delta.
\label{eq:formula-3-18}
\end{equation}
\end{theorem}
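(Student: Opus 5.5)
The plan is to start from the decomposition \eqref{eq:formula-3-8}, which Lemma~\ref{lem:failure-decomposition} supplies under the standing assumptions. Assumption~\ref{ass:bounded-reward} keeps all expectations finite. Assumption~\ref{ass:agent-independent-tasks} ensures that the task distribution \(\mathcal D_{\mathrm{user}}\) does not depend on which agent is deployed. Together they let us write \(J_{\mathrm{user}}(\widetilde A)-J_{\mathrm{user}}(A_k)=\mathbb E_{t\sim\mathcal D_{\mathrm{user}}}[\mathrm{Adv}_{A_k}(\widetilde A,t)]\). If Lemma~\ref{lem:failure-decomposition} were not available, I would rederive the decomposition by writing \(1=\psi_{A_k}(t)+(1-\psi_{A_k}(t))\) inside this expectation. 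Substituting the definitions of \(\mathcal D_{F,k}\) and \(\mathcal D_{R,k}\) from \eqref{eq:formula-3-7} then gives the weights \(Z_k\) and \(1-Z_k\).

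With \eqref{eq:formula-3-8} in hand, the failure-task term is exactly \(L_{A_k}(\widetilde A)\) by definition. Only the retained-task term \((1-Z_k)\mathbb E_{t\sim\mathcal D_{R,k}}[\mathrm{Adv}_{A_k}(\widetilde A,t)]\) needs a lower bound. I will use the pointwise inequality \(\mathrm{Adv}\ge-|\mathrm{Adv}|\) and take expectations under \(\mathcal D_{R,k}\). This is legitimate because \(|\mathrm{Adv}|\le1\) by bounded rewards, so the expectations exist. The result is \(\mathbb E_{\mathcal D_{R,k}}[\mathrm{Adv}]\ge-D_R(\widetilde A;A_k)\ge-\delta\). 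Since \(1-Z_k>0\), multiplying by \(1-Z_k\) preserves the inequality, and adding \(L_{A_k}(\widetilde A)\) yields \eqref{eq:formula-3-18}.

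The only delicate point is the well-definedness of \(\mathcal D_{F,k}\) and \(\mathcal D_{R,k}\), which requires \(0<Z_k<1\). The statement is made in that regime, but I would handle the boundary cases briefly for completeness. If \(Z_k=0\), we define \(L_{A_k}=0\) and the retained distribution is \(\mathcal D_{\mathrm{user}}\) itself, so the same inequality holds. If \(Z_k=1\), the retained-task term vanishes because it carries weight \(1-Z_k=0\), and the bound reduces to the decomposition. Beyond these cases there is no real obstacle: the argument is the decomposition followed by one triangle-type inequality.
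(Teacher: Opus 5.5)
Your proposal is correct and follows essentially the paper's argument. The paper also starts from the decomposition \(J_{\mathrm{user}}(\widetilde A)-J_{\mathrm{user}}(A_k)=L_{A_k}(\widetilde A)+E_{A_k}(\widetilde A)\) and bounds the retained-task term by \(|E_{A_k}(\widetilde A)|\le(1-Z_k)D_R(\widetilde A;A_k)\le(1-Z_k)\delta\) via Jensen, which is the two-sided version of your pointwise step \(\mathrm{Adv}\ge-|\mathrm{Adv}|\). Your boundary-case remarks are harmless extras, though at \(Z_k=1\) the quantity \(D_R\) is not defined, so that case lies outside the statement.
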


\begin{corollary}[Sufficient condition for reliable adoption]
\label{cor:qualified-safe-improvement}
Under Assumptions~\ref{ass:bounded-reward} and~\ref{ass:agent-independent-tasks}, if \(T=(\lambda,\delta)\) satisfies \(\lambda>(1-Z_k)\delta\), then every \(c\in\mathcal P_k(T)\) satisfies
\[
J_{\mathrm{user}}(A_k\oplus c)-J_{\mathrm{user}}(A_k)
\ge\lambda-(1-Z_k)\delta>0.
\]
For \(\gamma>0\), setting \(\lambda=\lambda_k(\gamma,\delta):=\gamma+(1-Z_k)\delta\) ensures gain at least \(\gamma\) for every \(c\in\mathcal P_k(T)\).
\end{corollary}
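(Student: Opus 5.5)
The corollary is a direct specialization of Theorem~\ref{thm:expected-reward-improvement}, and I would prove it by plugging the defining inequalities of \(\mathcal P_k(T)\) into the gain bound \eqref{eq:formula-3-18}.

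Fix \(c\in\mathcal P_k(T)\) and set \(\widetilde A=A_k\oplus c\). By Definition~\ref{def:qualified-modification} we have \(L_k(c)\ge\lambda\) and \(D_k(c)\le\delta\). By the shorthand introduced before the definition, these read \(L_{A_k}(\widetilde A)\ge\lambda\) and \(D_R(\widetilde A;A_k)\le\delta\). The second inequality is exactly the hypothesis of Theorem~\ref{thm:expected-reward-improvement}. That theorem applies under Assumptions~\ref{ass:bounded-reward} and~\ref{ass:agent-independent-tasks} and the standing condition \(0<Z_k<1\) under which \(\mathcal D_{F,k},\mathcal D_{R,k}\) are defined. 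It yields
\[
J_{\mathrm{user}}(\widetilde A)-J_{\mathrm{user}}(A_k)\ge L_{A_k}(\widetilde A)-(1-Z_k)\delta\ge\lambda-(1-Z_k)\delta .
\]
The last quantity is strictly positive by the assumption \(\lambda>(1-Z_k)\delta\), which gives the first claim.

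For the second claim, take \(\lambda=\lambda_k(\gamma,\delta)=\gamma+(1-Z_k)\delta\) with \(\gamma>0\). Then \(\lambda>(1-Z_k)\delta\), so the first part applies, and \(\lambda>0\) holds as the definition of \(T\) requires. Since \(\lambda-(1-Z_k)\delta=\gamma\), the gain is at least \(\gamma\) for every \(c\in\mathcal P_k(T)\).

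There is no real obstacle. The only point needing care is to check that the notational identifications \(L_k(c)=L_{A_k}(A_k\oplus c)\) and \(D_k(c)=D_R(A_k\oplus c;A_k)\) let the theorem be invoked verbatim. One must also note that the theorem's hypotheses, including \(0<Z_k<1\), are inherited unchanged.
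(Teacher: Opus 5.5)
Your proposal is correct and follows the paper's proof: it substitutes $L_k(c)\ge\lambda$ and $D_k(c)\le\delta$ from Definition~\ref{def:qualified-modification} into the gain bound of Theorem~\ref{thm:expected-reward-improvement}, then specializes to $\lambda=\gamma+(1-Z_k)\delta$. Your extra checks, that $\lambda_k(\gamma,\delta)>0$ and that the standing condition $0<Z_k<1$ carries over, are sound details the paper leaves implicit.
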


The condition permits changes on retained tasks while ensuring that failure-task gains outweigh their possible losses. Appendix~\ref{app:proofs-decomposition} proves Theorem~\ref{thm:expected-reward-improvement} and Corollary~\ref{cor:qualified-safe-improvement}. Alternative retained-task conditions are in Appendices~\ref{app:tail-retention}--\ref{app:section-G-13} and~\ref{app:one-sided-deviations}. Section~\ref{sec:certification} develops a validation rule for this condition.

\section{Achieving Harness-Level Reliable Self-Evolution}
\label{sec:certification}

\subsection{A Validation Rule for Reliable Adoption}
\label{sec:task-stream-updates}

Assessing the reliable-adoption condition in Section~\ref{sec:qualified-modifications} requires evaluating the candidate's expected-reward improvement under the user-task distribution \(\mathcal D_{\mathrm{user}}\). In the harness self-evolution process of Section~\ref{sec:formulation}, we form a validation set from sampled user tasks and apply a validation rule to decide which generated modifications to adopt. We first give a rule with a high-probability guarantee for reliable adoption. We then bound the probability of achieving an improving update by generating a candidate pool and then validating its modifications.

For a generated modification \(c\in\mathcal M\), write its candidate agent as \(\widetilde A=A_k\oplus c\). Let \(\widehat L_k,\widehat D_R,\widehat Z_k\) be validation-set estimates of the failure-task contribution \(L_k(c)\), mean absolute retained-task change \(D_k(c)\), and current failure rate \(Z_k\), with error bounds \(\varepsilon_L,\varepsilon_D,\varepsilon_Z\). Fix thresholds \(\tau,\delta>0\) for the estimated failure-task contribution and retained-task change. Appendix~\ref{app:proofs-evaluation} gives the estimators and sampling conditions and derives the corresponding error bounds.

\begin{paperrule}[Two-Gate validation]
\label{alg:two-gate}
With a valid candidate and the required data, validate \(c\) if and only if
\begin{align}
\widehat L_k&\ge\tau,
\qquad \widehat D_R\le\delta,
\label{eq:formula-4-23}\\
\Delta_k&:=\tau-\varepsilon_L
-(1-\widehat Z_k+\varepsilon_Z)(\delta+\varepsilon_D)>0.
\label{eq:formula-4-28}
\end{align}
\end{paperrule}
The rule validates a modification only when the estimates support a sufficient failure-task contribution, controlled retained-task change, and a positive lower bound on overall expected-reward gain. In the one-candidate update, a validated \(c\) is adopted according to Section~\ref{sec:formulation}; Procedure~\ref{alg:finite-horizon-update} in Appendix~\ref{app:proofs-evaluation} specifies the validation procedure and the resulting agent update.

Let \(\mathsf{Acc}_k\) denote the event that the candidate is adopted, and \(I_k:=J_{\mathrm{user}}(A_{k+1})-J_{\mathrm{user}}(A_k)\) its true gain. Set \(\Delta_k=0\) when no valid candidate or required validation data are available, as in Definition~\ref{def:evolution-step} of Appendix~\ref{app:proofs-evaluation}. Let \(\beta_{\mathrm{step},k}\) denote the stepwise estimation failure bound (Eq.~\eqref{eq:step-risk}, Appendix~\ref{app:section-G-5}).
\begin{theorem}[Reliable adoption under Two-Gate validation]
\label{thm:one-step-guarantee}
Suppose Assumptions~\ref{ass:bounded-reward} and~\ref{ass:agent-independent-tasks} (Appendix~\ref{app:formal-setting}) and the evaluation conditions of Lemma~\ref{lem:improvement-estimation} (Appendix~\ref{app:proofs-evaluation}) and Lemma~\ref{lem:retained-change-estimation} (Appendix~\ref{app:section-G-5}) hold. Apply Rule~\ref{alg:two-gate} through the update in Procedure~\ref{alg:finite-horizon-update}. Then
\[
\mathbb P\bigl(\mathsf{Acc}_k\cap\{I_k<\Delta_k\}\bigr)
\le\beta_{\mathrm{step},k}.
\]
In particular, the positive-gain requirement gives \(\mathbb P(\mathsf{Acc}_k\cap\{I_k\le0\})\le\beta_{\mathrm{step},k}\).
\end{theorem}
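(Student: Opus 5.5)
The argument intersects the acceptance event with a good estimation event on which all three validation estimates are accurate, and then applies Theorem~\ref{thm:expected-reward-improvement}. Let \(G_k\) be the event that the error bounds of Lemma~\ref{lem:improvement-estimation} and Lemma~\ref{lem:retained-change-estimation} hold simultaneously for the (possibly random) candidate \(c\) examined at step \(k\):
\[
|\widehat L_k-L_k(c)|\le\varepsilon_L,\qquad |\widehat D_R-D_k(c)|\le\varepsilon_D,\qquad |\widehat Z_k-Z_k|\le\varepsilon_Z .
\]
By the definition of \(\beta_{\mathrm{step},k}\) in Eq.~\eqref{eq:step-risk}, which I take to be a union bound over these three failure events, we have \(\mathbb P(G_k^c)\le\beta_{\mathrm{step},k}\).

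The two lemmas must apply conditionally on the history and on the generated candidate. The validation set is drawn from \(\mathcal D_{\mathrm{user}}\) independently of the generation of \(c\); this is the role of Assumption~\ref{ass:agent-independent-tasks} and of the procedure's sampling conditions. The conditional probability of \(G_k^c\) is therefore at most \(\beta_{\mathrm{step},k}\) given \(\mathcal F_k\) and \(c\), and the unconditional bound follows by the tower property. I expect this conditioning step to be the main subtlety. If \(c\) were chosen using the same validation data, the concentration bounds would fail. I would check that Procedure~\ref{alg:finite-horizon-update} draws fresh validation tasks after generation, or that it handles a single candidate so no selection occurs.

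It then suffices to show \(\mathsf{Acc}_k\cap G_k\subseteq\{I_k\ge\Delta_k\}\), since this gives \(\mathbb P(\mathsf{Acc}_k\cap\{I_k<\Delta_k\})\le\mathbb P(G_k^c)\le\beta_{\mathrm{step},k}\). On \(\mathsf{Acc}_k\), a valid candidate and the required data exist, \(A_{k+1}=A_k\oplus c\), and the gates in Eq.~\eqref{eq:formula-4-23} hold. On \(G_k\), this gives
\[
L_k(c)\ge\widehat L_k-\varepsilon_L\ge\tau-\varepsilon_L,\qquad D_k(c)\le\widehat D_R+\varepsilon_D\le\delta+\varepsilon_D,\qquad 1-Z_k\le 1-\widehat Z_k+\varepsilon_Z .
\]
Apply Theorem~\ref{thm:expected-reward-improvement} with retained-change bound \(\delta+\varepsilon_D\). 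Using \(\delta+\varepsilon_D\ge0\),
\[
I_k\ge L_k(c)-(1-Z_k)(\delta+\varepsilon_D)\ge \tau-\varepsilon_L-(1-\widehat Z_k+\varepsilon_Z)(\delta+\varepsilon_D)=\Delta_k .
\]
When no candidate is adopted, \(\mathsf{Acc}_k\) fails, so that case contributes nothing. The convention \(\Delta_k=0\) only matters off \(\mathsf{Acc}_k\).

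For the final claim, acceptance requires \(\Delta_k>0\) by Eq.~\eqref{eq:formula-4-28}. Hence \(\mathsf{Acc}_k\cap\{I_k\le0\}\subseteq\mathsf{Acc}_k\cap\{I_k<\Delta_k\}\), and the same bound applies.
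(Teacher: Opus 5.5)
Your argument is correct and is essentially the paper's proof: bound the failure of a joint coverage event by $\beta_{\mathrm{step},k}$, use that acceptance forces readiness, and chain Theorem~\ref{thm:expected-reward-improvement} with the two gates and $1-Z_k\le1-\widehat Z_k+\varepsilon_Z$ to get $I_k\ge\Delta_k$ on accepted good paths. Two bookkeeping refinements are needed. First, the lemmas bound coverage failure only on the ready branch, so define the good event as $\mathsf{Ready}_k^c\cup(\mathsf{Ready}_k\cap\mathcal E_k)$, as the paper does, rather than unconditionally. Second, drop the conditioning/tower detour: the data separation comes from Conditions~\ref{cond:evaluation-data-separation} and~\ref{cond:returned-sample-coverage}, not from Assumption~\ref{ass:agent-independent-tasks}, and those conditions already give the lemmas' joint probability bounds directly, whereas in the $\beta$-mixing case they do not provide a bound conditional on the candidate.
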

Under the stated evaluation conditions, Rule~\ref{alg:two-gate} provides a positive lower bound \(\Delta_k\) on the gain from adoption: the probability of adopting a candidate whose true gain falls below this bound is at most \(\beta_{\mathrm{step},k}\). This guarantee applies the sufficient condition in Corollary~\ref{cor:qualified-safe-improvement}; its proof is in Appendix~\ref{app:section-G-5}. Remark~\ref{rem:stored-label-recollection} in Appendix~\ref{app:proofs-evaluation} explains why reference data must be refreshed after adoption.

\phantomsection\label{sec:certified-selection}\label{sec:reachability}
Having established the reliable-adoption guarantee for Rule~\ref{alg:two-gate}, we next quantify the probability that generation and validation yield a reliable update. Consider a pool of generated candidates validated against a target \(T=(\lambda,\delta)\) from Definition~\ref{def:qualified-modification}. Fix the interaction history \(\mathcal H_k\) and a generation context \(\Xi_k=\xi\) shared across generation runs. A complete run of the current agent's generation procedure returns \(C\sim\pi_{k,\xi}\) on \(\mathcal M_\bot:=\mathcal M\cup\{\bot\}\), where \(\bot\) means that no valid modification was returned.
\begin{definition}[Reachability]
\label{def:reachability}
A generation run produces a modification qualified for \(T\) with probability
\begin{equation}
P_k(T\mid\mathcal H_k,\xi)
:=\pi_{k,\xi}\bigl(\mathcal P_k(T)\bigr).
\label{eq:formula-3-25}
\end{equation}
\end{definition}

Let \(C_1,\ldots,C_N\) be the outputs of \(N\) conditionally independent generation runs under the same history and context. For each valid \(C_i\), compute confidence bounds for \(L_k(C_i)\) and \(D_k(C_i)\). Let the validation indicator \(G_i(T)=1\) if and only if the lower bound for \(L_k(C_i)\) reaches \(\lambda\) and the upper bound for \(D_k(C_i)\) is at most \(\delta\); these tests define the pool validation rule \(G\). Appendix~\ref{app:proofs-pool} specifies the bounds and the convention for invalid outputs.

Candidates whose true values meet the target after allowing for the confidence-interval widths pass validation whenever the intervals contain the true values. Before drawing candidates, fix deterministic \(\bar w_L,\bar w_D\) that bound the interval widths for all candidates and possible validation data. For evaluation sample sizes \(\mathbf n\) and error allocation \(\beta\), define the set of modifications
\begin{equation}
\mathcal P^+_{k,N,\mathbf n,\beta}(T)
:=\{c\in\mathcal M:
L_k(c)\ge\lambda+\bar w_L,\ 
D_k(c)\le\delta-\bar w_D\}.
\label{eq:formula-4-7}
\end{equation}
Write \(P_k^+(T\mid\mathcal H_k,\xi):=\pi_{k,\xi}(\mathcal P^+_{k,N,\mathbf n,\beta}(T))\) for the probability of generating a modification in this set, abbreviated as \(P_k^+(T)\) when the history and context are fixed. Let \(\beta_{\mathrm{stat}}\) bound the probability that any validation interval fails to contain the quantity it estimates.

\begin{theorem}[Probability of achieving harness-level reliable self-evolution]
\label{thm:certified-selection}
Fix \(T\) and \((\mathcal H_k,\Xi_k=\xi)\). Suppose the \(N\) generation runs are conditionally i.i.d. and all validation intervals cover their true values with conditional probability at least \(1-\beta_{\mathrm{stat}}\) given the pool. Suppose the evaluation data and interval widths satisfy the conditions in Appendix~\ref{app:proofs-pool}. Let a selector \(\sigma\) choose only validated candidates and choose one whenever any are available. Write \(R^{G,\sigma}_{k,N}(T\mid\mathcal H_k,\xi)\) for the conditional probability that the update adopts a modification qualified for \(T\). Then
\begin{equation}
(1-\beta_{\mathrm{stat}})
\bigl[1-(1-P_k^+(T\mid\mathcal H_k,\xi))^N\bigr]
\le R^{G,\sigma}_{k,N}(T\mid\mathcal H_k,\xi)
\le1-(1-P_k(T\mid\mathcal H_k,\xi))^N.
\label{eq:formula-4-9}
\end{equation}
In particular, for \(T=(\lambda_k(\gamma,\delta),\delta)=(\gamma+(1-Z_k)\delta,\delta)\) with \(\gamma>0\) as in Corollary~\ref{cor:qualified-safe-improvement}, the same lower bound applies to the probability of an update with overall expected-reward gain at least \(\gamma\).
\end{theorem}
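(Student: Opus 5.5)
The plan is to prove the two inequalities in \eqref{eq:formula-4-9} separately. The upper bound comes from a containment of events. The lower bound comes from intersecting a "some strongly qualified candidate is generated" event with a "all intervals cover" event. Throughout, everything is conditional on \((\mathcal H_k,\Xi_k=\xi)\), so that \(C_1,\ldots,C_N\) are i.i.d.\ draws from \(\pi_{k,\xi}\).

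\emph{Upper bound.} The update adopts a modification qualified for \(T\) only if some \(C_i\) lies in \(\mathcal P_k(T)\). The selector \(\sigma\) picks among the generated candidates, and \(\bot\) is never qualified. Hence
\[
R^{G,\sigma}_{k,N}(T\mid\mathcal H_k,\xi)\le\mathbb P\bigl(\exists i:\ C_i\in\mathcal P_k(T)\bigr)=1-(1-P_k(T\mid\mathcal H_k,\xi))^N,
\]
by conditional independence and Definition~\ref{def:reachability}. No property of the validation rule is needed here.

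\emph{Lower bound.} Let \(E_{\mathrm{gen}}:=\{\exists i:\ C_i\in\mathcal P^+_{k,N,\mathbf n,\beta}(T)\}\), which has conditional probability \(1-(1-P_k^+(T))^N\). Let \(E_{\mathrm{cov}}\) be the event that every validation interval contains its true value; by hypothesis \(\mathbb P(E_{\mathrm{cov}}\mid C_{1:N})\ge1-\beta_{\mathrm{stat}}\). I then verify two facts on \(E_{\mathrm{gen}}\cap E_{\mathrm{cov}}\).

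First, the candidate \(C_i\in\mathcal P^+\) is validated. Coverage together with the deterministic width bound \(\bar w_L\) gives the lower confidence bound \(\ge L_k(C_i)-\bar w_L\ge\lambda\). Similarly, the upper bound for \(D_k(C_i)\) is \(\le D_k(C_i)+\bar w_D\le\delta\). So \(G_i(T)=1\).

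Second, any validated candidate is qualified. On coverage, \(L_k(C_j)\ge\) its lower bound \(\ge\lambda\), and \(D_k(C_j)\le\) its upper bound \(\le\delta\).

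Since a validated candidate exists, \(\sigma\) selects one, and it lies in \(\mathcal P_k(T)\). Conditioning on the pool gives
\[
R\ge\mathbb E\bigl[\mathbf 1_{E_{\mathrm{gen}}}\,\mathbb P(E_{\mathrm{cov}}\mid C_{1:N})\bigr]\ge(1-\beta_{\mathrm{stat}})\bigl[1-(1-P_k^+(T))^N\bigr].
\]
This uses that \(E_{\mathrm{gen}}\) is pool-measurable, because \(\mathcal P^+\) is defined through fixed widths chosen before sampling.

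The main obstacle is the first of the two facts above. It must hold that interval width really bounds the gap between the true value and the confidence bound, uniformly over candidates and data. This is exactly why \(\bar w_L,\bar w_D\) are fixed deterministically in advance, and I would invoke the Appendix~\ref{app:proofs-pool} conditions to guarantee that the realized widths never exceed them. The same conditions also fix the convention that invalid outputs are never validated.

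\emph{Final claim.} Take \(T=(\gamma+(1-Z_k)\delta,\delta)\). Corollary~\ref{cor:qualified-safe-improvement} gives every \(c\in\mathcal P_k(T)\) a gain of at least \(\gamma\). The event that the update has gain at least \(\gamma\) therefore contains the event that it adopts a qualified modification, and the lower bound transfers.
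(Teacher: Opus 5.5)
Your proposal is correct and takes essentially the same route as the paper. The upper bound is the pool-containment argument of Theorem~\ref{thm:candidate-pool-selection}. The lower bound intersects the event that the pool contains a member of $\mathcal P^+_{k,N,\mathbf n,\beta}(T)$ with the simultaneous coverage event, uses the pre-fixed total-width bounds and coverage to show that such members are validated and that validated members are qualified, and then integrates the conditional coverage bound over the pool.
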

For the calibrated target, \(P_k^+(T)>0\) and \(\beta_{\mathrm{stat}}<1\) establish a positive probability of an update with at least \(\gamma\) gain. Candidate generation also limits reliable adoption: validation can adopt a \(T\)-qualified modification only if the pool contains one. Appendix~\ref{app:proofs-pool} gives the proof and shows how to choose the target using an estimate of \(Z_k\) obtained before generation (Corollary~\ref{cor:pilot-certified-selection}). Proposition~\ref{prop:self-task-separation} in Appendix~\ref{app:proofs-generation} analyzes generation-dependent reachability. Proposition~\ref{prop:expected-selected-improvement} and Corollaries~\ref{cor:pool-confidence-allocation} and~\ref{cor:context-averaging} in Appendix~\ref{app:proofs-pool} address expected selected gain, confidence allocation, and shared context, respectively.

\subsection{Progressive Improvement Guarantees}
\label{sec:multiple-updates}
\label{sec:multistep-gains}

When Rule~\ref{alg:two-gate} is applied across \(K\) evolution steps, each adopted successor becomes the reference agent for subsequent generation and validation. We ask whether guaranteed gains from adoptions accumulate despite dependence between steps. Fix the reward and user-task distribution, run Procedure~\ref{alg:finite-horizon-update} at each step to obtain \(A_0,\ldots,A_K\), and let \(\mathcal A:=\{k<K:\mathsf{Acc}_k\}\) denote the adopted steps.
\begin{theorem}[Progressive improvement through evolution]
\label{thm:finite-run-guarantee}
Suppose the hypotheses of Theorem~\ref{thm:one-step-guarantee} hold at every step and the deterministic error bounds satisfy \(\sum_{k=0}^{K-1}\beta_{\mathrm{step},k}\le\beta_{\mathrm{run}}\). Then, with probability at least \(1-\beta_{\mathrm{run}}\), the adopted updates satisfy
\begin{equation}
J_{\mathrm{user}}(A_K)-J_{\mathrm{user}}(A_0)
\ge\sum_{k\in\mathcal A}\Delta_k.
\label{eq:formula-6-16}
\end{equation}
The bound does not require independence between evolution steps.
\end{theorem}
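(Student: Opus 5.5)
The plan is to telescope the total gain over steps and control the failure events with a union bound. Theorem~\ref{thm:one-step-guarantee} bounds each step's failure probability without any independence across steps, so a union bound is enough.

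\textbf{Telescoping.} Write
\[
J_{\mathrm{user}}(A_K)-J_{\mathrm{user}}(A_0)=\sum_{k=0}^{K-1}I_k .
\]
Split this sum according to whether step $k$ adopts a candidate.
\begin{itemize}
\item If $k\notin\mathcal A$, then by Procedure~\ref{alg:finite-horizon-update} we have $A_{k+1}=A_k$. Hence $I_k=0$ deterministically, and these terms contribute nothing.
\item If $k\in\mathcal A$, we need $I_k\ge\Delta_k$.
\end{itemize}

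\textbf{Bad events and union bound.} For each $k$, define the bad event $E_k:=\mathsf{Acc}_k\cap\{I_k<\Delta_k\}$. On the complement of $\bigcup_k E_k$, every adopted step satisfies $I_k\ge\Delta_k$. Summing over $k\in\mathcal A$ then gives Eq.~\eqref{eq:formula-6-16}. By the union bound,
\[
\mathbb P\Bigl(\bigcup_k E_k\Bigr)\le\sum_k\mathbb P(E_k).
\]
So it suffices to show $\mathbb P(E_k)\le\beta_{\mathrm{step},k}$ for each $k$. The hypothesis $\sum_k\beta_{\mathrm{step},k}\le\beta_{\mathrm{run}}$ then finishes the proof. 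The union bound requires no independence between the $E_k$, which is why the statement needs none.

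\textbf{Main obstacle: the per-step bound along a dependent run.} At step $k$, the reference agent $A_k$, the generated candidates, and the threshold quantities are all random and depend on the entire past. Theorem~\ref{thm:one-step-guarantee} is naturally a conditional statement given the history $\mathcal H_k$, with $A_k$ fixed. I would therefore argue as follows.

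\begin{enumerate}
\item Check that the hypotheses of Lemma~\ref{lem:improvement-estimation} and Lemma~\ref{lem:retained-change-estimation} hold conditionally on $\mathcal H_k$. This relies on fresh validation data being drawn after adoption, as required by Remark~\ref{rem:stored-label-recollection}. Conditionally on the past, the validation samples are then i.i.d. from $\mathcal D_{\mathrm{user}}$ under Assumption~\ref{ass:agent-independent-tasks}.
\item Conclude that $\mathbb P(E_k\mid\mathcal H_k)\le\beta_{\mathrm{step},k}$ almost surely.
\item Use that $\beta_{\mathrm{step},k}$ is deterministic, as the statement stipulates, and not history-dependent. Taking expectations via the tower property then yields the unconditional bound $\mathbb P(E_k)\le\beta_{\mathrm{step},k}$.
\end{enumerate}

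Two details need verifying in this step.
\begin{itemize}
\item The convention $\Delta_k=0$ when no valid candidate or data exist must be consistent with non-adoption, so that $E_k$ is empty in that case.
\item $\Delta_k$ is measurable with respect to the step-$k$ validation data, so the conditional application of Theorem~\ref{thm:one-step-guarantee} is legitimate.
\end{itemize}
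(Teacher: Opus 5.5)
Your proposal is correct and matches the paper's proof. Both telescope $J_{\mathrm{user}}(A_K)-J_{\mathrm{user}}(A_0)=\sum_{k<K}I_k$, use $I_k=0$ on non-adopted steps, and union-bound the bad-accept events $\mathsf{Acc}_k\cap\{I_k<\Delta_k\}$ using Theorem~\ref{thm:one-step-guarantee} and the deterministic step risks.

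The conditioning detour in your last paragraph is unnecessary and claims more than the hypotheses give. The paper applies Theorem~\ref{thm:one-step-guarantee} directly to the actual step protocol. Condition~\ref{cond:evaluation-data-separation} already allows a random current agent and candidate (they lie in $\mathcal G_k$). For a $\beta$-mixing stream it gives only a marginal joint-total-variation comparison, so conditionally i.i.d.\ validation samples given the past (and hence your a.s.\ conditional bound) hold only in the i.i.d.\ special case. You do not need them.
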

Each \(\Delta_k\) is computed before the corresponding adoption. Under the stated evaluation conditions and the overall error bound, the positive gains guaranteed for adopted updates accumulate, even when later steps depend on earlier ones. Appendix~\ref{app:proofs-multistep} gives the proof, the extension to the Measured-margin rule introduced in Section~\ref{sec:limits}, and the adopted-step bound under a uniform positive gain (Corollary~\ref{cor:accepted-step-count}). When user-task distributions change, Theorem~\ref{thm:drifting-task-guarantee} in Appendix~\ref{app:section-I-4-1} deducts cumulative total-variation drift; Theorem~\ref{thm:observation-lag-cumulative} in Appendix~\ref{app:section-G-15} addresses evaluation--deployment distribution mismatch.

An increase in overall expected reward does not describe how task-level performance changes along the sequence. We therefore bound the total change across adopted updates and the final agent's change relative to \(A_0\), both averaged over \(\mathcal D_{\mathrm{user}}\). For the fixed user-task distribution, let
\[
\mathcal V_K:=\sum_{k\in\mathcal A}
\mathbb E_{t\sim\mathcal D_{\mathrm{user}}}
|V(A_{k+1},t)-V(A_k,t)|
\]
be the accumulated mean absolute change along the adopted updates.
\begin{proposition}[Accumulated task-performance change]
\label{prop:cumulative-retained-change}
Under Assumptions~\ref{ass:bounded-reward} and~\ref{ass:agent-independent-tasks}, suppose \(Z_k\in(0,1)\) on adopted steps and each adoption satisfies the Two-Gate inequalities in Eq.~\eqref{eq:formula-4-23} with fixed \(\tau,\delta>0\). Write \(\varepsilon_L^{(k)},\varepsilon_D^{(k)}\) for the step-\(k\) error bounds. On the event that these bounds hold,
\begin{equation}
\begin{aligned}
\mathcal V_K\le{}&1-J_{\mathrm{user}}(A_0)
+\sum_{k\in\mathcal A}(Z_k-\tau+\varepsilon_L^{(k)})\\
&+2\sum_{k\in\mathcal A}(1-Z_k)(\delta+\varepsilon_D^{(k)}).
\end{aligned}
\label{eq:cumulative-change-basic}
\end{equation}
Consequently,
\(\mathbb E_{t\sim\mathcal D_{\mathrm{user}}}
|V(A_K,t)-V(A_0,t)|\le\min\{1,\mathcal V_K\}\).
\end{proposition}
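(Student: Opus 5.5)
We bound each adopted step's mean absolute change by splitting it into failure-task and retained-task parts, then telescope the failure-task parts using the boundedness of \(J_{\mathrm{user}}\). Fix an adopted step \(k\in\mathcal A\) and write \(\mathrm{Adv}(t):=V(A_{k+1},t)-V(A_k,t)\). Since \(\mathcal D_{\mathrm{user}}=Z_k\mathcal D_{F,k}+(1-Z_k)\mathcal D_{R,k}\), which is the identity behind Lemma~\ref{lem:failure-decomposition}, we have
\[
\mathbb E_{t\sim\mathcal D_{\mathrm{user}}}|\mathrm{Adv}(t)|
=Z_k\mathbb E_{\mathcal D_{F,k}}|\mathrm{Adv}|+(1-Z_k)\mathbb E_{\mathcal D_{R,k}}|\mathrm{Adv}|.
\]
On the event that the error bounds hold, the second gate gives \(D_k(c)\le\widehat D_R+\varepsilon_D^{(k)}\le\delta+\varepsilon_D^{(k)}\). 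So the retained part is at most \((1-Z_k)(\delta+\varepsilon_D^{(k)})\).

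For the failure part, use \(|x|=2x^+-x\):
\[
Z_k\mathbb E_{\mathcal D_{F,k}}|\mathrm{Adv}|=2Z_k\mathbb E_{\mathcal D_{F,k}}\mathrm{Adv}^+-L_{A_k}(A_{k+1}).
\]
Two bounds then apply. First, the measure \(Z_k\mathcal D_{F,k}(dt)=\psi_{A_k}(t)\mathcal D_{\mathrm{user}}(dt)\) has total mass \(Z_k\), and \(\mathrm{Adv}^+\le1\) by Assumption~\ref{ass:bounded-reward}. Second, the first gate gives \(L_k(c)\ge\widehat L_k-\varepsilon_L^{(k)}\ge\tau-\varepsilon_L^{(k)}\).

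The main obstacle is the step after this. The factor 2 must fall only on the retained term, with the failure-task excess appearing as \(Z_k-\tau+\varepsilon_L^{(k)}\) plus a telescoping gain term. I would therefore write the failure part as
\[
Z_k\mathbb E_{\mathcal D_{F,k}}|\mathrm{Adv}|\le Z_k\mathbb E_{\mathcal D_{F,k}}\bigl(2\mathrm{Adv}^+-\mathrm{Adv}\bigr)
\]
and bound \(Z_k\mathbb E_{\mathcal D_{F,k}}\mathrm{Adv}^+\) in two ways. One use gives \(\le Z_k\). The other, via \(\mathrm{Adv}^+=\mathrm{Adv}+\mathrm{Adv}^-\), gives
\[
Z_k\mathbb E_{\mathcal D_{F,k}}\mathrm{Adv}^+\le I_k-(1-Z_k)\mathbb E_{\mathcal D_{R,k}}\mathrm{Adv}+Z_k\mathbb E_{\mathcal D_{F,k}}\mathrm{Adv}^-,
\]
which is at most \(I_k+(1-Z_k)(\delta+\varepsilon_D^{(k)})+Z_k\mathbb E_{\mathcal D_{F,k}}\mathrm{Adv}^-\).

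Splitting \(|\mathrm{Adv}|=\mathrm{Adv}^++\mathrm{Adv}^-\) on the failure part, I would bound \(\mathrm{Adv}^+\) by the second estimate and \(\mathrm{Adv}^-\) via \(Z_k\mathbb E\,\mathrm{Adv}^-=Z_k\mathbb E\,\mathrm{Adv}^+-L\le Z_k-L\le Z_k-\tau+\varepsilon_L^{(k)}\). This yields the per-step estimate
\[
\mathbb E_{\mathcal D_{\mathrm{user}}}|\mathrm{Adv}|\le I_k+(Z_k-\tau+\varepsilon_L^{(k)})+2(1-Z_k)(\delta+\varepsilon_D^{(k)}),
\]
possibly after a careful regrouping so that no extra \(\mathrm{Adv}^-\) term remains. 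I expect this regrouping to need some fiddling. If it fails, I would instead bound the failure-task negative part by \(Z_k-\tau+\varepsilon_L^{(k)}\) and the total positive part of \(\mathrm{Adv}\) under \(\mathcal D_{\mathrm{user}}\) by \(I_k+{}\)(total negative part), with the retained negative part at most \((1-Z_k)(\delta+\varepsilon_D^{(k)})\).

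Summing over \(k\in\mathcal A\), the gain terms telescope. Non-adopted steps have \(A_{k+1}=A_k\) and \(I_k=0\), so
\[
\sum_{k\in\mathcal A}I_k=J_{\mathrm{user}}(A_K)-J_{\mathrm{user}}(A_0)\le1-J_{\mathrm{user}}(A_0)
\]
by Assumption~\ref{ass:bounded-reward}, which gives Eq.~\eqref{eq:cumulative-change-basic}. For the consequence, the triangle inequality applied to \(V(A_K,t)-V(A_0,t)=\sum_{k\in\mathcal A}(V(A_{k+1},t)-V(A_k,t))\), followed by integration over \(\mathcal D_{\mathrm{user}}\), gives the bound \(\mathcal V_K\). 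Since \(V\in[0,1\)], the difference is also at most \(1\), hence the bound \(\min\{1,\mathcal V_K\}\).
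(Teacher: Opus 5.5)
Your architecture matches the paper's: split $\mathcal D_{\mathrm{user}}=Z_k\mathcal D_{F,k}+(1-Z_k)\mathcal D_{R,k}$, bound the retained part via the second gate, telescope $\sum_{k\in\mathcal A}I_k$, and use the triangle inequality for the endpoint. However, the per-step estimate is never established, and the step you flag is a genuine gap, not bookkeeping.

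Write $F_k^-:=Z_k\mathbb E_{\mathcal D_{F,k}}\mathrm{Adv}^-$, so the failure part equals $L_{A_k}(A_{k+1})+2F_k^-$. Your second estimate bounds the failure part by $I_k+(1-Z_k)(\delta+\varepsilon_D^{(k)})+2F_k^-$. Your fallback is the exact identity $\mathbb E_{\mathcal D_{\mathrm{user}}}|\mathrm{Adv}|=I_k+2F_k^-+2(1-Z_k)D_R^-(A_{k+1};A_k)$. Both routes therefore need $2F_k^-\le Z_k-\tau+\varepsilon_L^{(k)}$.

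The only bound you derive is $F_k^-\le Z_k-L_{A_k}(A_{k+1})$, from $\mathrm{Adv}^+\le1$. That yields only $I_k+2(Z_k-\tau+\varepsilon_L^{(k)})+2(1-Z_k)(\delta+\varepsilon_D^{(k)})$, with a factor $2$ on the failure term that the statement does not have. Regrouping cannot remove it. The bound $\mathrm{Adv}^+\le1$ discards the fact that $\mathrm{Adv}^+$ and $\mathrm{Adv}^-$ are never both positive on the same task. Whenever $|\mathrm{Adv}|=1$ $\mathcal D_{F,k}$-almost everywhere, one has $2F_k^-=Z_k-L_{A_k}(A_{k+1})$ exactly, so your bound on $F_k^-$ is loose by precisely the missing factor.

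The missing inequality uses $|\mathrm{Adv}|\le1$ (from Assumption~\ref{ass:bounded-reward}) instead of $\mathrm{Adv}^+\le1$:
\[
L_{A_k}(A_{k+1})+2F_k^-=Z_k\mathbb E_{\mathcal D_{F,k}}|\mathrm{Adv}|\le Z_k .
\]
Hence $2F_k^-\le Z_k-L_{A_k}(A_{k+1})\le Z_k-\tau+\varepsilon_L^{(k)}$ on the coverage event.

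With this inequality your main chain becomes exactly the paper's proof: the failure part is $L_{A_k}+2F_k^-$, with $L_{A_k}\le I_k+(1-Z_k)D_R$ from the reward decomposition. Your fallback also closes, since $D_R^-\le D_R\le\delta+\varepsilon_D^{(k)}$. That identity is the one the paper uses for the one-sided rule in Proposition~\ref{prop:one-sided-cumulative-change}, and it gives the slightly sharper intermediate term $2(1-Z_k)D_R^-$. The summation over adopted steps and the $\min\{1,\mathcal V_K\}$ consequence are fine as you wrote them.
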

The proposition also bounds the average magnitude of changes in per-task expected reward relative to \(A_0\), counting both gains and losses along the adopted sequence. Appendix~\ref{app:cumulative-change} proves the proposition and gives the corresponding result for the retained-task decrease rule in Eq.~\eqref{eq:one-sided-adoption-rule} of Appendix~\ref{app:one-sided-cumulative-rule}.

Further progress requires subsequent agents to generate modifications that can be validated. Proposition~\ref{prop:no-later-step-guarantee} in Appendix~\ref{app:later-step-proof} constructs a successful first update after which later reachability vanishes. Section~\ref{sec:limits} next examines the performance ceiling and its dependence on verification and evaluation.

\section{Performance Ceiling of Reliable Self-Evolution}
\label{sec:limits}

Rule~\ref{alg:two-gate} requires failure-task improvement to cover possible retained-task losses and evaluation error, while bounded rewards constrain the improvement available. We characterize this ceiling, then examine how verification can support further reliable improvement near it.

\subsection{Performance Ceiling}
\label{sec:improvement-limits}

Every candidate satisfies \(L_{A_k}(\widetilde A)\le1-J_{\mathrm{user}}(A_k)\) under Assumptions~\ref{ass:bounded-reward} and~\ref{ass:agent-independent-tasks}. Lemma~\ref{lem:failure-improvement-bound} and Corollary~\ref{cor:positive-reachability} in Appendix~\ref{app:positive-reachability} give the sharper failure-task bound and its consequence for positive reachability; Appendix~\ref{app:reachability-tail} analyzes reachability near the generation process's improvement endpoint.

Two-Gate requires \(\widehat L_k\ge\tau>(1-\widehat Z_k+\varepsilon_Z)(\delta+\varepsilon_D)+\varepsilon_L\). Let \(\mathcal E_L\) be the event that the required data are available and the error bounds for estimating \(L_k\) hold, and \(\mathcal R\) the set of parameter pairs \((\tau,\delta)\) satisfying this lower bound on \(\tau\) and the upper bounds on \(\widehat L_k\). Definition~\ref{def:admissible-region} in Appendix~\ref{app:proofs-limits} specifies both. Write \(\varepsilon_\Sigma:=2\varepsilon_V+\varepsilon_R+\varepsilon_Z\), where the component errors bound per-task reward estimation, the retained-task statistic, and failure-rate estimation, respectively (Appendices~\ref{app:proofs-evaluation} and~\ref{app:section-G-5}).

\begin{corollary}[Performance ceiling under Two-Gate verification]
\label{cor:fixed-tolerance-constraint}
Under Assumptions~\ref{ass:bounded-reward} and~\ref{ass:agent-independent-tasks}, fix the threshold \(\delta>0\) on estimated retained-task change in Two-Gate. If some \(\tau>0\) satisfies \((\tau,\delta)\in\mathcal R\), then, on \(\mathcal E_L\), the current agent's expected reward satisfies
\begin{equation}
J_{\mathrm{user}}(A_k)
<1-(1-Z_k)\delta-\varepsilon_\Sigma.
\label{eq:formula-5-15}
\end{equation}
\end{corollary}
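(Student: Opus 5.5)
The plan is to chain three inequalities: the Two-Gate lower bound on \(\tau\), the requirement \(\widehat L_k\ge\tau\), and an upper bound on \(\widehat L_k\) coming from the boundedness of rewards. Membership \((\tau,\delta)\in\mathcal R\) gives
\[
(1-\widehat Z_k+\varepsilon_Z)(\delta+\varepsilon_D)+\varepsilon_L<\tau\le\widehat L_k .
\]
So it suffices to bound \(\widehat L_k\) above by a quantity close to \(1-J_{\mathrm{user}}(A_k)\), and to bound the left side below by \((1-Z_k)\delta\) minus error terms. The resulting inequality is then rearranged into Eq.~\eqref{eq:formula-5-15}.

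\textbf{Upper bound on \(\widehat L_k\).} On \(\mathcal E_L\), the estimation guarantee of Lemma~\ref{lem:improvement-estimation} gives \(\widehat L_k\le L_k(c)+\varepsilon_L\). Directly, each per-task advantage satisfies \(\mathrm{Adv}\le1-V(A_k,t)\) by Assumption~\ref{ass:bounded-reward}. Hence \(L_k(c)\le \mathbb E_{\mathcal D_{\mathrm{user}}}[\psi_{A_k}(t)(1-V(A_k,t))]\le 1-J_{\mathrm{user}}(A_k)\), which is the bound quoted before the corollary (sharpened by Lemma~\ref{lem:failure-improvement-bound}). I expect the paper's Definition~\ref{def:admissible-region} to phrase the "upper bounds on \(\widehat L_k\)" directly in terms of the estimators: \(\widehat L_k\le 1-\widehat J+\) (reward-estimation error). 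The component errors then enter as follows:
\begin{itemize}
\item per-task reward estimation \(\varepsilon_V\) appears twice, once for the candidate and once for the reference in the plug-in estimate of \(L_k\);
\item the retained-task statistic error \(\varepsilon_R\) appears once;
\item the failure-rate error \(\varepsilon_Z\) appears once.
\end{itemize}
Together these give \(\widehat L_k\le 1-J_{\mathrm{user}}(A_k)+\varepsilon_L'\), with the bookkeeping arranged so that the total slack is \(\varepsilon_\Sigma=2\varepsilon_V+\varepsilon_R+\varepsilon_Z\).

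\textbf{Lower bound on the left side.} On \(\mathcal E_L\) we have \(\widehat Z_k\le Z_k+\varepsilon_Z\), so \(1-\widehat Z_k+\varepsilon_Z\ge 1-Z_k\). Also \(\delta+\varepsilon_D\ge\delta\). Therefore the left side is at least \((1-Z_k)\delta+\varepsilon_L\).

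\textbf{Combining.} Putting the two bounds together gives
\[
(1-Z_k)\delta+\varepsilon_L<1-J_{\mathrm{user}}(A_k)+(\text{estimation slack}).
\]
The strict inequality is inherited from \(\Delta_k>0\). Cancelling \(\varepsilon_L\) against its counterpart and collecting the remaining slack into \(\varepsilon_\Sigma\) yields \(J_{\mathrm{user}}(A_k)<1-(1-Z_k)\delta-\varepsilon_\Sigma\).

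\textbf{Main obstacle.} The hard step is the sign of the error terms. The naive chain gives \(+\varepsilon_\Sigma\) (a weaker ceiling), whereas the statement subtracts \(\varepsilon_\Sigma\). This means the appendix's admissible region must already encode a conservative upper bound on \(\widehat L_k\) of the form \(\widehat L_k\le 1-\widehat J_k-\text{(errors)}\), or equivalently \(\varepsilon_L\) must be shown to dominate \(\varepsilon_\Sigma\) plus the reward-estimation slack.

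I would first unpack Definition~\ref{def:admissible-region} and the explicit formula for \(\varepsilon_L\) from Lemma~\ref{lem:improvement-estimation}. The goal is to verify that \(\varepsilon_L-(\text{slack in }\widehat L_k\le1-J)\ge\varepsilon_\Sigma\), which is what makes the final subtraction legitimate.
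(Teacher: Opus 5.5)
\textbf{There is a genuine gap.} Your ``lower bound on the left side'' replaces $\delta+\varepsilon_D$ by $\delta$, and that discards exactly the term that produces the $-\varepsilon_\Sigma$.

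With that term gone, your chain proves only $J_{\mathrm{user}}(A_k)<1-(1-Z_k)\delta$. The upper bound $\widehat L_k\le L_k(c)+\varepsilon_L\le 1-J_{\mathrm{user}}(A_k)+\varepsilon_L$ from Lemma~\ref{lem:improvement-estimation} carries no further slack, and $\varepsilon_L$ cancels exactly. If you also load $2\varepsilon_V+\varepsilon_R+\varepsilon_Z$ into that upper bound, as your bookkeeping suggests, you get the wrong sign, as you noticed. Note that $\varepsilon_R$ does not enter $\widehat L_k$ at all.

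Neither repair in your ``Main obstacle'' works:
\begin{itemize}
\item Definition~\ref{def:admissible-region} caps $\tau$ by $\Theta_k=\min\{\widehat Z_k,\,Z_k\bar a_k+\varepsilon_L\}$. Nothing there involves $\widehat J_k$ or a conservative bound of the form $1-\widehat J_k-\text{errors}$.
\item $\varepsilon_L=\varepsilon_Z+(\widehat Z_k+\varepsilon_Z)(2\varepsilon_V+\varepsilon_F)$ contains no $\varepsilon_R$, so it cannot dominate $\varepsilon_\Sigma$ in general. It also appears on both sides and cancels anyway.
\end{itemize}
A minor point: membership in $\mathcal R$ gives $\tau\le\Theta_k$, not $\tau\le\widehat L_k$ (that is the acceptance condition). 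Your route through $\widehat L_k$ does land on the same cap $Z_k\bar a_k+\varepsilon_L$.

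\textbf{The missing idea.} The penalty $\varepsilon_\Sigma$ enters through the retained-task radius inside the lower threshold for $\tau$, not through any slack in the cap. By Lemma~\ref{lem:retained-change-estimation}, $\varepsilon_D=\varepsilon_\Sigma/(1-\widehat Z_k)$, so
\[
(1-\widehat Z_k+\varepsilon_Z)(\delta+\varepsilon_D)=(1-\widehat Z_k+\varepsilon_Z)\delta+\varepsilon_\Sigma+\varepsilon_Z\varepsilon_D .
\]
Combine this with $\tau\le\Theta_k\le Z_k\bar a_k+\varepsilon_L$, which holds because a minimum is at most each of its arguments. Cancelling $\varepsilon_L$ gives
\[
Z_k\bar a_k>(1-\widehat Z_k+\varepsilon_Z)\delta+\varepsilon_\Sigma+\varepsilon_Z\varepsilon_D .
\]
On $\mathcal E_L\subseteq\mathcal E_Z$ you have $1-\widehat Z_k+\varepsilon_Z\ge1-Z_k$, and you may drop $\varepsilon_Z\varepsilon_D\ge0$. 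This yields $Z_k\bar a_k>(1-Z_k)\delta+\varepsilon_\Sigma$. Equation~\eqref{eq:formula-3-29} gives $1-J_{\mathrm{user}}(A_k)\ge Z_k\bar a_k$, which finishes the proof.

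So your first lower-bounding step ($\widehat Z_k\le Z_k+\varepsilon_Z$) is right. The second one ($\delta+\varepsilon_D\ge\delta$) is the step that loses the result.
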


On \(\mathcal E_L\), Rule~\ref{alg:two-gate} cannot validate further improvement when the current agent reaches or exceeds this ceiling: verification treats the full allowed retained-task change as a possible loss, and evaluation error further reduces the ceiling. Appendix~\ref{app:proofs-limits} gives the proof and finer parameter and sample-size conditions; Figure~\ref{fig:boundary-position} in Appendix~\ref{app:conditional-numerics} illustrates both effects.

Near this ceiling, further gains may be too small to cover the full allowed retained-task change, even when a candidate incurs little loss. We therefore use evidence about the candidate's own retained-task losses to determine whether further improvement can be reliably validated.

Define its mean per-task expected-reward decrease as
\[
D_R^-(\widetilde A;A_k):=\mathbb E_{t\sim\mathcal D_{R,k}}[(V(A_k,t)-V(\widetilde A,t))_+],
\]
and the gain lower bound \(M_{A_k}(\widetilde A):=L_{A_k}(\widetilde A)-(1-Z_k)D_R^-(\widetilde A;A_k)\). The reward decomposition gives \(J_{\mathrm{user}}(\widetilde A)-J_{\mathrm{user}}(A_k)\ge M_{A_k}(\widetilde A)\). Appendix~\ref{app:one-sided-deviations} gives the derivation and an estimator \(\widehat D^-_{R,k}\) of the decrease with error bound \(\varepsilon_D^-=\varepsilon_\Sigma/(1-\widehat Z_k)\).

\begin{paperrule}[Measured-margin validation]
\label{alg:measured-margin}
Use the evaluation data specified by Procedure~\ref{alg:finite-horizon-update} in Appendix~\ref{app:proofs-evaluation}. Given an allowed retained-task decrease \(\delta^->0\), when a valid candidate and the required validation data are available, define its estimated gain lower bound
\begin{equation}
\widehat M_k:=\widehat L_k-\varepsilon_L
-(1-\widehat Z_k+\varepsilon_Z)
(\widehat D^-_{R,k}+\varepsilon_D^-).
\label{eq:formula-4-24}
\end{equation}
Validate the modification if and only if \(\widehat D^-_{R,k}\le\delta^-\) and \(\widehat M_k>0\) hold. Set \(\widehat M_k=0\) when no valid candidate or required validation data are available.
\end{paperrule}

Under the assumptions of Theorem~\ref{thm:one-step-guarantee}, let \(\mathsf{Acc}^M_k\) be adoption under Rule~\ref{alg:measured-margin} through Definition~\ref{def:evolution-step} (Appendix~\ref{app:proofs-evaluation}). Proposition~\ref{prop:measured-margin-guarantee} in Appendix~\ref{app:one-sided-deviations} gives \(\mathbb P(\mathsf{Acc}^M_k\cap\{I_k<\widehat M_k\})\le\beta_{\mathrm{step},k}\). Thus, the rule supports reliable adoption using a gain lower bound informed by the candidate's own losses.

The condition \(L_{A_k}(\widetilde A)>(1-Z_k)\delta^-\) imposes the ceiling \(B_k:=1-(1-Z_k)\delta^-\) (Corollary~\ref{cor:two-gate-threshold}, Appendix~\ref{app:proofs-limits}). In that appendix, Proposition~\ref{prop:positive-margin-instances}(\ref{part:positive-margin-instance}) constructs instances with \(J_{\mathrm{user}}(A_k)\ge B_k\), no retained-task loss, and gain \((1-J_{\mathrm{user}}(A_k))/2>0\); Proposition~\ref{prop:certification-beyond-threshold} gives an evaluation design under which Rule~\ref{alg:measured-margin} validates this improvement when the required samples are available and the error bounds hold. The ceiling thus depends on the treatment of retained-task losses during verification.

\subsection{The Cost of Validating Further Improvement}
\label{sec:evaluation-cost}

As the remaining gains shrink, validating further improvement requires more precise evaluation. Let \(u_k:=1-J_{\mathrm{user}}(A_k)\) be the remaining gap to the reward upper bound. Every candidate satisfies \(M_{A_k}(\widetilde A)\le u_k\) (Proposition~\ref{prop:positive-margin-instances}(\ref{part:universal-margin-bound}), Appendix~\ref{app:proofs-limits}), so validating a positive gain as expected reward approaches one requires correspondingly small evaluation error.

For the preceding construction, \(m\) evaluations per agent per task, \(n_F\) failure tasks, and \(n_R\) reference observations with failure labels, each of order \(\widetilde O(u_k^{-2})\), suffice at fixed failure rate and confidence allocation under the stated sample and error conditions (Proposition~\ref{prop:certification-beyond-threshold}, Appendix~\ref{app:proofs-limits}). The following result shows that evaluation cost must increase in the worst case even for other validation procedures.

Consider any validation procedure that selects tasks adaptively and stops according to its observations. Call it level-\(\beta\) if, in every instance, the probability of validating a candidate with \(J_{\mathrm{user}}(\widetilde A)\le J_{\mathrm{user}}(A_k)\) is at most \(\beta\). Measure its cost by the number \(N_{\mathrm{eval}}\) of candidate agent runs, each returning one task reward. Appendix~\ref{app:section-G-21} specifies the instance and observation model.

\begin{theorem}[Cost of validating further improvement]
\label{thm:certification-sample-lower-bound}
Fix \(Z\in(0,1)\), \(\beta\le1/8\), and \(u\in(0,Z/2]\). There exist two instances with \(Z_k=Z\) and \(1-J_{\mathrm{user}}(A_k)=u\), identical in the distribution of every observable quantity except the candidate agent's reward distribution under the failure-task distribution. In one, \(J_{\mathrm{user}}(\widetilde A)-J_{\mathrm{user}}(A_k)=u/2\) and \(D_R(\widetilde A;A_k)=0\),
and in the other the overall reward change is \(-u/2\). Any level-\(\beta\) validation procedure validating the improving candidate with probability at least \(1/2\), under any task-selection policy and data-dependent stopping rule, satisfies
\[
\mathbb E_-[N_{\mathrm{eval}}]\ge\frac{Z}{10u},
\]
where the expectation is under the non-improving instance.
\end{theorem}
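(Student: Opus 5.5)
The plan is a two-point change-of-measure argument: build the two instances explicitly, bound the per-observation KL divergence between them, and apply the standard divergence-decomposition inequality for adaptively sampled, adaptively stopped experiments, as in Wald-type lower bounds for bandit identification.

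\textbf{Construction.} Split \(\mathcal T_{\mathrm{user}}\) into a failure block \(F\) of \(\mathcal D_{\mathrm{user}}\)-mass \(Z\) and a retained block \(S\) of mass \(1-Z\).
\begin{itemize}
\item Let the detector be task-determined: \(\phi(t,o)=\mathbf 1\{t\in F\}\). Then \(\psi_{A_k}=\mathbf 1_F\), \(Z_k=Z\), \(\mathcal D_{F,k}=\mathcal D_{\mathrm{user}}(\cdot\mid F)\) and \(\mathcal D_{R,k}=\mathcal D_{\mathrm{user}}(\cdot\mid S)\).
\item Let \(A_k\) receive reward \(1\) deterministically on \(S\), and a Bernoulli reward with mean \(q:=1-u/Z\) on \(F\). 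This gives \(1-J_{\mathrm{user}}(A_k)=Z(u/Z)=u\). Since \(u\le Z/2\), we have \(q\ge1/2\).
\item Let the candidate coincide with \(A_k\) on \(S\), so that \(D_R(\widetilde A;A_k)=0\).
\item On \(F\), let the candidate's reward be Bernoulli with mean \(p_\pm:=q\pm u/(2Z)\). Both values lie in \([1/4,1]\). The overall change is then \(Z(p_\pm-q)=\pm u/2\).
\end{itemize}
Every observable — task draws, failure labels, reference-agent runs, and candidate runs on \(S\) — has the same law in both instances. The only difference is the candidate's reward law on \(F\), as the statement requires.

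\textbf{Divergence decomposition.} Let \(\mathbb P_\pm\) be the laws of the full transcript under a fixed adaptive procedure with a stopping time. Let \(\mathsf{Val}\) be the validation event. The chain rule for KL divergence together with the data-processing inequality gives
\[
\mathbb E_-\bigl[N_{F}\bigr]\,\mathrm{kl}(p_-,p_+)\;\ge\;\mathrm{kl}\bigl(\mathbb P_-(\mathsf{Val}),\mathbb P_+(\mathsf{Val})\bigr),
\]
where \(N_F\le N_{\mathrm{eval}}\) counts candidate runs on failure tasks. Only those runs contribute, because every other observation has identical conditional laws in the two instances. The procedure is level-\(\beta\), and the minus instance is non-improving, so \(\mathbb P_-(\mathsf{Val})\le\beta\). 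By the power assumption, \(\mathbb P_+(\mathsf{Val})\ge1/2\). Monotonicity of the binary relative entropy \(\mathrm{kl}(x,y)\) for \(x\le\beta\le1/8<1/2\le y\) then gives
\[
\mathrm{kl}\bigl(\mathbb P_-(\mathsf{Val}),\mathbb P_+(\mathsf{Val})\bigr)\ge\mathrm{kl}(1/8,1/2)>0.31.
\]

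\textbf{Per-sample KL and conclusion.} Use the \(\chi^2\) bound \(\mathrm{kl}(p,q')\le (p-q')^2/(q'(1-q'))\). Here \((p_+-p_-)^2=u^2/Z^2\), \(1-p_+=u/(2Z)\), and \(p_+\ge3/4\), so
\[
\mathrm{kl}(p_-,p_+)\le\frac{u^2/Z^2}{(3/4)\,u/(2Z)}=\frac{8u}{3Z}.
\]
Combining the two displays gives
\[
\mathbb E_-[N_{\mathrm{eval}}]\ge 0.31\cdot\frac{3Z}{8u}>\frac{Z}{10u}.
\]

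The main obstacle is making the divergence decomposition rigorous for arbitrary adaptive task selection, possibly infinite stopping times, and whatever auxiliary data the procedure reads. The plan is to model the procedure as a sequential experiment with internal randomization that is independent of the instance. We then invoke the standard lemma of Kaufmann, Capp\'e and Garivier (or Garivier–M\'enard–Stoltz), after truncating the stopping time at \(\tau\wedge n\) and letting \(n\to\infty\). If \(\mathbb E_-[N_{\mathrm{eval}}]=\infty\), the bound holds trivially. A secondary point is to ensure that the detector and the instance description in Appendix~\ref{app:section-G-21} allow the task-determined \(\phi\); if they do not, one can instead tie failure labels to the reference agent's outputs, which have the same law in both instances.
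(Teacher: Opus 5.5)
Your proposal is correct and follows essentially the paper's proof. The shared ingredients are the same instance with $\bar a=u/Z\le 1/2$ and candidate means $1-\bar a\pm\bar a/2$ on the failure task, the same $\chi^2$ bound $\mathrm{KL}(\nu_-\Vert\nu_+)\le 8u/(3Z)$, the same chain-rule accounting over candidate rollouts on failure tasks, and the same truncate-and-take-limits treatment of unbounded stopping.

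The only difference is the last step. The paper converts the KL bound to total variation via Pinsker and uses $\tfrac12\le\beta+\mathrm{TV}(\mathbb P_+,\mathbb P_-)$, giving the constant $27/256$. You instead apply data processing to the binary relative entropy and use $\mathrm{kl}(1/8,1/2)>0.31$, which yields a slightly larger constant (about $0.116$) and the same conclusion.
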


Even when the candidate preserves previously successful performance, verification must distinguish smaller gains from degradation. For fixed \(Z\), the theorem requires a worst-case expected evaluation cost of order at least \(u^{-1}\) as expected reward approaches one; its proof is in Appendix~\ref{app:section-G-21}.

\section{Diagnosing Stalled Harness Self-Evolution}
\label{sec:diagnosis}

Even with guarantees for reliable adoption, self-evolution may stall because generation rarely produces suitable modifications or validation lacks sufficient evidence to accept them. To diagnose stagnation, we evaluate candidate performance shifts relative to the current agent using additional data, including candidates not adopted. These evaluations support reachability estimation, which we combine with validation and adoption records to distinguish the two sources.

\subsection{Reachability Estimation}
\label{sec:acceptance-rates}
\label{sec:measurement}

Fix the interaction history \(\mathcal H_k\) and generation context \(\Xi_k=\xi\). We estimate reachability \(P_k(T\mid\mathcal H_k,\xi)\) from Section~\ref{sec:certified-selection} for improvement and retention targets \(T=(\lambda,\delta)\).

Draw \(N_{\mathrm{meas}}\) outputs \(C_j\) independently from \(\pi_{k,\xi}\). Repeatedly evaluate the current agent and each valid candidate on additional tasks sampled from \(\mathcal D_{\mathrm{user}}\) and withheld from generation and validation. Their per-task expected-reward estimates yield intervals \([\underline L_j,\overline L_j]\) for \(L_k(C_j)\) and \([\underline D_j,\overline D_j]\) for \(D_k(C_j)\). Appendix~\ref{app:drawn-measurement} gives the sampling conditions and interval construction; Appendix~\ref{app:section-I-3-1} gives the corresponding estimates of candidate effects on a fixed task suite.

\begin{definition}
\label{def:measurement-indicators}
For target \(T=(\lambda,\delta)\), define the fractions of generated outputs whose evaluation intervals confirm or permit qualification, respectively, by
\[
\widehat P_{\mathrm{in}}(T):=\frac{\sum_{j=1}^{N_{\mathrm{meas}}}\mathbf1\{\underline L_j\ge\lambda,\overline D_j\le\delta\}}{N_{\mathrm{meas}}},\qquad
\widehat P_{\mathrm{out}}(T):=\frac{\sum_{j=1}^{N_{\mathrm{meas}}}\mathbf1\{\overline L_j\ge\lambda,\underline D_j\le\delta\}}{N_{\mathrm{meas}}}.
\]
Invalid outputs contribute zero to both numerators and remain in the denominator.
\end{definition}

\begin{theorem}[Uniform confidence bounds for reachability]
\label{thm:uniform-reachability-bounds}
In the conditional experiment above, let \(\beta_{\mathrm{rect}},\beta_{\mathrm{emp}}\in(0,1)\) be error probabilities. Suppose all evaluation intervals contain their true values jointly with probability at least \(1-\beta_{\mathrm{rect}}\). Let \(r_{\mathrm{VC}}=r_{\mathrm{VC}}(N_{\mathrm{meas}},\beta_{\mathrm{emp}})\) be a valid uniform empirical error bound for the target sets \(\{(l,d):l\ge\lambda,d\le\delta\}\), a VC class of constant dimension. Then, with probability at least \(1-\beta_{\mathrm{rect}}-\beta_{\mathrm{emp}}\), simultaneously for every target \(T\),
\begin{equation}
\max\{0,\widehat P_{\mathrm{in}}(T)-r_{\mathrm{VC}}\}
\le P_k(T\mid\mathcal H_k,\xi)\le
\min\{1,\widehat P_{\mathrm{out}}(T)+r_{\mathrm{VC}}\}.
\label{eq:formula-6-8}
\end{equation}
\end{theorem}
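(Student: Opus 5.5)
The plan is to sandwich the true qualification indicator between two observable indicators, and then to control the gap between empirical and population fractions uniformly over targets.

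\textbf{Step 1: the sandwich on the coverage event.} For each generated output $C_j$, write its true pair as $X_j:=(L_k(C_j),D_k(C_j))$, with a sentinel value (say $l=-\infty$) for invalid outputs, so that an invalid output belongs to no target set. Let $S_T:=\{(l,d):l\ge\lambda,\ d\le\delta\}$; then $P_k(T\mid\mathcal H_k,\xi)=\mathbb P(X_j\in S_T)$. Let $\mathcal E_{\mathrm{rect}}$ be the event that all intervals contain their true values; by hypothesis $\mathbb P(\mathcal E_{\mathrm{rect}})\ge1-\beta_{\mathrm{rect}}$. On $\mathcal E_{\mathrm{rect}}$, monotonicity of $S_T$ (upward in $l$, downward in $d$) gives the deterministic chain
\[
\mathbf1\{\underline L_j\ge\lambda,\overline D_j\le\delta\}\le\mathbf1\{X_j\in S_T\}\le\mathbf1\{\overline L_j\ge\lambda,\underline D_j\le\delta\}
\]
simultaneously for all $j$ and all $T$. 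This holds because $\underline L_j\le L_k(C_j)\le\overline L_j$ and $\underline D_j\le D_k(C_j)\le\overline D_j$. Invalid outputs contribute zero to all three terms. Averaging over $j$ yields $\widehat P_{\mathrm{in}}(T)\le\widehat P_{\mathrm{true}}(T)\le\widehat P_{\mathrm{out}}(T)$ for every $T$, where $\widehat P_{\mathrm{true}}(T):=N_{\mathrm{meas}}^{-1}\sum_j\mathbf1\{X_j\in S_T\}$ is the unobservable empirical reachability.

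\textbf{Step 2: the uniform deviation event.} The $X_j$ are i.i.d., since the $C_j$ are drawn independently from $\pi_{k,\xi}$ and $L_k,D_k$ are deterministic functions of $c$ given the fixed history. The class $\{S_T\}$ consists of lower-right quadrants, which has VC dimension $2$. By the definition of $r_{\mathrm{VC}}$ as a valid uniform bound, there is an event $\mathcal E_{\mathrm{emp}}$ with $\mathbb P(\mathcal E_{\mathrm{emp}})\ge1-\beta_{\mathrm{emp}}$ on which
\[
\sup_T\bigl|\widehat P_{\mathrm{true}}(T)-P_k(T\mid\mathcal H_k,\xi)\bigr|\le r_{\mathrm{VC}}.
\]
Invalid outputs do not break this, because the sentinel lies outside every $S_T$. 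On $\mathcal E_{\mathrm{rect}}\cap\mathcal E_{\mathrm{emp}}$, chaining with Step 1 gives $\widehat P_{\mathrm{in}}(T)-r_{\mathrm{VC}}\le P_k\le\widehat P_{\mathrm{out}}(T)+r_{\mathrm{VC}}$ for all $T$. Clipping to $[0,1]$ is free, since $P_k$ is a probability. A union bound gives $\mathbb P\bigl((\mathcal E_{\mathrm{rect}}\cap\mathcal E_{\mathrm{emp}})^c\bigr)\le\beta_{\mathrm{rect}}+\beta_{\mathrm{emp}}$, which is the claimed probability.

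\textbf{Main obstacle: the probability space.} The delicate point is making sure $\mathcal E_{\mathrm{emp}}$ concerns only the i.i.d. draws of $C_j$ and not the evaluation randomness. This is why the argument works with $\widehat P_{\mathrm{true}}$, which is a function of the $C_j$ alone. The evaluation tasks are withheld from generation and validation, so the interval construction never feeds back into which $C_j$ were drawn. No independence between $\mathcal E_{\mathrm{rect}}$ and $\mathcal E_{\mathrm{emp}}$ is needed, because the union bound handles their dependence. A second point is measurability of the supremum over $T$. This is resolved by noting that the indicators change only at the finitely many values $\lambda\in\{L_k(C_j)\}$ and $\delta\in\{D_k(C_j)\}$, so the supremum over all $T$ reduces to a supremum over a finite set of targets.
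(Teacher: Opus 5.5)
Your proposal is correct and follows the paper's proof essentially step for step. On the simultaneous rectangle-coverage event you sandwich the true qualification indicator between the inner and outer indicators for all outputs and all targets at once, apply the uniform VC deviation bound to the i.i.d.\ latent pairs $(L_k(C_j),D_k(C_j))$, and union-bound the two failure events with no union over targets. The differences are minor: you handle invalid outputs with a $-\infty$ sentinel where the paper uses $L_k(\bot)=D_k(\bot)=0$ with the singleton rectangle $\{(0,0)\}$ and $\lambda>0$. Your measurability remark goes beyond the paper, and the supremum strictly reduces to a countable set of targets (via monotonicity and one-sided continuity of $P_k$ in $\lambda$ and $\delta$), not a finite one; this does not affect the argument, since the theorem assumes a valid uniform radius.
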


The bounds estimate reachability even when few candidates pass validation and permit comparisons across targets from the same data. Appendix~\ref{app:proofs-measurement} proves Theorem~\ref{thm:uniform-reachability-bounds} and analyzes estimation precision.

For a target calibrated for reliable adoption by Corollary~\ref{cor:qualified-safe-improvement}, a small upper bound indicates that generation rarely meets the target. For rule \(G\) in Section~\ref{sec:certified-selection}, a substantial lower bound together with independently confirmed but unvalidated modifications supports insufficient validation evidence as a source of stagnation. A wide interval leaves the generation probability uncertain; precision depends on both the number of outputs and their evaluation uncertainty. Definition~\ref{def:certification-selection-probabilities} and Proposition~\ref{prop:certification-probability-bounds} in Appendix~\ref{app:proofs-measurement} formalize the diagnostic quantities, distinguishing validation from subsequent selection.

\subsection{Experimental Illustration}
\label{sec:empirical-studies}

We illustrate this diagnosis on DS-1000 \citep{lai2023ds1000} with persistent Python solver-harness modifications and a fixed language model. Independent generation, label-free revision, and outcome-feedback revision each produce 96 outputs in 24 pools, evenly divided among four contexts. We evaluate the current agent and every valid candidate, including those not adopted, on audit tasks withheld from generation and validation. Appendix~\ref{app:section-I-3-4} gives the generation processes, validation rule, and sampling design for the additional evaluations.

Each context has \(n=56\) audit tasks, with failure and retained groups \(F,R\) fixed by the original current-agent outcomes. On task \(t\), let \(\bar x_t,\bar y_t\) be the current and candidate agents' mean binary success outcomes from three fresh runs each. We estimate the failure-task contribution, mean absolute change on retained tasks, and overall expected-reward gain by
\begin{equation}
\widehat L=\frac1n\sum_{t\in F}(\bar y_t-\bar x_t),\qquad
\widehat D=\frac1{|R|}\sum_{t\in R}|\bar y_t-\bar x_t|,\qquad
\widehat\Delta=\frac1n\sum_t(\bar y_t-\bar x_t).
\label{eq:ds1000-repeated-effects}
\end{equation}
Write \(L,D,\Delta\) for the corresponding expected-reward effects on this fixed audit set, defined in Eq.~\eqref{eq:ds1000-fixed-effects} of Appendix~\ref{app:section-I-3-4}. At the prespecified target \(T_0=(0.05,0.50)\), Table~\ref{tab:ds1000-pqhr} compares the audit estimates with the original decisions based on 48 separate validation tasks per context.

\begin{table}[H]
\centering
\small
\setlength{\tabcolsep}{4pt}
\caption{\textbf{Repeated evaluation and validation on DS-1000.} Audit target membership at \(T_0=(0.05,0.50)\) uses mean outcomes from three fresh runs per agent and task. Brackets give 90\% hierarchical-bootstrap intervals for the empirical fractions. Validation and adoption counts use the original decisions based on 48 validation tasks per context.}
\label{tab:ds1000-pqhr}
\begin{tabular*}{\linewidth}{@{\extracolsep{\fill}}lccc@{}}
\toprule
Generation process & Audit point fraction & Validated outputs & Pools with adoption \\
\midrule
Independent & $0.385\ [0.135,0.646]$ & $2/96$ & $2/24$ \\
Label-free revision & $0.260\ [0.031,0.479]$ & $0/96$ & $0/24$ \\
Outcome-feedback revision & $0.344\ [0.125,0.563]$ & $0/96$ & $0/24$ \\
\bottomrule
\end{tabular*}
\end{table}

Across the three processes, substantial fractions meet the audit point target, yet only two outputs pass validation and are adopted. Both also meet the audit point target. Table~\ref{tab:ds1000-pqhr-full} in Appendix~\ref{app:section-I-3-4} gives the full output and pool summaries; Figure~\ref{fig:ds1000-pqhr} there compares contexts and validation budgets. The appendix also reports the results for the stricter target.

To assess whether an unadopted candidate offers an improvement opportunity, we retrospectively analyze all three candidates whose original audit bounds met the target. For this specified screen, the fresh runs give conditional 95\% joint confidence bounds supporting one outcome-feedback modification: \(L\ge 0.0550\), \(D\le 0.2766\), and \(\Delta\ge 0.0556\) on the fixed audit set (Table~\ref{tab:ds1000-audit-diagnosis}, Appendix~\ref{app:section-I-3-4}). These bounds support the target requirements and positive expected-reward gain. The additional runs provide evidence about the agents' expected performance, accounting for variation across executions of the same task. The candidate's original validation estimate of its failure-task contribution was 0.1458, above 0.05, but its lower bound was 0.0378, below 0.05, while the retention upper bound 0.4196 met the 0.50 requirement; its pool adopted no modification. Thus, the additional evaluation supports improvement on the audit set. The original validation record identifies insufficient evidence of the required improvement as the reason this candidate was not adopted.

\section{Related Work}
\label{sec:related}

\textbf{Harness self-evolution.}
Persistent agent changes \citep{yin2024godel,robeyns2025sica,zelikman2024stop,fernando2023promptbreeder,wang2024voyager,hu2025adas,zhang2025aflow} span source-code rewriting, prompt evolution, reusable skills, and workflow search. DGM \citep{zhang2025dgm} and HyperAgents \citep{zhang2026hyperagents} explore sequences of agent variants, while HGM \citep{wang2025hgm} uses descendant performance to guide search. HarnessX \citep{chen2026harnessx} and Self-Harness \citep{zhang2026selfharness} screen observed regressions when deciding whether to adopt modifications; AHE \citep{lin2026ahe} records predicted fixes and regressions for subsequent evaluation and rollback. MOSS \citep{cai2026moss} integrates source-level evolution into production-grade agent frameworks. These systems motivate studying when harness modifications can be reliably adopted and what determines the performance ceiling of reliable self-evolution.

\textbf{Theoretical guarantees for self-improvement.}
STOP \citep{zelikman2024stop} studies expected improver utility and generalization; \citet{wang2025statisticallimits} analyze learnability through policy-reachable hypothesis classes and establish risk-improvement guarantees under validation and capacity control. SGM \citep{wu2025sgm} controls harmful adoption across repeated tests, while PACE \citep{shawn2026pace} provides an anytime-valid test for an individual candidate under its conditional-fairness assumption. TTHE \citep{nie2026tthe} studies generation coverage and selection regret for harness candidates. We analyze how validation supports reliable adoption and progressive improvement, and how verification and evaluation determine a performance ceiling. We also use candidate performance shifts to diagnose whether generation or validation impedes further improvement.

\textbf{Safe policy improvement.}
HCPI \citep{thomas2015hcpi} uses a confidence bound to test whether a candidate policy meets a specified performance requirement. Our validation rule likewise accounts for evaluation uncertainty before accepting a modification. TRPO \citep{schulman2015trpo} derives sufficient conditions for improvement by bounding the effect of policy-distribution changes. Our sufficient condition combines gains on failed tasks with a bound on changes in expected reward on previously successful tasks to ensure overall improvement.

\section{Limitations}
\label{sec:limitations}

Our guarantees concern expected reward and average changes in task performance under the specified task distributions. The reward--utility analysis in Appendix~\ref{app:reward-fidelity} supports transferring reward guarantees to user utility when the discrepancy between reward and utility can be bounded. Appendix~\ref{app:tail-retention} extends average retention control to task-group protection; applying this extension with finite evaluation data requires corresponding estimation guarantees. When user demand responds to agent updates, Appendix~\ref{app:agent-responsive} gives a correction to the improvement guarantee. Using this correction for adoption requires a bound on the effect of the distribution change before deployment.

\section{Conclusion and Future Work}
\label{sec:conclusion}

We establish a theoretical foundation for reliable harness self-evolution, providing a condition for reliable adoption and showing how validated updates can yield progressive improvement with controlled changes in task performance. Our results show that a performance ceiling imposed by a validation rule need not imply that further improvement is impossible. An alternative rule can validate improvements beyond that ceiling, while reliably validating smaller gains may require greater evaluation cost. Experimental evidence on candidate performance shifts can help distinguish limitations in generating suitable modifications from limitations in validating them. These findings highlight the importance of considering generation and validation jointly in the design of reliable self-evolution systems. Future work can develop methods that use these diagnostic results to guide candidate generation and allocate evaluation budgets, and investigate conditions under which adopted updates preserve opportunities for subsequent improvement (Appendix~\ref{app:scope-open-problems}).

\bibliographystyle{iclr2027_conference}
\bibliography{references}

\clearpage
\appendix
\section*{Appendix Contents}
\addtocontents{toc}{\protect\setcounter{tocdepth}{2}}
\begingroup
\setlength{\parskip}{0pt}
\makeatletter
\@starttoc{toc}
\makeatother
\endgroup
\clearpage

\section{Formal Common Setting and Principal Symbols}
\label{app:formal-setting}

\subsection{Formal common setting}
\label{app:section-A-1}

\begin{definition}[Agent state]
\label{def:agent-state}
At step $k$, the agent state is
\begin{equation}
A_k=(M,C_k),
\label{eq:formula-2-1}
\end{equation}
where $M$ is the frozen LLM and $C_k$ is the evolvable textual or code state: prompts, tools, orchestration, and sub-agent structure. A valid modification persists upon adoption and satisfies
\begin{equation}
\Delta C\in\mathcal M\implies C_k\oplus\Delta C\in\{\text{valid code states}\}.
\label{eq:formula-2-2}
\end{equation}
The measurable modification space is extended to $\mathcal M_\bot:=\mathcal M\sqcup\{\bot\}$ with its disjoint-union measurable structure. One complete run of a predeclared generation procedure returns a valid modification or $\bot$, denoting failure to return one. Tests, feedback, repair, and stopping may be internal to that run. A pool of size $N$ counts all complete runs, including failures; only valid outputs define candidate agents and undergo task-performance evaluation.
\end{definition}

For target bookkeeping, set $L_k(\bot)=D_k(\bot)=0$, with $L_k$ and $D_k$ defined in Section~\ref{sec:qualified-modifications}.

Validity checks establish compile, parse, and runtime preconditions before task-performance evaluation. LLM-weight updates are outside the evolvable scope. The modification space has no prescribed metric, dependency graph, or subspace decomposition. Prompts are included in $C_k$, as in SICA \citep{robeyns2025sica}, the Darwin G\"odel Machine \citep{zhang2025dgm}, and ADAS \citep{hu2025adas}; structure relating modifications to their effects is discussed in Appendix~\ref{app:scope-open-problems}.

The task space is the measurable disjoint union
\begin{equation}
\mathcal T=\mathcal T_{\mathrm{user}}\sqcup\mathcal T_{\mathrm{self}},
\label{eq:formula-2-3}
\end{equation}
where self-modification tasks ask the current agent to generate a modification from failure material. Equip the task and outcome spaces with measurable structures $(\mathcal T,\Sigma_{\mathcal T})$ and $(\mathcal O,\Sigma_{\mathcal O})$, with $\Sigma_{\mathcal O}$ separable. Outcomes are user-facing outputs on $\mathcal T_{\mathrm{user}}$ and terminal generation outputs in $\mathcal M_\bot$ on $\mathcal T_{\mathrm{self}}$.

\begin{definition}[Agent as a Markov kernel]
\label{def:agent-kernel}
The agent acts on both task types through a Markov kernel
\begin{equation}
A_k:\mathcal T\to\Delta(\mathcal O):
\label{eq:formula-2-4}
\end{equation}
for each $t$, $A_k(\cdot\mid t)$ is a probability measure, and $t\mapsto A_k(B\mid t)$ is measurable for every $B\in\Sigma_{\mathcal O}$.
\end{definition}

If the measurable encoding $F\mapsto t_F\in\mathcal T_{\mathrm{self}}$ supplies failure material, its generation kernel is
\begin{equation}
\Pi_k(\cdot\mid F)=A_k(\cdot\mid t_F).
\label{eq:formula-2-5}
\end{equation}
The complete generation procedure in Appendix~\ref{app:reachability-scope} induces the terminal-output distribution $\pi_{k,\xi}$ of Definition~\ref{def:reachability}. The current agent generates its own modifications in the main setting; the sufficient condition for reliable adoption in Corollary~\ref{cor:qualified-safe-improvement} and the pool-selection bound also apply to supplied modifications under their stated conditions. Updating the current agent changes the agent from which later modifications are generated.

The jointly measurable reward $r=r^{\mathrm{tar}}:\mathcal T\times\mathcal O\to[0,1]$ is fixed in reward comparisons and targets.
\begin{assumption}[Bounded reward]
\label{ass:bounded-reward}
For every task--outcome pair, $r^{\mathrm{tar}}(t,o)\in[0,1]$.
\end{assumption}
Its per-task expected reward is
\begin{equation}
V(A,t):=\int_{\mathcal O}r^{\mathrm{tar}}(t,o)A(do\mid t)
=\mathbb E_{o\sim A(\cdot\mid t)}[r^{\mathrm{tar}}(t,o)],
\label{eq:formula-2-7}
\end{equation}
which is measurable in $t$ and lies in $[0,1]$. Let $\mathcal D_{\mathrm{user}}\in\Delta(\mathcal T_{\mathrm{user}})$.

\begin{assumption}[Agent-independent user-task distribution]
\label{ass:agent-independent-tasks}
The user-task distribution does not depend on the agent: the same $\mathcal D_{\mathrm{user}}$ is used in $J_{\mathrm{user}}(A)$ for every agent $A$ produced by the self-evolution process.
\end{assumption}
This assumption applies to the fixed-distribution comparisons that invoke it. The drifting and agent-responsive analyses specify their task distributions locally. Define
\begin{equation}
J_{\mathrm{user}}(A):=\mathbb E_{t\sim\mathcal D_{\mathrm{user}}}[V(A,t)]\in[0,1].
\label{eq:formula-2-8}
\end{equation}
Appendix~\ref{app:reward-fidelity} gives the additional condition for transferring reward guarantees to ideal utility.

\subsection{Principal symbols}
\label{app:section-A-2}

\begin{center}\small
\begin{tabular}{@{}p{0.26\textwidth}p{0.68\textwidth}@{}}
\toprule
Symbol & Meaning \\
\midrule
$A_k=(M,C_k)$ & Current agent, frozen LLM, and evolvable harness (Definition~\ref{def:agent-state}). \\
$\Delta C,\mathcal M_\bot,\bot$ & Modification, terminal-output space, and failed output (Definition~\ref{def:agent-state}). \\
$\mathcal T_{\mathrm{user}},\mathcal T_{\mathrm{self}}$ & User and self-modification task spaces Eq.~\eqref{eq:formula-2-3}. \\
$r,V,J_{\mathrm{user}}$ & Fixed reward, per-task and user-task expected rewards Eqs.~\eqref{eq:formula-2-7}--\eqref{eq:formula-2-8}. \\
$\psi_A,Z_k$ & Failure probability and the current agent's failure rate Eqs.~\eqref{eq:formula-3-1}--\eqref{eq:formula-3-2}. \\
$\mathcal D_{F,k},\mathcal D_{R,k}$ & Failure- and retained-task distributions Eq.~\eqref{eq:formula-3-7}. \\
$L_k(c),D_k(c)$ & Failure-task improvement contribution and retained-task absolute expected-reward change (Section~\ref{sec:qualified-modifications}). \\
$T=(\lambda,\delta),\mathcal P_k(T)$ & Target and qualified modifications (Definition~\ref{def:qualified-modification}). \\
$\Pi_k,\pi_{k,\xi}$ & Self-task kernel and complete-run distribution; see Eq.~\eqref{eq:formula-2-5} and Appendix~\ref{app:reachability-scope}. \\
$P_k(T)$ & Generation probability of a qualified modification (Definition~\ref{def:reachability}). \\
$I_k$ & Expected-reward increment of an update attempt (Definition~\ref{def:evolution-step}). \\
\bottomrule
\end{tabular}
\end{center}

\section{Reliable Adoption and Retained-Task Performance}
\label{app:safety-supplement}

\subsection{Failure weighting and proofs for reliable adoption}
\label{app:failure-weighting}\label{app:proofs-decomposition}

\begin{definition}[Failure detector]
\label{def:failure-detector}
The failure detector is a measurable function $\phi : \mathcal{T}_{\text{user}} \times \mathcal{O} \to \{0, 1\}$, where $\phi(t, o) = 1$ indicates that the outcome $o$ is judged a failure on the user task $t$.
\end{definition}

The failure detector and reward are specified separately; both are measurable.

For an agent $A$ and task $t \in \mathcal{T}_{\text{user}}$, the failure probability is
\begin{equation}
\psi_A(t) := \mathbb{E}_{o \sim A(\cdot|t)}[\phi(t, o)] \in [0,1].
\label{eq:formula-3-1}
\end{equation}
Its average over user tasks is the failure rate,
\begin{equation}
Z(A) := \mathbb{E}_{t \sim \mathcal{D}_{\text{user}}}[\psi_A(t)], \quad Z_k := Z(A_k).
\label{eq:formula-3-2}
\end{equation}

Under Assumptions~\ref{ass:bounded-reward} and \ref{ass:agent-independent-tasks}, any two agents $A,\tilde A$ are compared on the same task distribution. Linearity of expectation in Eq.~\eqref{eq:formula-2-8} therefore gives
\begin{equation}
J_{\text{user}}(\tilde A) - J_{\text{user}}(A) = \mathbb{E}_{t \sim \mathcal{D}_{\text{user}}}\bigl[V(\tilde A, t) - V(A, t)\bigr].
\label{eq:formula-3-3}
\end{equation}

Write the per-task difference as
\begin{equation}
\mathrm{Adv}_A(\tilde A, t) := V(\tilde A, t) - V(A, t) \in [-1, 1].
\label{eq:formula-3-4}
\end{equation}
It satisfies $\mathrm{Adv}_A(A, t) \equiv 0$ and $\mathrm{Adv}_A(\tilde A, t) = -\mathrm{Adv}_{\tilde A}(A, t)$.

\begin{lemma}
\label{lem:failure-decomposition}
For any integrable function $g : \mathcal{T}_{\text{user}} \to \mathbb{R}$:
\begin{equation}
\mathbb{E}_{t \sim \mathcal{D}_{\text{user}}}[g(t)] = \mathbb{E}_{t \sim \mathcal{D}_{\text{user}}}[\psi_{A_k}(t)\, g(t)] + \mathbb{E}_{t \sim \mathcal{D}_{\text{user}}}[(1 - \psi_{A_k}(t))\, g(t)].
\label{eq:formula-3-5}
\end{equation}
\end{lemma}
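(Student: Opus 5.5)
The identity in Lemma~\ref{lem:failure-decomposition} is a pointwise splitting of the integrand followed by linearity of expectation. For every $t\in\mathcal T_{\mathrm{user}}$ we have $g(t)=\psi_{A_k}(t)\,g(t)+(1-\psi_{A_k}(t))\,g(t)$. Integrating both sides against $\mathcal D_{\mathrm{user}}$ then gives Eq.~\eqref{eq:formula-3-5}, provided each of the two terms on the right is integrable.

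The only step needing care is integrability. First, $\psi_{A_k}$ is measurable in $t$. This follows because $\phi$ is jointly measurable and $A_k$ is a Markov kernel (Definition~\ref{def:agent-kernel}), so $t\mapsto\int\phi(t,o)\,A_k(do\mid t)$ is measurable by the standard kernel-integration result: approximate $\phi$ by simple functions on measurable rectangles and apply a monotone class argument. Second, $\psi_{A_k}(t)\in[0,1]$ by Eq.~\eqref{eq:formula-3-1}, so both $|\psi_{A_k}g|$ and $|(1-\psi_{A_k})g|$ are bounded by $|g|$. Both products are therefore measurable and integrable whenever $g$ is, and linearity of the integral applies. I expect no real obstacle beyond stating this measurability point.

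Finally, I would note the consequence used in Eq.~\eqref{eq:formula-3-8}. Take $g=\mathrm{Adv}_{A_k}(\widetilde A,\cdot)$, which is bounded in $[-1,1]$ under Assumption~\ref{ass:bounded-reward}, and hence integrable. For $0<Z_k<1$, rewrite the two terms using the densities in Eq.~\eqref{eq:formula-3-7}:
\[
\mathbb E_{\mathcal D_{\mathrm{user}}}[\psi_{A_k}g]=Z_k\,\mathbb E_{\mathcal D_{F,k}}[g],\qquad \mathbb E_{\mathcal D_{\mathrm{user}}}[(1-\psi_{A_k})g]=(1-Z_k)\,\mathbb E_{\mathcal D_{R,k}}[g].
\]
Both distributions are valid probability measures, since their densities are nonnegative and integrate to $1$ by the definition of $Z_k$. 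Combined with Eq.~\eqref{eq:formula-3-3} under Assumption~\ref{ass:agent-independent-tasks}, this yields the decomposition stated in the main text.
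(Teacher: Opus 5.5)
Your proposal is correct and follows the paper's proof exactly: split the integrand pointwise using $\psi_{A_k}(t)+(1-\psi_{A_k}(t))=1$, use $0\le\psi_{A_k}\le1$ to get integrability of both products, and apply linearity of expectation under $\mathcal D_{\mathrm{user}}$. Your added remarks on the measurability of $\psi_{A_k}$ and on the density rewriting that yields Eq.~\eqref{eq:formula-3-8} are accurate, though the lemma itself does not need them.
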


The two weighted task distributions in Eq.~\eqref{eq:formula-3-7} may assign positive weight to the same task.

Substituting $g(t)=\mathrm{Adv}_{A_k}(\tilde A,t)$ in Lemma~\ref{lem:failure-decomposition} and using Eq.~\eqref{eq:formula-3-3} gives
\begin{equation}
J_{\text{user}}(\tilde A) - J_{\text{user}}(A_k) = \underbrace{\mathbb{E}_{t \sim \mathcal{D}_{\text{user}}}[\psi_{A_k}(t)\,\mathrm{Adv}_{A_k}(\tilde A, t)]}_{=: L_{A_k}(\tilde A)} + \underbrace{\mathbb{E}_{t \sim \mathcal{D}_{\text{user}}}[(1 - \psi_{A_k}(t))\,\mathrm{Adv}_{A_k}(\tilde A, t)]}_{=: E_{A_k}(\tilde A)}.
\label{eq:formula-3-6}
\end{equation}
\begingroup
\makeatletter
\protected@edef\@currentlabel{\theequation}
\phantomsection\label{eq:formula-3-9}
\makeatother
\endgroup
\begin{proof}[Proof of Lemma~\ref{lem:failure-decomposition}.] For every $t$, $\psi_{A_k}(t) + (1 - \psi_{A_k}(t)) = 1$. Since $g$ is integrable and $0\le\psi_{A_k}\le1$, both $\psi_{A_k}g$ and $(1-\psi_{A_k})g$ are integrable. Taking expectations under $\mathcal D_{\mathrm{user}}$ and using linearity gives Eq.~\eqref{eq:formula-3-5}. \end{proof}

\begin{definition}
\label{def:retained-reward-change}
For a reference agent with $Z_k<1$ and a candidate agent $\widetilde A$, define $D_R(\widetilde A;A_k)$ as the average under $\mathcal D_{R,k}$ of the absolute change in per-task expected reward relative to the current agent:
\[
D_R(\widetilde A;A_k)
:=
\mathbb E_{t\sim\mathcal D_{R,k}}
\bigl|V(\widetilde A,t)-V(A_k,t)\bigr|.
\]
This is the quantity in Section~\ref{sec:qualified-modifications}.
\end{definition}

\paragraph{Proofs for reliable adoption.}

\begin{proof}[Proof of Theorem~\ref{thm:expected-reward-improvement}.] By Jensen's inequality, Eq.~\eqref{eq:formula-3-4}, and Definition~\ref{def:retained-reward-change},

\begin{equation}
\begin{aligned}
|E_{A_k}(\widetilde A)|
&=(1-Z_k)\left|\mathbb E_{t\sim\mathcal D_{R,k}}
[\mathrm{Adv}_{A_k}(\widetilde A,t)]\right|\\
&\le(1-Z_k)\mathbb E_{t\sim\mathcal D_{R,k}}
[|\mathrm{Adv}_{A_k}(\widetilde A,t)|]\\
&=(1-Z_k)D_R(\widetilde A;A_k)
\le(1-Z_k)\delta.
\end{aligned}
\label{eq:formula-F-1}
\end{equation}

Substitution into Eq.~\eqref{eq:formula-3-9} gives

\[
J_{\mathrm{user}}(\widetilde A)-J_{\mathrm{user}}(A_k)
=L_{A_k}(\widetilde A)+E_{A_k}(\widetilde A)
\ge L_{A_k}(\widetilde A)-(1-Z_k)\delta.
\]
\end{proof}

\begin{proof}[Proof of Corollary~\ref{cor:qualified-safe-improvement}.] For every $c\in\mathcal P_k(T)$, Definition~\ref{def:qualified-modification} gives $L_k(c)\ge\lambda$ and $D_k(c)\le\delta$. Theorem~\ref{thm:expected-reward-improvement} therefore gives $J_{\mathrm{user}}(A_k\oplus c)-J_{\mathrm{user}}(A_k)\ge\lambda-(1-Z_k)\delta>0$. Substituting the choice of $\lambda_k(\gamma,\delta)$ in Corollary~\ref{cor:qualified-safe-improvement} gives the prescribed-gain conclusion. \end{proof}

\begin{proposition}
\label{prop:retained-change-properties}
Fix a reference agent $A_k$ with $Z_k<1$.
\begin{resultparts}
\item\label{part:retained-change-boundedness} $D_R(\tilde A; A_k) \in [0,1]$.

\item\label{part:retained-change-reflexivity} $D_R(A_k; A_k) = 0$.

\item\label{part:retained-change-asymmetry} When also $Z(\tilde A)<1$, the two defined quantities $D_R(\tilde A;A_k)$ and $D_R(A_k;\tilde A)$ can differ.

\end{resultparts}
\end{proposition}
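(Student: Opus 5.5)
Parts (i) and (ii) follow directly from Definition~\ref{def:retained-reward-change}; part (iii) needs an explicit counterexample.

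For part (i), Assumption~\ref{ass:bounded-reward} gives $V(\cdot,t)\in[0,1]$ for every $t$. Hence $|\mathrm{Adv}_{A_k}(\tilde A,t)|\in[0,1]$ by Eq.~\eqref{eq:formula-3-4}. Since $Z_k<1$, $\mathcal D_{R,k}$ in Eq.~\eqref{eq:formula-3-7} is a well-defined probability measure: its density $(1-\psi_{A_k})/(1-Z_k)$ is nonnegative and integrates to one. The expectation of a $[0,1]$-valued measurable function under a probability measure lies in $[0,1]$.

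For part (ii), $\mathrm{Adv}_{A_k}(A_k,t)\equiv0$, so the integrand vanishes identically and $D_R(A_k;A_k)=0$.

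For part (iii), the two quantities average the same pointwise absolute difference $|V(\tilde A,t)-V(A_k,t)|$. They differ only in the weighting measure: $D_R(\tilde A;A_k)$ uses $\mathcal D_{R,k}$, built from $\psi_{A_k}$, while $D_R(A_k;\tilde A)$ uses the retained distribution built from $\psi_{\tilde A}$. So it suffices to build agents whose failure profiles differ on tasks where the reward change is concentrated.

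I would take a two-task user space $\{t_1,t_2\}$ with $\mathcal D_{\mathrm{user}}$ uniform. Outputs are binary, with reward $r(t,o)=o$ and detector $\phi(t,o)=1-o$, so $\psi_A(t)=1-V(A,t)$. Let $A_k$ have $V(A_k,t_1)=1$ and $V(A_k,t_2)=0$. Let $\tilde A$ have $V(\tilde A,t_1)=1$ and $V(\tilde A,t_2)=1$. Then $Z_k=1/2<1$, $Z(\tilde A)=0<1$, and $\mathcal D_{R,k}$ is a point mass on $t_1$, so $D_R(\tilde A;A_k)=0$. The retained distribution of $\tilde A$ is uniform, so $D_R(A_k;\tilde A)=\tfrac12(0+1)=1/2\neq0$.

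The only point needing care is that both reference agents satisfy the hypothesis $Z<1$, so that both quantities are defined. The example does this, and it requires only that such kernels exist, which is immediate in this abstract setting.
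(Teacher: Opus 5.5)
Your proof is correct and essentially the paper's: (i)--(ii) follow directly from Definition~\ref{def:retained-reward-change} and bounded rewards. For (iii), both arguments use a small deterministic instance with $r(t,o)=o$ and $\phi(t,o)=1-o$, in which the current agent's retained distribution sits on a task both agents solve, while the candidate's retained distribution also covers the task where they differ. The paper uses three tasks with outputs $(1,0,0)$ and $(1,1,0)$, so both failure rates ($2/3$ and $1/3$) lie strictly in $(0,1)$, as in the convention of Eq.~\eqref{eq:formula-3-7}. Your candidate has $Z(\tilde A)=0$; the stated hypothesis $Z(\tilde A)<1$ allows this, but adding a third task that both agents fail would remove the boundary case.
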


\begin{proof}[Proof of Proposition~\ref{prop:retained-change-properties}.] (\ref{part:retained-change-boundedness})--(\ref{part:retained-change-reflexivity}) follow from Definition~\ref{def:retained-reward-change} and Assumption~\ref{ass:bounded-reward}. For (\ref{part:retained-change-asymmetry}), take three equally likely tasks, binary outputs, $r(t,o)=o$, and $\phi(t,o)=1-o$. Let the deterministic current agent's outputs be $(1,0,0)$ and the candidate outputs $(1,1,0)$. Their failure rates are $2/3$ and $1/3$. The current agent's retained distribution is concentrated on the first task, whereas the candidate's is uniform on the first two. Thus $D_R(\tilde A;A_k)=0$ and $D_R(A_k;\tilde A)=1/2$. \end{proof}

\subsection{Tail constraints and subgroup protection}
\label{app:tail-retention}\label{app:subgroup-improvement}

The mean constraint $\mathbb{E}_{\mathcal{D}_{R,k}}[|\mathrm{Adv}|]\le\delta$ implies, for a measurable subgroup $S$ with probability $w>0$ under $\mathcal D_{R,k}$,
\begin{equation}
\mathbb{E}_{\mathcal{D}_{R,k}}\bigl[|\mathrm{Adv}|\;\big|\;S\bigr]\;\le\;\frac{\delta}{w},
\label{eq:formula-G-19}
\end{equation}
by integrating the nonnegative variable over $S$. The bound becomes less informative as $w$ decreases and exceeds the universal bound $1$ when $w<\delta$.

A tail-level constraint gives uniform control for subgroups above a specified probability. For $\alpha\in(0,1)$ write $\mathrm{CVaR}_\alpha$ for the conditional value at risk at tail level $\alpha$, in the Rockafellar--Uryasev form \citep{rockafellar2002cvar} $\mathrm{CVaR}_\alpha(X)=\min_t\{t+\alpha^{-1}\mathbb{E}[(X-t)_+]\}$, defined for arbitrary distributions.

\begin{definition}
\label{def:tail-deviation}
For $\alpha\in(0,1)$,
\begin{equation}
D_R^{\alpha}(\tilde A;A_k) \;:=\; \mathrm{CVaR}_\alpha^{\mathcal{D}_{R,k}}\bigl(|V(\tilde A,\cdot)-V(A_k,\cdot)|\bigr).
\label{eq:formula-G-20}
\end{equation}
\end{definition}

\begin{proposition}
\label{prop:tail-subgroup-protection}
If $D_R^{\alpha}(\tilde A;A_k)\le\delta$, then for every measurable $S$ with $\mathcal{D}_{R,k}(S)\ge\alpha$,
\begin{equation}
\mathbb{E}_{\mathcal{D}_{R,k}}\bigl[|\mathrm{Adv}_{A_k}(\tilde A,\cdot)|\;\big|\;S\bigr]\;\le\;\delta.
\label{eq:formula-G-21}
\end{equation}
Moreover, $D_R\le D_R^{\alpha}$.
\end{proposition}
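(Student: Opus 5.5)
We prove both claims from the Rockafellar--Uryasev variational form of $\mathrm{CVaR}_\alpha$, applied to the nonnegative variable $X(t):=|\mathrm{Adv}_{A_k}(\tilde A,t)|\in[0,1]$ with $t\sim\mathcal D_{R,k}$. Boundedness of $X$ (Assumption~\ref{ass:bounded-reward}) guarantees that all expectations below are finite. It also guarantees that the minimum over $s$ in the definition is attained. Write $\mathbb E$ for expectation under $\mathcal D_{R,k}$.

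\emph{Second claim, $D_R\le D_R^\alpha$.} Fix any $s\in\mathbb R$. The pointwise bound $X\le s+(X-s)_+$ holds, and $(X-s)_+\le\alpha^{-1}(X-s)_+$ because $\alpha\in(0,1)$ and the positive part is nonnegative. Taking expectations gives
\[
\mathbb E[X]\le s+\alpha^{-1}\mathbb E[(X-s)_+].
\]
Minimizing over $s$ yields $D_R(\tilde A;A_k)=\mathbb E X\le \mathrm{CVaR}_\alpha(X)=D_R^\alpha(\tilde A;A_k)$.

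\emph{First claim, subgroup protection.} Let $S$ be measurable with $w:=\mathcal D_{R,k}(S)\ge\alpha>0$. For every $s$,
\[
\mathbb E[X\mathbf 1_S]\le s\,w+\mathbb E[(X-s)_+\mathbf 1_S]\le s\,w+\mathbb E[(X-s)_+].
\]
Dividing by $w$ gives
\[
\mathbb E[X\mid S]\le s+w^{-1}\mathbb E[(X-s)_+]\le s+\alpha^{-1}\mathbb E[(X-s)_+],
\]
where the last step uses $w\ge\alpha$ and $\mathbb E[(X-s)_+]\ge0$. Minimizing over $s$ gives $\mathbb E[X\mid S]\le \mathrm{CVaR}_\alpha(X)=D_R^\alpha\le\delta$, which is Eq.~\eqref{eq:formula-G-21}.

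\emph{Main obstacle.} The only delicate point is the first inequality in the subgroup bound, $\mathbb E[X\mathbf 1_S]\le s w+\mathbb E[(X-s)_+\mathbf 1_S]$. It holds for all real $s$, including negative values, because $X\le s+(X-s)_+$ pointwise. No atom-splitting or quantile argument is needed, since the variational form handles arbitrary distributions directly.
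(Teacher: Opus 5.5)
Your proof is correct, but it takes a different route from the paper. The paper cites the risk-envelope (dual) representation $\mathrm{CVaR}_\alpha(X)=\max\{\mathbb E[XZ]:0\le Z\le\alpha^{-1},\ \mathbb E Z=1\}$ and plugs in the feasible density $Z=\mathbf 1_S/\mathcal D_{R,k}(S)$ for the subgroup bound, and $Z\equiv1$ for $D_R\le D_R^\alpha$. You instead work directly from the Rockafellar--Uryasev minimization formula, which the paper takes as its definition, using the pointwise inequality $X\le s+(X-s)_+$.

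Your route is self-contained. In effect it proves, from first principles, exactly the weak-duality direction $\mathbb E[XZ]\le\mathrm{CVaR}_\alpha(X)$ that the paper actually uses. The same computation with $\mathbf 1_S/w$ replaced by any $Z$ satisfying $0\le Z\le\alpha^{-1}$ and $\mathbb E Z=1$ gives $\mathbb E[XZ]\le s+\alpha^{-1}\mathbb E[(X-s)_+]$ for every $s$. The cited representation additionally asserts strong duality and attainment of the maximum, which this proposition does not need.

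The paper's version is shorter given the citation. It also makes explicit the interpretation as reweighting by densities bounded by $\alpha^{-1}$, which underlies Remark~\ref{rem:tail-event-comparison}.

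Your remark that the minimum over $s$ is attained is harmless but unnecessary. The bound holds for every $s$, hence for the infimum.
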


The tail-level constraint therefore implies $D_R\le\delta$, so Theorem~\ref{thm:expected-reward-improvement} gives the same expected-reward guarantee as under the mean constraint of Definition~\ref{def:retained-reward-change}.

\begin{proof}[Proof of Proposition~\ref{prop:tail-subgroup-protection}.] The risk-envelope representation of CVaR (\citealp{shapiro2009lectures}, Example 6.19) states
\[
\mathrm{CVaR}_\alpha(X)\;=\;\max\Bigl\{\mathbb{E}[XZ]\;:\;0\le Z\le\alpha^{-1},\ \mathbb{E}[Z]=1\Bigr\},
\]
valid for arbitrary distributions. Taking $Z=\mathbf{1}_S/\mathcal{D}_{R,k}(S)$, which satisfies $0\le Z\le\alpha^{-1}$ precisely because $\mathcal{D}_{R,k}(S)\ge\alpha$, gives $\mathbb{E}[|\mathrm{Adv}|\mid S]=\mathbb{E}[|\mathrm{Adv}|Z]\le\mathrm{CVaR}_\alpha\le\delta$. The second claim is $Z\equiv1$ in the same representation. \end{proof}

\begin{remark}
\label{rem:tail-event-comparison}
The risk-envelope representation implies $\mathbb E[X\mid S]\le\mathrm{CVaR}_\alpha(X)$ whenever $\mathbb P(S)\ge\alpha$, without an atomlessness assumption. For distributions with atoms, \citet[Corollaries 3.20--3.21 and Example 5.4]{acerbi2002coherence} show that equality with the supremum over events need not hold.

\end{remark}

\begin{proposition}
\label{prop:empirical-cvar}
In the retained-task application, let $X=|V(\tilde A,t)-V(A_k,t)|$ for $t\sim\mathcal D_{R,k}$, and let $X_1,\ldots,X_n$ be i.i.d.\ observations of these true quantities. Define the empirical upper-tail statistic by
\[
\widehat{\mathrm{CVaR}}_\alpha
:=\min_{z\in[0,1]}\left\{z+\frac1{n\alpha}\sum_{i=1}^n(X_i-z)_+\right\}.
\]
This definition also applies when $n\alpha$ is not an integer, with fractional weight at the empirical tail boundary. Let $\alpha,\beta\in(0,1)$. If $n\ge25\log(2/\beta)$, then with probability at least $1-\beta$,
\[
\bigl|\widehat{\mathrm{CVaR}}_\alpha-\mathrm{CVaR}_\alpha\bigr|\;\le\;\sqrt{\frac{3\log(2/\beta)}{n\alpha}}+\frac{15\log(2/\beta)}{n\alpha}.
\]
For $0<\varepsilon\le1$, a radius $\varepsilon$ is achieved by $n=O\bigl(\log(2/\beta)/(\alpha\varepsilon^2)\bigr)$ observations. When $n\alpha\ge1$, any estimator has worst-case error $\Omega(1/\sqrt{\alpha n})$ with probability bounded away from zero. Thus the additional factor $1/\alpha$ in the sample size is necessary in the worst case at a fixed confidence level.
\end{proposition}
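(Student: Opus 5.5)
The plan has three parts: a deviation bound for the empirical CVaR through its Rockafellar--Uryasev form, a sample-size inversion, and a two-point lower bound.

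\textbf{Upper bound.} Write $F(z):=z+\alpha^{-1}\mathbb E[(X-z)_+]$ and $\widehat F(z):=z+(n\alpha)^{-1}\sum_i(X_i-z)_+$. Since $X\in[0,1]$, both minima can be restricted to $z\in[0,1]$, so $\mathrm{CVaR}_\alpha=\min_{z\in[0,1]}F(z)$. Let $z^*$ be the $(1-\alpha)$-quantile of $X$, which minimizes $F$, and let $\hat z$ minimize $\widehat F$.

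The lower deviation is immediate:
\[
\widehat{\mathrm{CVaR}}_\alpha-\mathrm{CVaR}_\alpha\le \widehat F(z^*)-F(z^*).
\]
The right-hand side is an average of the i.i.d.\ variables $(X_i-z^*)_+\in[0,1]$, divided by $\alpha$. Each has mean at most $\alpha$ and variance at most $\mathbb E[(X-z^*)_+^2]\le\mathbb P(X\ge z^*)$, which is of order $\alpha$; atoms at $z^*$ need care here. Bernstein's inequality then gives a deviation of $\sqrt{2\alpha\log(2/\beta)/n}+\log(2/\beta)/(3n)$ for the sum's average. Dividing by $\alpha$ yields the two terms $\sqrt{c\log(2/\beta)/(n\alpha)}+c'\log(2/\beta)/(n\alpha)$.

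The other direction, $\mathrm{CVaR}_\alpha-\widehat{\mathrm{CVaR}}_\alpha\le F(\hat z)-\widehat F(\hat z)$, is the main obstacle because $\hat z$ is data dependent. I would first localize $\hat z$: it is an empirical $(1-\alpha)$-quantile. By a multiplicative Chernoff bound, using $n\ge25\log(2/\beta)$ so that $n\alpha$-scale counts concentrate, $\hat z$ lies with high probability in $[z_-,z^*]$, where $z_-$ is the population quantile at level $1-c\alpha$ for a constant $c$ such as $2$.

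On that interval, $(X-z)_+\le (X-z_-)_+$ and $\mathbb P(X>z_-)\le c\alpha$. A one-sided uniform bound over this monotone family then follows by bounding with the single envelope variable $(X-z_-)_+$, together with the linear-in-$z$ structure $(X-z)_+=(X-z_-)_+-(z-z_-)\mathbf 1\{X>z\}+\ldots$. Concretely, I would apply Bernstein to $(X-z_-)_+$ and to $\mathbf 1\{X>z_-\}$; both have variance $O(\alpha)$.

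Combining the two directions with a union bound at level $\beta/2$ each, and tracking constants, gives the $3$ and $15$ in the statement. I expect the constant bookkeeping to be tedious rather than conceptually hard.

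\textbf{Sample size.} Setting each of the two terms to at most $\varepsilon/2$ and using $\varepsilon\le1$ gives $n=O(\log(2/\beta)/(\alpha\varepsilon^2))$. The same choice also satisfies the condition $n\ge25\log(2/\beta)$ whenever $\alpha\varepsilon^2\le 1$, which always holds here.

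\textbf{Lower bound.} I would use a Le Cam two-point argument. Take $X\sim$ Bernoulli$(p)$ with $p=\alpha$ versus $p=\alpha(1+\eta)$, where $\eta=1/\sqrt{\alpha n}\le1$.
\begin{itemize}
\item Under the second distribution, with $p\ge\alpha$, the CVaR equals $1$.
\item Under the first, with $p=\alpha$, the CVaR also equals $1$, so this pair does not separate.
\end{itemize}
Instead I would take $p=\alpha(1-\eta)$ versus $p=\alpha$. For $p\le\alpha$ the CVaR equals $p/\alpha$, so the two values differ by $\eta=1/\sqrt{\alpha n}$.

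The KL divergence between the $n$-fold products is of order $n\alpha\eta^2/(1-\alpha)$, which is $O(1)$. Le Cam's inequality (or Pinsker) then shows that any estimator errs by at least $\eta/2$ with probability bounded away from zero under one of the two distributions. This gives $\Omega(1/\sqrt{\alpha n})$. Inverting it shows that $n=\Omega(1/(\alpha\varepsilon^2))$ is needed for a fixed constant confidence, so the factor $1/\alpha$ is necessary.
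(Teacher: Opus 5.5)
\textbf{Main gap: the reverse direction of the upper bound.} The gap is in $\mathrm{CVaR}_\alpha-\widehat{\mathrm{CVaR}}_\alpha\le F(\hat z)-\widehat F(\hat z)$, which you rightly call the crux. Your other direction is fine: Bernstein on $(X-z^*)_+$ works, and its second moment is at most $\mathbb P(X>z^*)\le\alpha$, with no atom issue.

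Your localization fails in two ways.
\begin{enumerate}
\item It must be two-sided. $\hat z$ is an empirical $(1-\alpha)$-quantile, so it exceeds $z^*$ with non-vanishing probability (about $1/2$ when $n\alpha$ is large). It does not lie in $[z_-,z^*]$ with high probability.
\item Localizing at tail level $c\alpha$ with constant $c$ is too coarse. Bernstein at the single point $z_-$ cannot control the $z$-dependent remainder $-(z-z_-)\mathbf 1\{X>z\}-(X-z_-)\mathbf 1\{z_-<X\le z\}$. What the two fixed-variable bounds yield is
\[
F(z)-\widehat F(z)\le F(z_-)-\widehat F(z_-)+\frac{z-z_-}{\alpha}\bigl[\widehat{\mathbb P}(X>z_-)-\mathbb P(X>z)\bigr],
\]
and the bracket can be of order $(c-1)\alpha$. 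Take mass $\alpha$ at $1$, mass $\alpha$ uniform on $(0,1/2)$, and the rest at $0$. Then $z_-=0$, $z^*=1/2$, and the bound near $z^*$ is about $1/2$ for every $n$.
\end{enumerate}

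\textbf{Two repairs.}
\begin{enumerate}
\item Localize $\hat z$ between the quantiles at tail levels $\alpha\pm\kappa$ with $\kappa\asymp\sqrt{\alpha\log(2/\beta)/n}$; each side costs one Bernstein bound on an indicator. Then compare with $z^*$ by convexity. For $z\in[z^*,z_+]$,
\[
F(z)-\widehat F(z)\le F(z^*)-\widehat F(z^*)+\frac{z-z^*}{\alpha}\bigl[\widehat{\mathbb P}(X>z^*)-\mathbb P(X\ge z)\bigr],
\]
and now the bracket is of order $\kappa+\log(2/\beta)/n$. The region below $z^*$ is handled symmetrically.
\item Use the dual form $\widehat{\mathrm{CVaR}}_\alpha=\max\{\sum_iw_iX_i:0\le w_i\le 1/(n\alpha),\ \sum_iw_i=1\}$. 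Plug in $w_i\propto Z^*(X_i)$, where $Z^*$ is the population optimal density, padding with extra mass when needed (allowed since $X_i\ge0$). This gives $\widehat{\mathrm{CVaR}}_\alpha\ge T/\max\{S,1\}$, with $T=n^{-1}\sum_iZ^*(X_i)X_i$ and $S=n^{-1}\sum_iZ^*(X_i)$. Both have range and variance at most $1/\alpha$, so no data-dependent $z$ remains.
\end{enumerate}
Either route gives the stated rate. Neither, by itself, produces the constants $3$ and $15$ or the condition $n\ge25\log(2/\beta)$. The paper does not derive these: it applies \citet[Theorem C.6]{wang2023cvar} to $1-X$.

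Note also that $n\ge25\log(2/\beta)$ does not make counts at scale $n\alpha$ concentrate. What actually saves you is that for $n\alpha<3\log(2/\beta)$ the claimed radius exceeds $1$, so the bound holds trivially.

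\textbf{Lower bound: uniformity in $\alpha$.} With $p\in\{\alpha(1-\eta),\alpha\}$ and $\eta=1/\sqrt{\alpha n}$, your own computation gives $n\,\mathrm{KL}\le 1/(1-\alpha)$. This is not $O(1)$ uniformly in $\alpha$:
\begin{itemize}
\item Pinsker then gives nothing once $\alpha\ge1/2$.
\item As $\alpha\to1$ the hypotheses become genuinely distinguishable. The Bernoulli standard deviation at $p=\alpha$ is $\sqrt{\alpha(1-\alpha)}$, while the mean gap is $\sqrt{\alpha/n}$.
\end{itemize}
So your argument shows the $1/\alpha$ factor is necessary for small $\alpha$, but not the $\Omega(1/\sqrt{\alpha n})$ claim over all $\alpha\in(0,1)$.

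The paper avoids the boundary $p=\alpha$ by centering at $p_0=\alpha/2$ and $p_1=\alpha/2+c\sqrt{\alpha/n}$ with $c\le 1/8$. Then $p_1\le 5\alpha/8$, so $p_1(1-p_1)\ge 3\alpha/16$ and $n\,\mathrm{KL}\le 16c^2/3$ for every $\alpha$, while the CVaR values still differ by $c/\sqrt{\alpha n}$. Adopting that centering repairs your construction.
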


\begin{proof}[Proof of Proposition~\ref{prop:empirical-cvar}.] Applying \citet[Theorem C.6]{wang2023cvar} to $1-X$ gives the displayed upper-tail bound under the stated sample-size condition. Requiring each term to be at most $\varepsilon/2$ gives the sufficient sample size.

For the lower bound, let $X=\mathrm{Bernoulli}(p)$ with $p\le\alpha$, so $\mathrm{CVaR}_\alpha(X)=p/\alpha$. For $n\alpha\ge1$ and a fixed $c\in(0,1/8]$, set $p_0=\alpha/2$ and $p_1=\alpha/2+c\sqrt{\alpha/n}$. Then $p_1\le5\alpha/8\le\alpha$ and the Bernoulli KL inequality gives
\[
n\,\mathrm{KL}(\mathrm{Ber}(p_0)\Vert\mathrm{Ber}(p_1))
\le \frac{n(p_1-p_0)^2}{p_1(1-p_1)}\le\frac{16c^2}{3}.
\]
Pinsker's inequality bounds the total variation of the two product measures by $\sqrt{8/3}\,c<1/2$, while their CVaR values differ by $c/\sqrt{\alpha n}$. A two-point testing argument therefore gives worst-case estimation error $\Omega(1/\sqrt{\alpha n})$ with probability bounded away from zero. \end{proof}

The usual sufficient sample size for the mean is $O\bigl(\log(2/\beta)/\varepsilon^2\bigr)$, so estimating the tail criterion incurs an additional $1/\alpha$ cost in this worst-case i.i.d.\ comparison.

A rollout-based implementation additionally controls the errors in the two per-task expected rewards.

\paragraph{Subgroup improvement.}
For a fixed candidate, let $g(t)=\mathrm{Adv}_{A_k}(\widetilde A,t)$, $\bar\mu=\mathbb E_{\mathcal D_{\mathrm{user}}}g$, and $\sigma^2=\mathrm{Var}_{\mathcal D_{\mathrm{user}}}g$. Cauchy--Schwarz gives, for every subgroup $S$ with $\mathcal D_{\mathrm{user}}(S)\ge w_0\in(0,1)$,
\[
\mathbb E[g\mid S]\ge\bar\mu-\sigma\sqrt{\frac{1-w_0}{w_0}}.
\]
Indeed, writing $w=\mathcal D_{\mathrm{user}}(S)$ gives
\[
\mathbb E[g\mid S]-\bar\mu
=\frac{\operatorname{Cov}(g,\mathbf1_S)}{w}
\ge-\sigma\sqrt{\frac{\operatorname{Var}(\mathbf1_S)}{w^2}}
=-\sigma\sqrt{\frac{1-w}{w}}
\ge-\sigma\sqrt{\frac{1-w_0}{w_0}},
\]
where the first inequality is Cauchy--Schwarz and the last uses $w\ge w_0$. Thus $\sigma<\bar\mu\sqrt{w_0/(1-w_0)}$ suffices for positive improvement in every such subgroup.

\subsection{Alternative retained-task norms}
\label{app:section-G-12}

For $p\in[1,\infty]$, define
\[
D_R^{(p)}(\tilde A;A_k)
:=\bigl\|V(\tilde A,\cdot)-V(A_k,\cdot)\bigr\|_{L^p(\mathcal D_{R,k})}.
\]
In particular, $D_R^{(1)}=D_R$. At a fixed radius $\eta$, the feasible sets are nested as
\[
\mathcal C_\eta^{(\infty)}\subseteq\cdots\subseteq\mathcal C_\eta^{(1)},
\qquad
\mathcal C_\eta^{(p)}:=\{\tilde A:D_R^{(p)}(\tilde A;A_k)\le\eta\}.
\]

\begin{proposition}
\label{prop:lp-improvement-bound}
Under Assumptions~\ref{ass:bounded-reward} and \ref{ass:agent-independent-tasks}, fix $Z_k\in(0,1)$, $p\in[1,\infty]$, and $\delta_p\ge0$. If $D_R^{(p)}(\tilde A;A_k)\le\delta_p$, then $J_{\mathrm{user}}(\tilde A)-J_{\mathrm{user}}(A_k)\ge L_{A_k}(\tilde A)-(1-Z_k)\delta_p$.
\end{proposition}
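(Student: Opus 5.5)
The plan is to reduce Proposition~\ref{prop:lp-improvement-bound} to Theorem~\ref{thm:expected-reward-improvement} through the monotonicity of $L^p$ norms on a probability space.

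First, since $Z_k\in(0,1)$, the retained-task distribution $\mathcal D_{R,k}$ of Eq.~\eqref{eq:formula-3-7} is a well-defined probability measure: its density $(1-\psi_{A_k})/(1-Z_k)$ is nonnegative and integrates to one.

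Next, the per-task difference $h:=|V(\tilde A,\cdot)-V(A_k,\cdot)|$ is measurable and bounded in $[0,1]$ by Assumption~\ref{ass:bounded-reward}. It therefore lies in every $L^p(\mathcal D_{R,k})$.

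For $p\in[1,\infty)$, I would apply Jensen's inequality to the convex map $x\mapsto x^p$, or equivalently H\"older's inequality with the constant function $1$. This gives
\[
\mathbb E_{\mathcal D_{R,k}}[h]\le\bigl(\mathbb E_{\mathcal D_{R,k}}[h^p]\bigr)^{1/p}=D_R^{(p)}(\tilde A;A_k).
\]
For $p=\infty$, the bound $h\le\|h\|_{L^\infty(\mathcal D_{R,k})}$ holds $\mathcal D_{R,k}$-a.s., so $\mathbb E[h]\le D_R^{(\infty)}$ as well. In both cases
\[
D_R(\tilde A;A_k)=D_R^{(1)}(\tilde A;A_k)\le D_R^{(p)}(\tilde A;A_k)\le\delta_p.
\]
This same inequality is what underlies the nesting $\mathcal C^{(p)}_\eta\subseteq\mathcal C^{(1)}_\eta$ stated just before the proposition.

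Finally, Theorem~\ref{thm:expected-reward-improvement} applies with $\delta=\delta_p$. Its hypotheses are Assumptions~\ref{ass:bounded-reward} and~\ref{ass:agent-independent-tasks} together with $D_R\le\delta$, and it yields
\[
J_{\mathrm{user}}(\tilde A)-J_{\mathrm{user}}(A_k)\ge L_{A_k}(\tilde A)-(1-Z_k)\delta_p.
\]
If one prefers not to cite the theorem, the same conclusion follows by bounding $|E_{A_k}(\tilde A)|$ in the decomposition Eq.~\eqref{eq:formula-3-6} exactly as in Eq.~\eqref{eq:formula-F-1}.

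No step is a genuine obstacle. The only point that needs care is the $p=\infty$ case: the essential supremum must be taken with respect to $\mathcal D_{R,k}$ rather than $\mathcal D_{\mathrm{user}}$. This causes no difficulty, because $\mathcal D_{R,k}\ll\mathcal D_{\mathrm{user}}$ and the expectation defining $D_R$ is itself taken under $\mathcal D_{R,k}$.
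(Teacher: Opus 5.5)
Your proposal is correct and matches the paper's proof. Both use $\|\cdot\|_{L^1(\mathcal D_{R,k})}\le\|\cdot\|_{L^p(\mathcal D_{R,k})}$ on the probability space $\mathcal D_{R,k}$ to obtain $D_R\le\delta_p$, and then invoke Theorem~\ref{thm:expected-reward-improvement} with $\delta=\delta_p$; your remarks on the well-definedness of $\mathcal D_{R,k}$ and on the $p=\infty$ case only fill in details the paper leaves implicit.
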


\begin{proof}[Proof of Proposition~\ref{prop:lp-improvement-bound}.] On the probability space with measure $\mathcal D_{R,k}$, $\|\mathrm{Adv}\|_1\le\|\mathrm{Adv}\|_p$ for every $p\in[1,\infty]$. Thus $D_R\le\delta_p$, and Theorem~\ref{thm:expected-reward-improvement} gives the conclusion. \end{proof}

At a fixed radius, the $L^1$ constraint is the most permissive of these nested requirements. Finite-data validation for another $p$ requires an estimator and a coverage bound for $D_R^{(p)}$.

\subsection{Output-distribution comparisons}
\label{app:section-I-6-2}\label{app:section-G-13}

\begin{equation}
D_R^{\mathrm{KL}}(\tilde A; A_k)
:=\mathbb{E}_{t\sim\mathcal D_{R,k}}
\bigl[D_{\mathrm{KL}}(\tilde A(\cdot\mid t)\Vert A_k(\cdot\mid t))\bigr].
\label{eq:formula-3-12}
\end{equation}
The direction of KL is from the candidate's output distribution to that of the current agent.

\begin{proposition}
\label{prop:reward-preserving-changes}
Fix $A_k$ with $Z_k<1$. If $V(\tilde A,t)=V(A_k,t)$ for $\mathcal D_{\mathrm{user}}$-almost every $t$, then $D_R(\tilde A;A_k)=0$, regardless of whether the outcome distributions differ.
\end{proposition}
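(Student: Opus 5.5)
The argument is direct from the definitions; no concentration or decomposition results are needed. Suppose \(V(\tilde A,t)=V(A_k,t)\) for \(\mathcal D_{\mathrm{user}}\)-almost every \(t\). Then the per-task difference \(\mathrm{Adv}_{A_k}(\tilde A,t)\) of Eq.~\eqref{eq:formula-3-4} vanishes outside a \(\mathcal D_{\mathrm{user}}\)-null set \(N\). The only point to check is that \(N\) is also null for the retained-task distribution. Since \(Z_k<1\), Eq.~\eqref{eq:formula-3-7} defines \(\mathcal D_{R,k}\) with density \((1-\psi_{A_k}(t))/(1-Z_k)\) relative to \(\mathcal D_{\mathrm{user}}\). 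This density is measurable and bounded by \(1/(1-Z_k)\). Hence \(\mathcal D_{R,k}\ll\mathcal D_{\mathrm{user}}\), and \(\mathcal D_{R,k}(N)=0\).

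Next I would evaluate the quantity in Definition~\ref{def:retained-reward-change}. Write
\[
D_R(\tilde A;A_k)=\frac{1}{1-Z_k}\,\mathbb E_{t\sim\mathcal D_{\mathrm{user}}}\bigl[(1-\psi_{A_k}(t))\,|V(\tilde A,t)-V(A_k,t)|\bigr].
\]
The integrand is nonnegative and vanishes \(\mathcal D_{\mathrm{user}}\)-almost everywhere, so the expectation is zero and \(D_R(\tilde A;A_k)=0\). By Assumption~\ref{ass:bounded-reward}, \(|\mathrm{Adv}|\le 1\), so integrability is automatic.

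For the clause ``regardless of whether the outcome distributions differ,'' I would observe that \(D_R\) depends on \(\tilde A\) only through \(t\mapsto V(\tilde A,t)\). This is because \(\mathcal D_{R,k}\) is built from the reference agent's \(\psi_{A_k}\), not the candidate's. So two candidates with different kernels \(\tilde A(\cdot\mid t)\) but the same expected rewards give the same value.

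To make the contrast with Eq.~\eqref{eq:formula-3-12} concrete, I would add a one-line example: a single task with reward \(r(t,o)=1/2\) for two distinct outputs. If \(A_k\) outputs one and \(\tilde A\) the other, then \(D_R=0\) while \(D_R^{\mathrm{KL}}=\infty\), or is positive if both agents mix the outputs with different weights. For \(D_R\) to be defined we need \(Z_k<1\), so I would take \(\phi\equiv0\).

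I do not expect a real obstacle. The only subtle step is the absolute-continuity argument in the first paragraph: the hypothesis is stated under \(\mathcal D_{\mathrm{user}}\), the conclusion concerns \(\mathcal D_{R,k}\), and the assumption \(Z_k<1\) is exactly what makes the density well defined and bounded.
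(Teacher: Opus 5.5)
Your proposal is correct and follows the paper's own proof: $\mathcal D_{R,k}$ has density $(1-\psi_{A_k})/(1-Z_k)$ with respect to $\mathcal D_{\mathrm{user}}$, so the $\mathcal D_{\mathrm{user}}$-a.e.\ equality of expected rewards carries over to $\mathcal D_{R,k}$, and integrating the absolute difference gives $D_R=0$. Your illustrative example (with $\phi\equiv0$, so $Z_k=0<1$) is not needed for the proposition, but it is valid and mirrors Lemma~\ref{lem:reward-outcome-separation}.
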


\begin{proof}[Proof of Proposition~\ref{prop:reward-preserving-changes}.] By Eq.~\eqref{eq:formula-3-7}, $\mathcal D_{R,k}$ is absolutely continuous with respect to $\mathcal D_{\mathrm{user}}$, with density $(1-\psi_{A_k})/(1-Z_k)$. Hence equality of expected rewards $\mathcal D_{\mathrm{user}}$-almost everywhere also holds $\mathcal D_{R,k}$-almost everywhere. Integrating the absolute difference in Definition~\ref{def:retained-reward-change} gives $D_R(\tilde A;A_k)=0$. \end{proof}

\begin{lemma}
\label{lem:reward-outcome-separation}
There exist $\mathcal{O}, \mathcal{T}_{\text{user}}, A_k, \tilde A, \mathcal{D}_{\text{user}}, r$ satisfying Assumption~\ref{ass:bounded-reward} and:
(a) $V(\tilde A, t) = V(A_k, t)$ for all $t \in \mathcal{T}_{\text{user}}$;

(b) $D_R(\tilde A; A_k) = 0$;

(c) $D_R^{\mathrm{KL}}(\tilde A; A_k) = \infty$.
\end{lemma}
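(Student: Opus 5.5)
The plan is an explicit construction in which the two agents' output distributions have disjoint supports, while every task's output distribution has the same expected reward under both agents. Property (a) then gives (b) directly through Proposition~\ref{prop:reward-preserving-changes}, and disjoint supports make the KL divergence infinite on every task.

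\textbf{Construction.} Take a single user task, $\mathcal T_{\mathrm{user}}=\{t_0\}$, with $\mathcal D_{\mathrm{user}}=\delta_{t_0}$. Take $\mathcal O=\{o_1,o_2,o_3\}$ with the discrete $\sigma$-algebra, which is separable. Set $r(t_0,o_1)=r(t_0,o_2)=1/2$ and $r(t_0,o_3)=0$, so Assumption~\ref{ass:bounded-reward} holds. Use the failure detector $\phi(t_0,o)=\mathbf 1\{o=o_3\}$. Define the agents as follows:
\begin{itemize}
\item $A_k(\cdot\mid t_0)=\tfrac12\delta_{o_1}+\tfrac12\delta_{o_3}$;
\item $\tilde A(\cdot\mid t_0)=\tfrac12\delta_{o_2}+\tfrac12\delta_{o_3}$.
\end{itemize}
Then $V(A_k,t_0)=V(\tilde A,t_0)=1/4$, which gives (a). The current agent has $\psi_{A_k}(t_0)=1/2$, so $Z_k=1/2<1$ and $\mathcal D_{R,k}=\delta_{t_0}$ is well defined.

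\textbf{Property (b).} This follows from Proposition~\ref{prop:reward-preserving-changes}, or directly: the integrand $|V(\tilde A,t_0)-V(A_k,t_0)|$ is $0$.

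\textbf{Property (c).} By Eq.~\eqref{eq:formula-3-12}, $D_R^{\mathrm{KL}}=D_{\mathrm{KL}}(\tilde A(\cdot\mid t_0)\Vert A_k(\cdot\mid t_0))$. Here $\tilde A$ puts mass $1/2$ on $o_2$, while $A_k$ gives $o_2$ mass $0$. So $\tilde A(\cdot\mid t_0)$ is not absolutely continuous with respect to $A_k(\cdot\mid t_0)$, and by the standard convention the KL divergence is $+\infty$.

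\textbf{Main obstacle.} The one delicate point is the direction of the KL divergence. The candidate must charge an outcome that the current agent never produces; the reverse inclusion would not suffice. The other requirements are straightforward checks:
\begin{itemize}
\item the current agent must have $0<Z_k<1$, so that $\mathcal D_{R,k}$ is defined;
\item the agents must be valid Markov kernels, which is trivial on a one-point task space.
\end{itemize}

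A simpler variant drops the failure output $o_3$ and sets $\phi\equiv0$, but that makes $Z_k=0$. Keeping $0<Z_k<1$ keeps the example within the paper's standing regime.
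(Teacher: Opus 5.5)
Your construction is correct and matches the paper's proof. Both use equal expected rewards on a task with positive $\mathcal D_{R,k}$-mass, a candidate that charges an outcome the current agent never produces, and $0<Z_k<1$; the paper uses two tasks with a task-deterministic detector, while you use one task on which the failure-task and retained-task weights overlap, as the paper explicitly permits. One wording slip: your two output distributions are not disjoint, since both charge $o_3$, but (c) only needs $\tilde A(\cdot\mid t_0)$ not to be absolutely continuous with respect to $A_k(\cdot\mid t_0)$, and that is exactly what your argument uses.
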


\begin{proof} Fix $z\in(0,1)$. Take two tasks $t_0,t_1$ with probabilities $1-z,z$ and outcomes $\{a,b\}$. On $t_0$, let $A_k=\delta_a$, $\tilde A=\delta_b$, $r(t_0,a)=r(t_0,b)=1/2$, and $\phi(t_0,\cdot)=0$. On $t_1$, give both agents the same kernel and reward and set $\phi(t_1,\cdot)=1$. Then $Z_k=z$ and $\mathcal D_{R,k}=\delta_{t_0}$. The expected rewards agree on both tasks, so $D_R=0$; the candidate assigns mass one to an output of zero probability under the current agent on $t_0$, giving $D_R^{\mathrm{KL}}=\infty$. \end{proof}

\paragraph{Improvement bounds.}

Bounding the output-distribution KL divergence gives the following improvement bound.

\begin{proposition}
\label{prop:kl-improvement-bound}
Under Assumptions~\ref{ass:bounded-reward} and \ref{ass:agent-independent-tasks}, fix a current agent with $Z_k<1$. If $D_R^{\mathrm{KL}}(\tilde A; A_k) \leq \delta'$, then:
\begin{equation}
J_{\text{user}}(\tilde A) - J_{\text{user}}(A_k) \geq L_{A_k}(\tilde A) - (1 - Z_k)\,\sqrt{\delta'/2}.
\label{eq:formula-G-22}
\end{equation}
\end{proposition}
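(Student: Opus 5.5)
The plan is to reduce Proposition~\ref{prop:kl-improvement-bound} to Theorem~\ref{thm:expected-reward-improvement}. It suffices to show that the KL constraint implies \(D_R(\tilde A;A_k)\le\sqrt{\delta'/2}\). Once this holds, Theorem~\ref{thm:expected-reward-improvement} with \(\delta=\sqrt{\delta'/2}\) gives Eq.~\eqref{eq:formula-G-22} directly. So the work is entirely in bounding the per-task reward change by an output-distribution divergence and then averaging over \(\mathcal D_{R,k}\).

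The pointwise step comes first. Fix \(t\) in the support of \(\mathcal D_{R,k}\). Since \(r(t,\cdot)\in[0,1]\) by Assumption~\ref{ass:bounded-reward}, the variational characterization of total variation gives
\[
|V(\tilde A,t)-V(A_k,t)|=\Bigl|\int r(t,o)\,(\tilde A-A_k)(do\mid t)\Bigr|\le\mathrm{TV}\bigl(\tilde A(\cdot\mid t),A_k(\cdot\mid t)\bigr).
\]
The constant \(1/2\) can be subtracted from \(r\) without changing the integral, so the sup-norm that matters is at most \(1/2\) and the bound is TV rather than twice TV. Pinsker's inequality then gives \(\mathrm{TV}\le\sqrt{D_{\mathrm{KL}}(\tilde A(\cdot\mid t)\Vert A_k(\cdot\mid t))/2}\), which holds trivially when the KL is infinite.

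Next comes averaging. Integrating over \(t\sim\mathcal D_{R,k}\), and using concavity of \(x\mapsto\sqrt{x}\) together with Jensen's inequality, we get
\[
D_R(\tilde A;A_k)\le\mathbb E_{\mathcal D_{R,k}}\sqrt{D_{\mathrm{KL}}/2}\le\sqrt{\mathbb E_{\mathcal D_{R,k}}[D_{\mathrm{KL}}]/2}=\sqrt{D_R^{\mathrm{KL}}(\tilde A;A_k)/2}\le\sqrt{\delta'/2}.
\]
The finiteness of \(\delta'\) makes the expectation finite, so the Jensen step is legitimate.

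The main point needing care is measurability of \(t\mapsto D_{\mathrm{KL}}(\tilde A(\cdot\mid t)\Vert A_k(\cdot\mid t))\), so that the expectation defining \(D_R^{\mathrm{KL}}\) makes sense. I would handle this by invoking the separability of \(\Sigma_{\mathcal O}\) from Appendix~\ref{app:formal-setting}. That allows KL to be written as a supremum over finite partitions drawn from a countable generating algebra, and each term of that supremum is measurable in \(t\) because both agents are Markov kernels. Alternatively, measurability can be taken as implicit in the hypothesis that \(D_R^{\mathrm{KL}}\le\delta'\) is defined. The remaining steps are routine.
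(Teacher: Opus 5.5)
Your proposal is correct and follows essentially the same route as the paper. Both arguments use a pointwise bound $|\mathrm{Adv}_{A_k}(\tilde A,t)|\le D_{\mathrm{TV}}$, then Pinsker's inequality, then Jensen for the concave square root, then substitution into the decomposition underlying Theorem~\ref{thm:expected-reward-improvement}; the paper obtains the pointwise bound from the Hahn--Jordan decomposition rather than by centering the reward, and your remark on measurability of the per-task KL via separability of $\Sigma_{\mathcal O}$ addresses a point the paper leaves implicit.
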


\begin{proposition}
\label{prop:kl-deviation-comparison}
Fix a current agent with $Z_k<1$ and a retained-task bound $\eta>0$.
\begin{resultparts}
\item\label{part:equal-retained-penalty} Both $D_R(\tilde A;A_k)\le\eta$ and $D_R^{\mathrm{KL}}(\tilde A;A_k)\le2\eta^2$ are sufficient for $\mathbb E_{t\sim\mathcal D_{R,k}}[|\mathrm{Adv}_{A_k}(\tilde A,t)|]\le\eta$.

\item\label{part:retained-constraint-inclusion} The feasible sets satisfy
\[
\{\tilde A:D_R^{\mathrm{KL}}(\tilde A;A_k)\le2\eta^2\}
\subseteq
\{\tilde A:D_R(\tilde A;A_k)\le\eta\}.
\]
There exist instances in the model class for which this inclusion is strict.
\end{resultparts}
\end{proposition}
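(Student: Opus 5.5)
Part (\ref{part:equal-retained-penalty}) comes first. The first condition is immediate from Definition~\ref{def:retained-reward-change}, since $D_R(\tilde A;A_k)$ is exactly $\mathbb E_{t\sim\mathcal D_{R,k}}[|\mathrm{Adv}_{A_k}(\tilde A,t)|]$. For the KL condition I would work task by task. For each $t$, the rewards $r(t,\cdot)$ take values in $[0,1]$ (Assumption~\ref{ass:bounded-reward}). Hence
\[
|\mathrm{Adv}_{A_k}(\tilde A,t)|=\Bigl|\int r(t,o)\,(\tilde A-A_k)(do\mid t)\Bigr|\le \mathrm{TV}\bigl(\tilde A(\cdot\mid t),A_k(\cdot\mid t)\bigr),
\]
because the integral of a $[0,1]$-valued function against a signed measure of zero total mass is bounded by the total variation. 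Pinsker's inequality then gives $\mathrm{TV}\le\sqrt{D_{\mathrm{KL}}(\tilde A(\cdot\mid t)\Vert A_k(\cdot\mid t))/2}$. I would integrate this against $\mathcal D_{R,k}$ and apply Jensen's inequality to the concave square root:
\[
\mathbb E_{\mathcal D_{R,k}}|\mathrm{Adv}|\le\mathbb E_{\mathcal D_{R,k}}\sqrt{D_{\mathrm{KL}}/2}\le\sqrt{D_R^{\mathrm{KL}}/2}\le\sqrt{2\eta^2/2}=\eta.
\]
This is the same chain that proves Proposition~\ref{prop:kl-improvement-bound}, so I may cite it instead of redoing it. Two measurability points need a brief remark: that $t\mapsto D_{\mathrm{KL}}(\cdot)$ is measurable, which uses separability of $\Sigma_{\mathcal O}$, and that the chain remains valid when the KL divergence is infinite on a null set.

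The inclusion in part (\ref{part:retained-constraint-inclusion}) is a restatement of part (\ref{part:equal-retained-penalty}). Every candidate with $D_R^{\mathrm{KL}}\le2\eta^2$ has $D_R\le\eta$.

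For strictness I would reuse the construction in Lemma~\ref{lem:reward-outcome-separation}. Fix $z\in(0,1)$ and take two tasks. On $t_0$ the current agent outputs $a$, the candidate outputs $b$, both outcomes have reward $1/2$, and no failure is flagged. On $t_1$ the two agents coincide and failure is always flagged. Then $\mathcal D_{R,k}=\delta_{t_0}$, so $D_R(\tilde A;A_k)=0\le\eta$, while $D_R^{\mathrm{KL}}=\infty>2\eta^2$. The candidate therefore lies in the right-hand set but not the left, for every $\eta>0$.

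No step is a serious obstacle. The only point needing care is the total-variation bound on the reward difference. It requires the one-sided range $[0,1]$: for a general $[a,b]$-valued reward the bound would pick up a factor $b-a$. I would state that convention explicitly, $\mathrm{TV}=\sup_B|P(B)-Q(B)|$, so that the constants match $2\eta^2$ exactly.
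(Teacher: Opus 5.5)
Your proposal is correct and follows the paper's own proof essentially step for step. Part (i) and the inclusion use the pointwise bound $|\mathrm{Adv}_{A_k}(\tilde A,t)|\le D_{\mathrm{TV}}$, then Pinsker, then Jensen; strictness uses the instance of Lemma~\ref{lem:reward-outcome-separation}, which has $D_R=0$ and $D_R^{\mathrm{KL}}=\infty$. Your measurability remarks and the note on the $[0,1]$ range add care that the paper leaves implicit.
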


The constraint $D_R\le\delta$ admits at least the candidate agents allowed by $D_R^{\mathrm{KL}}\le2\delta^2$, and can additionally admit reward-preserving changes in the output distribution that violate the KL constraint. Substituting $\delta'=2\delta^2$ into Eq.~\eqref{eq:formula-G-22} returns $(1-Z_k)\delta$, exactly the retained-task term of Theorem~\ref{thm:expected-reward-improvement}. Thus the two sufficient conditions give the same expected-reward bound, while their admissible sets can differ.

\begin{proof}[Proof of Proposition~\ref{prop:kl-improvement-bound}.] Use the retained-task expression for $|E_{A_k}(\tilde A)|$ in Eq.~\eqref{eq:formula-F-1}. Fix $t$, and write $P = \tilde A(\cdot|t)$, $Q = A_k(\cdot|t)$, $f = r(t, \cdot) \in [0,1]$. By the Hahn-Jordan decomposition $\nu := P - Q = \nu^+ - \nu^-$, with $\nu^+(\mathcal{O}) = \nu^-(\mathcal{O}) = D_{\mathrm{TV}}(P, Q)$. Since $f \in [0,1]$ and $\int f\, d\nu^\pm \in [0, D_{\mathrm{TV}}]$:
\begin{equation}
|\mathrm{Adv}_{A_k}(\tilde A, t)| = \left|\int f\, d\nu\right| \leq D_{\mathrm{TV}}(P, Q).
\label{eq:formula-F-2}
\end{equation}

By Pinsker's inequality $D_{\mathrm{TV}} \leq \sqrt{D_{\mathrm{KL}}/2}$ and the concavity of $\sqrt{\cdot}$ (Jensen):
\[
\mathbb{E}_{t \sim \mathcal{D}_{R,k}}[|\mathrm{Adv}|] \leq \mathbb{E}_{t \sim \mathcal{D}_{R,k}}\bigl[\sqrt{D_{\mathrm{KL}}/2}\bigr] \leq \sqrt{D_R^{\mathrm{KL}}/2} \leq \sqrt{\delta'/2}.
\]

Substituting this bound into Eq.~\eqref{eq:formula-3-9} yields Eq.~\eqref{eq:formula-G-22}. \end{proof}

\begin{proof}[Proof of Proposition~\ref{prop:kl-deviation-comparison}.] (\ref{part:equal-retained-penalty}) The $D_R$ constraint gives the penalty $\eta$ directly; the KL constraint gives $\sqrt{2\eta^2/2}=\eta$ by Eq.~\eqref{eq:formula-F-2}, Pinsker's inequality, and Jensen's inequality. (\ref{part:retained-constraint-inclusion}) Pinsker's inequality and Jensen give the explicit comparison $D_R\le\sqrt{D_R^{\mathrm{KL}}/2}$, so every candidate satisfying the KL constraint also satisfies the corresponding $D_R$ constraint. Lemma~\ref{lem:reward-outcome-separation} provides an instance with $D_R=0$ and $D_R^{\mathrm{KL}}=\infty$, so for every finite $\eta\ge0$ strict inclusion occurs in an instance allowed by the model. \end{proof}

\subsection{Reward fidelity and ideal utility}
\label{app:reward-fidelity}

Let the jointly measurable utility $r^*:\mathcal T\times\mathcal O\to[0,1]$ represent the designer's ideal preference. Define its difference from the paper's reward by $\Delta r^{\mathrm{tar}}(t,o):=r^{\mathrm{tar}}(t,o)-r^*(t,o)\in[-1,1]$.
\begin{assumption}[Uniform reward fidelity]
\label{ass:reward-fidelity}
There is $\epsilon_{\mathrm{tar}}\ge0$ such that $|\Delta r^{\mathrm{tar}}(t,o)|\le\epsilon_{\mathrm{tar}}$ for every task--outcome pair.
\end{assumption}
Write $V^*(A,t):=\int r^*(t,o)A(do\mid t)$ and $J^*_{\mathrm{user}}(A):=\mathbb E_{\mathcal D_{\mathrm{user}}}V^*(A,t)$. These quantities are measurable and lie in $[0,1]$. The pointwise fidelity condition covers every agent, including those obtained by generation and selection.

\begin{proposition}
\label{prop:reward-fidelity}
Under Assumption~\ref{ass:reward-fidelity},
\[
\bigl|J_{\mathrm{user}}(A)-J^*_{\mathrm{user}}(A)\bigr|\le\epsilon_{\mathrm{tar}}
\]
for every agent $A$.
\end{proposition}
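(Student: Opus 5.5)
The plan is to bound the per-task discrepancy pointwise and then integrate. Fix an agent $A$ and a user task $t$. By linearity of the integral against the probability measure $A(\cdot\mid t)$,
\[
V(A,t)-V^*(A,t)=\int_{\mathcal O}\bigl(r^{\mathrm{tar}}(t,o)-r^*(t,o)\bigr)A(do\mid t)=\int_{\mathcal O}\Delta r^{\mathrm{tar}}(t,o)\,A(do\mid t).
\]
This step is legitimate because both rewards are jointly measurable and bounded in $[0,1]$, so each integral is finite and the difference can be taken inside.

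Next I apply the triangle inequality for integrals together with Assumption~\ref{ass:reward-fidelity}. This gives $|V(A,t)-V^*(A,t)|\le\int|\Delta r^{\mathrm{tar}}(t,o)|\,A(do\mid t)\le\epsilon_{\mathrm{tar}}$, since $A(\cdot\mid t)$ has total mass one.

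Finally I integrate over $t\sim\mathcal D_{\mathrm{user}}$. Measurability of $t\mapsto V(A,t)$ and $t\mapsto V^*(A,t)$ is already noted in the paper (Eq.~\eqref{eq:formula-2-7} and its analogue for $r^*$). Linearity of expectation then gives
\[
J_{\mathrm{user}}(A)-J^*_{\mathrm{user}}(A)=\mathbb E_{t\sim\mathcal D_{\mathrm{user}}}\bigl[V(A,t)-V^*(A,t)\bigr].
\]
Applying Jensen's inequality (or the triangle inequality) to the absolute value, together with the pointwise bound, yields $|J_{\mathrm{user}}(A)-J^*_{\mathrm{user}}(A)|\le\epsilon_{\mathrm{tar}}$.

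No step is a genuine obstacle. The only point needing care is that the bound is uniform in $A$: Assumption~\ref{ass:reward-fidelity} is pointwise in $(t,o)$ and does not depend on the agent, so the argument holds for agents produced by data-dependent generation and selection without any further conditioning argument.
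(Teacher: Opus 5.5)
Your proof is correct and essentially the paper's argument. The paper writes $J_{\mathrm{user}}(A)-J^*_{\mathrm{user}}(A)$ as the iterated expectation of $r^{\mathrm{tar}}-r^*$ and applies the triangle inequality with the pointwise fidelity bound in a single step, whereas you bound the inner outcome integral first and then average over tasks, which is the same computation.
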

\begin{proof} By Eqs.~\eqref{eq:formula-2-7}--\eqref{eq:formula-2-8} and the corresponding definitions using $r^*$,
\[
J_{\mathrm{user}}(A)-J^*_{\mathrm{user}}(A)
=\mathbb E_{t\sim\mathcal D_{\mathrm{user}}}
\mathbb E_{o\sim A(\cdot\mid t)}[r(t,o)-r^*(t,o)].
\]
Taking absolute values and using the triangle inequality gives
\[
\bigl|J_{\mathrm{user}}(A)-J^*_{\mathrm{user}}(A)\bigr|
\leq \mathbb E_{t\sim\mathcal D_{\mathrm{user}}}\mathbb E_{o\sim A(\cdot\mid t)}
\bigl[|r(t,o)-r^*(t,o)|\bigr]
\leq\epsilon_{\mathrm{tar}},
\]
where the last inequality is pointwise by Assumption~\ref{ass:reward-fidelity}. \end{proof}

\begin{corollary}
\label{cor:utility-improvement}
Under Assumption~\ref{ass:reward-fidelity}, if
\[
J_{\mathrm{user}}(\tilde A)-J_{\mathrm{user}}(A_k)\ge\gamma,
\]
then
\begin{equation}
J^*_{\mathrm{user}}(\tilde A)-J^*_{\mathrm{user}}(A_k)
\ge\gamma-2\epsilon_{\mathrm{tar}}.
\label{eq:formula-E-1}
\end{equation}
Consequently, a finite-sample lower bound on the expected-reward difference transfers on the same confidence event after subtracting $2\epsilon_{\mathrm{tar}}$.
\end{corollary}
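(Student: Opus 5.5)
The plan is to use Proposition~\ref{prop:reward-fidelity} on each of the two agents and combine the resulting bounds through the triangle inequality. Write
\[
J^*_{\mathrm{user}}(\tilde A)-J^*_{\mathrm{user}}(A_k)
=\bigl[J_{\mathrm{user}}(\tilde A)-J_{\mathrm{user}}(A_k)\bigr]
-\bigl[J_{\mathrm{user}}(\tilde A)-J^*_{\mathrm{user}}(\tilde A)\bigr]
+\bigl[J_{\mathrm{user}}(A_k)-J^*_{\mathrm{user}}(A_k)\bigr].
\]
The first bracket is at least $\gamma$ by hypothesis. Proposition~\ref{prop:reward-fidelity} holds for every agent, including the generated candidate $\tilde A$. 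It therefore bounds the second bracket above by $\epsilon_{\mathrm{tar}}$, so the subtracted term contributes at least $-\epsilon_{\mathrm{tar}}$. It bounds the third bracket below by $-\epsilon_{\mathrm{tar}}$. Adding the three bounds gives Eq.~\eqref{eq:formula-E-1}.

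For the finite-sample consequence, I would fix the confidence event on which the stated lower bound holds, for example $J_{\mathrm{user}}(\tilde A)-J_{\mathrm{user}}(A_k)\ge\Delta_k$ on $\mathsf{Acc}_k$ from Theorem~\ref{thm:one-step-guarantee}. On that event I apply the first part pointwise, with $\gamma$ set to the realized lower bound. The argument needs no extra probability because Assumption~\ref{ass:reward-fidelity} is deterministic and uniform over all task--outcome pairs, so it holds for data-dependent agents as well.

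I expect no real obstacle. The only point to check is that Proposition~\ref{prop:reward-fidelity} applies to a randomly selected $\tilde A$, and it does, because its bound is deterministic for every fixed agent and so holds for any realization.
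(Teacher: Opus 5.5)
Your proposal is correct and matches the paper's proof: it uses the same three-bracket decomposition of $J^*_{\mathrm{user}}(\tilde A)-J^*_{\mathrm{user}}(A_k)$, applies Proposition~\ref{prop:reward-fidelity} to each agent, and makes the same pathwise transfer with $\gamma=\Delta_k$ on the confidence event of Theorem~\ref{thm:one-step-guarantee}. One small correction: that event is $\mathsf{Acc}_k$ outside the bad-accept event $\mathsf{Acc}_k\cap\{I_k<\Delta_k\}$, not all of $\mathsf{Acc}_k$, although your phrase ``the confidence event on which the stated lower bound holds'' already states it correctly.
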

\begin{proof} Write the ground-truth improvement as
\[
\begin{aligned}
J^*_{\mathrm{user}}(\tilde A)-J^*_{\mathrm{user}}(A_k)
={}&J_{\mathrm{user}}(\tilde A)-J_{\mathrm{user}}(A_k)\\
&-\bigl[J_{\mathrm{user}}(\tilde A)-J^*_{\mathrm{user}}(\tilde A)\bigr]
+\bigl[J_{\mathrm{user}}(A_k)-J^*_{\mathrm{user}}(A_k)\bigr].
\end{aligned}
\]
Applying Proposition~\ref{prop:reward-fidelity} to the two agents proves Eq.~\eqref{eq:formula-E-1}. \end{proof}

In particular, when Theorem~\ref{thm:one-step-guarantee} supplies the stated lower bound $\Delta_k$ on the expected-reward difference on its confidence event, Corollary~\ref{cor:utility-improvement} applies with $\gamma=\Delta_k$.

For fixed $r,r^*$ and $\mathcal D_{\mathrm{user}}$, applying Corollary~\ref{cor:utility-improvement} to $A_0,A_K$ gives an endpoint utility gain at least $J_{\mathrm{user}}(A_K)-J_{\mathrm{user}}(A_0)-2\epsilon_{\mathrm{tar}}$.

\section{Generation and Validation for Reliable Adoption}
\label{app:certification-supplement}

\subsection{Validation data and update procedure}
\label{app:proofs-evaluation}\label{app:section-I-5-3}\label{app:section-I-5-4}\label{app:section-G-16}\label{app:stored-sample-reuse}\label{app:deviation-bias}

This appendix supplies the evaluation conditions and proofs for Section~\ref{sec:certification}. The update procedure uses the assumptions below and Conditions~\ref{cond:evaluation-data-separation} and \ref{cond:returned-sample-coverage}.

The sequence $\{(T_n, O_n, \Phi_n)\}_{n \geq 1}$ records the user-facing task $T_n \in \mathcal{T}_{\text{user}}$, the outcome $O_n \in \mathcal{O}$, and the failure indicator $\Phi_n := \phi(T_n, O_n)$ for the $n$-th task handled by the current agent.

\begin{assumption}[Stationary $\beta$-mixing task process]
\label{ass:mixing-task-process}
$\{T_n\}_{n \geq 1}$ is a stationary ergodic process with marginal distribution $T_n \sim \mathcal{D}_{\text{user}}$. Write $\mathcal F_n:=\sigma(T_1,\ldots,T_n)$ and $\mathcal F_{n+k}^{\infty}:=\sigma(T_{n+k},T_{n+k+1},\ldots)$. The process satisfies $\beta$-mixing: there exist coefficients $\{\beta(k)\}_{k \geq 1}$ tending to zero, where
\begin{equation}
\beta(k) := \sup_{n \geq 1}\,\mathbb{E}\!\left[\sup_{B \in \mathcal F_{n+k}^{\infty}}\bigl|\mathbb{P}(B\mid\mathcal F_n)-\mathbb{P}(B)\bigr|\right].
\label{eq:formula-4-1}
\end{equation}
\end{assumption}

\begin{remark}
\label{rem:mixing-examples}
Stationary $\beta$-mixing processes \citep{yu1994rates,mohri2009rademacher} include i.i.d.\ tasks, finite-range dependent tasks, and ergodic finite-state Markov chains under the usual aperiodicity conditions. Independence beyond a finite lag holds for the finite-range case.
\end{remark}

Assumption~\ref{ass:mixing-task-process} and $\ell$ control the stored-sample distribution comparison. Failure-indexed evaluation is governed separately by Condition~\ref{cond:returned-sample-coverage}, with Proposition~\ref{prop:first-failure-sampling} providing an explicit i.i.d. task-stream construction.

\paragraph{Observed outcomes and fresh evaluation.}
\begin{assumption}
\label{ass:failure-detector-locality}
The failure detector is local to each task: $\Phi_n$ depends only on $(T_n,O_n)$, not on $\{T_m,O_m:m\neq n\}$.
\end{assumption}

\begin{assumption}
\label{ass:task-outcome-model}
Given $T_n$ and the current agent $A_k$, $O_n\sim A_k(\cdot\mid T_n)$ is conditionally independent of $\{T_m,O_m:m<n\}$.
\end{assumption}

\begin{assumption}
\label{ass:outcome-evaluation-independence}
Strengthening Assumption~\ref{ass:task-outcome-model}, for the current agent $A_k$, the sequence $\{O_n\}_{n \geq 1}$ satisfies
\begin{equation}
O_n \perp \{(T_m, O_m) : m \neq n\} \cup \{O^{\text{fresh}}_{m, l} : \forall m, l\} \,\bigg|\, T_n.
\label{eq:formula-4-2}
\end{equation}

where $\{O^{\text{fresh}}_{m, l}\}$ are the fresh evaluation rollouts used for validation and defined in Eq.~\eqref{eq:formula-4-14}. That is, the observed task outcome $O_n$ given $T_n$ is conditionally independent of the other observed task records and fresh rollouts for this fixed current agent.
\end{assumption}

\begin{assumption}
\label{ass:trial-consistency}
For a fixed current agent $A_k$ and task $t$, repeated evaluations of $\phi$ and $r$ are i.i.d. When the evaluator is an LLM judge, its prompt template, model version, sampling temperature, and other settings are held fixed.
\end{assumption}

\begin{definition}[Evolution step]
\label{def:evolution-step}
The $k$-th evolution step uses predeclared finite task-arrival horizons for generation material, evaluation on failure tasks, and any recollection of the stored sample. Its output is
\begin{equation}
(A_k,F_k^{\mathrm{gen}},\Delta C,\mathrm{decision}_k)
\mapsto A_{k+1}=
\begin{cases}
A_k\oplus\Delta C,&\mathrm{decision}_k=\mathrm{accept},\\
A_k,&\text{otherwise},
\end{cases}
\label{eq:formula-4-3}
\end{equation}
When generation material is available, the current agent draws
\begin{equation}
\Delta C\sim\Pi_k(\cdot\mid F_k^{\mathrm{gen}})
=A_k(\cdot\mid t_{F_k^{\mathrm{gen}}});
\label{eq:formula-4-4}
\end{equation}
otherwise $\Delta C=\bot$.
Here $A_k=(M,C_k)$ is the current agent and $F_k^{\mathrm{gen}}$ is the returned generation material. Write $\mathsf{Cand}_k:=\{\Delta C\in\mathcal M\}$ and $\mathrm{decision}_k\in\{\mathrm{accept},\mathrm{reject},\mathrm{abstain}\}$. The increment $I_k:=J_{\mathrm{user}}(A_{k+1})-J_{\mathrm{user}}(A_k)$ is defined on every branch. Observable margins have their pre-adoption definitions on $\mathsf{Cand}_k\cap\mathsf{Ready}_k$ and are zero elsewhere. Procedure~\ref{alg:finite-horizon-update} specifies collection and evaluation.
\end{definition}

\paragraph{Stored sample and returned failure batch.}
The stored sample consists of $n_R$ task records $(t_j^R,\phi_j^R)$ labeled by the current agent. Its distribution satisfies Condition~\ref{cond:evaluation-data-separation}; Remark~\ref{rem:common-offset} gives a fixed-window construction. Separately, Phase E returns $n_F$ failure-marked tasks within its declared finite horizon, or returns $\bot$.

\paragraph{Reward estimation from repeated evaluations.}
Failure-marked outcomes of the current agent are not reused as value estimates. Given the frozen agents and evaluation records, both $A_k$ and $\tilde A$ are rerun using independent groups of $m$ fresh rollouts on every evaluation task:
\begin{equation}
\hat V(A, t) := \frac{1}{m}\sum_{l=1}^m r(t, o_l), \quad o_l \overset{\text{i.i.d.}}{\sim} A(\cdot | t).
\label{eq:formula-4-14}
\end{equation}
The stored failure mark remains unchanged because it defines the weighting by the retained-task distribution. The stored failure-rate estimate is $\hat Z_k:=\frac{1}{n_R}\sum_{j=1}^{n_R}\phi^R_j$. The failure-task improvement estimate is
\begin{equation}
\hat L_k := \hat Z_k \cdot \frac{1}{n_F}\sum_{i=1}^{n_F}\bigl[\hat V(\tilde A, t^F_i) - \hat V(A_k, t^F_i)\bigr],
\label{eq:formula-4-16}
\end{equation}
and, on the nonempty retained set $S_R:=\{j:\phi_j^R=0\}$,
\begin{equation}
\hat D_R := \frac{1}{n_R(1-\hat Z_k)}\sum_{j=1}^{n_R}(1-\phi^R_j)\,\bigl|\hat V(\tilde A,t_j^R)-\hat V(A_k,t_j^R)\bigr|,
\label{eq:formula-4-17}
\end{equation}
equivalently,
\begin{equation}
\hat D_R = \frac{1}{|S_R|}\sum_{j\in S_R}\bigl|\hat V(\tilde A,t_j^R)-\hat V(A_k,t_j^R)\bigr|.
\label{eq:formula-4-18}
\end{equation}
All statistics and radii under the retained-task distribution are evaluated only when $|S_R|\ge1$; otherwise the validation rule abstains. The expected-reward estimate from the full stored sample is
\[
\hat J_k:=\frac{1}{n_R}\sum_{j=1}^{n_R}\hat V(A_k,t_j^R).
\]

\paragraph{Evaluation-data separation and readiness.}
\begin{condition}[Evaluation-data separation and stored-sample comparison]
\label{cond:evaluation-data-separation}
The generation material $F_k^{\mathrm{gen}}$ is disjoint from the evaluation failure batch $F_k^{\mathrm{gate}}$ and the stored sample $R_k$. Write $n_F=|F_k^{\mathrm{gate}}|$. Let $\mathcal G_k$ contain the current agent, frozen candidate, generation information and randomness, and any stored-sample offset. The joint law $\mathbb P$ of this information and the stored records has a reference law $\mathbb P^\circ$ with the same $\mathcal G_k$ marginal such that, under $\mathbb P^\circ$, conditionally on $\mathcal G_k$,
\[
T^\circ_{1:n_R}\stackrel{\mathrm{iid}}{\sim}\mathcal D_{\mathrm{user}},
\qquad
\mathcal L(\Phi^\circ_{1:n_R}\mid\mathcal G_k,T^\circ_{1:n_R})
=\bigotimes_{j=1}^{n_R}\mathrm{Bernoulli}(\psi_{A_k}(T^\circ_j)),
\]
and
\begin{equation}
\|\mathbb P-\mathbb P^\circ\|_{\mathrm{TV}}\le2n_R\beta(\ell).
\label{eq:formula-G-10}
\end{equation}
A coupling preserving the generation information and disagreeing on the records with probability at most this bound is sufficient. Fresh-rollout coverage is imposed separately, given the frozen agents and observed evaluation records.
\end{condition}

\begin{condition}[Returned-sample coverage]
\label{cond:returned-sample-coverage}
Let $\mathsf{Pre}_k$ contain all information frozen before Phase E: the current agent and candidate agent, the finite family $\mathcal Q_k^{\mathrm{eval}}$ and deterministic range bounds, radii $r_{F,k}(f)$, deterministic risk $\beta_{F,k}$, the finite arrival horizon, and the return rule. Phase E either returns $n_F$ failure tasks $T^F_{k,1:n_F}$ or returns $\bot$; write $\mathsf{Ret}_{F,k}$ for the return event. For
\[
\bar f_{F,k}:=\frac1{n_F}\sum_{i=1}^{n_F}f(T^F_{k,i}),
\qquad
\mu_{F,k}(f):=\mathbb E_{T\sim\mathcal D_{F,k}}[f(T)],
\]
the returned failure sample satisfies
\[
\mathbb P\!\left(\mathsf{Ret}_{F,k}\cap
\left\{\exists f\in\mathcal Q_k^{\mathrm{eval}}:
|\bar f_{F,k}-\mu_{F,k}(f)|>r_{F,k}(f)\right\}\right)
\le\beta_{F,k}.
\]
The candidate agent, family, radii, risk, and return rule may depend on $\mathsf{Pre}_k$ but not on Phase-E outcomes. Let $\mathsf{Ready}_{R,k}$ denote availability of a valid stored sample under Condition~\ref{cond:evaluation-data-separation}, set $\mathsf{Ready}_k:=\mathsf{Ret}_{F,k}\cap\mathsf{Ready}_{R,k}$, and require every acceptance event to lie in $\mathsf{Cand}_k\cap\mathsf{Ready}_k$.
\end{condition}

\begin{remark}
\label{rem:common-offset}
Fix a deterministic task index $b$. Let the current agent, candidate, and generation information be determined by tasks through $b$ and auxiliary randomness independent of the task stream. Draw an independent $U\sim\mathrm{Unif}\{0,\ldots,\ell-1\}$ and collect records at indices $b+j\ell+U$, $j=1,\ldots,n_R$. Given the generation information and these tasks, generate the labels independently with probabilities $\psi_{A_k}(t_j^R)$, using randomness separated from generation.

Conditioning on $U$, the first task is at least $\ell$ indices after $b$, and successive tasks are $\ell$ apart. The absolute-regularity comparison of the history and these tasks with the product reference law \citep{yu1994rates,doukhan1994mixing}, applied successively over these $n_R$ gaps, is bounded by $2n_R\beta(\ell)$. Applying the same conditional product marking kernel to both laws does not increase total variation. Averaging over $U$ proves Eq.~\eqref{eq:formula-G-10}. This compares the joint experiments, preserving the same generation-information marginal.
\end{remark}

\begin{remark}
\label{rem:recollection-wait}
In this construction the last stored record arrives by $b+(n_R+1)\ell-1$. The declared recollection horizon includes the initial wait and all subsequent gaps. Collection is scheduled from the fixed cutoff $b$ even if generation material becomes available earlier. Evaluation readiness additionally requires at least one retained mark, as specified in Procedure~\ref{alg:finite-horizon-update}.
\end{remark}

\paragraph{Concentration radii.}
Lemma~\ref{lem:value-estimation} gives $\varepsilon_V=\sqrt{\log(2/\beta_V^{(1)})/(2m)}$. Lemma~\ref{lem:failure-rate-estimation} gives the mixing-aware $\varepsilon_Z$ in Eq.~\eqref{eq:formula-4-25}. For $f=\operatorname{Adv}_{A_k}(\tilde A,\cdot)$, write $\varepsilon_F:=r_{F,k}(f)$, $\varepsilon_\mu:=2\varepsilon_V+\varepsilon_F$, and $\varepsilon_L:=\varepsilon_Z+(\hat Z_k+\varepsilon_Z)\varepsilon_\mu$.
Lemma~\ref{lem:improvement-estimation} controls $|\hat L_k-L_{A_k}(\tilde A)|$ by $\varepsilon_L$ on the stated coverage event. Lemma~\ref{lem:retained-change-estimation} gives
\[
\varepsilon_D:=\frac{2\varepsilon_V+\varepsilon_R+\varepsilon_Z}{1-\hat Z_k},
\qquad
\varepsilon_R:=\sqrt{\frac{\log(2/\widetilde\beta_R)}{2n_R}},
\]
on the nonempty-retained branch. The exact events and probability allocations are recorded with the corresponding lemmas below and in Appendix~\ref{app:section-G-5}.

\begin{proposition}[First-failure sampling on an i.i.d. task stream]
\label{prop:first-failure-sampling}
Let $\mathsf{Run}_{F,k}$ be the $\mathsf{Pre}_k$-measurable event that Phase E is entered. Suppose that, conditionally on $\mathsf{Pre}_k$, $0<Z_k<1$ and $1\le n_F\le H_{F,k}<\infty$, and, on $\mathsf{Run}_{F,k}$, the next $H_{F,k}$ task arrivals form i.i.d. marked pairs $(T_s,\Phi_s)$ with $T_s\sim\mathcal D_{\mathrm{user}}$ and $\Phi_s\mid T_s\sim\mathrm{Bernoulli}(\psi_{A_k}(T_s))$. The current agent and failure detector remain fixed throughout the window. The evaluator returns the first $n_F$ tasks with $\Phi_s=1$ if at least $n_F$ such marks occur, and otherwise returns $\bot$.
\begin{resultparts}
\item\label{part:first-failure-distribution} Conditional on $\mathsf{Pre}_k$ and $\mathsf{Ret}_{F,k}$, the returned tasks are i.i.d. from $\mathcal D_{F,k}$. Hence, for a frozen family of $1\le M<\infty$ bounded statistics with deterministic $f\in[a_f,b_f]$,
\[
r_{F,k}(f)=(b_f-a_f)\sqrt{\frac{\log(2M/\beta_{F,k})}{2n_F}}
\]
instantiates Condition~\ref{cond:returned-sample-coverage}. \item\label{part:first-failure-return} With $B_{H,n}(z):=\mathbb P\{\mathrm{Binomial}(H,z)\ge n\}$,
\[
\mathbb P(\mathsf{Ret}_{F,k}\mid\mathsf{Pre}_k)
=\mathbf1_{\mathsf{Run}_{F,k}}B_{H_{F,k},n_F}(Z_k).
\]
\item\label{part:first-failure-planning} If a clipped $\mathsf{Pre}_k$-measurable lower bound $\underline Z_k$ satisfies $\mathbb P(\mathsf{Run}_{F,k}\cap\{\underline Z_k>Z_k\})\le\beta_{Z,\mathrm{plan}}$ and, whenever Phase E is entered, $\underline Z_k>0$ and $B_{H_{F,k},n_F}(\underline Z_k)\ge1-\alpha_{F,\mathrm{ret}}$, then
\[
\mathbb P(\mathsf{Run}_{F,k}\cap\mathsf{Ret}_{F,k}^{c})
\le\beta_{Z,\mathrm{plan}}+\alpha_{F,\mathrm{ret}}.
\]
In particular, if $\mathsf{Run}_{F,k}$ holds almost surely, then $\mathbb P(\mathsf{Ret}_{F,k})\ge1-\beta_{Z,\mathrm{plan}}-\alpha_{F,\mathrm{ret}}$.
\end{resultparts}
\end{proposition}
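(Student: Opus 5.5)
We treat the $H_{F,k}$ marked arrivals as an i.i.d.\ sequence of pairs $(T_s,\Phi_s)$, conditionally on $\mathsf{Pre}_k$ and on $\mathsf{Run}_{F,k}$. The proof then reduces to the standard fact that, in an i.i.d.\ sequence, the items following a stopping-time selection rule based only on the marks are again i.i.d.\ with the mark-conditioned law. The return event can be read off from the marks alone.

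\emph{Part (\ref{part:first-failure-return}).} The marks $\Phi_s$ are i.i.d.\ $\mathrm{Bernoulli}(p)$ with
\[
p=\mathbb E_{T\sim\mathcal D_{\mathrm{user}}}[\psi_{A_k}(T)]=Z_k.
\]
Hence the number of failures in the window is $\mathrm{Binomial}(H_{F,k},Z_k)$. $\mathsf{Ret}_{F,k}$ is exactly the event that this count is at least $n_F$, intersected with $\mathsf{Run}_{F,k}$. Since $\mathsf{Run}_{F,k}$ is $\mathsf{Pre}_k$-measurable, we obtain the displayed indicator times $B_{H_{F,k},n_F}(Z_k)$.

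\emph{Part (\ref{part:first-failure-distribution}).} Let $S_1<\dots<S_{n_F}$ be the indices of the first $n_F$ failure marks. For measurable sets $B_1,\dots,B_{n_F}$, decompose the event $\{\mathsf{Ret}_{F,k},\,T_{S_i}\in B_i\ \forall i\}$ over all index patterns $s_1<\dots<s_{n_F}\le H$. For a fixed pattern, independence across arrivals factorizes the probability: each non-selected earlier slot contributes $(1-Z_k)$, and each selected slot contributes
\[
\mathbb E[\Phi\mathbf 1_{B_i}(T)]=Z_k\,\mathcal D_{F,k}(B_i).
\]
Slots after $s_{n_F}$ are unconstrained. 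Summing over patterns gives $\mathbb P(\mathsf{Ret}_{F,k}\mid\mathsf{Pre}_k)\prod_i\mathcal D_{F,k}(B_i)$, which yields the conditional i.i.d.\ $\mathcal D_{F,k}$ law of the returned tasks. This pattern factorization is the main point requiring care. In particular, we must check that the current agent, detector, and family $\mathcal Q^{\mathrm{eval}}_k$ are frozen in $\mathsf{Pre}_k$, so that conditioning does not interact with the selection. For the radius, apply Hoeffding's inequality to each $f\in[a_f,b_f]$ under this conditional law at level $\beta_{F,k}/M$, then take a union bound over the $M$ statistics. Multiplying by $\mathbb P(\mathsf{Ret}_{F,k})\le1$ gives the probability bound in Condition~\ref{cond:returned-sample-coverage}.

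\emph{Part (\ref{part:first-failure-planning}).} $B_{H,n}(z)$ is nondecreasing in $z$, by a standard coupling of Bernoulli variables. Split $\mathsf{Run}_{F,k}\cap\mathsf{Ret}^c_{F,k}$ according to whether $\underline Z_k>Z_k$. The first piece has probability at most $\beta_{Z,\mathrm{plan}}$. On the second piece, part (\ref{part:first-failure-return}) gives
\[
\mathbb P(\mathsf{Ret}^c_{F,k}\mid\mathsf{Pre}_k)=1-B(Z_k)\le 1-B(\underline Z_k)\le\alpha_{F,\mathrm{ret}},
\]
and integrating over $\mathsf{Pre}_k$ finishes the argument. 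The final statement follows by taking complements when $\mathsf{Run}_{F,k}$ holds almost surely.
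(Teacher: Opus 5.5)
Your proposal is correct and takes essentially the paper's route. The paper conditions on each complete return-producing mark vector $\varphi\in\{0,1\}^{H_{F,k}}$ and notes that the failure-position tasks are i.i.d.\ $\mathcal D_{F,k}$ with a law not depending on $\varphi$, whereas you sum over the positions of the first $n_F$ failures and leave later slots free; this is the same factorization, organized slightly more directly. Your parts (ii) and (iii) match the paper step for step: the $\mathrm{Binomial}(H_{F,k},Z_k)$ failure count, monotonicity of $B_{H,n}$, and the split on the $\mathsf{Pre}_k$-measurable event $\{\underline Z_k>Z_k\}$.
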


More generally, a sampler-specific analysis may prove, conditionally on each realized $\mathsf{Pre}_k$,
\[
\mathbb P\!\left(\mathsf{Ret}_{F,k}\cap
\{|\bar f_{F,k}-\mu_{F,k}(f)|>x\}\mid\mathsf{Pre}_k\right)
\le2\exp\!\left(-\frac{2n_Fx^2}{q_{F,k}(b_f-a_f)^2}\right).
\]
A union bound then gives $r_{F,k}(f)=(b_f-a_f)\sqrt{q_{F,k}\log(2M/\beta_{F,k})/(2n_F)}$. The value $q_{F,k}$ must be justified by the sampling model; for the advantage statistic, the i.i.d. construction gives $\varepsilon_F=\sqrt{2\log(2/\beta_{F,k})/n_F}$.

\begin{remark}
\label{rem:stored-label-recollection}
The stored labels have probabilities $\psi_{A_k}(t_j^R)$ for the current agent at collection. Adoption both changes the reference agent and uses these records in selecting the successor. Recollection therefore supplies current-agent labels and data satisfying Condition~\ref{cond:evaluation-data-separation}. A sample carried after rejection must satisfy the same condition for its next use.
\end{remark}

\begin{procedure}[Generation, validation, and update]
\label{alg:finite-horizon-update}
Use $A_k$ throughout the step.
\begin{enumerate}
\item Observe at most $H_{\mathrm{gen},k}$ task arrivals and return the first $n_F$ failures as generation material. If the quota is not met, abstain. Otherwise draw $\Delta C\sim A_k(\cdot\mid t_{F_k^{\mathrm{gen}}})$; if $\Delta C=\bot$, abstain.
\item For $\Delta C\in\mathcal M$, form $\tilde A=A_k\oplus\Delta C$. Before Phase E, freeze the candidate agent, finite statistic family, ranges and radii, risk allocation, failure detector, Phase-E return rule, and $H_{F,k}$.
\item Use one predeclared carried sample or finite recollection window $H_{R,k}$ satisfying Condition~\ref{cond:evaluation-data-separation}. If it does not yield a valid nonempty retained subset, abstain.
\item In Phase E, observe the next $H_{F,k}$ task arrivals with the current agent and detector fixed. Return $F_k^{\mathrm{gate}}$, consisting of the first $n_F$ failures, or abstain if the quota is not met.
\item On $\mathsf{Cand}_k\cap\mathsf{Ready}_k$, perform the independent fresh rollouts of both agents and compute the validation statistics and $\Delta_k$ in Eq.~\eqref{eq:formula-4-28}. Set $\Delta_k=0$ off this branch.
\item On $\mathsf{Cand}_k\cap\mathsf{Ready}_k$, validate the modification if and only if Eqs.~\eqref{eq:formula-4-23} and~\eqref{eq:formula-4-28} hold; set $\mathrm{decision}_k=\mathrm{accept}$ exactly when it is validated and $\mathrm{reject}$ otherwise. Apply the update of Definition~\ref{def:evolution-step}; adoption marks the stored sample for recollection.
\end{enumerate}
\end{procedure}

The runtime check Eq.~\eqref{eq:formula-4-28} uses only $\hat Z_k$, $\varepsilon_Z$, $\varepsilon_D$, and $\varepsilon_L$, all observable at decision time. After the samples return, evaluating one candidate and the current agent uses $2m(n_F+|S_R|)$ complete rollouts for the two validation statistics; task-arrival horizons are accounted for separately.

The error bounds for $F_k^{\mathrm{gate}}$ and $R_k$ are combined by a union bound. They may share evaluation records while satisfying their respective sampling conditions.

When both generation and evaluation batches return, their quotas total $2n_F$ failures. Under i.i.d.\ arrivals, $2n_F/Z_k$ is the corresponding infinite-horizon planning scale; finite-window return is governed by Proposition~\ref{prop:first-failure-sampling}.

\begin{proposition}
\label{prop:rollout-jensen-bounds}
For each $t$, let $\delta(t) := V(\tilde A, t) - V(A_k, t)$ and $\hat\delta(t) := \hat V(\tilde A, t) - \hat V(A_k, t)$ (based on the independent rollouts Eq.~\eqref{eq:formula-4-14}). Then:

\begin{resultparts}
\item\label{part:jensen-direction}For every $t$, $\mathbb E[|\hat\delta(t)|]\ge|\delta(t)|$. Equality holds if and only if $\hat\delta(t)$ is almost surely nonnegative or almost surely nonpositive; equivalently, the inequality is strict exactly when $\mathbb P(\hat\delta(t)>0)>0$ and $\mathbb P(\hat\delta(t)<0)>0$.

\item\label{part:jensen-magnitude}let $\sigma_t^2 := \mathrm{Var}_{o \sim A_k(\cdot|t)}[r(t, o)] + \mathrm{Var}_{o \sim \tilde A(\cdot|t)}[r(t, o)]$. Then $\sigma_t^2 \leq 1/2$, and:
\begin{equation}
0 \leq \mathbb{E}[|\hat\delta(t)|] - |\delta(t)| \leq \sqrt{\mathrm{Var}[\hat\delta(t)]} = \sigma_t/\sqrt{m} \leq \frac{1}{\sqrt{2m}}.
\label{eq:formula-4-20}
\end{equation}

\item\label{part:jensen-conditional-comparison}let $\mathcal G:=\sigma(\{(t_j^R,\phi_j^R)\}_{j\le n_R})\vee\sigma(\tilde A)$. On the nonempty-retained branch,
\begin{equation}
\mathbb{E}\bigl[\hat D_R\mid\mathcal{G}\bigr]\;\ge\;\hat D_R^{\mathrm{exact}}:=\frac{1}{|S_R|}\sum_{j\in S_R}\bigl|\delta(t^R_j)\bigr|\qquad\text{a.s. on }\{|S_R|\ge1\}.
\label{eq:formula-F-3}
\end{equation}
\end{resultparts}
\end{proposition}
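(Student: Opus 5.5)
Fix $t$ and write $\hat\delta=\hat\delta(t)$ and $\delta=\delta(t)$. The starting observation is that the two rollout groups in Eq.~\eqref{eq:formula-4-14} are independent i.i.d.\ averages. Hence $\mathbb E[\hat\delta]=V(\tilde A,t)-V(A_k,t)=\delta$ and
\[
\mathrm{Var}[\hat\delta]=\frac{\mathrm{Var}_{o\sim\tilde A(\cdot\mid t)}[r(t,o)]+\mathrm{Var}_{o\sim A_k(\cdot\mid t)}[r(t,o)]}{m}=\frac{\sigma_t^2}{m}.
\]
All three parts follow from these two moment facts together with convexity of $|\cdot|$.

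\textbf{Part (\ref{part:jensen-direction}).} Jensen's inequality gives $\mathbb E|\hat\delta|\ge|\mathbb E\hat\delta|=|\delta|$. For the equality case, decompose $\mathbb E|\hat\delta|-|\mathbb E\hat\delta|$ directly. If $\hat\delta\ge0$ a.s.\ (or $\le0$ a.s.), then $|\hat\delta|=\pm\hat\delta$ a.s., and equality holds. Conversely, suppose both $\mathbb P(\hat\delta>0)>0$ and $\mathbb P(\hat\delta<0)>0$. Without loss of generality $\delta\ge0$, and then
\[
\mathbb E|\hat\delta|-\delta=\mathbb E\bigl[|\hat\delta|-\hat\delta\bigr]=2\,\mathbb E[\hat\delta_-]>0,
\]
since $\hat\delta_-$ is positive on an event of positive probability. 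The case $\delta<0$ is symmetric, using $2\,\mathbb E[\hat\delta_+]$.

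\textbf{Part (\ref{part:jensen-magnitude}).} Apply the reverse triangle inequality $\bigl||\hat\delta|-|\delta|\bigr|\le|\hat\delta-\delta|$, and then Cauchy--Schwarz:
\[
\mathbb E|\hat\delta|-|\delta|\le\mathbb E|\hat\delta-\delta|\le\sqrt{\mathrm{Var}[\hat\delta]}=\frac{\sigma_t}{\sqrt m}.
\]
The lower bound $0$ is part (\ref{part:jensen-direction}). For $\sigma_t^2\le1/2$, I would use Popoviciu's inequality: a $[0,1]$-valued variable has variance at most $1/4$, so the sum of the two variances is at most $1/2$. This gives the final bound $1/\sqrt{2m}$.

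\textbf{Part (\ref{part:jensen-conditional-comparison}).} Condition on $\mathcal G$. This fixes the stored tasks, their labels (hence $S_R$), and the frozen candidate. Given $\mathcal G$, the fresh rollouts are independent of these records, by Assumption~\ref{ass:outcome-evaluation-independence} and the fact that the rollouts are drawn afresh. So for each $j$, $\hat\delta(t^R_j)$ has the same conditional law as the fixed-$t$ statistic above with $t=t^R_j$. On $\{|S_R|\ge1\}$, the normalization $1/|S_R|$ is $\mathcal G$-measurable, so by linearity of conditional expectation
\[
\mathbb E[\hat D_R\mid\mathcal G]=\frac1{|S_R|}\sum_{j\in S_R}\mathbb E\bigl[|\hat\delta(t^R_j)|\,\big|\,\mathcal G\bigr].
\]
Part (\ref{part:jensen-direction}) then applies termwise and yields Eq.~\eqref{eq:formula-F-3}.

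The main obstacle is justifying rigorously that conditioning on $\mathcal G$ leaves each rollout average distributed as $m$ i.i.d.\ draws from the frozen kernels at $t^R_j$. I would handle this with a regular conditional distribution: Definition~\ref{def:agent-kernel} makes the agents Markov kernels and $\Sigma_{\mathcal O}$ is separable, so the conditional law given $\mathcal G$ is the product kernel evaluated at the realized tasks. Applying part (\ref{part:jensen-direction}) pointwise in the realized tasks then gives the termwise inequality almost surely.
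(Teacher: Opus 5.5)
Your proposal is correct and follows the paper's proof essentially step for step: Jensen with the one-affine-branch equality case for (i), the reverse triangle inequality plus Cauchy--Schwarz and the $1/4$ variance bound for (ii), and, for (iii), pulling the $\mathcal G$-measurable factor $1/|S_R|$ out of the conditional expectation and applying (i) to each summand. Your identity $\mathbb E|\hat\delta|-\delta=2\,\mathbb E[(\hat\delta)_-]$ (for $\delta\ge0$) only makes the paper's appeal to the equality case of Jensen explicit; in (iii), as in the paper, the conditional law of the rollouts is supplied directly by the protocol of Eq.~\eqref{eq:formula-4-14} together with Assumptions~\ref{ass:outcome-evaluation-independence} and~\ref{ass:trial-consistency}, so the separability and regular-conditional-distribution argument is not needed.
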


For fixed current and candidate agents and independently collected i.i.d.\ marked task records, averaging Eq.~\eqref{eq:formula-F-3} conditional on $|S_R|\ge1$ gives $\mathbb E[\hat D_R\mid |S_R|\ge1]\ge D_R$. For a random candidate, the same comparison is conditional on the generation information under the product reference experiment of Condition~\ref{cond:evaluation-data-separation}.

\begin{proof}[Proof of Proposition~\ref{prop:rollout-jensen-bounds}.]

(\ref{part:jensen-direction}) By Jensen's inequality, $\mathbb E|X|\ge|\mathbb EX|$. Applying it to $X=\hat\delta(t)$ and using $\mathbb EX=\delta(t)$ gives the stated direction. Equality in Jensen holds exactly when $X$ lies almost surely in one affine branch of $|\cdot|$, i.e. when $X\ge0$ a.s. or $X\le0$ a.s. Hence the inequality is strict exactly when $X$ takes both positive and negative values with positive probability.

(\ref{part:jensen-magnitude}) The variance of $\hat\delta(t) - \delta(t)$ equals the sum of the variances of the two independent groups of rollouts: $\mathrm{Var}[\hat\delta(t)] = \sigma_t^2/m$. Since $r \in [0,1]$, $\mathrm{Var}(r) \leq 1/4$ per agent, hence $\sigma_t^2 \leq 1/2$.

By $|x| - |y| \leq |x - y|$ together with Jensen applied to $\sqrt{\cdot}$:
\[
\mathbb{E}|\hat\delta(t)| - |\delta(t)| \leq \mathbb{E}|\hat\delta(t) - \delta(t)| \leq \sqrt{\mathrm{Var}[\hat\delta(t)]} = \sigma_t/\sqrt{m}.
\]

(\ref{part:jensen-conditional-comparison}) Work on $\{|S_R|\ge1\}$. Given $\mathcal G$, both $S_R$ and its cardinality are fixed, so
\[
\mathbb{E}\bigl[\hat D_R\mid\mathcal{G}\bigr]=\frac{1}{|S_R|}\sum_{j\in S_R}\mathbb{E}\bigl[|\hat\delta_j|\mid\mathcal{G}\bigr]=\frac{1}{|S_R|}\sum_{j\in S_R}h_{\tilde A}(t^R_j),\qquad h_{\tilde A}(t):=\mathbb{E}\bigl[|\hat\delta(t)|\bigr],
\]
where the fresh rollouts obey Eq.~\eqref{eq:formula-4-14} and Assumptions~\ref{ass:outcome-evaluation-independence} and \ref{ass:trial-consistency} for the fixed candidate agent. Applying (\ref{part:jensen-direction}) to each summand proves Eq.~\eqref{eq:formula-F-3}. \end{proof}

\begin{lemma}
\label{lem:value-estimation}
Under Eq.~\eqref{eq:formula-4-14}, for any $A, t$:
\[
\mathbb{P}\bigl[|\hat V(A, t) - V(A, t)| \geq \varepsilon_V\bigr] \leq 2\exp(-2 m \varepsilon_V^2) =: \beta_V^{(1)}.
\]
That is, $\varepsilon_V = \sqrt{\log(2/\beta_V^{(1)})/(2m)}$.
\end{lemma}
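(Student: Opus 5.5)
The proof is a direct application of Hoeffding's inequality to the $m$ i.i.d.\ rollouts in Eq.~\eqref{eq:formula-4-14}, followed by inverting the tail bound to obtain the radius.

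First, fix the agent $A$ and the task $t$. By Eq.~\eqref{eq:formula-4-14} and Assumption~\ref{ass:trial-consistency}, the summands $X_l:=r(t,o_l)$, $l=1,\ldots,m$, are i.i.d. Each $X_l$ lies in $[0,1]$ by Assumption~\ref{ass:bounded-reward}, and its mean is $\mathbb E[X_l]=V(A,t)$ by Eq.~\eqref{eq:formula-2-7}. The estimator $\hat V(A,t)$ is their sample mean. When $t$ is itself random (a returned failure task or a stored task), the statement should be read conditionally on $t$, with the fresh rollouts independent of how $t$ was selected (Assumption~\ref{ass:outcome-evaluation-independence}). The unconditional bound then follows by averaging over $t$, because the conditional bound is uniform in $t$.

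Second, apply the two-sided Hoeffding inequality to independent variables with range $[0,1]$:
\[
\mathbb P\bigl(|\hat V(A,t)-V(A,t)|\ge\varepsilon\bigr)\le2\exp(-2m\varepsilon^2)\quad\text{for every }\varepsilon>0.
\]
Taking $\varepsilon=\varepsilon_V$ gives the displayed inequality. The quantity $\beta_V^{(1)}$ is defined to equal the right-hand side.

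Third, solve $2\exp(-2m\varepsilon_V^2)=\beta_V^{(1)}$ for $\varepsilon_V$, which yields $\varepsilon_V=\sqrt{\log(2/\beta_V^{(1)})/(2m)}$. This matches the radius used under Concentration radii.

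There is no substantive obstacle. The only point needing care is the measurability and conditioning when $t$ is data-dependent, which the conditional-independence assumptions handle.
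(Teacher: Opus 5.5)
Your proposal is correct and follows the paper's proof: apply Hoeffding's inequality to the mean of the $m$ independent $[0,1]$-valued rollout rewards, then solve $2\exp(-2m\varepsilon_V^2)=\beta_V^{(1)}$ for the radius. Your remark about conditioning on a data-dependent task $t$ is a reasonable clarification that the paper leaves implicit.
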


\begin{proof}[Proof of Lemma~\ref{lem:value-estimation}.] Under Eq.~\eqref{eq:formula-4-14} and Assumption~\ref{ass:bounded-reward}, the $m$ rollout rewards are independent and lie in $[0,1]$. Hoeffding's inequality for their sample mean therefore gives
\[
\mathbb P\bigl[|\hat V(A,t)-V(A,t)|\ge\varepsilon_V\bigr]
\le 2\exp(-2m\varepsilon_V^2).
\]
Setting the right-hand side to $\beta_V^{(1)}$ and solving for $\varepsilon_V$ yields the stated radius. \end{proof}

\begin{lemma}
\label{lem:failure-rate-estimation}
{Under Assumptions~\ref{ass:mixing-task-process}, \ref{ass:failure-detector-locality}, \ref{ass:task-outcome-model}, and \ref{ass:outcome-evaluation-independence} and the stored-sample comparison of Condition~\ref{cond:evaluation-data-separation}, with block size $\ell$, the common-offset block sample defined above, and $2n_R\beta(\ell)<\beta_Z$:}
\begin{equation}
\mathbb{P}[|\hat Z_k - Z_k| \geq \varepsilon_Z] \leq 2\exp\bigl(-2 n_R \varepsilon_Z^2\bigr) + 2 n_R\,\beta(\ell) =: \beta_Z.
\label{eq:formula-4-25}
\end{equation}
\end{lemma}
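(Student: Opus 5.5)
The proof transfers the problem from the true law $\mathbb P$ of the stored records to the product reference law $\mathbb P^\circ$ of Condition~\ref{cond:evaluation-data-separation}, applies Hoeffding there, and pays the total-variation cost when moving back. Let $B:=\{|\hat Z_k-Z_k|\ge\varepsilon_Z\}$. Since $\hat Z_k=\frac1{n_R}\sum_{j}\Phi_j^R$ is a measurable function of the stored records, the definition of total variation together with Eq.~\eqref{eq:formula-G-10} gives
\[
\mathbb P(B)\le\mathbb P^\circ(B)+\|\mathbb P-\mathbb P^\circ\|_{\mathrm{TV}}\le\mathbb P^\circ(B)+2n_R\beta(\ell).
\]
It therefore suffices to show $\mathbb P^\circ(B)\le2\exp(-2n_R\varepsilon_Z^2)$.

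Under $\mathbb P^\circ$, conditionally on $\mathcal G_k$, the tasks $T^\circ_j$ are i.i.d.\ from $\mathcal D_{\mathrm{user}}$. Given the tasks, the labels are independent with $\Phi_j^\circ\sim\mathrm{Bernoulli}(\psi_{A_k}(T_j^\circ))$. Integrating out the tasks, the pairs $(T^\circ_j,\Phi^\circ_j)$ are i.i.d.\ given $\mathcal G_k$. Hence the $\Phi^\circ_j$ are i.i.d.\ Bernoulli with mean $\mathbb E_{t\sim\mathcal D_{\mathrm{user}}}[\psi_{A_k}(t)]=Z_k$. Here $Z_k$ is $\mathcal G_k$-measurable because $A_k$ is contained in $\mathcal G_k$, and $\mathcal D_{\mathrm{user}}$ does not depend on the agent by Assumption~\ref{ass:agent-independent-tasks}.

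Assumptions~\ref{ass:failure-detector-locality}--\ref{ass:outcome-evaluation-independence} are what justify this product labelling kernel for the actual records: each mark depends only on its own task and outcome, and each outcome is drawn from $A_k(\cdot\mid T_n)$ independently of the other records and of the fresh rollouts. Condition~\ref{cond:evaluation-data-separation} already packages this as the reference law, so these assumptions are used only to confirm that the condition applies.

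With the labels i.i.d.\ in $[0,1]$, Hoeffding's inequality applied conditionally on $\mathcal G_k$ gives $\mathbb P^\circ(B\mid\mathcal G_k)\le2\exp(-2n_R\varepsilon_Z^2)$. Taking expectations yields the unconditional bound. Combining this with the total-variation step gives Eq.~\eqref{eq:formula-4-25}. The hypothesis $2n_R\beta(\ell)<\beta_Z$ is not needed for the inequality itself. It only ensures that, for a target $\beta_Z$, one can solve $2\exp(-2n_R\varepsilon_Z^2)=\beta_Z-2n_R\beta(\ell)>0$ for a finite radius $\varepsilon_Z$.

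The main obstacle is the change-of-measure step. We must check that $B$ is an event in the joint space on which $\mathbb P$ and $\mathbb P^\circ$ are compared, that is, that it depends only on the stored records and $\mathcal G_k$. We must also check that $Z_k$ is the same under both laws; this holds because the two laws share the $\mathcal G_k$ marginal. For the common-offset block construction, Remark~\ref{rem:common-offset} supplies the total-variation bound $2n_R\beta(\ell)$, so the main work is confirming that the estimator is a function of exactly the coupled records.
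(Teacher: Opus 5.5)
Your proposal is correct and matches the paper's proof: Hoeffding applied to the conditionally i.i.d.\ Bernoulli$(Z_k)$ labels under the reference law $\mathbb P^\circ$ given $\mathcal G_k$, then one transfer of the joint event $\{|\hat Z_k-Z_k|\ge\varepsilon_Z\}$ to the actual law at total-variation cost $2n_R\beta(\ell)$ via Eq.~\eqref{eq:formula-G-10}. You are also right that the hypothesis $2n_R\beta(\ell)<\beta_Z$ is not needed for the inequality and serves only to make $\varepsilon_Z$ well defined through $\tilde\beta_Z$.
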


The corresponding $\varepsilon_Z = \sqrt{\frac{\log(2/\tilde\beta_Z)}{2 n_R}}$, where $\tilde\beta_Z = \beta_Z - 2 n_R \beta(\ell)$.

\begin{proof}[Proof of Lemma~\ref{lem:failure-rate-estimation}.] Under the reference law of Condition~\ref{cond:evaluation-data-separation}, conditionally on $\mathcal G_k$, the labels are i.i.d.\ Bernoulli$(Z_k)$ after integrating the independent tasks. Hoeffding bounds the reference probability of $|\hat Z_k-Z_k|\ge\varepsilon_Z$ by $2e^{-2n_R\varepsilon_Z^2}$. Transferring this joint event by Eq.~\eqref{eq:formula-G-10} adds at most $2n_R\beta(\ell)$, proving Eq.~\eqref{eq:formula-4-25}. \end{proof}

Write
\[
\mathcal E_Z:=\{|\hat Z_k-Z_k|\le\varepsilon_Z\},\qquad
\mathcal E_V^{(B)}:=
\bigcap_{t\in B}\ \bigcap_{A\in\{A_k,\tilde A\}}
\{|\hat V(A,t)-V(A,t)|\le\varepsilon_V\},
\]
where $B=F_k^{\mathrm{gate}}$ or $R_k$ denotes the corresponding list of task records; abbreviate these events as $\mathcal E_V^{(F)}$ and $\mathcal E_V^{(R)}$. The fresh-rollout union bounds on available records are $2n_F\beta_V^{(1)}$ and $2n_R\beta_V^{(1)}$, respectively.

\begin{lemma}
\label{lem:improvement-estimation}
Under Assumption~\ref{ass:bounded-reward}, assume the $\hat Z_k$ coverage of Lemma~\ref{lem:failure-rate-estimation}, the per-task fresh-rollout coverage of Lemma~\ref{lem:value-estimation}, and Condition~\ref{cond:returned-sample-coverage} for $f=\operatorname{Adv}_{A_k}(\tilde A,\cdot)$. Let
\[
\mu_F:=\mathbb E_{\mathcal D_{F,k}}[\operatorname{Adv}_{A_k}(\tilde A,T)],
\qquad L_{A_k}(\tilde A)=Z_k\mu_F,
\]
and set $\varepsilon_F:=r_{F,k}(f)$ and $\varepsilon_\mu:=2\varepsilon_V+\varepsilon_F$. On the returned branch write
\[
\mathcal E_{F,k}(f):=\{ |\bar f_{F,k}-\mu_{F,k}(f)|\le\varepsilon_F\}.
\]
On $\mathsf{Ready}_k\cap\mathcal E_Z\cap\mathcal E_V^{(F)}\cap\mathcal E_{F,k}(f)$,
\begin{equation}
|\hat L_k-L_{A_k}(\tilde A)|
\le\varepsilon_Z+Z_k\varepsilon_\mu
\le\varepsilon_L:=\varepsilon_Z+(\hat Z_k+\varepsilon_Z)\varepsilon_\mu.
\label{eq:formula-4-26}
\end{equation}
Moreover,
\[
\mathbb P\!\left(\mathsf{Ready}_k\cap
\{|\hat L_k-L_{A_k}(\tilde A)|>\varepsilon_L\}\right)
\le\beta_Z+2n_F\beta_V^{(1)}+\beta_{F,k}.
\]
\end{lemma}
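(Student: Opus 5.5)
The proof is a deterministic triangle-inequality argument on the intersection of the coverage events, followed by a union bound for the probability statement. Write
\[
\hat\mu_F:=\frac1{n_F}\sum_{i=1}^{n_F}\bigl[\hat V(\tilde A,t_i^F)-\hat V(A_k,t_i^F)\bigr],
\qquad\text{so that}\qquad \hat L_k=\hat Z_k\hat\mu_F .
\]
Split the error as
\[
\hat L_k-L_{A_k}(\tilde A)=(\hat Z_k-Z_k)\hat\mu_F+Z_k(\hat\mu_F-\mu_F).
\]
Since $\hat V\in[0,1]$, we have $|\hat\mu_F|\le1$, so on $\mathcal E_Z$ the first term is at most $\varepsilon_Z$ in absolute value.

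For the second term, let $\bar f_{F,k}$ be the sample mean of the true advantages $f=\operatorname{Adv}_{A_k}(\tilde A,\cdot)$ over the returned failure tasks. We have $|\hat\mu_F-\bar f_{F,k}|\le2\varepsilon_V$. This holds because on $\mathcal E_V^{(F)}$ each of the two per-task value estimates is within $\varepsilon_V$ of its true value, so each estimated advantage is within $2\varepsilon_V$ of the true one. Next, $|\bar f_{F,k}-\mu_F|\le\varepsilon_F$ on $\mathcal E_{F,k}(f)$, because $\mu_{F,k}(f)=\mu_F$ by definition. Together these give $|\hat\mu_F-\mu_F|\le\varepsilon_\mu$. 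The product with $Z_k$ then gives the first inequality in Eq.~\eqref{eq:formula-4-26}. The second inequality uses $Z_k\le\hat Z_k+\varepsilon_Z$ on $\mathcal E_Z$. The identity $L_{A_k}(\tilde A)=Z_k\mu_F$ is the definition of $L$ via $\mathcal D_{F,k}$ in Eq.~\eqref{eq:formula-3-7}, which needs $0<Z_k$.

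For the probability statement, on $\mathsf{Ready}_k$ the exceedance event $\{|\hat L_k-L_{A_k}(\tilde A)|>\varepsilon_L\}$ is contained in
\[
\mathcal E_Z^c\;\cup\;(\mathcal E_V^{(F)})^c\;\cup\;\bigl(\mathsf{Ret}_{F,k}\cap\mathcal E_{F,k}(f)^c\bigr).
\]
Each piece is then bounded in turn. Lemma~\ref{lem:failure-rate-estimation}, including the mixing transfer through Condition~\ref{cond:evaluation-data-separation}, bounds the first by $\beta_Z$. For the second, conditioning on the frozen agents and the returned tasks, Lemma~\ref{lem:value-estimation} applies to each of the $2n_F$ agent--task pairs, with the fresh rollouts independent by Assumptions~\ref{ass:outcome-evaluation-independence} and~\ref{ass:trial-consistency}. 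A union bound followed by integration gives $2n_F\beta_V^{(1)}$. Condition~\ref{cond:returned-sample-coverage}, applied with $f\in\mathcal Q_k^{\mathrm{eval}}$ frozen in $\mathsf{Pre}_k$, bounds the third by $\beta_{F,k}$. Adding the three bounds gives the claim.

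The main obstacle is bookkeeping rather than estimation. Conditioning on the random, data-dependent candidate $\tilde A$ must keep the fresh-rollout Hoeffding bound valid; this is secured by the freezing in Procedure~\ref{alg:finite-horizon-update}. One must also check that $f$ was declared before Phase E, so that Condition~\ref{cond:returned-sample-coverage} applies to it. Finally, the $\hat Z_k$ coverage must be correctly stated as a joint event under the true law $\mathbb P$ and not only under the reference law $\mathbb P^\circ$; Lemma~\ref{lem:failure-rate-estimation} already handles this.
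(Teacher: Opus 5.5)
Your proposal is correct and follows essentially the same route as the paper. The paper uses the same product decomposition $(\hat Z_k-Z_k)\hat\mu_F+Z_k(\hat\mu_F-\mu_F)$, passes through the sample mean of the true advantages to get the $2\varepsilon_V+\varepsilon_F$ bound, uses $Z_k\le\hat Z_k+\varepsilon_Z$ for the observable radius, and closes with the same three-term union bound on the ready branch.
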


\begin{proof}[Proof of Lemma~\ref{lem:improvement-estimation}.]
Work on $\mathsf{Ready}_k$, which is a subset of $\mathsf{Ret}_{F,k}$, and define
\[
\hat\mu_F:=\frac1{n_F}\sum_{i=1}^{n_F}
\bigl(\hat V(\tilde A,T^F_{k,i})-\hat V(A_k,T^F_{k,i})\bigr),
\quad
\bar\mu_F:=\frac1{n_F}\sum_{i=1}^{n_F}
\operatorname{Adv}_{A_k}(\tilde A,T^F_{k,i}).
\]
The product identity gives
\[
\hat L_k-L_{A_k}(\tilde A)=(\hat Z_k-Z_k)\hat\mu_F+Z_k(\hat\mu_F-\mu_F).
\]
On $\mathcal E_Z$, the first term is at most $\varepsilon_Z$, since $|\hat\mu_F|\le1$.  On $\mathcal E_V^{(F)}$, fresh-rollout coverage gives $|\hat\mu_F-\bar\mu_F|\le2\varepsilon_V$.  Finally, Condition~\ref{cond:returned-sample-coverage} gives
\[
\mathbb P\!\left(\mathsf{Ret}_{F,k}\cap
\{|\bar\mu_F-\mu_F|>\varepsilon_F\}\right)\le\beta_{F,k}.
\]
Consequently the triangle inequality yields
$|\hat L_k-L_{A_k}(\tilde A)|\le\varepsilon_Z+Z_k(2\varepsilon_V+\varepsilon_F)$ on the stated coverage event.  The inequality $Z_k\le\hat Z_k+\varepsilon_Z$ on $\mathcal E_Z$ gives the observable radius in Eq.~\eqref{eq:formula-4-26}.  A union bound over the estimator failures on the ready branch proves the displayed joint error bound. \end{proof}

\begin{proof}[Proof of Proposition~\ref{prop:first-failure-sampling}.]
Condition on $\mathsf{Pre}_k$.  On $\mathsf{Run}_{F,k}$ the current agent, detector, horizon, and return rule are fixed.  For any complete mark vector $\varphi\in\{0,1\}^{H_{F,k}}$ with at least $n_F$ successes, the i.i.d. marked-pair distribution factorizes across positions.  At every success position,
\[
\mathbb P(T_s\in dt\mid\Phi_s=1,\mathsf{Pre}_k)
=\frac{\psi_{A_k}(t)\,\mathcal D_{\mathrm{user}}(dt)}{Z_k}
=\mathcal D_{F,k}(dt).
\]
The success-position tasks are conditionally independent, while integration over the nonfailure positions contributes a factor independent of their values.  This distribution is the same for every such $\varphi$.  Mixing over all return-producing mark vectors therefore leaves the first $n_F$ failure tasks i.i.d. from $\mathcal D_{F,k}$ after conditioning on return.

For each frozen $f\in\mathcal Q_k^{\mathrm{eval}}$, Hoeffding's inequality under this returned-task distribution, followed by a union bound over the $M$ declared statistics, gives part (\ref{part:first-failure-distribution}) and the event-level Condition~\ref{cond:returned-sample-coverage} bound.  On $\mathsf{Run}_{F,k}$ the failure marks are i.i.d. Bernoulli$(Z_k)$, so the return probability is $B_{H_{F,k},n_F}(Z_k)$; off that event the rule returns $\bot$.  This proves part (\ref{part:first-failure-return}).

For part (\ref{part:first-failure-planning}), $B_{H,n}(z)$ is nondecreasing in $z$.  On
$\mathsf{Run}_{F,k}\cap\{\underline Z_k\le Z_k\}$ the declared tail condition therefore makes conditional nonreturn probability at most $\alpha_{F,\mathrm{ret}}$.  Adding the probability of
$\mathsf{Run}_{F,k}\cap\{\underline Z_k>Z_k\}$ proves the stated bound. \end{proof}

\subsection{Retained-task estimation and the reliable-adoption proof}
\label{app:section-G-5}\label{app:radius-accounting}

\begin{lemma}
\label{lem:retained-change-estimation}
Under Assumptions~\ref{ass:bounded-reward}, \ref{ass:mixing-task-process}, \ref{ass:failure-detector-locality}, \ref{ass:task-outcome-model}, \ref{ass:outcome-evaluation-independence}, and \ref{ass:trial-consistency}, Condition~\ref{cond:evaluation-data-separation}, and
$2n_R\beta(\ell)<\min\{\beta_Z,\beta_R\}$, put
\[
g_k(t):=|V(\tilde A,t)-V(A_k,t)|,\quad
\bar W_g:=\frac1{n_R}\sum_j(1-\phi_j^R)g_k(t_j^R),\quad
\widetilde\beta_R:=\beta_R-2n_R\beta(\ell)>0.
\]
Define
\[
\varepsilon_R:=\sqrt{\frac{\log(2/\widetilde\beta_R)}{2n_R}},
\quad
\mathcal E_R:=\{|\bar W_g-(1-Z_k)D_R|\le\varepsilon_R\},
\quad
\mathcal E_D:=\mathcal E_Z\cap\mathcal E_V^{(R)}\cap\mathcal E_R.
\]
On $\{|S_R|\ge1\}\cap\mathcal E_D$,
\begin{equation}
|\hat D_R-D_R|\le\varepsilon_D:=
\frac{2\varepsilon_V+\varepsilon_R+\varepsilon_Z}{1-\hat Z_k}.
\label{eq:formula-4-27}
\end{equation}
The corresponding joint error probability on the ready branch is at most
$\beta_Z+\beta_R+2n_R\beta_V^{(1)}$.
\end{lemma}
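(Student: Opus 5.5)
The argument splits $\hat D_R-D_R$ into three error sources: fresh-rollout error, sampling error of the stored tasks and labels, and error in the normaliser $1-\hat Z_k$. Each is controlled on one of the three events that make up $\mathcal E_D$.

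We work on $\{|S_R|\ge1\}$, where $n_R(1-\hat Z_k)=|S_R|\ge1$, so $1-\hat Z_k>0$ and Eq.~\eqref{eq:formula-4-17} is well defined. Define the plug-in numerator
\[
\hat W:=\frac1{n_R}\sum_j(1-\phi_j^R)\bigl|\hat V(\tilde A,t_j^R)-\hat V(A_k,t_j^R)\bigr|,
\]
so that $\hat D_R=\hat W/(1-\hat Z_k)$. The difference to control is
\[
(1-\hat Z_k)(\hat D_R-D_R)=(\hat W-\bar W_g)+\bigl(\bar W_g-(1-Z_k)D_R\bigr)+(\hat Z_k-Z_k)D_R .
\]
We bound the three terms in turn.

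\emph{First term.} On $\mathcal E_V^{(R)}$, the reverse triangle inequality applied task by task gives
\[
\bigl||\hat\delta(t_j^R)|-g_k(t_j^R)\bigr|\le|\hat\delta(t_j^R)-\delta(t_j^R)|\le2\varepsilon_V .
\]
Averaging with weights $(1-\phi_j^R)/n_R\le 1/n_R$ bounds the first term by $2\varepsilon_V$.

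\emph{Second term.} This is at most $\varepsilon_R$ on $\mathcal E_R$ by definition.

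\emph{Third term.} Since $D_R\in[0,1]$ by Proposition~\ref{prop:retained-change-properties}, this is at most $\varepsilon_Z$ on $\mathcal E_Z$.

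Dividing by $1-\hat Z_k>0$ then yields Eq.~\eqref{eq:formula-4-27}. This part is purely deterministic.

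For the probability statement, the work is to show that $\mathbb P(\mathcal E_R^c)\le\beta_R$. Under the reference law $\mathbb P^\circ$ of Condition~\ref{cond:evaluation-data-separation}, conditional on $\mathcal G_k$ the candidate is frozen, so $g_k$ is a fixed function with values in $[0,1]$. The pairs $(T_j^\circ,\Phi_j^\circ)$ are i.i.d., so the summands $(1-\Phi_j^\circ)g_k(T_j^\circ)\in[0,1]$ are i.i.d. Their mean is
\[
\mathbb E_{\mathcal D_{\mathrm{user}}}\bigl[(1-\psi_{A_k})g_k\bigr]=(1-Z_k)D_R,
\]
using the density of $\mathcal D_{R,k}$ in Eq.~\eqref{eq:formula-3-7}. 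Hoeffding's inequality bounds the reference failure probability by $2e^{-2n_R\varepsilon_R^2}=\widetilde\beta_R$. Transferring to $\mathbb P$ through Eq.~\eqref{eq:formula-G-10} adds $2n_R\beta(\ell)$, which gives $\beta_R$; the hypothesis $2n_R\beta(\ell)<\beta_R$ ensures that $\widetilde\beta_R$ is positive.

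The remaining events are handled by earlier results:
\begin{itemize}
\item Lemma~\ref{lem:failure-rate-estimation} gives $\mathbb P(\mathcal E_Z^c)\le\beta_Z$.
\item For $\mathcal E_V^{(R)}$, we condition on the stored records and the frozen agents. Assumptions~\ref{ass:outcome-evaluation-independence} and~\ref{ass:trial-consistency} make the fresh rollouts i.i.d. given each task, so Lemma~\ref{lem:value-estimation} and a union bound over the $2n_R$ agent–task pairs give failure probability at most $2n_R\beta_V^{(1)}$.
\end{itemize}
A final union bound over the three events gives $\beta_Z+\beta_R+2n_R\beta_V^{(1)}$.

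The main obstacle is the conditioning in the $\mathcal E_R$ step. The candidate is random and is chosen using earlier data, so $g_k$ is a fixed function only after conditioning on $\mathcal G_k$. Hoeffding must therefore be applied under $\mathbb P^\circ$ conditional on $\mathcal G_k$ and then integrated, before the total-variation transfer. The transfer has to act on the joint event, which involves both the generation information and the records, and not on a conditional probability. Condition~\ref{cond:evaluation-data-separation} provides exactly this, because $\mathbb P^\circ$ shares the $\mathcal G_k$-marginal with $\mathbb P$. A minor additional point is that $\hat D_R$ is only defined on the nonempty-retained branch, so every event is intersected with $\{|S_R|\ge1\}$; this intersection can only reduce the probabilities.
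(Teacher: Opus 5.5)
Your proposal is correct and follows essentially the same route as the paper. It uses the same three-way split of $(1-\hat Z_k)(\hat D_R-D_R)$ into rollout, stored-sample, and normaliser errors, the same conditional Hoeffding bound under the reference law $\mathbb P^\circ$ given $\mathcal G_k$ followed by the joint total-variation transfer through Eq.~\eqref{eq:formula-G-10}, and the same final union bound $\beta_Z+\beta_R+2n_R\beta_V^{(1)}$.
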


\begin{proof}[Proof of Lemma~\ref{lem:retained-change-estimation}.]
First separate fresh-rollout error. Let
$\hat g_j:=|\hat V(\tilde A,t_j^R)-\hat V(A_k,t_j^R)|$
and $W_j:=(1-\phi_j^R)\hat g_j$. On $\mathcal E_V^{(R)}$, the triangle inequality gives
\begin{equation}
|\hat g_j-g_k(t_j^R)|\le2\varepsilon_V.
\label{eq:formula-G-7}
\end{equation}
Consequently, with
\begin{equation}
\bar W_g=\frac1{n_R}\sum_j(1-\phi_j^R)g_k(t_j^R),
\label{eq:formula-G-8}
\end{equation}
we have
\begin{equation}
\left|\frac1{n_R}\sum_jW_j-\bar W_g\right|\le2\varepsilon_V.
\label{eq:formula-G-9}
\end{equation}

Under the reference law in Condition~\ref{cond:evaluation-data-separation}, condition on $\mathcal G_k$. The function $g_k$ is fixed and the reference weighted observations are independent in $[0,1]$, with mean
\[
\int(1-\psi_{A_k}(t))g_k(t)\,\mathcal D_{\mathrm{user}}(dt)=(1-Z_k)D_R.
\]
Conditional Hoeffding and then averaging over the unchanged $\mathcal G_k$ marginal give reference failure probability $2e^{-2n_R\varepsilon_R^2}$. The joint-law comparison Eq.~\eqref{eq:formula-G-10} therefore yields
\begin{equation}
\mathbb P\{|\bar W_g-(1-Z_k)D_R|>\varepsilon_R\}
\le2e^{-2n_R\varepsilon_R^2}+2n_R\beta(\ell)=\beta_R.
\label{eq:formula-G-11}
\end{equation}
Fresh rollouts retain their prescribed conditional distributions given the observed records and frozen agents. Their error event and $\mathcal E_R$ are combined by a union bound.

Finally, on $\mathcal E_V^{(R)}\cap\mathcal E_R$,
\begin{equation}
\left|\frac1{n_R}\sum_jW_j-(1-Z_k)D_R\right|
\le2\varepsilon_V+\varepsilon_R.
\label{eq:formula-G-12}
\end{equation}
Since the sample numerator is $(1-\hat Z_k)\hat D_R$,
\begin{equation}
|(1-\hat Z_k)\hat D_R-(1-Z_k)D_R|
\le2\varepsilon_V+\varepsilon_R.
\label{eq:formula-G-13}
\end{equation}
On the nonempty retained branch, $1-\hat Z_k=|S_R|/n_R>0$ and
\[
\hat D_R-D_R=
\frac{[(1-\hat Z_k)\hat D_R-(1-Z_k)D_R]+D_R(\hat Z_k-Z_k)}
{1-\hat Z_k}.
\]
Using $D_R\in[0,1]$ and $\mathcal E_Z$ gives
\begin{equation}
|\hat D_R-D_R|\le
\frac{2\varepsilon_V+\varepsilon_R+\varepsilon_Z}{1-\hat Z_k}.
\label{eq:formula-G-14}
\end{equation}
The union of the three coverage failures has probability at most
$\beta_Z+\beta_R+2n_R\beta_V^{(1)}$ on available records. \end{proof}

\begin{remark}
\label{rem:stored-sample-radii}
The reference sample has size $n_R$. Dependence contributes the additive error $2n_R\beta(\ell)$, leaving the Hoeffding radius
$\sqrt{\log(2/\widetilde\beta)/(2n_R)}$ with
$\widetilde\beta=\beta-2n_R\beta(\ell)>0$.
At a fixed gap with $\beta(\ell)>0$, the chosen $n_R$ must satisfy $2n_R\beta(\ell)<\min\{\beta_Z,\beta_R\}$ for both stored-sample estimates; increasing $n_R$ without enlarging the gap need not improve these guarantees. For an i.i.d.\ comparison, $\beta(\ell)=0$ and the ordinary independent-sample radius applies.
Failure-task concentration uses the separate returned-sample radius of
Condition~\ref{cond:returned-sample-coverage}.
\end{remark}

\paragraph{Proof of the reliable-adoption guarantee.}
\begin{proof}[Proof of Theorem~\ref{thm:one-step-guarantee}.] Let $\mathcal E_k$ be the intersection of the coverage events on the ready branch in Lemmas~\ref{lem:improvement-estimation} and \ref{lem:retained-change-estimation}, and set
\[
\mathsf{Good}_k:=\mathsf{Ready}_k^c\cup(\mathsf{Ready}_k\cap\mathcal E_k).
\]
The two lemmas and a union bound give
\begin{equation}
\mathbb P(\mathsf{Good}_k^c)\le
2\beta_Z+\beta_{F,k}+\beta_R+2(n_F+n_R)\beta_V^{(1)}
=\beta_{\mathrm{step},k}.
\label{eq:step-risk}
\end{equation}
Because $\mathsf{Acc}_k\subseteq\mathsf{Ready}_k$, every accepted path in $\mathsf{Good}_k$ lies in $\mathcal E_k$; in particular the retained denominator is valid and both estimator bounds apply.  On such a path, Theorem~\ref{thm:expected-reward-improvement} and the two bounds give
\[
I_k\ge L_{A_k}(\tilde A)-(1-Z_k)D_R(\tilde A;A_k)
\ge\hat L_k-\varepsilon_L
-(1-\hat Z_k+\varepsilon_Z)(\hat D_R+\varepsilon_D).
\]
The Two-Gate inequalities $\hat L_k\ge\tau$ and $\hat D_R\le\delta$ therefore imply $I_k\ge\Delta_k$.  Hence
$\mathsf{Acc}_k\cap\{I_k<\Delta_k\}\subseteq\mathsf{Good}_k^c$, which proves the quantitative bound.  The runtime condition $\Delta_k>0$ gives the stated non-degradation consequence. \end{proof}

\subsection{Generation procedures and pool selection}
\label{app:reachability-scope}\label{app:proofs-generation}

Membership in $\mathcal P_k(T)$ is determined by true modification effects under $\mathcal D_{\mathrm{user}}$ and the declared target. At fixed generation law, reachability is non-increasing in $\lambda$ and non-decreasing in $\delta$.

A generation procedure fixes how failure material, history $\mathcal H_k$, shared context $\Xi_k=\xi$, and its predeclared internal steps produce one terminal output. It induces $\pi_{k,\xi}$ on $\mathcal M_\bot$, including failures. Eq.~\eqref{eq:formula-2-5} supplies the current agent's kernel on a given encoded failure batch; collecting that batch is included when specified by the procedure.

\begin{definition}[Selection from a candidate pool]
\label{def:pool-selector}
Conditional on $(\mathcal H_k,\Xi_k=\xi)$, draw $C_{1:N}\stackrel{\mathrm{iid}}{\sim}\pi_{k,\xi}$ on $\mathcal M_\bot$. A selection rule may use arbitrary evaluation material to choose a valid drawn modification, or retain the current agent, but may not repair, compose, or generate a new modification. It succeeds for $T$ only when the selected modification $C_i$ from the candidate pool lies in $\mathcal P_k(T)$; retaining the current agent is not counted. The pool size $N$ includes failed terminal outputs.
\end{definition}

\begin{remark}
\label{rem:pool-selection-scope}
The pool-selection bound permits arbitrary evaluation of the realized pool. Fixed modification-wise repair induces a new distribution (Proposition~\ref{prop:regeneration-probability}); finite predeclared assembly is covered by Theorem~\ref{thm:finite-assembly-bound} (both in Appendix~\ref{app:section-G-6}).
\end{remark}

\paragraph{Pool-selection upper bound.}

Fix a target $T$ and $(\mathcal H_k,\Xi_k=\xi)$, and draw $C_{1:N}\stackrel{\mathrm{iid}}{\sim}\pi_{k,\xi}$. For a selection rule of Definition~\ref{def:pool-selector}, let $R_{k,N}(T\mid\mathcal H_k,\xi)$ be the conditional probability of selecting a modification in $\mathcal P_k(T)$. Write
\[
\mathsf C_N(T\mid\mathcal H_k,\xi)
:=1-\bigl(1-P_k(T\mid\mathcal H_k,\xi)\bigr)^N.
\]
This is the probability that the candidate pool contains at least one qualified modification. When the conditioning is fixed, we write $\mathsf C_N(T)$.

\begin{theorem}[Candidate-pool selection bound]
\label{thm:candidate-pool-selection}
Fix a target $T$ and $(\mathcal H_k,\Xi_k=\xi)$, and let $C_{1:N}\stackrel{\mathrm{iid}}{\sim}\pi_{k,\xi}$. Any selector of Definition~\ref{def:pool-selector} satisfies

\begin{equation}
R_{k,N}(T\mid\mathcal H_k,\xi)
\le
1-\bigl(1-P_k(T\mid\mathcal H_k,\xi)\bigr)^N
\le
N P_k(T\mid\mathcal H_k,\xi).
\label{eq:formula-3-26}
\end{equation}
\end{theorem}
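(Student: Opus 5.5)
The plan is an event-inclusion argument followed by a computation using the i.i.d.\ structure of the pool. Fix $T$ and $(\mathcal H_k,\Xi_k=\xi)$ and work throughout under the conditional law given this history and context. Define the coverage event
\[
\mathsf{Cov}_N:=\bigcup_{i=1}^N\{C_i\in\mathcal P_k(T)\},
\]
that is, the event that at least one drawn output is qualified. Since $\bot\notin\mathcal M$ and $\mathcal P_k(T)\subseteq\mathcal M$, an invalid output never lies in $\mathcal P_k(T)$. This is consistent with the bookkeeping convention $L_k(\bot)=D_k(\bot)=0$ and $\lambda>0$.

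\textbf{Step 1: inclusion.} By Definition~\ref{def:pool-selector}, a selector succeeds only when it returns some drawn $C_i$ with $C_i\in\mathcal P_k(T)$. It cannot repair, compose or generate a new modification, and retaining the current agent does not count as success. Hence the success event is contained in $\mathsf{Cov}_N$, whatever evaluation material the selector uses and however that material depends on the pool. This inclusion is the only place where the restrictions in the definition enter, and the step requires care: the selector's internal randomness and its evaluation data must not be allowed to enlarge the event. Because success is defined through the identity of the returned element of the pool, the inclusion holds pathwise, so no independence between the selector and the pool is needed. Monotonicity of probability then gives
\[
R_{k,N}(T\mid\mathcal H_k,\xi)\le\mathbb P(\mathsf{Cov}_N\mid\mathcal H_k,\xi).
\]

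\textbf{Step 2: computing the coverage probability.} The complement of $\mathsf{Cov}_N$ is $\bigcap_i\{C_i\notin\mathcal P_k(T)\}$. Under conditional i.i.d.\ sampling from $\pi_{k,\xi}$, its probability factorizes as
\[
\prod_{i=1}^N\bigl(1-\pi_{k,\xi}(\mathcal P_k(T))\bigr)=(1-P_k(T\mid\mathcal H_k,\xi))^N,
\]
by Definition~\ref{def:reachability}. This requires $\mathcal P_k(T)$ to be a measurable subset of $\mathcal M_\bot$, which follows from the measurability of $L_k$ and $D_k$ as functions of the modification, as set up in Appendix~\ref{app:formal-setting}; I would state this explicitly. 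We obtain $\mathbb P(\mathsf{Cov}_N\mid\mathcal H_k,\xi)=\mathsf C_N(T)$, which is the first inequality of the theorem.

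\textbf{Step 3: linear bound.} The second inequality, $1-(1-p)^N\le Np$ for $p\in[0,1]$, follows from Bernoulli's inequality $(1-p)^N\ge1-Np$. Equivalently, it is the union bound $\mathbb P(\mathsf{Cov}_N)\le\sum_i\mathbb P(C_i\in\mathcal P_k(T))$.
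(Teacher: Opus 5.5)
Your proposal is correct and matches the paper's proof step for step: the paper also notes that selecting a qualified modification requires the conditionally i.i.d.\ pool to contain one, computes that event's probability as $1-(1-P_k(T\mid\mathcal H_k,\xi))^N$, and obtains the second bound from Bernoulli's inequality. Your remarks that the inclusion holds pathwise and that $\mathcal P_k(T)$ must be measurable (already presupposed by the paper's definition of reachability) make explicit details the paper leaves implicit; they do not change the route.
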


An oracle that identifies a qualified modification whenever one is present attains $\mathsf C_N$. No downstream rule restricted to choosing from the candidate pool can exceed this probability.

\begin{proof}[Proof of Theorem~\ref{thm:candidate-pool-selection}.] Conditional on $(\mathcal H_k,\Xi_k=\xi)$, the selector can choose a modification from the candidate pool that belongs to $\mathcal P_k(T)$ only if at least one of the $N$ i.i.d.\ terminal outputs lies in that set.  The probability of this necessary event is $1-(1-P_k(T\mid\mathcal H_k,\xi))^N$.  Evaluation changes only the choice within the candidate pool.  Bernoulli's inequality gives the second bound. \end{proof}

\begin{proposition}
\label{prop:self-task-separation}
In the kernel setting of Appendix~\ref{app:section-A-1}, allow the composition map $(M,C)\mapsto A$ to be any measurable map consistent with Eqs.~\eqref{eq:formula-2-1}--\eqref{eq:formula-2-5}. For every fixed integer $N\ge1$, there are a common target $T$ and two configurations with the same frozen LLM, identical behavior on user tasks, and the same qualified set $\mathcal P_k(T)$, but whose generation procedures assign different probabilities to this set and hence yield different values of $\mathsf C_N(T)$.
\end{proposition}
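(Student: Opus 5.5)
The plan is an explicit construction. Keep everything about user tasks identical across the two configurations, and change only how the agent acts on the self-modification tasks $\mathcal T_{\mathrm{self}}$. This is possible because the task space is the disjoint union $\mathcal T=\mathcal T_{\mathrm{user}}\sqcup\mathcal T_{\mathrm{self}}$ (Eq.~\eqref{eq:formula-2-3}). The composition map $(M,C)\mapsto A$ is only required to be measurable and consistent with Eqs.~\eqref{eq:formula-2-1}--\eqref{eq:formula-2-5}, so a Markov kernel can be defined piecewise on the two components.

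\textbf{Shared user-task data.} Fix a single frozen LLM $M$. Take a user-task space with two tasks and a uniform $\mathcal D_{\mathrm{user}}$. Use binary outputs with $r(t,o)=o$ and $\phi(t,o)=1-o$. The current harness succeeds deterministically on $t_1$ and fails on $t_2$, so $Z_k=1/2$.

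\textbf{Two modifications.} Let the modification space contain $c^+$ and $c^-$.
\begin{itemize}
\item $c^+$ makes the agent succeed on both tasks. Then $L_k(c^+)=1/2$ and $D_k(c^+)=0$.
\item $c^-$ leaves user behavior unchanged. Then $L_k(c^-)=0$.
\end{itemize}
Choose the target $T=(\lambda,\delta)=(1/4,0)$. Since $\lambda=1/4>(1-Z_k)\delta=0$, the target is admissible, and $\mathcal P_k(T)=\{c^+\}$.

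\textbf{Two configurations.} Define two harnesses $C_k^{(1)}$ and $C_k^{(2)}$ via the composition map.
\begin{itemize}
\item Both induce the same kernel on $\mathcal T_{\mathrm{user}}$.
\item Both induce the same successors: $C_k^{(i)}\oplus c^\pm$ yields the same user behavior as described above.
\end{itemize}
These conditions make the user behavior, $Z_k$, $\mathcal D_{F,k}$, $\mathcal D_{R,k}$, $L_k$, $D_k$ and $\mathcal P_k(T)$ coincide across the two configurations. On the encoded failure task $t_F\in\mathcal T_{\mathrm{self}}$ the configurations differ:
\begin{itemize}
\item configuration 1 sets $A^{(1)}(\cdot\mid t_F)=\delta_{c^+}$;
\item configuration 2 sets $A^{(2)}(\cdot\mid t_F)=\delta_{c^-}$, or more generally mass $p_2\neq p_1$ on $c^+$.
\end{itemize}
By Eq.~\eqref{eq:formula-2-5}, the generation kernels $\Pi_k$ differ. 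Take the complete generation procedure to be a single draw from $\Pi_k$ on this fixed encoded batch. This gives $P_k^{(1)}(T)=1$ and $P_k^{(2)}(T)=0$.

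\textbf{Conclusion.} For every $N\ge1$,
\[
\mathsf C_N^{(1)}(T)=1\neq0=\mathsf C_N^{(2)}(T).
\]
If strict interior values are preferred, use $p_1\neq p_2$ in $(0,1)$. Then $\mathsf C_N^{(i)}=1-(1-p_i)^N$, which are distinct for each $N$ because $x\mapsto1-(1-x)^N$ is strictly increasing on $[0,1]$.

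\textbf{Main obstacle.} The delicate step is checking that piecewise definition of the kernel is legitimate. Specifically:
\begin{itemize}
\item measurability of $t\mapsto A(B\mid t)$ on the disjoint union, which holds because each piece is measurable and the union carries the disjoint-union $\sigma$-algebra;
\item that the same $M$ can back both configurations, since only the composition with $C_k$ is constrained.
\end{itemize}
It must also be confirmed that the definition of $\mathcal P_k(T)$ depends only on user-task behavior of $A_k$ and $A_k\oplus c$. This follows directly from Definition~\ref{def:qualified-modification} and the definitions of $L_k$ and $D_k$.
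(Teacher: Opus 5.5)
Your construction is correct and is essentially the paper's own proof. It uses two equally likely user tasks with $r(t,o)=o$ and $\phi(t,o)=1-o$, giving $Z_k=1/2$; a modification $c^+$ that fixes the failed task, with $(L_k,D_k)=(1/2,0)$, and a no-op $c^-$; the common target $T=(1/4,0)$ with $\mathcal P_k(T)=\{c^+\}$; and self-task kernels that put different mass on $c^+$, so that $\mathsf C_N(T)$ differs by strict monotonicity of $p\mapsto1-(1-p)^N$. The differences from the paper are cosmetic. The paper takes interior probabilities $0<p'<p<1$ and additionally lets the modifications preserve the self-task fragment, whereas your degenerate choice $P=1$ versus $P=0$ also suffices for the statement as written.
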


\begin{proof}[Proof of Proposition~\ref{prop:self-task-separation}.] Let two harness states differ only in their self-task fragment, and let their allowed modifications preserve that fragment. Choose the composition maps so that both current agents' user-task kernels, and both modified kernels for each common modification, agree. Their qualified sets consequently agree.

To realize different probabilities under the composition maps allowed in Proposition~\ref{prop:self-task-separation}, take two user tasks $t_0,t_1$ of equal probability, binary user-task outputs, reward $r(t,o)=o$, and detector $\phi(t,o)=1-o$. Both current agents output $0$ on $t_0$ and $1$ on $t_1$. Let $\mathcal M=\{c_+,c_0\}$, where $c_+$ changes the output on $t_0$ to $1$ and leaves $t_1$ unchanged, while $c_0$ preserves both outputs. These modifications leave the self-task fragment unchanged. Then $Z=1/2$, $(L(c_+),D(c_+))=(1/2,0)$, and $(L(c_0),D(c_0))=(0,0)$. For the common target $T=(1/4,0)$, both qualified sets are $\{c_+\}$. Choose the two self-task kernels to assign probabilities $p$ and $p'$ to $c_+$, where $0<p'<p<1$. These choices define measurable kernels with the same frozen $M$ and identical user-task behavior. Since $p\mapsto1-(1-p)^N$ is strictly increasing for $N\ge1$, the two values of $\mathsf C_N(T)$ differ. \end{proof}

\subsection{Validation and adoption from a candidate pool}
\label{app:proofs-pool}\label{app:section-G-1}

For each candidate-pool output $C_i$, write $L_i:=L_k(C_i)$ and $D_i:=D_k(C_i)$, with estimates $\widehat L_i,\widehat D_i$. Nonnegative error bounds $e^\ell_{L,i},e^u_{L,i},e^\ell_{D,i},e^u_{D,i}$ specify the lower and upper endpoints of the intervals in Eq.~\eqref{eq:formula-4-5}. The candidate-pool validation rule \(G\) in Section~\ref{sec:task-stream-updates} is
\begin{equation}
G_i(T)=1
\quad\Longleftrightarrow\quad
\widehat L_i-e^\ell_{L,i}\ge\lambda,
\qquad
\widehat D_i+e^u_{D,i}\le\delta.
\label{eq:formula-4-6}
\end{equation}

The following restates Theorem~\ref{thm:certified-selection} with its complete conditions. For a failed output $C_i=\bot$, use $L_i=D_i=\widehat L_i=\widehat D_i=0$ and zero error radii; only valid outputs require candidate-agent evaluation.

The intervals cover the effects defined by the fixed reward and target. Evaluation through another signal requires a calibration allowance sufficient for this coverage.

\begin{theoremrestatement}[Probability of achieving harness-level reliable self-evolution]{thm:certified-selection}
Fix $T$ and $(\mathcal H_k,\Xi_k=\xi)$. For modification $C_i$, write $L_i:=L_k(C_i)$ and $D_i:=D_k(C_i)$. Let $\mathcal E_{N,\beta}$ be the event that, for all $i=1,\ldots,N$,
\begin{equation}
L_i\in[\widehat L_i-e^\ell_{L,i},\widehat L_i+e^u_{L,i}],\qquad
D_i\in[\widehat D_i-e^\ell_{D,i},\widehat D_i+e^u_{D,i}],
\label{eq:formula-4-5}
\end{equation}
with, on every sample path,
\begin{equation}
e^\ell_{L,i}+e^u_{L,i}\le\bar w_L,\qquad
e^\ell_{D,i}+e^u_{D,i}\le\bar w_D.
\label{eq:formula-4-5a}
\end{equation}
Suppose: (i) $C_{1:N}\stackrel{\mathrm{iid}}{\sim}\pi_{k,\xi}$; (ii) the evaluation material used by the validation rule is separated from the generation material; (iii) the modification-independent deterministic upper bounds $\bar w_L,\bar w_D$ are fixed before the candidate pool and dominate every realized interval width as in Eq.~\eqref{eq:formula-4-5a}; (iv) the event $\mathcal E_{N,\beta}$ satisfies
\begin{equation}
\mathbb P(\mathcal E_{N,\beta}\mid\mathcal H_k,\xi,C_{1:N})\ge1-\beta_{\mathrm{stat}},
\label{eq:formula-4-8}
\end{equation}
and (v) $\sigma$ is a selector as specified in Theorem~\ref{thm:certified-selection}. Then, on $\mathcal E_{N,\beta}$, every validated modification belongs to $\mathcal P_k(T)$ and every candidate-pool member in $\mathcal P^+_{k,N,\mathbf n,\beta}(T)$ is validated. Consequently the lower and upper bounds in Eq.~\eqref{eq:formula-4-9} hold. For $T=(\lambda_k(\gamma,\delta),\delta)$ with $\gamma>0$, the lower bound also applies to the probability of an update with overall expected-reward gain at least $\gamma$ by Corollary~\ref{cor:qualified-safe-improvement}.
\end{theoremrestatement}

\begin{proof}[Proof of Theorem~\ref{thm:certified-selection}.] Write $\mathcal E=\mathcal E_{N,\beta}$. A failed output has deterministic coverage and zero interval widths; since $\lambda>0$, it is neither qualified nor in $\mathcal P^+_{k,N,\mathbf n,\beta}(T)$, and Eq.~\eqref{eq:formula-4-6} does not validate it. For valid outputs, on $\mathcal E$, if $G_i(T)=1$, then Eq.~\eqref{eq:formula-4-5} and Eq.~\eqref{eq:formula-4-6} give
\[
L_i\ge\widehat L_i-e^\ell_{L,i}\ge\lambda,
\qquad
D_i\le\widehat D_i+e^u_{D,i}\le\delta,
\]
so every validated modification satisfies the target.  Conversely, if $C_i\in\mathcal P^+_{k,N,\mathbf n,\beta}(T)$, then
\[
\widehat L_i-e^\ell_{L,i}
\ge L_i-e^u_{L,i}-e^\ell_{L,i}
\ge L_i-\bar w_L
\ge\lambda,
\]
and
\[
\widehat D_i+e^u_{D,i}
\le D_i+e^\ell_{D,i}+e^u_{D,i}
\le D_i+\bar w_D
\le\delta,
\]
so every candidate-pool member in $\mathcal P^+_{k,N,\mathbf n,\beta}(T)$ is validated.

The bounds $\bar w_L,\bar w_D$ and hence $\mathcal P^+_{k,N,\mathbf n,\beta}(T)$ are fixed before the pool draw. Let $I$ be the event that the candidate pool contains at least one modification in this set.  Conditional i.i.d.\ sampling gives
\[
\mathbb P(I\mid\mathcal H_k,\xi)=1-\bigl(1-P_k^+(T\mid\mathcal H_k,\xi)\bigr)^N.
\]
On $I\cap\mathcal E$ the validated set is nonempty, every validated modification satisfies the target, and the selector therefore chooses a modification satisfying the target.  Because Eq.~\eqref{eq:formula-4-8} holds conditional on the candidate pool,
\[
\mathbb P(I\cap\mathcal E\mid\mathcal H_k,\xi)
=\mathbb E\!\left[\mathbf1_I\mathbb P(\mathcal E\mid\mathcal H_k,\xi,C_{1:N})\,\middle|\,\mathcal H_k,\xi\right]
\ge(1-\beta_{\mathrm{stat}})\mathbb P(I\mid\mathcal H_k,\xi).
\]
This proves the lower bound.  The upper bound is Theorem~\ref{thm:candidate-pool-selection}: the event counted by $R^{G,\sigma}$ requires a modification satisfying the target to be present in the candidate pool. \end{proof}

Conditionally independent evaluation bundles instantiate Eq.~\eqref{eq:formula-4-8} with a single pool-wise allocation and deterministic upper bounds on interval widths of order $\sqrt{\log(N/\beta_{\mathrm{stat}})/n_{\mathrm{eff}}}$. This includes records from an independent post-reset i.i.d.\ task generator, or from an $m$-dependent stream sampled beyond its dependence range and separated from the past. A single reset before evaluation removes the initial history dependence but does not by itself make later records independent.

\begin{corollary}
\label{cor:mixing-certified-selection}
Suppose the reference experiment satisfies Theorem~\ref{thm:certified-selection}. Let $\mathsf S_T$ be the event that the selector chooses a candidate-pool modification satisfying $T$. If, after fixing $(\mathcal H_k,\xi)$, the original and reference experiments obey the conditional bound
\[
\operatorname*{ess\,sup}_{\mathcal H_k,\xi}
\|\mathbb P(\cdot\mid\mathcal H_k,\xi)-\widetilde{\mathbb P}(\cdot\mid\mathcal H_k,\xi)\|_{\mathrm{TV}}
\le\beta_{\mathrm{mix}},
\]
then the conditional conclusion is
\begin{equation}
R^{G,\sigma}_{k,N}(T\mid\mathcal H_k,\xi)
\ge
\left[(1-\beta_{\mathrm{stat}})\{1-(1-P^+_k(T\mid\mathcal H_k,\xi))^N\}-\beta_{\mathrm{mix}}\right]_+.
\label{eq:formula-4-10}
\end{equation}
If instead the joint laws of history, context, candidate pool, all evaluation records and marks, and selector randomness have the same history/context marginal and are within total variation $\beta_{\mathrm{mix}}$, then the marginal bound is
\begin{equation}
\mathbb P(\mathsf S_T)\ge
\left[(1-\beta_{\mathrm{stat}})
\mathbb E_{\mathcal H_k,\Xi_k}
\bigl[\mathsf C^+_{k,N}(T\mid\mathcal H_k,\Xi_k)\bigr]
-\beta_{\mathrm{mix}}\right]_+.
\label{eq:formula-4-10a}
\end{equation}
\end{corollary}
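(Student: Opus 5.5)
The argument is a total-variation transfer. The event $\mathsf S_T$ and the selection probability $R^{G,\sigma}_{k,N}$ are probabilities of one and the same measurable event. That event is determined by the history, context, candidate pool, evaluation records and marks, and selector randomness. So its probability under the original law differs from its probability under the reference law $\widetilde{\mathbb P}$ by at most the total-variation distance between the two laws. Theorem~\ref{thm:certified-selection} already supplies the reference-law lower bound, and the corollary follows by subtracting $\beta_{\mathrm{mix}}$.

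For the conditional statement, I first fix $(\mathcal H_k,\xi)$ outside a null set on which the essential-supremum bound holds. Applying Theorem~\ref{thm:certified-selection} to the reference experiment under this conditioning gives
\[
\widetilde{\mathbb P}(\mathsf S_T\mid\mathcal H_k,\xi)\ge(1-\beta_{\mathrm{stat}})\{1-(1-P_k^+(T\mid\mathcal H_k,\xi))^N\}.
\]
The one point to check is that $P_k^+$ has the same value under both experiments. This holds because $P_k^+$ depends only on the generation law $\pi_{k,\xi}$ and on the deterministic widths $\bar w_L,\bar w_D$ fixed before the pool is drawn. I would argue this from the hypothesis that the laws are compared after conditioning on history and context, and read the corollary's hypothesis as applying to the reference experiment, where $P_k^+$ is evaluated. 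The total-variation definition $|\mathbb P(B\mid\cdot)-\widetilde{\mathbb P}(B\mid\cdot)|\le\|\cdot\|_{\mathrm{TV}}\le\beta_{\mathrm{mix}}$ then gives $R^{G,\sigma}_{k,N}=\mathbb P(\mathsf S_T\mid\mathcal H_k,\xi)\ge\widetilde{\mathbb P}(\mathsf S_T\mid\mathcal H_k,\xi)-\beta_{\mathrm{mix}}$. Since a probability is nonnegative, taking the positive part yields Eq.~\eqref{eq:formula-4-10}.

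For the marginal statement, I integrate the reference conditional lower bound over the history/context marginal. This marginal is the same under both laws by hypothesis, so $\widetilde{\mathbb P}(\mathsf S_T)\ge(1-\beta_{\mathrm{stat}})\mathbb E_{\mathcal H_k,\Xi_k}[\mathsf C^+_{k,N}(T\mid\mathcal H_k,\Xi_k)]$, where $\mathsf C^+_{k,N}:=1-(1-P_k^+)^N$ is the analogue of $\mathsf C_N$. The joint total-variation bound then transfers this inequality to $\mathbb P(\mathsf S_T)$ at a cost of $\beta_{\mathrm{mix}}$, and taking the positive part again gives Eq.~\eqref{eq:formula-4-10a}.

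The main obstacle is measure-theoretic rather than computational. First, the conditional total-variation bound must hold for almost every $(\mathcal H_k,\xi)$, which is why the hypothesis is stated as an essential supremum. Second, $\mathsf S_T$ must be a measurable function of the variables covered by the joint law, including the selector's randomness. Third, the reference quantity $P_k^+$ must not be silently replaced by an original-law quantity. I would handle the third issue by stating explicitly that the bound is expressed through the reference experiment's generation law.
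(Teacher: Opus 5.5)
Your proposal is correct and matches the paper's argument: apply Theorem~\ref{thm:certified-selection} under the reference law, and transfer the selection event through the conditional total-variation bound to obtain Eq.~\eqref{eq:formula-4-10}. For Eq.~\eqref{eq:formula-4-10a}, you, like the paper, first average the reference conditional bound over the common history/context marginal and then transfer once through the joint bound; your reading of $P_k^+$ through the reference experiment's generation law is also the one the paper relies on (stated explicitly in Corollary~\ref{cor:pilot-certified-selection}).
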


For an ordered stationary $\beta$-mixing evaluation record, let $g_0$ be the gap from the pre-evaluation information and $g_1,\ldots,g_{q-1}$ the successive gaps. Under the information and marking conditions of Remark~\ref{rem:common-offset}, sequential comparison with the product reference law gives
\begin{equation}
\beta_{\mathrm{mix}}
\le \beta(g_0)+\sum_{j=1}^{q-1}\beta(g_j).
\label{eq:formula-4-10b}
\end{equation}
Thus $q$ records separated from the past and one another by at least $\ell$ give $\beta_{\mathrm{mix}}\le q\beta(\ell)$. A reset independent of the pre-evaluation information removes $\beta(g_0)$ only; the remaining record-to-record terms still apply unless the post-reset generator supplies independent records. The count $q$ includes all evaluation records and may depend on the candidate count.

\begin{proof}[Proof of Corollary~\ref{cor:mixing-certified-selection}.] Apply Theorem~\ref{thm:certified-selection} under the reference law. For fixed $(\mathcal H_k,\xi)$, conditional total-variation control transfers the selection event with loss at most $\beta_{\mathrm{mix}}$, so the reference lower bound loses this term and gives Eq.~\eqref{eq:formula-4-10}. Under joint total-variation control, first average the reference conditional lower bound over the common history/context marginal; transferring the resulting joint event again costs at most $\beta_{\mathrm{mix}}$, giving Eq.~\eqref{eq:formula-4-10a}. For ordinary $\beta$-mixing, compare the joint record law with the product reference law successively at the $q$ gaps. At each step, the common marking and evaluation kernels are Markov kernels and therefore do not increase total variation. The triangle inequality then gives the sum in Eq.~\eqref{eq:formula-4-10b}. \end{proof}

\begin{corollary}
\label{cor:pilot-certified-selection}
Let a failure-rate estimate completed before the candidate pool and separated from generation produce $W=(\underline Z_k,\overline Z_k)$. Set $U=(\mathcal H_k,\Xi_k,W)$. Conditional on $U$, the candidate pool has law $\pi_{k,\Xi_k}^{\otimes N}$. Evaluation-width bounds may depend on $W$, are fixed before the pool, and dominate every realized width. Let $\mathcal E_Z:=\{\underline Z_k\le Z_k\le\overline Z_k\}$ satisfy the marginal bound $\mathbb P(\mathcal E_Z)\ge1-\beta_Z$, and let $\bar w_Z$ be a deterministic pre-pool bound on $\overline Z_k-\underline Z_k$ on every path. For required gain $\gamma$ and retained-task bound $\delta$, use
\begin{equation}
\lambda_k^{\mathrm{obs}}(\gamma,\delta):=\gamma+(1-\underline Z_k)\delta.
\label{eq:formula-4-11}
\end{equation}
On $\mathcal E_Z\cap\mathcal E_{N,\beta}$, validation at $(\lambda_k^{\mathrm{obs}},\delta)$ implies an expected-reward gain of at least $\gamma$. Relative to the unknown $\lambda_k(\gamma,\delta)$, a sufficient set is
\[
\{c:L_k(c)\ge\lambda_k(\gamma,\delta)+\bar w_Z\delta+\bar w_L,\ D_k(c)\le\delta-\bar w_D\}.
\]
Let $\mathsf C_{N,\mathrm{svc}}^+(U)$ denote the probability that the candidate pool contains a modification in this displayed sufficient set, conditional on $U$. For dependent evaluation, let $\mathbb P$ be the actual law and $\widetilde{\mathbb P}$ a reference law of the complete joint object consisting of $U$, the candidate pool, all evaluation records and marks, and selector randomness. Require the two laws to have the same $U$ marginal and satisfy $\|\mathbb P-\widetilde{\mathbb P}\|_{\mathrm{TV}}\le\beta_{\mathrm{mix}}$. Under $\widetilde{\mathbb P}$, conditional on $U$ the pool has law $\pi_{k,\Xi_k}^{\otimes N}$, and the event $\mathcal E_{N,\beta}$ of Theorem~\ref{thm:certified-selection}, on which all intervals contain their corresponding true values, has conditional probability at least $1-\beta_{\mathrm{stat}}$ given $U$ and the pool. The exact conditional implementation takes $\widetilde{\mathbb P}=\mathbb P$ and $\beta_{\mathrm{mix}}=0$. Let $\mathsf S_\gamma$ be the event that the selected candidate-pool modification has expected-reward gain at least $\gamma$. Then the marginal guarantee is
\begin{equation}
\mathbb P(\mathsf S_\gamma)\ge\left[(1-\beta_{\mathrm{stat}})
\mathbb E\bigl[\mathsf C_{N,\mathrm{svc}}^+(U)\bigr]
-\beta_Z-\beta_{\mathrm{mix}}\right]_+,
\label{eq:formula-4-11a}
\end{equation}
where the expectation is over the common marginal of $U$.
\end{corollary}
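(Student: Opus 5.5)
The plan has two parts: a deterministic inclusion of events on $\mathcal E_Z\cap\mathcal E_{N,\beta}$, followed by a probability accounting that mirrors the proof of Theorem~\ref{thm:certified-selection} with two extra subtractions.

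\textbf{Step 1 (soundness).} On $\mathcal E_Z\cap\mathcal E_{N,\beta}$, suppose $c=C_i$ is validated at $(\lambda_k^{\mathrm{obs}},\delta)$. Coverage gives $L_k(c)\ge\widehat L_i-e^\ell_{L,i}\ge\lambda_k^{\mathrm{obs}}$ and $D_k(c)\le\delta$. Since $\underline Z_k\le Z_k$ on $\mathcal E_Z$, we have $(1-\underline Z_k)\delta\ge(1-Z_k)\delta$, so $L_k(c)\ge\gamma+(1-Z_k)\delta$. Theorem~\ref{thm:expected-reward-improvement} then yields a gain of at least $\gamma$.

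\textbf{Step 2 (completeness).} Conversely, let $c$ lie in the displayed sufficient set. On $\mathcal E_Z$ we have $\underline Z_k\ge\overline Z_k-\bar w_Z\ge Z_k-\bar w_Z$. Hence $\lambda_k^{\mathrm{obs}}\le\gamma+(1-Z_k+\bar w_Z)\delta=\lambda_k(\gamma,\delta)+\bar w_Z\delta$, and therefore $L_k(c)\ge\lambda_k^{\mathrm{obs}}+\bar w_L$. The width argument from the proof of Theorem~\ref{thm:certified-selection} now applies verbatim: $\widehat L_i-e^\ell_{L,i}\ge L_k(c)-\bar w_L\ge\lambda_k^{\mathrm{obs}}$ and $\widehat D_i+e^u_{D,i}\le D_k(c)+\bar w_D\le\delta$. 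So $c$ is validated. It follows that on $I\cap\mathcal E_Z\cap\mathcal E_{N,\beta}$, where $I$ is the event that the pool contains a member of the sufficient set, the selector picks some validated modification, and by Step~1 its gain is at least $\gamma$.

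\textbf{Step 3 (probability under $\widetilde{\mathbb P}$).} This is the main obstacle. The sufficient set depends on $\bar w_L,\bar w_D,\bar w_Z$. These widths may depend on $W$ but are fixed before the pool is drawn, so the set is $U$-measurable. Under $\widetilde{\mathbb P}$, conditionally on $U$ the pool is i.i.d., which gives $\widetilde{\mathbb P}(I\mid U)=\mathsf C^+_{N,\mathrm{svc}}(U)$. Conditioning on $U$ and the pool, and using $\widetilde{\mathbb P}(\mathcal E_{N,\beta}\mid U,C_{1:N})\ge1-\beta_{\mathrm{stat}}$, I get $\widetilde{\mathbb P}(I\cap\mathcal E_{N,\beta})\ge(1-\beta_{\mathrm{stat}})\mathbb E[\mathsf C^+_{N,\mathrm{svc}}(U)]$, where the expectation can be taken under either law because the $U$-marginals agree.

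\textbf{Step 4 (transfer to $\mathbb P$).} The event $\mathcal E_Z$ is handled only marginally, through $\mathbb P(\mathcal E_Z)\ge1-\beta_Z$; I do not need independence from the pool. The event $I\cap\mathcal E_{N,\beta}$ is a function of the joint object, so $\mathbb P(I\cap\mathcal E_{N,\beta})\ge\widetilde{\mathbb P}(I\cap\mathcal E_{N,\beta})-\beta_{\mathrm{mix}}$. Then $\mathbb P(\mathsf S_\gamma)\ge\mathbb P(I\cap\mathcal E_{N,\beta}\cap\mathcal E_Z)\ge\mathbb P(I\cap\mathcal E_{N,\beta})-\beta_Z$. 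Combining the three inequalities and taking the positive part gives Eq.~\eqref{eq:formula-4-11a}. The step I expect to need most care is verifying that $\mathcal E_{N,\beta}$ (the coverage event involving the unobservable true values $L_i,D_i$) and $\mathsf S_\gamma$ are measurable with respect to the compared joint object. I would argue this from the fact that the true $L_i,D_i$ and the gain are deterministic functions of $A_k$ (contained in $\mathcal H_k$) and the pool.
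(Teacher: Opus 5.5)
Your proposal is correct and follows the paper's proof. It uses the same pathwise soundness and completeness arguments on $\mathcal E_Z\cap\mathcal E_{N,\beta}$, the same conditional-on-$U$ pool argument under $\widetilde{\mathbb P}$, and a single total-variation transfer. The only difference is bookkeeping: the paper intersects with $\mathcal E_Z$ under $\widetilde{\mathbb P}$, using that $\mathcal E_Z$ is determined by $U$ and that the $U$-marginals agree, then transfers $\mathsf S_\gamma\cap\mathcal E_Z$; this gives the slightly sharper intermediate term $(1-\beta_{\mathrm{stat}})\beta_Z$. You instead transfer $I\cap\mathcal E_{N,\beta}$ and remove $\mathcal E_Z$ under $\mathbb P$, which avoids needing $\mathcal E_Z$ to be $U$-measurable and lands directly on the displayed bound.
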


For an ordinary $\beta$-mixing construction of this joint comparison, $W$ belongs to the pre-evaluation information, the initial gap in Eq.~\eqref{eq:formula-4-10b} starts after that information, and the candidate pool retains its stated conditional law given $U$. The joint comparison gives the marginal guarantee in Eq.~\eqref{eq:formula-4-11a}; a conditional guarantee under the actual law requires conditional comparison. Pilot uncertainty contributes both the margin $\bar w_Z\delta$ and the marginal failure allowance $\beta_Z$. If domination by $\bar w_Z$ is itself only a $1-\beta_{\mathrm{env}}$ event, intersect that event and subtract $\beta_{\mathrm{env}}$ once.

\begin{proof}[Proof of Corollary~\ref{cor:pilot-certified-selection}.] On $\mathcal E_Z\cap\mathcal E_{N,\beta}$, any modification validated at $\lambda_k^{\mathrm{obs}}$ satisfies
\[
J_{\mathrm{user}}(A_k\oplus c)-J_{\mathrm{user}}(A_k)
\ge L_k(c)-(1-Z_k)D_k(c)
\ge\gamma+(Z_k-\underline Z_k)\delta\ge\gamma.
\]
Moreover $\lambda_k^{\mathrm{obs}}-\lambda_k(\gamma,\delta)=(Z_k-\underline Z_k)\delta\le\bar w_Z\delta$. Adding the pre-pool interval-width bounds gives the displayed sufficient set. Under $\widetilde{\mathbb P}$, conditional on each $U$ in $\mathcal E_Z$, the sufficient set is fixed before the pool draw, so Theorem~\ref{thm:certified-selection} and the conditional probability bound on $\mathcal E_{N,\beta}$ give
\[
\widetilde{\mathbb P}(\mathsf S_\gamma\cap\mathcal E_Z)
\ge(1-\beta_{\mathrm{stat}})
\widetilde{\mathbb E}[\mathsf C_{N,\mathrm{svc}}^+(U)\mathbf1_{\mathcal E_Z}]
\ge(1-\beta_{\mathrm{stat}})
\bigl(\mathbb E_{\mathbb P}\mathsf C_{N,\mathrm{svc}}^+(U)-\beta_Z\bigr),
\]
because $0\le\mathsf C_{N,\mathrm{svc}}^+\le1$, $\mathbb P(\mathcal E_Z^c)\le\beta_Z$, and the two laws have the same $U$ marginal. Transfer the single joint event $\mathsf S_\gamma\cap\mathcal E_Z$ once:
\[
\mathbb P(\mathsf S_\gamma)\ge\mathbb P(\mathsf S_\gamma\cap\mathcal E_Z)
\ge\widetilde{\mathbb P}(\mathsf S_\gamma\cap\mathcal E_Z)-\beta_{\mathrm{mix}}.
\]
This gives Eq.~\eqref{eq:formula-4-11a}, whose displayed subtraction of $\beta_Z$ is looser than $(1-\beta_{\mathrm{stat}})\beta_Z$. A separate event for an uncertain width envelope is included in the same reference event and allocated its marginal failure probability once. \end{proof}

\begin{remark}
\label{rem:candidate-count}
For a fixed set, increasing $N$ raises the probability that the pool contains a member. Maintaining the confidence guarantee across more candidates also widens the intervals through $\log N$, potentially shrinking $\mathcal P^+$. Hence the adoption-probability lower bound need not be monotone in $N$.
\end{remark}

For context averaging, write
\begin{equation}
\mathsf C_N(T\mid\mathcal H_k,\xi):=1-\bigl(1-P_k(T\mid\mathcal H_k,\xi)\bigr)^N,
\qquad
\mathsf C^+_N(T\mid\mathcal H_k,\xi):=1-\bigl(1-P^+_k(T\mid\mathcal H_k,\xi)\bigr)^N.
\label{eq:formula-4-12}
\end{equation}
Theorem~\ref{thm:certified-selection} gives $(1-\beta_{\mathrm{stat}})\mathsf C_N^+\le R^{G,\sigma}_{k,N}\le\mathsf C_N$ under its conditional-coverage assumptions. Because all generation runs in a step share the context, define
\[
\overline{\mathsf C}_{k,N}(T):=
\mathbb E_{\Xi_k}\!\left[\mathsf C_N(T\mid\mathcal H_k,\Xi_k)\mid\mathcal H_k\right],
\]
and $\bar P_k(T):=\mathbb E_{\Xi_k}[P_k(T\mid\mathcal H_k,\Xi_k)\mid\mathcal H_k]$.

\begin{corollary}
\label{cor:context-averaging}
For every $N\ge1$, conditional on $\mathcal H_k$,
\[
\overline{\mathsf C}_{k,N}(T)\le1-\bigl(1-\bar P_k(T)\bigr)^N.
\]
\end{corollary}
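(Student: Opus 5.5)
The inequality is a direct consequence of the concavity of the map \(p\mapsto 1-(1-p)^N\) on \([0,1]\), combined with conditional Jensen's inequality over the shared context \(\Xi_k\) given \(\mathcal H_k\). I would set \(h_N(p):=1-(1-p)^N\), so that by Eq.~\eqref{eq:formula-4-12} we have \(\mathsf C_N(T\mid\mathcal H_k,\xi)=h_N(P_k(T\mid\mathcal H_k,\xi))\). The first step is to verify concavity. For \(N\ge1\) and \(p\in[0,1]\),
\[
h_N''(p)=-N(N-1)(1-p)^{N-2}\le0,
\]
with the case \(N=1\) being linear. Since \(h_N\) is continuous and bounded on \([0,1]\), there are no integrability issues.

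The second step is measurability. The map \(\xi\mapsto P_k(T\mid\mathcal H_k,\xi)\) should be a measurable \([0,1]\)-valued function of the context, because it is the \(\pi_{k,\xi}\)-probability of the fixed set \(\mathcal P_k(T)\) under the kernel induced by the generation procedure. I would take this from the kernel formalism of Appendix~\ref{app:section-A-1}, so that both \(\bar P_k(T)\) and \(\overline{\mathsf C}_{k,N}(T)\) are well-defined conditional expectations.

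The third step applies conditional Jensen for concave functions:
\[
\mathbb E\bigl[h_N(P_k(T\mid\mathcal H_k,\Xi_k))\mid\mathcal H_k\bigr]\le h_N\bigl(\mathbb E[P_k(T\mid\mathcal H_k,\Xi_k)\mid\mathcal H_k]\bigr).
\]
By the definitions above, the left-hand side is \(\overline{\mathsf C}_{k,N}(T)\) and the right-hand side is \(1-(1-\bar P_k(T))^N\), almost surely. If one prefers to avoid the general conditional Jensen inequality, the same conclusion follows from the tangent-line bound \(h_N(p)\le h_N(q)+h_N'(q)(p-q)\). One sets \(q=\bar P_k(T)\), which is \(\mathcal H_k\)-measurable, and takes conditional expectations; the linear term then vanishes.

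The main obstacle is not analytic but lies in the measure-theoretic bookkeeping: making sure the conditional expectations over \(\Xi_k\) are taken with respect to a regular conditional law given \(\mathcal H_k\). I would handle this by working with the tangent-line argument, which needs only linearity and monotonicity of conditional expectation. Finally, I would note the interpretation: sharing a single context across all \(N\) runs can only lower the pool coverage relative to drawing independent contexts, with equality when \(P_k\) is a.s. constant in \(\xi\) or when \(N=1\).
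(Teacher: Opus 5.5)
Your proposal is correct and follows the paper's own proof. Both apply Jensen's inequality to the concave map $p\mapsto 1-(1-p)^N$ under the conditional law of the shared context $\Xi_k$ given $\mathcal H_k$; your tangent-line variant and your measurability remarks only make explicit the bookkeeping that the paper leaves implicit.
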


Substituting context-averaged reachability into $1-(1-p)^N$ gives an upper bound on the probability that the pool contains a qualified modification; equality need not hold.

\begin{proof}[Proof of Corollary~\ref{cor:context-averaging}.] The candidate-pool event in Theorem~\ref{thm:candidate-pool-selection}, averaged over the shared context $\Xi_k$, has probability $\overline{\mathsf C}_{k,N}(T)$. Since $f(p)=1-(1-p)^N$ is concave for $N\ge1$, Jensen gives
\[
\mathbb E[f(P_k(T\mid\Xi_k))]\le f(\mathbb E[P_k(T\mid\Xi_k)]),
\]
which is the stated upper bound. \end{proof}

If the width bounds tend to zero and $\pi_{k,\xi}\{c:L_k(c)=\lambda\ \text{or}\ D_k(c)=\delta\}=0$, then $P_k^+(T)\to P_k(T)$ by bounded convergence of the target indicators.

\begin{proposition}
\label{prop:expected-selected-improvement}
Under Theorem~\ref{thm:certified-selection}, suppose \(g_T:=\lambda-(1-Z_k)\delta>0\). Then
\begin{equation}
\mathbb E[J_{\mathrm{user}}(A_{k+1})-J_{\mathrm{user}}(A_k)\mid\mathcal H_k,\xi]
\ge
(1-\beta_{\mathrm{stat}})\mathsf C_N^+(T\mid\mathcal H_k,\xi)\,g_T
-\beta_{\mathrm{stat}}.
\label{eq:formula-4-13}
\end{equation}
\end{proposition}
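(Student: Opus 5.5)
We split the conditional expected increment $I_k:=J_{\mathrm{user}}(A_{k+1})-J_{\mathrm{user}}(A_k)$ according to the good event $I\cap\mathcal E$ from the proof of Theorem~\ref{thm:certified-selection}. Here $I$ is the event that the pool contains a member of $\mathcal P^+_{k,N,\mathbf n,\beta}(T)$, and $\mathcal E=\mathcal E_{N,\beta}$ is the event that every interval covers its true value. All probabilities and expectations below are conditional on $(\mathcal H_k,\xi)$.

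\textbf{Step 1 (good event).} On $I\cap\mathcal E$, the proof of Theorem~\ref{thm:certified-selection} shows that the validated set is nonempty and that every validated modification lies in $\mathcal P_k(T)$. The selector therefore adopts some $c\in\mathcal P_k(T)$. Since $g_T>0$ means $\lambda>(1-Z_k)\delta$, Corollary~\ref{cor:qualified-safe-improvement} gives $I_k\ge g_T$ on this event.

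\textbf{Step 2 (rest of $\mathcal E$).} On $\mathcal E\setminus I$, there are two cases. If nothing is adopted, then $I_k=0$. Otherwise any adopted modification is still validated and hence qualified, so again $I_k\ge g_T>0$. In both cases $I_k\ge0$ on $\mathcal E$.

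\textbf{Step 3 (off $\mathcal E$).} On $\mathcal E^c$, rewards lie in $[0,1]$, so $J_{\mathrm{user}}\in[0,1]$ and $I_k\ge-1$. This crude bound is what produces the $-\beta_{\mathrm{stat}}$ term.

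\textbf{Combining.} We write
\[
\mathbb E[I_k]\ge g_T\,\mathbb P(I\cap\mathcal E)+0\cdot\mathbb P(\mathcal E\setminus I)-\mathbb P(\mathcal E^c).
\]
For the first probability we reuse the tower-property argument from the proof of Theorem~\ref{thm:certified-selection}. Because coverage holds conditionally on the pool with probability at least $1-\beta_{\mathrm{stat}}$, and $I$ is pool-measurable, we get $\mathbb P(I\cap\mathcal E)\ge(1-\beta_{\mathrm{stat}})\mathsf C_N^+(T\mid\mathcal H_k,\xi)$. The same conditioning gives $\mathbb P(\mathcal E^c)\le\beta_{\mathrm{stat}}$. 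Substituting these two bounds yields Eq.~\eqref{eq:formula-4-13}.

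\textbf{Main obstacle.} The only delicate point is measurability and bookkeeping. The event $I$ must be determined by the pool alone, which holds because $\bar w_L,\bar w_D$ are fixed before the draw. Failed outputs and the no-adoption branch must contribute a nonnegative increment on $\mathcal E$. Once these are checked, the remaining steps are direct applications of Corollary~\ref{cor:qualified-safe-improvement} and the bound $|I_k|\le1$.
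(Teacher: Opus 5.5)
Your proposal is correct and follows the paper's proof essentially step for step. It uses the same split into $I\cap\mathcal E$, $\mathcal E\setminus I$, and $\mathcal E^c$, with gain at least $g_T$ on the first piece, a nonnegative increment on the second, and the crude bound $-1$ on the third, combined with $\mathbb P(I\cap\mathcal E)\ge(1-\beta_{\mathrm{stat}})\mathsf C_N^+$ and $\mathbb P(\mathcal E^c)\le\beta_{\mathrm{stat}}$.
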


Eq.~\eqref{eq:formula-4-13} bounds expected improvement by combining the gain guaranteed when the pool contains a member of $\mathcal P^+$ and the evaluation intervals contain the true values, with the possible loss when these intervals fail to do so.

\begin{proof}[Proof of Proposition~\ref{prop:expected-selected-improvement}.] Let \(I\) be the event that the candidate pool contains a modification in $\mathcal P^+_{k,N,\mathbf n,\beta}(T)$ and write $\mathcal E=\mathcal E_{N,\beta}$.  On \(I\cap\mathcal E\), Theorem~\ref{thm:certified-selection} selects a modification satisfying the exact target, whose expected-reward gain is at least \(g_T\) by the lower bound Eq.~\eqref{eq:formula-3-18}.  On \(\mathcal E\setminus I\), the gain is nonnegative by the validation guarantee and Definition~\ref{def:evolution-step}.  On \(\mathcal E^c\), bounded rewards give gain at least \(-1\).  Finally \(\mathbb P(I\cap\mathcal E)\ge(1-\beta_{\mathrm{stat}})\mathsf C_N^+\) and \(\mathbb P(\mathcal E^c)\le\beta_{\mathrm{stat}}\), proving Eq.~\eqref{eq:formula-4-13}. \end{proof}

\paragraph{Pool-wise confidence allocation.}

Using one event $\mathcal E_{N,\beta}$ for all intervals in the realized candidate pool places the effect of statistical error in the interval widths and hence in $P_k^+$. The finite-data bounds for adopting qualified modifications in Theorem~\ref{thm:certified-selection} therefore count familywise error once while preserving the target.

\begin{corollary}
\label{cor:pool-confidence-allocation}
In the setting of Theorem~\ref{thm:certified-selection}, fix \(N\), \(\beta_{\mathrm{stat}}\), \(c>0\), and \(n_{\mathrm{eff}}>0\) before drawing the candidate pool. Suppose that, conditional on the pre-pool variables and the pool, each of the four endpoint failures for every valid output admits the common bound \(2\exp(-n_{\mathrm{eff}}e^2/c)\). Allocating \(\beta_{\mathrm{stat}}/(4N)\) to each endpoint yields Eq.~\eqref{eq:formula-4-8} by choosing the common endpoint radius
\begin{equation}
r_N
\;:=\;
\sqrt{\frac{c}{n_{\mathrm{eff}}}
      \log\frac{8N}{\beta_{\mathrm{stat}}}}
\;=\;
O\!\left(\sqrt{\frac{\log(N/\beta_{\mathrm{stat}})}
                         {n_{\mathrm{eff}}}}\right).
\label{eq:formula-G-1}
\end{equation}
For valid outputs, the corresponding symmetric intervals have total width \(2r_N\); failed outputs retain zero radii. Thus one may take the deterministic width bounds \(\bar w_L=\bar w_D=2r_N\).
\end{corollary}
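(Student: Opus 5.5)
The proof is a union bound over the $4N$ endpoint events, followed by inverting the common exponential tail bound. First, fix the pre-pool variables, meaning $(\mathcal H_k,\xi)$ together with $N$, $\beta_{\mathrm{stat}}$, $c$ and $n_{\mathrm{eff}}$, and condition on the realized pool $C_{1:N}$. For each valid output $C_i$ we have the four endpoint failure events:
\[
\{L_i<\widehat L_i-r_N\},\quad \{L_i>\widehat L_i+r_N\},\quad \{D_i<\widehat D_i-r_N\},\quad \{D_i>\widehat D_i+r_N\}.
\]
By hypothesis, each has conditional probability at most $2\exp(-n_{\mathrm{eff}}r_N^2/c)$. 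Failed outputs $C_i=\bot$ have $L_i=D_i=\widehat L_i=\widehat D_i=0$ and zero radii, so their coverage holds deterministically and they contribute no failure probability. The complement of $\mathcal E_{N,\beta}$ is the union of at most $4N$ endpoint events, so conditionally on the pool
\[
\mathbb P(\mathcal E_{N,\beta}^c\mid\mathcal H_k,\xi,C_{1:N})\le 4N\cdot 2\exp(-n_{\mathrm{eff}}r_N^2/c).
\]

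Next, choose $r_N$ so that each endpoint bound equals the allocation $\beta_{\mathrm{stat}}/(4N)$. That is, solve $2\exp(-n_{\mathrm{eff}}r^2/c)=\beta_{\mathrm{stat}}/(4N)$, which gives
\[
r^2=\frac{c}{n_{\mathrm{eff}}}\log\frac{8N}{\beta_{\mathrm{stat}}},
\]
and this is exactly Eq.~\eqref{eq:formula-G-1}. Summing over at most $4N$ endpoints gives failure probability at most $\beta_{\mathrm{stat}}$, which is Eq.~\eqref{eq:formula-4-8}. The $O(\cdot)$ form then follows because $\log(8N/\beta_{\mathrm{stat}})=\log 8+\log(N/\beta_{\mathrm{stat}})$, and $\log(N/\beta_{\mathrm{stat}})\ge\log 1=0$ with $\beta_{\mathrm{stat}}<1$, $N\ge1$, so the constant is absorbed.

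Finally, on valid outputs we set $e^\ell=e^u=r_N$ for both $L$ and $D$, so each interval has total width $2r_N$. Failed outputs keep zero widths, which is also at most $2r_N$. Since $r_N$ depends only on quantities fixed before the pool draw, it is deterministic and modification-independent. Hence $\bar w_L=\bar w_D=2r_N$ satisfy Eq.~\eqref{eq:formula-4-5a} on every sample path, and this verifies condition (iii) of Theorem~\ref{thm:certified-selection}.

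The only delicate point is that the union bound must be applied conditionally on the pool, and the radius must not depend on the pool. This holds because the hypothesis provides the tail bound conditionally on the pool, with constants $c$ and $n_{\mathrm{eff}}$ fixed in advance.
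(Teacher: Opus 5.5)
Your proof is correct and is the paper's argument: invert the common tail bound at the allocation $\beta_{\mathrm{stat}}/(4N)$ to get $r_N$, then union-bound over at most $4N$ endpoint events conditionally on the pool, with failed outputs contributing nothing and the pre-pool radius supplying the deterministic widths $2r_N$. One small nit: absorbing $\log 8$ into the $O(\cdot)$ needs $\log(N/\beta_{\mathrm{stat}})$ bounded away from zero (for example $\beta_{\mathrm{stat}}\le 1/2$ gives $\log(8N/\beta_{\mathrm{stat}})\le 4\log(N/\beta_{\mathrm{stat}})$), not merely $\log(N/\beta_{\mathrm{stat}})\ge0$.
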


\begin{proof}[Proof of Corollary~\ref{cor:pool-confidence-allocation}.] At radius $r_N$, the endpoint tail bound gives failure probability at most
\[
2\exp(-n_{\mathrm{eff}}r_N^2/c)=\frac{\beta_{\mathrm{stat}}}{4N}
\]
for each endpoint. A union bound over at most $4N$ endpoint events therefore gives total failure probability at most $\beta_{\mathrm{stat}}$. Substituting Eq.~\eqref{eq:formula-G-1} into this union bound yields Eq.~\eqref{eq:formula-4-8}. \end{proof}

For a fixed interval width, Eq.~\eqref{eq:formula-G-1} requires only logarithmic growth in effective sample size \emph{per candidate} as $N$ grows; total evaluation cost also scales with the number of candidates. At a fixed total budget, fewer samples per candidate and the familywise $\log N$ term can shrink $P_k^+$, so the adoption-probability lower bound need not increase with $N$. Corollary~\ref{cor:mixing-certified-selection} gives the dependent-evaluation transfer for this pool event.

\subsection{Paired evaluation}
\label{app:section-D}

This appendix analyzes paired evaluation and fixed-budget empirical-Bernstein bounds as options for evaluation efficiency. The guarantees in the main text use the radii specified with their protocols.

\paragraph{Paired sampling.}

\begin{definition}[Paired rollout]
\label{def:paired-rollout}
For a task $t$, a paired rollout is a pair $(o^{A_k},o^{\tilde A})$ drawn from a coupling $\Gamma_t\in\Delta(\mathcal O\times\mathcal O)$ with the following properties.

\textbf{(i) Marginal correctness}: Its marginals are $A_k(\cdot\mid t)$ and $\tilde A(\cdot\mid t)$.

\textbf{(ii) Coupling}: Within a pair, the external randomness source is shared (task instance state, external API responses, environment randomness); LLM decoding is independent.

An evaluation with $m$ paired rollouts consists of independent repetitions $(o_l^{A_k},o_l^{\tilde A})\overset{\mathrm{i.i.d.}}{\sim}\Gamma_t$, $l=1,\ldots,m$.
\end{definition}

The paired estimator of the advantage is
\begin{equation}
\hat\delta^{\mathrm{paired}}(t) := \frac{1}{m}\sum_{l=1}^m \bigl[r(t, o_l^{\tilde A}) - r(t, o_l^{A_k})\bigr].
\label{eq:formula-D-1}
\end{equation}

Positive within-pair reward covariance reduces variance relative to independent rollouts. Shared external state must give the marginal kernels in Definition~\ref{def:paired-rollout}.

\paragraph{Variance.}

\begin{proposition}
\label{prop:paired-variance}
Let $R_A=r(t,o^{A_k})$ and $R_B=r(t,o^{\tilde A})$ under the coupling $\Gamma_t$, with variances $\sigma_{A_k,t}^2$ and $\sigma_{\tilde A,t}^2$. Then
\begin{equation}
\operatorname{Var}[\hat\delta^{\mathrm{paired}}(t)]
=\frac{\sigma_{A_k,t}^2+\sigma_{\tilde A,t}^2-2\operatorname{Cov}_{\Gamma_t}(R_A,R_B)}{m}.
\label{eq:formula-D-2}
\end{equation}
If both variances are positive, write $\rho_t=\operatorname{Corr}_{\Gamma_t}(R_A,R_B)$, so the covariance is $\rho_t\sigma_{A_k,t}\sigma_{\tilde A,t}$.
\end{proposition}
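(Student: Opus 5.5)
The plan is a direct computation from the definition of the paired estimator in Eq.~\eqref{eq:formula-D-1}. Write $X_l:=r(t,o_l^{\tilde A})-r(t,o_l^{A_k})$ for $l=1,\ldots,m$. By Definition~\ref{def:paired-rollout}, the pairs $(o_l^{A_k},o_l^{\tilde A})$ are i.i.d.\ draws from $\Gamma_t$. Hence the $X_l$ are i.i.d.\ copies of $R_B-R_A$ under $\Gamma_t$, and by Assumption~\ref{ass:bounded-reward} they are bounded in $[-1,1]$, so every second moment is finite.

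\textbf{Step 1.} Use independence across the $m$ repetitions to write
\[
\operatorname{Var}[\hat\delta^{\mathrm{paired}}(t)]=\frac1{m^2}\sum_{l=1}^m\operatorname{Var}(X_l)=\frac{\operatorname{Var}_{\Gamma_t}(R_B-R_A)}{m}.
\]
The cross terms between different $l$ vanish because of independence.

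\textbf{Step 2.} Expand the single-pair variance by bilinearity:
\[
\operatorname{Var}(R_B-R_A)=\operatorname{Var}(R_B)+\operatorname{Var}(R_A)-2\operatorname{Cov}_{\Gamma_t}(R_A,R_B).
\]
Marginal correctness, property (i) of Definition~\ref{def:paired-rollout}, identifies $\operatorname{Var}(R_A)=\sigma^2_{A_k,t}$ and $\operatorname{Var}(R_B)=\sigma^2_{\tilde A,t}$. These are computed under $A_k(\cdot\mid t)$ and $\tilde A(\cdot\mid t)$ respectively, and so do not depend on the coupling. Substituting into Step~1 gives Eq.~\eqref{eq:formula-D-2}.

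\textbf{Step 3.} When both variances are positive, the correlation $\rho_t$ is well defined. Its definition then gives $\operatorname{Cov}=\rho_t\sigma_{A_k,t}\sigma_{\tilde A,t}$ immediately.

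There is no real obstacle here. The only point to state carefully is that the coupling affects only the covariance term, while the marginal variances are fixed by property (i).
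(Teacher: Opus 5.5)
Your proposal is correct and follows the paper's proof essentially verbatim: independence of the $m$ paired repetitions reduces the variance to $\operatorname{Var}_{\Gamma_t}(R_B-R_A)/m$, and expanding this single-pair variance, with the marginal variances fixed by property (i) of Definition~\ref{def:paired-rollout}, gives Eq.~\eqref{eq:formula-D-2}. You also note explicitly that bounded rewards give finite second moments and that the coupling affects only the covariance term; the paper leaves both implicit.
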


When both variances are positive, the variance ratio relative to independent sampling is $1 - 2\rho_t \sigma_{A_k, t}\sigma_{\tilde A, t}/(\sigma_{A_k, t}^2 + \sigma_{\tilde A, t}^2)$.

\begin{proof}[Proof of Proposition~\ref{prop:paired-variance}.] The independent pairs give
\[
\operatorname{Var}[\hat\delta^{\mathrm{paired}}(t)]
=\frac1m\operatorname{Var}(R_B-R_A)
=\frac{\sigma_{A_k,t}^2+\sigma_{\tilde A,t}^2
-2\operatorname{Cov}_{\Gamma_t}(R_A,R_B)}{m},
\]
which is Eq.~\eqref{eq:formula-D-2}. Dividing by the variance under independent rollouts gives the stated ratio. \end{proof}

\paragraph{Empirical-Bernstein bounds.}

Let $m\ge2$ and let $X_1,\ldots,X_m$ be independent random variables in $[a,a+R]$. Write $\bar X=m^{-1}\sum_lX_l$ and $\hat\sigma^2=(m-1)^{-1}\sum_l(X_l-\bar X)^2$. Applying the independent-variable empirical-Bernstein bound of \citet{maurer2009empirical} to both directions, with error probability $\beta/2$ each, gives
\begin{equation}
\mathbb{P}\biggl[|\bar X - \mathbb{E}\bar X| > \sqrt{\frac{2\hat\sigma^2 \log(4/\beta)}{m}} + \frac{7R\log(4/\beta)}{3(m-1)}\biggr] \leq \beta.
\label{eq:formula-D-3}
\end{equation}

Applied to the paired advantage estimator $\hat\delta^{\mathrm{paired}}(t)$ (each term $\in [-1, 1]$, $R = 2$):

\begin{lemma}
\label{lem:paired-bernstein}
For $m\ge2$, let $\delta(t)=\operatorname{Adv}_{A_k}(\tilde A,t)$ and let $\hat\sigma_{\delta,t}^2$ be the sample variance with denominator $m-1$ of $\{r(t, o_l^{\tilde A}) - r(t, o_l^{A_k})\}_{l=1}^m$. Then with probability $\geq 1 - \beta_\delta^{(1)}$:
\begin{equation}
|\hat\delta^{\mathrm{paired}}(t) - \delta(t)| \leq \varepsilon_\delta(t) := \sqrt{\frac{2\hat\sigma_{\delta, t}^2 \log(4/\beta_\delta^{(1)})}{m}} + \frac{14\log(4/\beta_\delta^{(1)})}{3(m - 1)}.
\label{eq:formula-D-4}
\end{equation}
\end{lemma}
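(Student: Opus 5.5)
This lemma is a direct instantiation of the two-sided empirical-Bernstein bound in Eq.~\eqref{eq:formula-D-3}. I will apply it to the $m$ per-pair reward differences produced by paired evaluation.

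Fix the task $t$ and set $X_l:=r(t,o_l^{\tilde A})-r(t,o_l^{A_k})$ for $l=1,\ldots,m$.

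First, I will verify the hypotheses of Eq.~\eqref{eq:formula-D-3}.
\begin{itemize}
\item \emph{Independence.} By Definition~\ref{def:paired-rollout}, the pairs $(o_l^{A_k},o_l^{\tilde A})$ are i.i.d.\ draws from $\Gamma_t$, so the $X_l$ are independent (indeed i.i.d.). The dependence inside a pair, created by shared external randomness, is harmless: each $X_l$ is a function of a single pair.
\item \emph{Range.} Assumption~\ref{ass:bounded-reward} gives $r\in[0,1]$, so $X_l\in[-1,1]$. This is $[a,a+R]$ with $a=-1$ and $R=2$.
\item \emph{Mean.} Marginal correctness (property (i) of Definition~\ref{def:paired-rollout}) gives
\[
\mathbb E X_l=\mathbb E_{o\sim\tilde A(\cdot\mid t)}r(t,o)-\mathbb E_{o\sim A_k(\cdot\mid t)}r(t,o)=V(\tilde A,t)-V(A_k,t)=\delta(t),
\]
by Eq.~\eqref{eq:formula-2-7}. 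Hence $\mathbb E\bar X=\delta(t)$, and $\bar X=\hat\delta^{\mathrm{paired}}(t)$ by Eq.~\eqref{eq:formula-D-1}.
\end{itemize}

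Second, the sample variance $\hat\sigma^2$ with denominator $m-1$ is exactly $\hat\sigma^2_{\delta,t}$. Substituting $R=2$ and $\beta=\beta_\delta^{(1)}$ into Eq.~\eqref{eq:formula-D-3} turns the second term $7R\log(4/\beta)/(3(m-1))$ into $14\log(4/\beta_\delta^{(1)})/(3(m-1))$. The resulting radius is exactly $\varepsilon_\delta(t)$ in Eq.~\eqref{eq:formula-D-4}. Taking complements yields the statement with probability at least $1-\beta_\delta^{(1)}$.

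The only step that needs care is confirming that Eq.~\eqref{eq:formula-D-3} itself holds as stated. It comes from applying the one-sided bound of \citet{maurer2009empirical} once to the $X_l$ and once to the $-X_l$, each at level $\beta/2$, and taking a union bound; this is why $\log(4/\beta)$ appears. Maurer--Pontil is stated for variables in $[0,1]$, so I will apply it to $(X_l-a)/R$ and rescale. The sample standard deviation then scales by $R$, recovering $\sqrt{2\hat\sigma^2\log(4/\beta)/m}$, and the additive term scales linearly to $7R\log(4/\beta)/(3(m-1))$.

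With that scaling checked, the lemma follows immediately. I do not expect a genuine obstacle beyond this bookkeeping, together with the observation that within-pair coupling does not break independence across pairs.
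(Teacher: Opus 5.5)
Your proposal is correct and matches the paper's argument: the paper obtains the lemma by applying the two-sided empirical-Bernstein bound in Eq.~\eqref{eq:formula-D-3} to the per-pair differences $X_l\in[-1,1]$ with $R=2$. Your checks are the bookkeeping the paper leaves implicit: independence across pairs, $\mathbb E X_l=\delta(t)$ via marginal correctness, and the rescaling from $[0,1]$ that yields $7R\log(4/\beta)/(3(m-1))$.
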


The empirical variance in Eq.~\eqref{eq:formula-D-4} gives task-specific radii: at small sample variance the additive $O(\log(1/\beta)/m)$ term dominates, while larger variance contributes a square-root term.

\subsection{Repair and finite assembly}
\label{app:section-G-6}

\begin{proposition}
\label{prop:regeneration-probability}
Fix $(\mathcal H_k,\Xi_k=\xi)$ and target $T$. Let $g$ be a Markov kernel from $\mathcal M_\bot$ to $\mathcal M_\bot$, fixed before the candidate pool is drawn. Draw $C_{1:N}\stackrel{\mathrm{iid}}{\sim}\pi_{k,\xi}$ and, conditionally independently given the pool, draw $Y_i\sim g(\cdot\mid C_i)$. Then $Y_{1:N}$ are i.i.d.\ from the induced distribution $\pi_{k,\xi}g$. Writing
\begin{equation}
P_k^g(T\mid\mathcal H_k,\xi):=(\pi_{k,\xi}g)(\mathcal P_k(T)),
\qquad
R^g_{k,N}(T\mid\mathcal H_k,\xi)
\le 1-\bigl(1-P_k^g(T\mid\mathcal H_k,\xi)\bigr)^N,
\label{eq:formula-3-27}
\end{equation}
the bound holds for every rule that selects among $Y_{1:N}$ or retains the current agent, with equality for an oracle restricted to selecting from the repaired candidate pool.  The kernel may contain finite predeclared tests, repair steps, and a stopping rule, but each invocation must return one terminal output and different invocations must be conditionally independent.
\end{proposition}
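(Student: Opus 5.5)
The statement under proof is Proposition~\ref{prop:regeneration-probability}. I would establish the distributional claim first and then reduce the selection bound to Theorem~\ref{thm:candidate-pool-selection} applied to the repaired pool.

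\textbf{Step 1: the repaired outputs are i.i.d.\ from $\pi_{k,\xi}g$.} Condition on $(\mathcal H_k,\Xi_k=\xi)$. The joint law of $(C_{1:N},Y_{1:N})$ is the product $\bigotimes_{i=1}^N \pi_{k,\xi}(dc_i)\,g(dy_i\mid c_i)$. This holds because the $C_i$ are i.i.d., $g$ is fixed before the draw, and the $Y_i$ are conditionally independent given the pool, with $Y_i$ depending only on $C_i$. For measurable $B_1,\ldots,B_N\subseteq\mathcal M_\bot$, Fubini then gives
\[
\mathbb P(Y_1\in B_1,\ldots,Y_N\in B_N\mid\mathcal H_k,\xi)=\prod_{i=1}^N\int\pi_{k,\xi}(dc)\,g(B_i\mid c)=\prod_{i=1}^N(\pi_{k,\xi}g)(B_i).
\]
So the $Y_i$ are i.i.d.\ with law $\pi_{k,\xi}g$. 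This law is a probability measure on $\mathcal M_\bot$, since $g$ is a Markov kernel. The internal tests, repairs and stopping rule change nothing here. Each invocation is one measurable terminal-output kernel, so these steps are absorbed into $g$, provided the stopping rule terminates almost surely. I would state that termination assumption explicitly.

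\textbf{Step 2: the upper bound.} A selector restricted to $Y_{1:N}$ succeeds for $T$ only if some $Y_i\in\mathcal P_k(T)$. By Step 1, this necessary event has probability $1-(1-P_k^g(T\mid\mathcal H_k,\xi))^N$. That gives Eq.~\eqref{eq:formula-3-27}, exactly as in the proof of Theorem~\ref{thm:candidate-pool-selection} with $\pi_{k,\xi}$ replaced by $\pi_{k,\xi}g$. Evaluation material may be arbitrary, since it affects only which element is chosen. Retaining the current agent is not counted as success.

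\textbf{Step 3: equality for the oracle.} An oracle that selects a member of $\mathcal P_k(T)$ whenever one is present in $Y_{1:N}$ succeeds exactly on the pool-containment event. It therefore attains equality.

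\textbf{Main obstacle.} The only delicate point is measurability and conditional independence in Step 1. The kernel must not depend on the pool beyond its own argument; otherwise the product structure fails. I would verify this directly from the predeclaration requirement: $g$ is fixed before the pool is drawn, and different invocations are conditionally independent by hypothesis.
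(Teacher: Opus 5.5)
Your proposal is correct and follows essentially the same route as the paper's proof. The paper likewise computes $\mathbb P(Y_i\in B\mid\mathcal H_k,\Xi_k=\xi)=\int g(B\mid c)\,\pi_{k,\xi}(dc)$, uses the conditional independence of the kernel calls to conclude that the repaired outputs are i.i.d.\ with law $\pi_{k,\xi}g$, bounds selection success by the event that the pool contains a member of $\mathcal P_k(T)$, and notes that an oracle attains this bound. Your rectangle-wise Fubini computation of the joint law only makes the i.i.d.\ step more explicit, and the termination assumption you flag is already covered by the hypothesis that each invocation returns one terminal output.
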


Each invocation of $g$ receives only its own drawn candidate and uses independent local randomness. The comparison does not impose an ordering between $P_k^g$ and $P_k$.

\begin{proof}[Proof of Proposition~\ref{prop:regeneration-probability}.] For every measurable $B\subseteq\mathcal M_\bot$,
\[
\mathbb P(Y_i\in B\mid\mathcal H_k,\Xi_k=\xi)
=\int_{\mathcal M_\bot}g(B\mid c)\,\pi_{k,\xi}(dc)
=(\pi_{k,\xi}g)(B).
\]
The conditional independence of the kernel calls and the i.i.d.\ base draws make $Y_{1:N}$ i.i.d.\ from $\pi_{k,\xi}g$. A selector succeeds only if the repaired pool contains some $Y_i\in\mathcal P_k(T)$; this event has probability $1-(1-P_k^g)^N$. An oracle restricted to that pool attains it, proving Eq.~\eqref{eq:formula-3-27}. \end{proof}

\paragraph{Assembly.}

\begin{definition}[Assembly family]
\label{def:assembly-family}
Fix $(\mathcal H_k,\Xi_k=\xi)$ and target $T$, and abbreviate $\pi=\pi_{k,\xi}$ and $\mathcal P=\mathcal P_k(T)$. Let $A<\infty$ and, for each $a\in\{1,\ldots,A\}$, let $\mathcal G_a$ be a finite, possibly empty family of Markov kernels from $\mathcal M_\bot^a$ to $\mathcal M_\bot$, fixed before the candidate pool. The active arities are $\mathcal A_{\mathcal G}:=\{a\in\{1,\ldots,A\}:\mathcal G_a\ne\varnothing\}$.
Require $1\in\mathcal A_{\mathcal G}$ and $\mathrm{id}\in\mathcal G_1$.

Draw $C_{1:N}\stackrel{\mathrm{iid}}{\sim}\pi$. For each $a\in\mathcal A_{\mathcal G}$, each ordered tuple $I=(i_1,\ldots,i_a)$ of distinct indices, and each $g\in\mathcal G_a$, instantiate one terminal output $Y_{g,I}\sim g(\cdot\mid C_{i_1},\ldots,C_{i_a})$, using invocation-local randomness that is independent across $(g,I)$ conditionally on the base pool.

An assembly-and-selection rule may select a valid modification among these outputs and the identity outputs, or retain the current agent, but may not redraw from $\pi$ or alter a kernel after observing the candidate pool. Each invocation receives only its designated tuple from the base pool. A kernel may contain finite predeclared tuple-local tests, feedback, repair steps, and a stopping rule, provided it returns one terminal output in $\mathcal M_\bot$ for every input tuple. Internal drafts and feedback remain within that invocation; $\bot$ is not required to be absorbing under repair.
\end{definition}

For each $a\in\mathcal A_{\mathcal G}$, the fixed kernel family and base-pool law define
\[
Q_a:=\max_{g\in\mathcal G_a}(\pi^{\otimes a}g)(\mathcal P).
\]
For deterministic $g$ this is $\pi^{\otimes a}(g^{-1}(\mathcal P))$; when $\mathcal G_1=\{\mathrm{id}\}$, $Q_1=P_k(T\mid\mathcal H_k,\xi)$.

\begin{theorem}
\label{thm:finite-assembly-bound}
Let $R_N^{\mathcal G}(\mathcal P)$ denote the conditional probability that an assembly-and-selection rule of Definition~\ref{def:assembly-family} selects an assembled output derived from the candidate pool that belongs to $\mathcal P$.  Then
\begin{equation}
\begin{aligned}
R_N^{\mathcal G}(\mathcal P)
&\le \min\!\left\{1,
|\mathcal G_1|\bigl[1-(1-Q_1)^N\bigr]
+\!\!\sum_{\substack{a\in\mathcal A_{\mathcal G}\\2\le a\le N}}
(N)_a|\mathcal G_a|Q_a\right\},\\
(N)_a&:=\frac{N!}{(N-a)!}.
\end{aligned}
\label{eq:formula-G-15}
\end{equation}
When $\mathcal G_1=\{\mathrm{id}\}$ and $A=1$, this is Theorem~\ref{thm:candidate-pool-selection}.
\end{theorem}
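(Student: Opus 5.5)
The bound follows from a union bound over the assembled outputs. The selector can succeed only if at least one of them lies in $\mathcal P$, so I will cover that existence event by a union of events, grouped by arity and bound each group separately.

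\emph{Arity one.} For each $g\in\mathcal G_1$ the outputs $Y_{g,(i)}$, $i=1,\ldots,N$, are conditionally independent given the pool, and each depends only on its own $C_i$. Since $C_{1:N}$ are i.i.d., Proposition~\ref{prop:regeneration-probability} shows that $Y_{g,(1)},\ldots,Y_{g,(N)}$ are i.i.d.\ from $\pi g$. Hence
\[
\mathbb P\bigl(\exists i:Y_{g,(i)}\in\mathcal P\bigr)=1-\bigl(1-(\pi g)(\mathcal P)\bigr)^N\le1-(1-Q_1)^N,
\]
using that $p\mapsto1-(1-p)^N$ is increasing and $(\pi g)(\mathcal P)\le Q_1$. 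Summing over $g\in\mathcal G_1$ gives the first term $|\mathcal G_1|[1-(1-Q_1)^N]$. The identity kernel is included here, so the original pool members are covered.

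\emph{Arity $a\ge2$.} Fix $a\in\mathcal A_{\mathcal G}$ with $a\le N$, a kernel $g\in\mathcal G_a$, and an ordered tuple $I$ of distinct indices. Because the indices are distinct and the draws are i.i.d., the tuple $(C_{i_1},\ldots,C_{i_a})$ has law $\pi^{\otimes a}$. The invocation uses only this tuple together with independent local randomness, so $Y_{g,I}$ has law $\pi^{\otimes a}g$ and $\mathbb P(Y_{g,I}\in\mathcal P)\le Q_a$. There are $(N)_a$ ordered tuples and $|\mathcal G_a|$ kernels, so the union bound contributes $(N)_a|\mathcal G_a|Q_a$. If $a>N$, there are no tuples of distinct indices, which is why the sum runs only over $a\le N$.

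Combining the two groups and capping at $1$ gives Eq.~\eqref{eq:formula-G-15}. For the special case $\mathcal G_1=\{\mathrm{id}\}$ and $A=1$, the outputs are exactly the pool and $Q_1=P_k(T\mid\mathcal H_k,\xi)$, which recovers Theorem~\ref{thm:candidate-pool-selection}.

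The main subtlety is the claim that each assembled output has the advertised marginal law. This needs the kernels to be fixed before the pool is drawn and each invocation to see only its designated tuple, both of which Definition~\ref{def:assembly-family} guarantees. The selector's choice can depend on all outputs jointly, but it does not matter, because the success event is contained in the union of the events $\{Y_{g,I}\in\mathcal P\}$.
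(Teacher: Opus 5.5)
Your proposal is correct and follows essentially the same argument as the paper. It uses an exact i.i.d.\ computation giving $1-(1-Q_1)^N$ for each arity-one kernel, summed over $\mathcal G_1$. For each higher arity $a\le N$ it applies a plain union bound over the $(N)_a|\mathcal G_a|$ ordered distinct-index tuples and kernels, using that each $Y_{g,I}$ has law $\pi^{\otimes a}g$. Selection success is contained in the union of the events $\{Y_{g,I}\in\mathcal P\}$, and the bound is truncated at one.
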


\begin{remark}
\label{rem:assembly-bound-scope}
The bound can reach one when many tuples or kernels are instantiated. Estimating terminal-output probabilities $Q_a$ and exploiting dependence among assembled outputs are directions discussed in Appendix~\ref{app:scope-open-problems}.
\end{remark}

\begin{proof}[Proof of Theorem~\ref{thm:finite-assembly-bound}.] Fix $g\in\mathcal G_a$ and an ordered tuple $I$ of distinct indices.  By Definition~\ref{def:assembly-family},
\[
\mathbb P(Y_{g,I}\in\mathcal P\mid\mathcal H_k,\Xi_k=\xi)
=(\pi^{\otimes a}g)(\mathcal P)\le Q_a.
\]
For a fixed $g\in\mathcal G_1$, distinct base inputs are i.i.d.\ and the invocation-local randomness is independent, so the $N$ outputs $Y_{g,(i)}$ are i.i.d.\ from $\pi g$. The probability that at least one of these outputs belongs to $\mathcal P$ is at most $1-(1-Q_1)^N$. A union bound over the finite family $\mathcal G_1$ gives the first term of Eq.~\eqref{eq:formula-G-15}. For each active $a\ge2$ with $a\le N$, there are $(N)_a$ ordered tuples of distinct indices and $|\mathcal G_a|$ kernels. A union bound over these output events gives $(N)_a|\mathcal G_a|Q_a$ without requiring independence across overlapping tuples. Selecting an assembled output from the candidate pool that belongs to $\mathcal P$ requires at least one of the counted events. Summing over active arities and truncating at one proves Eq.~\eqref{eq:formula-G-15}. \end{proof}

\subsection{Endpoint tails at a fixed retained-task bound}
\label{app:reachability-tail}

\begin{proposition}
\label{prop:tail-description}
Fix the history, context, and retained-task bound $\delta$, and draw $C\sim\pi_{k,\xi}$. Write $L=L_k(C)$, $D=D_k(C)$,
$\bar G_{k,\delta}(x):=\mathbb P(L\ge x,D\le\delta)$, and let $s_{k,\delta}$ be the essential upper endpoint of $L$ on $\{D\le\delta\}$. Suppose $f(u):=\bar G_{k,\delta}(s_{k,\delta}-u)$ is regularly varying at zero with index $\vartheta_{k,\delta}>0$. Then
$f(u)=u^{\vartheta_{k,\delta}}m_{k,\delta}(u)=u^{\vartheta_{k,\delta}+o(1)}$
for a function $m_{k,\delta}$ slowly varying at zero. For positive targets, if $s_{k,\delta}>0$, then as $0<\lambda\uparrow s_{k,\delta}$,
\begin{equation}
P_k((\lambda,\delta))
=(s_{k,\delta}-\lambda)^{\vartheta_{k,\delta}}
m_{k,\delta}(s_{k,\delta}-\lambda)
=(s_{k,\delta}-\lambda)^{\vartheta_{k,\delta}+o(1)}.
\label{eq:formula-3-31}
\end{equation}
\end{proposition}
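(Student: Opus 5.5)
The argument has two parts: first identify $P_k((\lambda,\delta))$ with the tail function $\bar G_{k,\delta}$, then read off the asymptotics from the Karamata representation of regularly varying functions.

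\textbf{Step 1: reachability equals the tail function.} By Definition~\ref{def:qualified-modification} and Definition~\ref{def:reachability}, for $C\sim\pi_{k,\xi}$,
\[
P_k((\lambda,\delta))=\mathbb P\bigl(C\in\mathcal M,\ L_k(C)\ge\lambda,\ D_k(C)\le\delta\bigr).
\]
Under the bookkeeping convention $L_k(\bot)=D_k(\bot)=0$, the output $\bot$ never satisfies $L\ge\lambda$ when $\lambda>0$. So for every positive target the validity requirement can be dropped, and
\[
P_k((\lambda,\delta))=\bar G_{k,\delta}(\lambda).
\]
Restricting to $\lambda>0$ is exactly what makes this identification clean. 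Writing $u:=s_{k,\delta}-\lambda$, we get $P_k((\lambda,\delta))=f(u)$. As $\lambda\uparrow s_{k,\delta}$ we have $u\downarrow0$, and $\lambda>0$ is eventually compatible with this because $s_{k,\delta}>0$.

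\textbf{Step 2: the representation of $f$.} Set $m_{k,\delta}(u):=u^{-\vartheta_{k,\delta}}f(u)$. Regular variation of $f$ at zero with index $\vartheta$ means that $f(xu)/f(u)\to x^{\vartheta}$ for every $x>0$. Hence $m(xu)/m(u)\to1$, so $m$ is slowly varying at zero by definition, and $f(u)=u^{\vartheta}m(u)$.

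\textbf{Step 3: the exponent form (the main point).} It remains to show $m(u)=u^{o(1)}$, that is, $\log m(u)/\log u\to0$ as $u\downarrow0$. I would use the standard result that slowly varying functions are dominated by powers: for every $\epsilon>0$, $u^{\epsilon}\le m(u)\le u^{-\epsilon}$ for all sufficiently small $u$.

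To justify this, transfer to infinity with $\ell(x):=m(1/x)$, which is slowly varying at $\infty$. Apply the Karamata representation theorem, or equivalently the uniform convergence theorem followed by a geometric-sequence argument. This gives $\log\ell(x)/\log x\to0$. One caveat is that this requires $m$ to be measurable. Here $f$ is monotone (nondecreasing in $u$), hence measurable, so $m$ is measurable and the theorem applies.

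\textbf{Step 4: conclusion.} Consequently $f(u)=u^{\vartheta+o(1)}$. Substituting $u=s_{k,\delta}-\lambda$ gives Eq.~\eqref{eq:formula-3-31}.

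The main obstacle is Step 3, the measurability and uniformity needed for Karamata's theorem. This is handled by the monotonicity of $f$ inherited from the distribution function $\bar G$.
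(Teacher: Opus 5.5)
Your proposal is correct and follows the paper's proof. The paper identifies $P_k((\lambda,\delta))=\bar G_{k,\delta}(\lambda)=f(s_{k,\delta}-\lambda)$ for $0<\lambda<s_{k,\delta}$ using $L_k(\bot)=0<\lambda$, sets $m_{k,\delta}(u)=f(u)/u^{\vartheta_{k,\delta}}$, and cites de Haan for the fact $\log m_{k,\delta}(u)/\log u\to0$ for slowly varying functions. Your additional remark that $f$ is monotone, hence measurable, correctly supplies the hypothesis of the uniform convergence and representation theorems, which the paper's citation leaves implicit.
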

\begin{proof}[Proof of Proposition~\ref{prop:tail-description}.]
By regular variation, $m_{k,\delta}(u)=f(u)/u^{\vartheta_{k,\delta}}$ is slowly varying. For slowly varying functions, \citet{dehaan2006evt} give $\log m_{k,\delta}(u)/\log u\to0$, yielding the second representation. For $0<\lambda<s_{k,\delta}$, failed outputs have $L_k(\bot)=0<\lambda$, so $P_k((\lambda,\delta))=\bar G_{k,\delta}(\lambda)=f(s_{k,\delta}-\lambda)$. This proves Eq.~\eqref{eq:formula-3-31}. \end{proof}

The endpoint obeys $s_{k,\delta}\le Z_k\bar a_k\le1-J_{\mathrm{user}}(A_k)$ by Eq.~\eqref{eq:formula-3-30}. It can be below the pointwise bound when generation assigns no mass near the best feasible modifications. The endpoint and tail index belong to the specified state, history, context, and retained-task bound.

\section{Performance Ceiling and Verification}\label{app:limits}
\subsection{Remaining improvement and positive reachability}\label{app:positive-reachability}
\begin{lemma}
\label{lem:failure-improvement-bound}
Write
\begin{equation}
\bar a_k \;:=\; 1-\mathbb{E}_{t\sim\mathcal{D}_{F,k}}\bigl[V(A_k,t)\bigr]\;\in\;[0,1].
\label{eq:formula-3-28}
\end{equation}
Under Assumptions~\ref{ass:bounded-reward} and \ref{ass:agent-independent-tasks}, (i) the following decomposition and bound hold:
\begin{equation}
1-J_{\text{user}}(A_k) \;=\; Z_k\,\bar a_k \;+\; (1-Z_k)\bigl(1-\mathbb{E}_{\mathcal{D}_{R,k}}[V(A_k,\cdot)]\bigr) \;\ge\; Z_k\,\bar a_k .
\label{eq:formula-3-29}
\end{equation}

(ii) For every candidate agent $\tilde A$, writing $\mu_F=\mathbb E_{\mathcal D_{F,k}}[\mathrm{Adv}_{A_k}(\tilde A,t)]$,
\begin{equation}
L_{A_k}(\tilde A)\;=\;Z_k\mu_F\;\le\;Z_k\,\bar a_k\;\le\;1-J_{\text{user}}(A_k).
\label{eq:formula-3-30}
\end{equation}
\end{lemma}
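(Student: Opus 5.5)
The plan is to handle the identity in (i) by applying Lemma~\ref{lem:failure-decomposition} with $g(t)=1-V(A_k,t)$, and then to get (ii) as a pointwise consequence of bounded rewards.

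\textbf{Part (i).} The function $g(t)=1-V(A_k,t)$ lies in $[0,1]$ by Assumption~\ref{ass:bounded-reward}, so it is integrable. Under Assumption~\ref{ass:agent-independent-tasks}, $\mathbb E_{\mathcal D_{\mathrm{user}}}[g]=1-J_{\mathrm{user}}(A_k)$. Lemma~\ref{lem:failure-decomposition} splits this expectation into $\mathbb E[\psi_{A_k}g]+\mathbb E[(1-\psi_{A_k})g]$.

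When $0<Z_k<1$, the densities in Eq.~\eqref{eq:formula-3-7} rewrite the two terms as follows:
\[
\mathbb E[\psi_{A_k}g]=Z_k\,\mathbb E_{\mathcal D_{F,k}}[g]=Z_k\bar a_k,
\qquad
\mathbb E[(1-\psi_{A_k})g]=(1-Z_k)\bigl(1-\mathbb E_{\mathcal D_{R,k}}[V(A_k,\cdot)]\bigr).
\]
The second term is nonnegative because $V\le1$, which gives the inequality in Eq.~\eqref{eq:formula-3-29}. The bound $\bar a_k\in[0,1]$ holds because $V(A_k,\cdot)\in[0,1]$. The degenerate cases $Z_k\in\{0,1\}$ need a convention, since one of $\mathcal D_{F,k},\mathcal D_{R,k}$ is undefined there. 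I would state the identity in weighted form, with $Z_k\bar a_k:=\mathbb E[\psi_{A_k}(1-V(A_k,\cdot))]$ and similarly for the retained term, so that the undefined factor is multiplied by zero.

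\textbf{Part (ii).} For every $t$, $V(\tilde A,t)\le1$, so $\mathrm{Adv}_{A_k}(\tilde A,t)\le1-V(A_k,t)$ pointwise. Taking expectations under $\mathcal D_{F,k}$ gives $\mu_F\le\bar a_k$. Multiplying by $Z_k\ge0$ gives $L_{A_k}(\tilde A)=Z_k\mu_F\le Z_k\bar a_k$. The first equality here is just the definition of $L_{A_k}$ from Section~\ref{sec:qualified-modifications}, equivalently the first term in Eq.~\eqref{eq:formula-3-6}. The final inequality $Z_k\bar a_k\le1-J_{\mathrm{user}}(A_k)$ is part (i).

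\textbf{Expected difficulty.} There is no real obstacle. The only point needing care is the bookkeeping at $Z_k\in\{0,1\}$, which the weighted form above handles uniformly.
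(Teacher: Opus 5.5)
Your proposal is correct and follows the same route as the paper's proof: Lemma~\ref{lem:failure-decomposition} applied to $g=1-V(A_k,\cdot)$ gives part (i), and integrating the pointwise bound $\mathrm{Adv}_{A_k}(\tilde A,t)\le 1-V(A_k,t)$ under $\mathcal D_{F,k}$ gives $\mu_F\le\bar a_k$ and hence part (ii). Your weighted-form convention for $Z_k\in\{0,1\}$ is a harmless extra; the paper avoids those cases through its standing assumption $0<Z_k<1$.
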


\begin{proof}[Proof of Lemma~\ref{lem:failure-improvement-bound}.] Apply Lemma~\ref{lem:failure-decomposition} to $g=1-V(A_k,\cdot)$ and use Eq.~\eqref{eq:formula-2-8} and Eq.~\eqref{eq:formula-3-28} to obtain Eq.~\eqref{eq:formula-3-29}. Assumption~\ref{ass:bounded-reward} gives $\mathrm{Adv}_{A_k}(\tilde A,t)\le1-V(A_k,t)$ pointwise. Taking the expectation under $\mathcal D_{F,k}$ yields $\mu_F\le\bar a_k$, and hence Eq.~\eqref{eq:formula-3-30}. \end{proof}

\begin{corollary}[Necessary condition for positive reachability]
\label{cor:positive-reachability}
Under Assumptions~\ref{ass:bounded-reward} and~\ref{ass:agent-independent-tasks}, if $P_k((\lambda,\delta)\mid\mathcal H_k,\xi)>0$, then
\[
Z_k\bar a_k\ge\lambda,
\qquad
J_{\mathrm{user}}(A_k)\le1-\lambda.
\]
\end{corollary}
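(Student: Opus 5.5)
The plan is to argue by contrapositive from the definition of reachability. Suppose $P_k((\lambda,\delta)\mid\mathcal H_k,\xi)=\pi_{k,\xi}(\mathcal P_k((\lambda,\delta)))>0$. Then the qualified set $\mathcal P_k((\lambda,\delta))$ is nonempty. Moreover, since $\lambda>0$ and failed outputs carry $L_k(\bot)=0$, any point of positive mass in this set is a genuine modification $c\in\mathcal M$ with $L_k(c)\ge\lambda$. Only existence matters here: the probability value itself plays no further role beyond guaranteeing that some such $c$ exists.

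Next, I apply Lemma~\ref{lem:failure-improvement-bound}(ii) to the candidate agent $\widetilde A=A_k\oplus c$. This is valid because Eq.~\eqref{eq:formula-3-30} holds for every candidate agent under Assumptions~\ref{ass:bounded-reward} and~\ref{ass:agent-independent-tasks}. It gives the chain
\[
\lambda\le L_k(c)=L_{A_k}(A_k\oplus c)\le Z_k\bar a_k\le 1-J_{\mathrm{user}}(A_k).
\]
Reading off the first and second inequalities gives $Z_k\bar a_k\ge\lambda$. Rearranging the outer inequality gives $J_{\mathrm{user}}(A_k)\le1-\lambda$.

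The only point needing care is the well-definedness of $L_k$ and $\mathcal D_{F,k}$, which require $0<Z_k<1$. If $Z_k=0$, then $L_{A_k}(\widetilde A)=\mathbb E[\psi_{A_k}\mathrm{Adv}]=0<\lambda$ by the unnormalized form in Eq.~\eqref{eq:formula-3-6}, so the qualified set is empty and the hypothesis fails. The case $Z_k=1$ is handled the same way, interpreting $\bar a_k$ through the unnormalized expression $Z_k\bar a_k=\mathbb E_{\mathcal D_{\mathrm{user}}}[\psi_{A_k}(1-V(A_k,\cdot))]$, for which the pointwise bound $\mathrm{Adv}\le1-V(A_k,\cdot)$ still applies. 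I expect no real obstacle; the main step is simply invoking Eq.~\eqref{eq:formula-3-30} with the witness $c$.
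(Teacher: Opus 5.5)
Your proposal is correct and matches the paper's proof. Positive reachability makes $\mathcal P_k((\lambda,\delta))$ nonempty, and feeding any witness $c$ with $L_k(c)\ge\lambda$ into the chain of Eq.~\eqref{eq:formula-3-30} gives both inequalities at once. Your edge-case paragraph goes beyond the paper, which simply works under the standing assumption $0<Z_k<1$; note that at $Z_k=1$ the ill-defined objects are $\mathcal D_{R,k}$ and hence $D_k$, not $\bar a_k$. Your argument is also direct rather than by contrapositive, but neither point affects correctness.
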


For the calibrated target in Corollary~\ref{cor:qualified-safe-improvement}, the second inequality implies $J_{\mathrm{user}}(A_k)\le1-\gamma-(1-Z_k)\delta$.

\begin{proof}[Proof of Corollary~\ref{cor:positive-reachability}.] Positive reachability implies that $\mathcal P_k((\lambda,\delta))$ is nonempty, so some modification has $L_k(c)\ge\lambda$. Lemma~\ref{lem:failure-improvement-bound}, specifically Eq.~\eqref{eq:formula-3-30}, gives $\lambda\le L_k(c)\le Z_k\bar a_k\le1-J_{\mathrm{user}}(A_k)$. Substituting $\lambda=\gamma+(1-Z_k)\delta$ gives the statement for required overall gain $\gamma$. \end{proof}

\subsection{One-sided changes and Measured-margin}\label{app:one-sided-deviations}\label{app:one-sided-estimation}
\begin{definition}[One-sided deviations]
\label{def:one-sided-deviations}
For a candidate agent $\widetilde A$, define the average loss and gain in per-task expected reward under $\mathcal D_{R,k}$ by
\[
\begin{aligned}
D_R^-(\widetilde A;A_k)
&:=\mathbb E_{t\sim\mathcal D_{R,k}}[(V(A_k,t)-V(\widetilde A,t))_+],\\
D_R^+(\widetilde A;A_k)
&:=\mathbb E_{t\sim\mathcal D_{R,k}}[(V(\widetilde A,t)-V(A_k,t))_+].
\end{aligned}
\]

\end{definition}
By $|x|=(x)_++(-x)_+$, $D_R=D_R^-+D_R^+$ and $\mathbb E_{\mathcal D_{R,k}}[\mathrm{Adv}_{A_k}]=D_R^+-D_R^-$.

\begin{proposition}
\label{prop:one-sided-improvement}
 Under Assumptions~\ref{ass:bounded-reward} and \ref{ass:agent-independent-tasks}, if $D_R^-(\widetilde A;A_k)\le\delta^-$, then

\begin{equation}
J_{\mathrm{user}}(\widetilde A)-J_{\mathrm{user}}(A_k)
\ge
L_{A_k}(\widetilde A)-(1-Z_k)\delta^-.
\label{eq:formula-3-17}
\end{equation}

\end{proposition}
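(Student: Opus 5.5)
The proof follows the same route as Theorem~\ref{thm:expected-reward-improvement}, with the absolute-value bound on the retained-task term replaced by a one-sided bound. I start from the exact decomposition in Eq.~\eqref{eq:formula-3-6}, which gives
\[
J_{\mathrm{user}}(\widetilde A)-J_{\mathrm{user}}(A_k)=L_{A_k}(\widetilde A)+E_{A_k}(\widetilde A).
\]
Here \(E_{A_k}(\widetilde A)=\mathbb E_{t\sim\mathcal D_{\mathrm{user}}}[(1-\psi_{A_k}(t))\,\mathrm{Adv}_{A_k}(\widetilde A,t)]\). Assumption~\ref{ass:agent-independent-tasks} ensures that both agents are compared under the same \(\mathcal D_{\mathrm{user}}\), and Assumption~\ref{ass:bounded-reward} ensures that \(\mathrm{Adv}\in[-1,1]\) is integrable.

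Next I rewrite the retained term under \(\mathcal D_{R,k}\). If \(Z_k<1\), the density in Eq.~\eqref{eq:formula-3-7} gives \(E_{A_k}(\widetilde A)=(1-Z_k)\mathbb E_{t\sim\mathcal D_{R,k}}[\mathrm{Adv}_{A_k}(\widetilde A,t)]\). By the identity stated right after Definition~\ref{def:one-sided-deviations}, this expectation equals \(D_R^+-D_R^-\). Since \(D_R^+\ge0\), we get \(E_{A_k}(\widetilde A)\ge-(1-Z_k)D_R^-(\widetilde A;A_k)\ge-(1-Z_k)\delta^-\). The underlying pointwise fact is \(x\ge-(-x)_+\), applied to \(x=\mathrm{Adv}_{A_k}(\widetilde A,t)\). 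Adding \(L_{A_k}(\widetilde A)\) gives Eq.~\eqref{eq:formula-3-17}.

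The only delicate point is the degenerate case \(Z_k=1\), where \(\mathcal D_{R,k}\) is undefined. I handle it by reading the hypothesis as vacuous there, or equivalently by working with the unnormalized weight \(1-\psi_{A_k}\). In that case \(1-\psi_{A_k}=0\) almost surely, so \(E_{A_k}=0\), and the bound holds trivially because its right-hand side is exactly \(L_{A_k}(\widetilde A)\). Beyond this, I expect no obstacle: the argument is linearity together with the positive-part inequality, with no Jensen step needed, unlike the absolute-value case.
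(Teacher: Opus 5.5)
Your proof is correct and matches the paper's argument: you use the decomposition in Eq.~\eqref{eq:formula-3-6}, write the retained-task term as $(1-Z_k)(D_R^+-D_R^-)$, drop $D_R^+\ge0$, and apply the hypothesis $D_R^-\le\delta^-$. Your treatment of the degenerate case $Z_k=1$ is a harmless addition; the paper does not address it and relies on its standing assumption $0<Z_k<1$.
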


\begin{proof}[Proof of Proposition~\ref{prop:one-sided-improvement}.] From Eq.~\eqref{eq:formula-3-9} and $D_R^+ \geq 0$:
\[
J_{\text{user}}(\tilde A) - J_{\text{user}}(A_k) = L_{A_k} + (1 - Z_k)(D_R^+ - D_R^-) \geq L_{A_k} - (1 - Z_k) D_R^-. \quad
\]
Since the proposition assumes $D_R^-\le\delta^-$, the displayed inequality gives
\[
J_{\text{user}}(\tilde A)-J_{\text{user}}(A_k)
\ge L_{A_k}-(1-Z_k)\delta^-.
\]
\end{proof}

Since $D_R=D_R^-+D_R^+$, Eq.~\eqref{eq:formula-3-9} and the definition of $M_{A_k}$ in Section~\ref{sec:limits} give

\begin{equation}
J_{\mathrm{user}}(\widetilde A)-J_{\mathrm{user}}(A_k)
=
M_{A_k}(\widetilde A)
+(1-Z_k)D_R^+(\widetilde A;A_k)
\ge M_{A_k}(\widetilde A).
\label{eq:formula-4-22}
\end{equation}

To estimate the mean retained-task loss, define the estimator and its error bound by
\[
\hat D^-_{R,k}:=\frac{1}{|S_R|}\sum_{j\in S_R}
\bigl(\hat V(A_k,t_j^R)-\hat V(\tilde A,t_j^R)\bigr)_+,
\qquad
\varepsilon_D^-:=\frac{2\varepsilon_V+\varepsilon_R+\varepsilon_Z}{1-\hat Z_k}.
\]
\paragraph{One-sided estimation.}
Use the sampling conditions, readiness event, and fresh-rollout coverage of Appendix~\ref{app:proofs-evaluation}. For the exact loss $q^-(t)=(V(A_k,t)-V(\widetilde A,t))_+$, define the weighted stored-record statistic $(1-\phi_j^R)q^-(t_j^R)\in[0,1]$. Its reference expectation is $(1-Z_k)D_R^-$. Apply the stored-sample concentration argument of Lemma~\ref{lem:retained-change-estimation} to this statistic, obtaining an event $\mathcal E_R^-$ with the same allocated failure probability $\beta_R$. The $1$-Lipschitz property of the positive part bounds the replacement of exact rewards by fresh estimates by $2\varepsilon_V$. On $\mathcal E_R^-\cap\mathcal E_Z$ and the fresh-rollout event,
\[
|(1-\widehat Z_k)\widehat D^-_{R,k}-(1-Z_k)D_R^-|
\le2\varepsilon_V+\varepsilon_R.
\]
Since $D_R^-\le1$, division by $1-\widehat Z_k>0$ on $\mathsf{Ready}_k$ gives
\[
|\widehat D^-_{R,k}-D_R^-|
\le(2\varepsilon_V+\varepsilon_R+\varepsilon_Z)/(1-\widehat Z_k).
\]
The reference-sample comparison and its joint-TV risk transfer are the same as in Lemma~\ref{lem:retained-change-estimation}; only the bounded statistic changes. Measured-margin and the one-sided rule in Eq.~\eqref{eq:one-sided-adoption-rule} use $\mathcal E_R^-$ in place of the symmetric retained event, with the original total risk allocation.

\begin{proposition}
\label{prop:measured-margin-guarantee}
Under the reward, task, and evaluation assumptions stated in Theorem~\ref{thm:one-step-guarantee}, use Rule~\ref{alg:measured-margin} in place of Rule~\ref{alg:two-gate} in the one-candidate update, with allowed retained-task decrease $\delta^-$. Let $\mathsf{Acc}^M_k\subseteq\mathsf{Cand}_k\cap\mathsf{Ready}_k$ be the resulting adoption event and use $\widehat M_k$ from Eq.~\eqref{eq:formula-4-24}. \begin{resultparts}
\item\label{part:measured-margin-safety}
\[
\mathbb P\!\left(\mathsf{Acc}^M_k\cap\{I_k<\widehat M_k\}\right)
\le\beta_{\mathrm{step},k}.
\]
Since acceptance requires $\widehat M_k>0$, the joint event of adoption and nonpositive actual improvement has probability at most $\beta_{\mathrm{step},k}$. \item\label{part:measured-margin-inclusion} Using the same evaluation data and per-task reward estimates, at matched retained-task bounds $\delta^-=\delta$ and matched radii, every ready sample path accepted by Eq.~\eqref{eq:formula-4-23} under Eq.~\eqref{eq:formula-4-28} is accepted by Rule~\ref{alg:measured-margin}; the reverse inclusion does not hold in general.
\end{resultparts}
\end{proposition}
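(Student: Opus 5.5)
Part (\ref{part:measured-margin-safety}) follows the proof of Theorem~\ref{thm:one-step-guarantee}, replacing the symmetric retained-task bound with the one-sided one. Let $\mathcal E_k^-$ be the intersection, on the ready branch, of the coverage events of Lemma~\ref{lem:improvement-estimation} with $\mathcal E_Z\cap\mathcal E_V^{(R)}\cap\mathcal E_R^-$, where $\mathcal E_R^-$ is the one-sided stored-sample event constructed above. Set $\mathsf{Good}_k^-:=\mathsf{Ready}_k^c\cup(\mathsf{Ready}_k\cap\mathcal E_k^-)$. The event $\mathcal E_R^-$ has the same allocated failure probability $\beta_R$ as $\mathcal E_R$. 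The union bound in Eq.~\eqref{eq:step-risk} therefore gives $\mathbb P((\mathsf{Good}_k^-)^c)\le\beta_{\mathrm{step},k}$.

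Take an accepted path in $\mathsf{Good}_k^-$. Since $\mathsf{Acc}^M_k\subseteq\mathsf{Ready}_k$, the path lies in $\mathcal E_k^-$, so $|S_R|\ge1$ and the estimator bounds apply. Eq.~\eqref{eq:formula-4-22} gives $I_k\ge M_{A_k}(\widetilde A)=L_{A_k}(\widetilde A)-(1-Z_k)D_R^-$. Lemma~\ref{lem:improvement-estimation} gives $L_{A_k}\ge\widehat L_k-\varepsilon_L$. The one-sided estimation bound gives $D_R^-\le\widehat D^-_{R,k}+\varepsilon_D^-$. On $\mathcal E_Z$ we have $0\le1-Z_k\le1-\widehat Z_k+\varepsilon_Z$. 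Combining these and using nonnegativity of $\widehat D^-_{R,k}+\varepsilon_D^-$ yields $I_k\ge\widehat M_k$. Hence $\mathsf{Acc}^M_k\cap\{I_k<\widehat M_k\}\subseteq(\mathsf{Good}_k^-)^c$. Because acceptance requires $\widehat M_k>0$, the non-degradation statement follows.

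For part (\ref{part:measured-margin-inclusion}), fix a ready path accepted by Two-Gate with $\delta^-=\delta$. Pointwise $(x)_+\le|x|$, so $\widehat D^-_{R,k}\le\widehat D_R\le\delta$. The radii match, meaning $\varepsilon_D^-=\varepsilon_D$, so
\[
\widehat M_k\ge\widehat L_k-\varepsilon_L-(1-\widehat Z_k+\varepsilon_Z)(\delta+\varepsilon_D)\ge\tau-\varepsilon_L-(1-\widehat Z_k+\varepsilon_Z)(\delta+\varepsilon_D)=\Delta_k>0.
\]
Thus Rule~\ref{alg:measured-margin} accepts the path.

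For non-reversal we need a counterexample. Take a path on which the candidate's retained-task estimates show only gains, so that $\widehat D^-_{R,k}=0$ while $\widehat D_R$ exceeds $\delta$. Two-Gate rejects this path, but a large enough $\widehat L_k$ makes $\widehat M_k>0$ with $\widehat D^-_{R,k}\le\delta^-$, so Rule~\ref{alg:measured-margin} accepts it. A two-task deterministic instance realizes such a path, and choosing $\tau$ below $\widehat L_k$ shows the failure is not merely in Eq.~\eqref{eq:formula-4-28}.

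The main subtlety is bookkeeping. We must check that swapping $\mathcal E_R$ for $\mathcal E_R^-$ keeps the total risk at $\beta_{\mathrm{step},k}$. We must also handle the sign in $(1-Z_k)D_R^-$ correctly when replacing $1-Z_k$ by its observable upper bound.
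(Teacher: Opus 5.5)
Your proof of part~(\ref{part:measured-margin-safety}) and of the inclusion in part~(\ref{part:measured-margin-inclusion}) is the paper's. You use the same good event with $\mathcal E_R^-$ replacing $\mathcal E_R$ at the same risk allocation, the margin identity Eq.~\eqref{eq:formula-4-22}, and $\widehat D^-_{R,k}\le\widehat D_R$ with matched radii to get $\widehat M_k\ge\Delta_k>0$.

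You differ in the witness for strictness. The paper takes the near-ceiling instance of Proposition~\ref{prop:positive-margin-instances} with $u\le(1-Z)\delta$, in which the candidate leaves retained tasks unchanged, together with the evaluation design of Proposition~\ref{prop:certification-beyond-threshold}. On the common coverage event Measured-margin accepts. Any Two-Gate acceptance would force $u/2=L\ge\tau-\varepsilon_L>(1-\widehat Z_k+\varepsilon_Z)(\delta+\varepsilon_D)\ge(1-Z)\delta\ge u$, so the positive-gain check Eq.~\eqref{eq:formula-4-28} fails for every $\tau$.

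You instead exploit the asymmetry of the two retained statistics: a candidate that only gains on retained tasks is blocked by $\widehat D_R\le\delta$ but not by $\widehat D^-_{R,k}\le\delta^-$. This is valid and more elementary. With deterministic kernels the per-task estimates are exact, so no coverage event is needed. You only need readiness with $\widehat Z_k$ bounded away from $0$ and $1$, and sample sizes large enough that $\widehat M_k>0$.

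Two details should be made explicit. First, write down the instance, e.g.\ $\phi(t_R,\cdot)\equiv0$ with current-agent reward $0$ and candidate reward $1$ on $t_R$; this is legitimate because reward and detector are specified separately. Second, require $\delta<1$, since $\widehat D_R\le1$ makes the gate vacuous otherwise. Your remark about choosing $\tau$ is unnecessary, because the retained gate rejects for every $\tau$.

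What the paper's witness buys is that strictness persists for every $\delta>0$, even when the candidate has no retained-task change at all. That is exactly the ``beyond the ceiling'' phenomenon motivating Rule~\ref{alg:measured-margin} in Section~\ref{sec:limits}. Your witness shows only that Two-Gate also penalizes retained-task gains.
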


\begin{proof}[Proof of Proposition~\ref{prop:measured-margin-guarantee}.] Let $\mathcal E_k^-$ be the intersection, on $\mathsf{Ready}_k$, of the one-sided retained-task event above with the failure-rate, returned failure-mean, and fresh-rollout events used in Theorem~\ref{thm:one-step-guarantee}. Define $\mathcal G_k^-:=\mathsf{Ready}_k^c\cup(\mathsf{Ready}_k\cap\mathcal E_k^-)$. The same union bound gives $\mathbb P((\mathcal G_k^-)^c)\le\beta_{\mathrm{step},k}$. On $\mathcal G_k^-\cap\mathsf{Acc}^M_k$, the one-sided estimate in Appendix~\ref{app:one-sided-estimation} applies because readiness includes a valid retained denominator, and the margin identity Eq.~\eqref{eq:formula-4-22} gives
\[
I_k\ge L_{A_k}-(1-Z_k)D_R^-
\ge\hat L_k-\varepsilon_L
-(1-\hat Z_k+\varepsilon_Z)(\hat D^-_{R,k}+\varepsilon_D^-)
=\widehat M_k.
\]
Thus $\mathsf{Acc}^M_k\cap\{I_k<\widehat M_k\}\subseteq(\mathcal G_k^-)^c$, proving part (\ref{part:measured-margin-safety}). For part (\ref{part:measured-margin-inclusion}), no coverage event is needed: on the same ready evaluation path accepted by Eq.~\eqref{eq:formula-4-23},
$\hat D^-_{R,k}\le\hat D_R\le\delta=\delta^-$, and matched radii together with Eq.~\eqref{eq:formula-4-28} give $\widehat M_k>0$.  For strictness, use the improving instance of Proposition~\ref{prop:positive-margin-instances} with $u\le(1-Z)\delta$, and the sufficiently precise evaluation design of Proposition~\ref{prop:certification-beyond-threshold}. On their common ready coverage event, Measured-margin accepts. If Two-Gate with its positive-gain check accepted, then
\[
u/2=L\ge\widehat L-\varepsilon_L\ge\tau-\varepsilon_L
>(1-\widehat Z+\varepsilon_Z)(\delta+\varepsilon_D)
\ge(1-Z)\delta\ge u,
\]
a contradiction. The i.i.d. arrival horizons can be chosen so that readiness and coverage hold together with positive probability. This provides a matching evaluation configuration for which the inclusion is strict. \end{proof}
\begin{remark}
\label{rem:declared-gain}
For a declared $\gamma>0$, replace $\widehat M_k>0$ in Rule~\ref{alg:measured-margin} by $\widehat M_k\ge\gamma$, retaining its readiness and retained-task requirements. The proof of Proposition~\ref{prop:measured-margin-guarantee} gives the same joint error bound for accepting a modification whose expected-reward improvement is less than $\gamma$. The finite-run union and telescoping argument of Theorem~\ref{thm:finite-run-guarantee}, under its fixed-distribution and summed-risk conditions, then gives a lower bound $\gamma|\mathcal A^M|$ on that variant's adopted-step set and common good event.

On an accepted path in the common coverage event with finite positive radii, $\widehat M_k<L_k\le1-J_{\mathrm{user}}(A_k)$: the strict inequality comes from the positive retained-side error term in Eq.~\eqref{eq:formula-4-24}. Thus accepting under the positive-threshold variant requires $J_{\mathrm{user}}(A_k)<1-\gamma$ on that event.
\end{remark}
\subsection{Constraints on further improvement}
\label{app:proofs-limits}

\begin{corollary}[Performance bound for the one-sided gain condition]
\label{cor:two-gate-threshold}
Under Assumptions~\ref{ass:bounded-reward} and \ref{ass:agent-independent-tasks}, suppose some candidate agent $\tilde A$ meets the one-sided sufficient condition of Proposition~\ref{prop:one-sided-improvement} at allowed retained-task decrease $\delta^-$ --- that is, $D_R^-(\tilde A;A_k)\le\delta^-$ and $L_{A_k}(\tilde A)>(1-Z_k)\delta^-$. Then
\begin{equation}
J_{\text{user}}(A_k) < 1-(1-Z_k)\delta^-.
\label{eq:formula-5-1}
\end{equation}
\end{corollary}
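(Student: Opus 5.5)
The plan is to chain the hypothesis on $L_{A_k}(\tilde A)$ with the universal ceiling on failure-task contributions from Lemma~\ref{lem:failure-improvement-bound}. Only the second hypothesis, $L_{A_k}(\tilde A)>(1-Z_k)\delta^-$, is needed. The retained-decrease condition $D_R^-(\tilde A;A_k)\le\delta^-$ is not needed for the inequality itself. It only places the statement inside the setting of Proposition~\ref{prop:one-sided-improvement}.

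The first step is to recall Eq.~\eqref{eq:formula-3-30} of Lemma~\ref{lem:failure-improvement-bound}(ii). Under Assumptions~\ref{ass:bounded-reward} and~\ref{ass:agent-independent-tasks}, every candidate agent satisfies
\[
L_{A_k}(\tilde A)=Z_k\mu_F\le Z_k\bar a_k\le 1-J_{\mathrm{user}}(A_k).
\]
This holds because $\mathrm{Adv}_{A_k}(\tilde A,t)\le 1-V(A_k,t)$ pointwise, and because the decomposition in Eq.~\eqref{eq:formula-3-29} gives $1-J_{\mathrm{user}}(A_k)=Z_k\bar a_k+(1-Z_k)\bigl(1-\mathbb E_{\mathcal D_{R,k}}[V(A_k,\cdot)]\bigr)$, where the second term is nonnegative. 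The second step chains the strict hypothesis with this bound:
\[
(1-Z_k)\delta^-<L_{A_k}(\tilde A)\le 1-J_{\mathrm{user}}(A_k).
\]
Rearranging gives $J_{\mathrm{user}}(A_k)<1-(1-Z_k)\delta^-$, which is Eq.~\eqref{eq:formula-5-1}.

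The only point needing care is that $\mathcal D_{F,k}$ and $\mathcal D_{R,k}$, and hence $L_{A_k}$ and $D_R^-$, are well defined. This requires $0<Z_k<1$, which is implicit in the statement.

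The degenerate case $Z_k=0$ is still worth checking, in case the statement is meant to cover it. There $L_{A_k}$ is the $\psi$-weighted integral $\mathbb E_{\mathcal D_{\mathrm{user}}}[\psi_{A_k}\mathrm{Adv}]=0$, so the strict hypothesis $0>(1-Z_k)\delta^-$ cannot hold for $\delta^-\ge0$. The implication is therefore vacuous. This is the main (minor) obstacle: making sure Eq.~\eqref{eq:formula-3-30} is applied in its weighted form, $L_{A_k}=\mathbb E[\psi_{A_k}\mathrm{Adv}]\le\mathbb E[\psi_{A_k}(1-V(A_k,\cdot))]\le 1-J_{\mathrm{user}}(A_k)$. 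That form needs no division by $Z_k$ and so covers all cases uniformly.
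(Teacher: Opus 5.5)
Your proposal is correct and is essentially the paper's proof: both take $1-J_{\mathrm{user}}(A_k)\ge L_{A_k}(\tilde A)$ from Eq.~\eqref{eq:formula-3-30}, combine it with the strict hypothesis $L_{A_k}(\tilde A)>(1-Z_k)\delta^-$, and rearrange, leaving the $D_R^-$ condition unused. Your check of the degenerate case $Z_k=0$ goes slightly beyond what the paper writes, but it does not change the argument.
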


\begin{proof} By Eq.~\eqref{eq:formula-3-30}, $1-J_{\text{user}}(A_k)\ge L_{A_k}(\tilde A)$. The hypothesis gives $L_{A_k}(\tilde A)>(1-Z_k)\delta^-$. Hence $1-J_{\text{user}}(A_k)>(1-Z_k)\delta^-$, and rearranging yields Eq.~\eqref{eq:formula-5-1}. \end{proof}

Write $B_k:=1-(1-Z_k)\delta^-$ for fixed $\delta^->0$. Corollary~\ref{cor:two-gate-threshold} implies that a current agent with $J_{\mathrm{user}}(A_k)\ge B_k$ has no candidate satisfying $L_{A_k}(\widetilde A)>(1-Z_k)\delta^-$. The following result shows that positive actual margins can nevertheless occur, and bounds them by the remaining reward shortfall.

\begin{proposition}[Improvement beyond the ceiling of the sufficient condition]
\label{prop:positive-margin-instances}
Under Assumptions~\ref{ass:bounded-reward} and~\ref{ass:agent-independent-tasks}, fix \(\delta^->0\) and write \(B_k:=1-(1-Z_k)\delta^-\).
\begin{resultparts}
\item\label{part:positive-margin-instance}
In the kernel setting of Appendix~\ref{app:section-A-1}, allow the composition map \((M,C)\mapsto A\) to be any measurable map consistent with Eqs.~\eqref{eq:formula-2-1}--\eqref{eq:formula-2-5}. For every \(Z\in(0,1)\) and
\[
0<u\le\min\{Z/2,(1-Z)\delta^-\},
\]
there exists a two-task instance with \(Z_k=Z\), \(J_{\mathrm{user}}(A_k)=1-u\ge B_k\), and a modification whose candidate agent satisfies
\[
D_R^-(\widetilde A;A_k)=0,
\qquad
M_{A_k}(\widetilde A)=L_{A_k}(\widetilde A)=u/2>0.
\]
\item\label{part:universal-margin-bound}
For every current agent with \(0<Z_k<1\) and every candidate agent,
\[
M_{A_k}(\widetilde A)
\le L_{A_k}(\widetilde A)
\le Z_k\bar a_k
\le 1-J_{\mathrm{user}}(A_k).
\]
\end{resultparts}
\end{proposition}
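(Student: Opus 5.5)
Part (\ref{part:universal-margin-bound}) follows from results already established. Since \(D_R^-\ge0\) and \(1-Z_k>0\), the definition \(M_{A_k}(\widetilde A)=L_{A_k}(\widetilde A)-(1-Z_k)D_R^-(\widetilde A;A_k)\) gives \(M_{A_k}\le L_{A_k}\). The remaining two inequalities are exactly Eq.~\eqref{eq:formula-3-30} of Lemma~\ref{lem:failure-improvement-bound}. So part (\ref{part:universal-margin-bound}) is a one-line chaining.

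For part (\ref{part:positive-margin-instance}) we build an explicit two-task instance. Take tasks \(t_F,t_R\) with \(\mathcal D_{\mathrm{user}}(t_F)=Z\) and \(\mathcal D_{\mathrm{user}}(t_R)=1-Z\). Let outputs lie in \(\{0,1\}\), with \(r(t,o)=o\) and \(\phi(t,o)=1-o\), so that \(\psi_{A}(t)=1-V(A,t)\). The current agent outputs \(1\) deterministically on \(t_R\). On \(t_F\) it outputs \(1\) with probability \(p:=1-u/Z\), which lies in \([1/2,1)\) because \(0<u\le Z/2\).

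There is a catch: this current agent has \(Z_k=u\), not \(Z\). We therefore decouple the failure detector from the reward on \(t_F\), which is allowed because Definition~\ref{def:failure-detector} specifies the detector separately from the reward. Set \(\phi(t_F,\cdot)\equiv1\) and \(\phi(t_R,\cdot)\equiv0\). This gives \(\psi_{A_k}(t_F)=1\) and \(\psi_{A_k}(t_R)=0\), hence \(Z_k=Z\), \(\mathcal D_{F,k}=\delta_{t_F}\) and \(\mathcal D_{R,k}=\delta_{t_R}\). Then \(J_{\mathrm{user}}(A_k)=Zp+(1-Z)=1-u\). The condition \(u\le(1-Z)\delta^-\) gives \(J_{\mathrm{user}}(A_k)\ge B_k\).

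The candidate leaves \(t_R\) unchanged and raises the success probability on \(t_F\) to \(p+u/(2Z)\le1\). Then \(\mathrm{Adv}(t_R)=0\), so \(D_R^-=D_R=0\). Also \(L_{A_k}(\widetilde A)=Z\cdot u/(2Z)=u/2\), and \(M_{A_k}=L_{A_k}=u/2\). Finally, Eq.~\eqref{eq:formula-3-8} gives an overall gain of \(u/2\).

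It remains to realize this as a modification under some measurable composition map \((M,C)\mapsto A\). We take \(\mathcal M=\{c\}\) and let the map send \(C_k\) and \(C_k\oplus c\) to the two specified kernels. Both share the same \(M\), and self-task behavior can be fixed arbitrarily. This mirrors the construction in Proposition~\ref{prop:self-task-separation}.

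The main obstacle is the bookkeeping in that middle step: making \(Z_k\) equal the prescribed \(Z\) while keeping \(J_{\mathrm{user}}(A_k)=1-u\). The decoupled detector resolves it. We also need to check that \(p+u/(2Z)\le1\), which holds since \(u/(2Z)<u/Z\). With these checks done, the remaining verifications are direct substitution.
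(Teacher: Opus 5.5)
Your proposal is correct and follows the paper's proof: part (\ref{part:universal-margin-bound}) chains $D_R^-\ge0$ with Eq.~\eqref{eq:formula-3-30}. For part (\ref{part:positive-margin-instance}), your final construction is exactly the paper's instance: task-deterministic detector $\phi(t_F,\cdot)\equiv1$, $\phi(t_R,\cdot)\equiv0$; current agent $\mathrm{Bernoulli}(1-u/Z)$ and candidate $\mathrm{Bernoulli}(1-u/(2Z))$ on $t_F$; both deterministic $1$ on $t_R$; realized by a composition map sending $C_k$ and $C_k\oplus c$ to these kernels.
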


\begin{proof}[Proof of Proposition~\ref{prop:positive-margin-instances}.] For part (\ref{part:positive-margin-instance}), construct the following instance. Let $\mathcal T_{\mathrm{user}}=\{t_F,t_R\}$,
$\mathcal D_{\mathrm{user}}=Z\delta_{t_F}+(1-Z)\delta_{t_R}$, and $\mathcal O=\{0,1\}$ with reward $r(t,o)=o$. Set $\phi(t_F,o)=1$ and $\phi(t_R,o)=0$. Then $\mathcal D_{F,k}=\delta_{t_F}$ and $\mathcal D_{R,k}=\delta_{t_R}$. On $t_F$, take
\[
A_k(\cdot\mid t_F)=\operatorname{Bernoulli}(1-u/Z),
\qquad
\widetilde A(\cdot\mid t_F)=\operatorname{Bernoulli}(1-u/(2Z)).
\]
On $t_R$, both agents return $1$ deterministically. Choose a measurable composition map in Appendix~\ref{app:section-A-1} that realizes these two kernels at $C_k$ and $C_k\oplus c$, with the LLM fixed. All kernels are measurable on these finite spaces. Direct calculation gives
\[
J_{\mathrm{user}}(A_k)=1-u,\qquad
Z_k\bar a_k=u,\qquad
D_R^-=D_R^+=0,\qquad
L_{A_k}=M_{A_k}=u/2.
\]
The stipulated range of $u$ gives $J_{\mathrm{user}}(A_k)\ge B_k$.

Part (\ref{part:universal-margin-bound}) follows from $D_R^-\ge0$ and Eq.~\eqref{eq:formula-3-30}. \end{proof}

\begin{lemma}
\label{lem:observed-improvement-bound}
On evaluation paths where the required samples are available, with $\hat L_k$ as defined in Eq.~\eqref{eq:formula-4-16},
\begin{equation}
\hat L_k \;\in\; [-\hat Z_k,\;\hat Z_k] \qquad\text{with probability }1.
\label{eq:formula-5-3}
\end{equation}
Consequently, on any evaluated path with $\tau>\hat Z_k$, no modification can satisfy $\widehat L_k\ge\tau$ in Eq.~\eqref{eq:formula-4-23} or be adopted under Rule~\ref{alg:two-gate}.
\end{lemma}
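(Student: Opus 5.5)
The argument is a direct pathwise bound on the estimator in Eq.~\eqref{eq:formula-4-16}; no concentration or coverage event is needed. I work on an evaluation path where the required samples are available. There the returned failure batch has $n_F\ge1$ tasks, the stored sample gives $\hat Z_k\in[0,1]$, and both agents have been rerun with $m$ fresh rollouts on each task.

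\textbf{Step 1: bound each per-task difference.} By Assumption~\ref{ass:bounded-reward}, every rollout reward $r(t,o_l)$ lies in $[0,1]$. Each $\hat V(A,t)$ in Eq.~\eqref{eq:formula-4-14} is an average of such rewards, so $\hat V(A,t)\in[0,1]$ for $A\in\{A_k,\tilde A\}$. Hence every difference $\hat V(\tilde A,t^F_i)-\hat V(A_k,t^F_i)$ lies in $[-1,1]$. This holds for every outcome realization, not merely almost surely.

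\textbf{Step 2: bound the average and rescale.} The average of these $n_F$ differences also lies in $[-1,1]$. Multiplying by $\hat Z_k\ge0$ preserves the ordering, so $\hat L_k\in[-\hat Z_k,\hat Z_k]$ on every such path. The conclusion therefore holds with probability one.

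\textbf{Step 3: the consequence.} If $\tau>\hat Z_k$, then $\hat L_k\le\hat Z_k<\tau$, so the first gate $\widehat L_k\ge\tau$ in Eq.~\eqref{eq:formula-4-23} fails. Rule~\ref{alg:two-gate} validates only when both inequalities of Eq.~\eqref{eq:formula-4-23} and the condition of Eq.~\eqref{eq:formula-4-28} hold, and Procedure~\ref{alg:finite-horizon-update} accepts exactly the validated candidates. Hence no modification is adopted on such a path.

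There is no substantive obstacle; the only care needed is bookkeeping. First, $\hat Z_k$ is a fraction of binary labels and hence nonnegative. Second, on paths where samples are unavailable, the procedure abstains, so the claim is only asserted on evaluated paths, as stated.
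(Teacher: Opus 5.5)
Your proof is correct and follows the paper's argument: bound each $\hat V\in[0,1]$ by Assumption~\ref{ass:bounded-reward}, so each bracketed difference and its average lie in $[-1,1]$, then scale by $\hat Z_k\ge0$. You also spell out the consequence for the gate $\widehat L_k\ge\tau$ and for adoption under Rule~\ref{alg:two-gate}, which the paper leaves implicit.
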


\begin{proof}[Proof of Lemma~\ref{lem:observed-improvement-bound}.] By Eq.~\eqref{eq:formula-4-16}, $\hat L_k=\hat Z_k\cdot\frac{1}{n_F}\sum_{i=1}^{n_F}\bigl[\hat V(\tilde A,t^F_i)-\hat V(A_k,t^F_i)\bigr]$. Each $\hat V$ is an average of rewards, so $\hat V\in[0,1]$ by Assumption~\ref{ass:bounded-reward} and each bracket lies in $[-1,1]$; hence so does their average, and multiplying by $\hat Z_k\ge0$ gives Eq.~\eqref{eq:formula-5-3}. \end{proof}

Let $\mathcal E_L$ denote the intersection of $\mathsf{Ready}_k$ with the events on which the failure-rate, rollout, and returned failure-mean error bounds hold in Lemma~\ref{lem:improvement-estimation}. On $\mathcal E_L$,
\[
|\hat L_k-L_{A_k}(\tilde A)|\le\varepsilon_L,
\qquad
\mathcal E_L\subseteq\mathsf{Ready}_k\cap\mathcal E_Z,
\]
so $\hat L_k\le L_{A_k}(\tilde A)+\varepsilon_L\le Z_k\bar a_k+\varepsilon_L$.
Together with Lemma~\ref{lem:observed-improvement-bound}, this gives $\hat L_k\le\Theta_k:=\min\{\hat Z_k,Z_k\bar a_k+\varepsilon_L\}$.

\begin{definition}[Admissible region]
\label{def:admissible-region}
On \(\mathcal E_L\), define the admissible region of Two-Gate parameters by
\begin{equation}
\mathcal R:=\left\{(\tau,\delta)\in\mathbb R_{>0}^2:
(1-\widehat Z_k+\varepsilon_Z)(\delta+\varepsilon_D)+\varepsilon_L
<\tau\le\Theta_k\right\}.
\label{eq:formula-5-7}
\end{equation}
\end{definition}

\begin{theorem}
\label{thm:admissibility}
The admissible region \(\mathcal R\) is nonempty if and only if \(\delta_{\max}>0\), where
\begin{equation}
\delta_{\max}:=
\frac{\Theta_k-\varepsilon_L-\varepsilon_Z\varepsilon_D-\varepsilon_\Sigma}
{1-\widehat Z_k+\varepsilon_Z}.
\label{eq:formula-5-8}
\end{equation}
When \(\delta_{\max}>0\), there exists a \(\tau\) with \((\tau,\delta)\in\mathcal R\) exactly when \(0<\delta<\delta_{\max}\). The corresponding values of \(\tau\) form the interval
\[
\left(
(1-\widehat Z_k+\varepsilon_Z)(\delta+\varepsilon_D)+\varepsilon_L,
\Theta_k
\right].
\]
\end{theorem}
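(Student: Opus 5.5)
The proof reduces to a single algebraic identity that puts the $\tau$-constraint in Eq.~\eqref{eq:formula-5-7} into the form $\delta<\delta_{\max}$. Throughout I work on $\mathcal E_L\subseteq\mathsf{Ready}_k$. On this event a nonempty retained set exists, so $1-\widehat Z_k=|S_R|/n_R>0$, and $\varepsilon_D=\varepsilon_\Sigma/(1-\widehat Z_k)$ is well defined by Lemma~\ref{lem:retained-change-estimation}. Write $c_k:=1-\widehat Z_k+\varepsilon_Z>0$ and $\ell_k(\delta):=c_k(\delta+\varepsilon_D)+\varepsilon_L$.

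\textbf{Step 1 (fixed $\delta$).} Fix $\delta>0$. The set of $\tau$ with $(\tau,\delta)\in\mathcal R$ is $\{\tau>0:\ell_k(\delta)<\tau\le\Theta_k\}$. All error radii are nonnegative and $c_k>0$, so $\ell_k(\delta)\ge c_k\delta>0$. The requirement $\tau>0$ is therefore automatic, and the $\tau$-section is exactly the half-open interval $(\ell_k(\delta),\Theta_k]$ stated in the theorem. This interval is nonempty if and only if $\ell_k(\delta)<\Theta_k$.

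\textbf{Step 2 (the identity).} Expanding $\varepsilon_D$ gives
\[
c_k\varepsilon_D=(1-\widehat Z_k)\varepsilon_D+\varepsilon_Z\varepsilon_D=\varepsilon_\Sigma+\varepsilon_Z\varepsilon_D .
\]
Hence
\[
\ell_k(\delta)<\Theta_k
\iff c_k\delta<\Theta_k-\varepsilon_L-\varepsilon_Z\varepsilon_D-\varepsilon_\Sigma
\iff \delta<\delta_{\max},
\]
where the last step divides by $c_k>0$ and uses Eq.~\eqref{eq:formula-5-8}. Therefore, for each $\delta>0$, a $\tau$ with $(\tau,\delta)\in\mathcal R$ exists exactly when $\delta<\delta_{\max}$.

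\textbf{Step 3 (nonemptiness).} $\mathcal R$ is the union of its $\tau$-sections over $\delta>0$. By Step~2 it is nonempty if and only if the set $\{\delta>0:\delta<\delta_{\max}\}$ is nonempty, which holds if and only if $\delta_{\max}>0$. When $\delta_{\max}>0$, the admissible $\delta$ are exactly $0<\delta<\delta_{\max}$, and Step~1 gives the stated $\tau$-interval.

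The only point needing care is Step~2. One has to recognise that the $\varepsilon_Z\varepsilon_D$ and $\varepsilon_\Sigma$ terms in $\delta_{\max}$ arise from multiplying $\varepsilon_D$ by $c_k$. One also has to confirm that $c_k>0$ and that $\varepsilon_D$ is finite on $\mathcal E_L$, since both facts are needed for the division and for the equivalence to hold.
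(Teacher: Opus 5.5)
Your proof is correct and follows essentially the same route as the paper. Like the paper, you fix $\delta>0$, observe that the $\tau$-section is nonempty exactly when $(1-\widehat Z_k+\varepsilon_Z)(\delta+\varepsilon_D)+\varepsilon_L<\Theta_k$, expand using $\varepsilon_D=\varepsilon_\Sigma/(1-\widehat Z_k)$, and divide by the strictly positive coefficient to obtain $\delta<\delta_{\max}$. Your checks that the constraint $\tau>0$ is automatic and that $1-\widehat Z_k>0$ on $\mathcal E_L$ are details the paper leaves implicit, and they do not change the argument.
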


\begin{proof}[Proof of Theorem~\ref{thm:admissibility}.] For a fixed $\delta>0$ the $\tau$-interval is non-empty iff $(1-\hat Z_k+\varepsilon_Z)(\delta+\varepsilon_D)+\varepsilon_L<\Theta_k$. Substituting $\varepsilon_D=\varepsilon_\Sigma/(1-\hat Z_k)$ and expanding,
\[
(1-\hat Z_k)\delta+\varepsilon_\Sigma+\varepsilon_Z\delta+\varepsilon_Z\varepsilon_D+\varepsilon_L \;<\; \Theta_k,
\]
i.e. $\delta\,(1-\hat Z_k+\varepsilon_Z)<\Theta_k-\varepsilon_L-\varepsilon_Z\varepsilon_D-\varepsilon_\Sigma$, which is Eq.~\eqref{eq:formula-5-8}. The coefficient $(1-\hat Z_k+\varepsilon_Z)$ is strictly positive, so the left side is strictly increasing in $\delta$; hence the admissible $\delta$ form the interval $(0,\delta_{\max})$, non-empty iff $\delta_{\max}>0$, and $\mathcal{R}$ is the union of the corresponding non-empty $\tau$-intervals. \end{proof}

\begin{corollary}
\label{cor:necessary-failure-rate}
A necessary condition for the admissible region of Definition~\ref{def:admissible-region} to be nonempty is $\Theta_k>\varepsilon_\Sigma+\varepsilon_L$. Its two branches imply the following bounds.

\begin{resultparts}
\item\label{part:necessary-observed-failure-rate}On every sample path where the region is evaluated,
\begin{equation}
\varepsilon_\mu<1 \quad\text{and}\quad \hat Z_k > \frac{\varepsilon_\Sigma+\varepsilon_Z+\varepsilon_Z\varepsilon_\mu}{1-\varepsilon_\mu}.
\label{eq:formula-5-9}
\end{equation}

\item\label{part:zero-failure-radius}Within the feasible domain $2\varepsilon_V<1$, consider a sampling design along which the returned radius $\varepsilon_F$ can be driven to zero while the other radii are fixed. Then $\varepsilon_\mu\to2\varepsilon_V$ and
\begin{equation}
\hat Z_k > \hat Z_\infty:=
\frac{2\varepsilon_V+\varepsilon_R+2\varepsilon_Z+2\varepsilon_V\varepsilon_Z}{1-2\varepsilon_V}.
\label{eq:formula-5-9-prime}
\end{equation}
\item\label{part:necessary-expected-reward}On the concentration event $\mathcal E_L$ used by Definition~\ref{def:admissible-region},
\begin{equation}
Z_k\bar a_k>\varepsilon_\Sigma.
\label{eq:formula-5-10}
\end{equation}
The corresponding expected-reward bound is Corollary~\ref{cor:finite-evaluation-threshold}.

\end{resultparts}
\end{corollary}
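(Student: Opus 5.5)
The plan is to read the necessary condition off Theorem~\ref{thm:admissibility} and then specialise it to each branch of the minimum defining $\Theta_k$. By that theorem, $\mathcal R\neq\varnothing$ iff $\delta_{\max}>0$. The denominator $1-\widehat Z_k+\varepsilon_Z$ is positive, so this is equivalent to
\[
\Theta_k>\varepsilon_L+\varepsilon_Z\varepsilon_D+\varepsilon_\Sigma .
\]
All radii are nonnegative, hence $\varepsilon_Z\varepsilon_D\ge0$, and this gives the stated necessary condition $\Theta_k>\varepsilon_\Sigma+\varepsilon_L$. Since $\Theta_k=\min\{\widehat Z_k,\,Z_k\bar a_k+\varepsilon_L\}$, the condition is equivalent to the two separate inequalities $\widehat Z_k>\varepsilon_\Sigma+\varepsilon_L$ and $Z_k\bar a_k+\varepsilon_L>\varepsilon_\Sigma+\varepsilon_L$. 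These are the two branches used below.

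For part~(\ref{part:necessary-observed-failure-rate}), I would substitute the observable radius $\varepsilon_L=\varepsilon_Z+(\widehat Z_k+\varepsilon_Z)\varepsilon_\mu$ from Lemma~\ref{lem:improvement-estimation} into the first branch. Collecting the $\widehat Z_k$ terms gives
\[
\widehat Z_k(1-\varepsilon_\mu)>\varepsilon_\Sigma+\varepsilon_Z+\varepsilon_Z\varepsilon_\mu .
\]
The right-hand side is nonnegative. If $\varepsilon_\mu\ge1$, the left-hand side would be $\le0$ because $\widehat Z_k\ge0$, which is a contradiction. Hence $\varepsilon_\mu<1$, and dividing by $1-\varepsilon_\mu$ yields Eq.~\eqref{eq:formula-5-9}. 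This is the only step needing care: the sign argument must come before the division. It is a pathwise statement, because the first branch uses no concentration event.

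For part~(\ref{part:zero-failure-radius}), I would first note that $\varepsilon_\mu=2\varepsilon_V+\varepsilon_F\ge2\varepsilon_V$, so $\varepsilon_\mu\to2\varepsilon_V$ as $\varepsilon_F\to0$. The map
\[
x\mapsto\frac{\varepsilon_\Sigma+\varepsilon_Z+\varepsilon_Zx}{1-x}
\]
is nondecreasing on $[0,1)$, since both the numerator increases and the denominator decreases. Therefore the bound of part~(\ref{part:necessary-observed-failure-rate}) at any $\varepsilon_\mu<1$ implies the bound at $x=2\varepsilon_V$, which lies in $[0,1)$ by the feasibility assumption $2\varepsilon_V<1$. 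Expanding $\varepsilon_\Sigma=2\varepsilon_V+\varepsilon_R+\varepsilon_Z$ gives exactly $\widehat Z_\infty$ in Eq.~\eqref{eq:formula-5-9-prime}. Thus the bound holds along the whole design and is the limiting threshold.

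For part~(\ref{part:necessary-expected-reward}), I would work on $\mathcal E_L$, which is the event on which $\Theta_k$ and $\mathcal R$ are defined through the bound $\widehat L_k\le Z_k\bar a_k+\varepsilon_L$. The second branch reads $Z_k\bar a_k+\varepsilon_L>\varepsilon_\Sigma+\varepsilon_L$. Cancelling $\varepsilon_L$ gives Eq.~\eqref{eq:formula-5-10}.
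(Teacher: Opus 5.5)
Your proposal is correct and follows essentially the same route as the paper. Both arguments obtain $\Theta_k>\varepsilon_\Sigma+\varepsilon_L$ from Theorem~\ref{thm:admissibility} by discarding $\varepsilon_Z\varepsilon_D\ge0$, then treat the two branches of the minimum: part (i) substitutes $\varepsilon_L$ and applies the sign argument before dividing, part (ii) uses monotonicity in $\varepsilon_\mu$ (which you verify explicitly), and part (iii) cancels $\varepsilon_L$ on $\mathcal E_L$.
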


Branch (\ref{part:necessary-observed-failure-rate}) requires the empirical statistic to exceed its finite-sample error; branch (\ref{part:necessary-expected-reward}) requires $Z_k\bar a_k$ to exceed the retained-task measurement error. The zero-failure-radius limit holds the other radii fixed. At a fixed finite arrival horizon, increasing $n_F$ eventually makes return impossible. Appendix~\ref{app:conditional-numerics} gives the i.i.d. reference calculation when the rollout risk allocation also varies with $n_F$. The allowed retained-task change enters separately in the positive-$\delta$ condition below.

\begin{proof}[Proof of Corollary~\ref{cor:necessary-failure-rate}.] Since $\delta>0$ and $1-\widehat Z_k+\varepsilon_Z>0$, dropping the positive left side and the nonnegative term $\varepsilon_Z\varepsilon_D$ from Eq.~\eqref{eq:formula-5-8} gives the necessary inequality $\Theta_k>\varepsilon_\Sigma+\varepsilon_L$. For (\ref{part:necessary-observed-failure-rate}), substitute $\Theta_k\le\hat Z_k$ and expand $\varepsilon_L=\varepsilon_Z+(\hat Z_k+\varepsilon_Z)\varepsilon_\mu$ from Eq.~\eqref{eq:formula-4-26}:
\[
\hat Z_k>\varepsilon_\Sigma+\varepsilon_Z+\hat Z_k\varepsilon_\mu+\varepsilon_Z\varepsilon_\mu
\quad\Longleftrightarrow\quad
\hat Z_k(1-\varepsilon_\mu)>\varepsilon_\Sigma+\varepsilon_Z+\varepsilon_Z\varepsilon_\mu .
\]
If $\varepsilon_\mu\ge1$ the left side is non-positive while the right side is strictly positive, so no $\hat Z_k$ satisfies it; if $\varepsilon_\mu<1$, dividing by $1-\varepsilon_\mu>0$ gives Eq.~\eqref{eq:formula-5-9}. For (\ref{part:zero-failure-radius}), hold the other radii fixed and consider any returned-radius design along which $\varepsilon_F\downarrow0$. The right side of Eq.~\eqref{eq:formula-5-9} tends to the expression in Eq.~\eqref{eq:formula-5-9-prime}, using $\varepsilon_\mu\downarrow2\varepsilon_V$ and $\varepsilon_\Sigma+\varepsilon_Z=2\varepsilon_V+\varepsilon_R+2\varepsilon_Z$. Monotonicity in $\varepsilon_\mu$ makes it a lower boundary for every positive returned radius; every finite positive-radius design still obeys the strict condition Eq.~\eqref{eq:formula-5-9}. For (\ref{part:necessary-expected-reward}), substitute $\Theta_k\le Z_k\bar a_k+\varepsilon_L$, cancel $\varepsilon_L$, and apply Eq.~\eqref{eq:formula-3-29}. \end{proof}

\begin{corollaryrestatement}{cor:fixed-tolerance-constraint}
Under Assumptions~\ref{ass:bounded-reward} and~\ref{ass:agent-independent-tasks}, fix the threshold \(\delta>0\) on estimated retained-task change in Two-Gate. If some \(\tau>0\) satisfies \((\tau,\delta)\in\mathcal R\), then, on \(\mathcal E_L\),
\[
J_{\mathrm{user}}(A_k)
<1-(1-Z_k)\delta-\varepsilon_\Sigma.
\]
\end{corollaryrestatement}

Under these conditions, the stronger intermediate bound is
\begin{equation}
Z_k\bar a_k \;>\; (1-\hat Z_k+\varepsilon_Z)\,\delta \;+\; \varepsilon_\Sigma \;+\; \varepsilon_Z\varepsilon_D .
\label{eq:formula-5-13}
\end{equation}
Since $\mathcal E_L\subseteq\mathcal E_Z$ of Lemma~\ref{lem:failure-rate-estimation}, it follows that
\begin{equation}
Z_k\,\bar a_k \;>\; (1-Z_k)\,\delta+\varepsilon_\Sigma, \qquad\text{equivalently}\qquad Z_k \;>\; \frac{\delta+\varepsilon_\Sigma}{\bar a_k+\delta} \;\ge\; \frac{\delta}{\bar a_k+\delta},
\label{eq:formula-5-14}
\end{equation}
For the one-sided rule Eq.~\eqref{eq:one-sided-adoption-rule}, the same derivation uses $\delta^-$ together with its one-sided radius $\varepsilon_D^-$ and the event $\mathcal E_R^-$; the retained-risk allocation is replaced, not split. Corollary~\ref{cor:finite-evaluation-threshold} gives the corresponding necessary condition without a fixed positive retained-change requirement.

\begin{proof}[Proof of Corollary~\ref{cor:fixed-tolerance-constraint}.] By Theorem~\ref{thm:admissibility} a radius $\delta$ admits an accompanying $\tau$ exactly when $\delta<\delta_{\max}$, i.e. --- clearing the strictly positive denominator of Eq.~\eqref{eq:formula-5-8} --- when $\delta\,(1-\hat Z_k+\varepsilon_Z)<\Theta_k-\varepsilon_L-\varepsilon_Z\varepsilon_D-\varepsilon_\Sigma$. A minimum is at most either of its arguments, so $\Theta_k\le Z_k\bar a_k+\varepsilon_L$; substituting and cancelling $\varepsilon_L$ gives Eq.~\eqref{eq:formula-5-13}.

For Eq.~\eqref{eq:formula-5-14}, discard the non-negative $\varepsilon_Z\varepsilon_D$ and note that on $\mathcal{E}_Z$ one has $\hat Z_k-Z_k\le\varepsilon_Z$, hence $1-\hat Z_k+\varepsilon_Z\ge1-Z_k$. The rearrangement for $Z_k$ divides by $\bar a_k+\delta>0$, and the last inequality drops $\varepsilon_\Sigma\ge0$. The term $\delta/(\bar a_k+\delta)$ in this finite-sample condition contains no sampling radius. For Eq.~\eqref{eq:formula-5-15}, identity Eq.~\eqref{eq:formula-3-29} gives $1-J_{\text{user}}(A_k)\ge Z_k\bar a_k$, the discarded term being non-negative by Assumption~\ref{ass:bounded-reward}. \end{proof}

\begin{corollary}
\label{cor:budget-requirements}
Work in the domain $2\varepsilon_V<1$ of Eq.~\eqref{eq:formula-5-9-prime}. Write $L_V:=\log(2/\beta_V^{(1)})$, $L_R:=\log(2/\widetilde\beta_R)$, $L_Z:=\log(2/\widetilde\beta_Z)$, and $L_F:=\log(2/\beta_{F,k})$.
\begin{resultparts}[alph]
\item\label{part:measurement-budget}For the stated estimator and radius formulas, the zero-failure-radius boundary Eq.~\eqref{eq:formula-5-9-prime} satisfies $\hat Z_\infty\ge2\varepsilon_V+\varepsilon_R+2\varepsilon_Z$. Hence $\hat Z_k>\hat Z_\infty$ requires
\begin{equation}
m>\frac{2L_V}{\hat Z_k^2},\qquad
n_R>\frac{L_R}{2\hat Z_k^2},\qquad
n_R>\frac{2L_Z}{\hat Z_k^2}.
\label{eq:formula-5-17}
\end{equation}

\item\label{part:failure-quota}Whenever the returned-sample construction supplies $\varepsilon_F=\sqrt{2q_{F,k}L_F/n_F}$, solving Eq.~\eqref{eq:formula-5-9} for the quota gives
\begin{equation}
n_F>
\frac{2q_{F,k}L_F(\hat Z_k+\varepsilon_Z)^2}
{(1-2\varepsilon_V)^2(\hat Z_k-\hat Z_\infty)^2},
\label{eq:formula-5-18}
\end{equation}
for $\hat Z_k>\hat Z_\infty$. The requirement diverges as $\hat Z_k\downarrow\hat Z_\infty$ and is unsatisfiable below that boundary. It is a returned-sample rate, not an arrival-time identity. If risk allocation makes $\varepsilon_V$ depend on $n_F$, Eq.~\eqref{eq:formula-5-18} is an implicit inequality. Under the first-failure i.i.d. construction $q_{F,k}=1$; for a finite horizon, return is instead governed by $B_{H_{F,k},n_F}(Z_k)$.
\end{resultparts}
\end{corollary}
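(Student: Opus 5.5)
The plan is to derive both parts directly from the branch-(\ref{part:necessary-observed-failure-rate}) and branch-(\ref{part:zero-failure-radius}) inequalities of Corollary~\ref{cor:necessary-failure-rate}. Beyond those inequalities, I only need the explicit radius formulas $\varepsilon_V=\sqrt{L_V/(2m)}$, $\varepsilon_R=\sqrt{L_R/(2n_R)}$ and $\varepsilon_Z=\sqrt{L_Z/(2n_R)}$ from Lemmas~\ref{lem:value-estimation}, \ref{lem:failure-rate-estimation} and~\ref{lem:retained-change-estimation}, together with $\varepsilon_\Sigma=2\varepsilon_V+\varepsilon_R+\varepsilon_Z$ and $\varepsilon_\mu=2\varepsilon_V+\varepsilon_F$.

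For part (\ref{part:measurement-budget}), I would first show $\hat Z_\infty\ge2\varepsilon_V+\varepsilon_R+2\varepsilon_Z$. In the domain $2\varepsilon_V<1$, the denominator $1-2\varepsilon_V$ lies in $(0,1]$. The numerator of Eq.~\eqref{eq:formula-5-9-prime} exceeds $2\varepsilon_V+\varepsilon_R+2\varepsilon_Z$ by the nonnegative term $2\varepsilon_V\varepsilon_Z$. Dividing a nonnegative quantity by a number in $(0,1]$ does not decrease it, which gives the bound.

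Hence $\hat Z_k>\hat Z_\infty$ forces each of the nonnegative summands to be strictly below $\hat Z_k$, namely $2\varepsilon_V<\hat Z_k$, $\varepsilon_R<\hat Z_k$ and $2\varepsilon_Z<\hat Z_k$. Squaring each of these and inserting the radius formulas gives the three inequalities in Eq.~\eqref{eq:formula-5-17}. For example, $4L_V/(2m)<\hat Z_k^2$ is equivalent to $m>2L_V/\hat Z_k^2$, and the two $n_R$ bounds follow in the same way.

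For part (\ref{part:failure-quota}), I would rewrite Eq.~\eqref{eq:formula-5-9} in its unnormalized form
\[
\hat Z_k(1-\varepsilon_\mu)>\varepsilon_\Sigma+\varepsilon_Z+\varepsilon_Z\varepsilon_\mu ,
\]
and substitute $\varepsilon_\mu=2\varepsilon_V+\varepsilon_F$ so that every $\varepsilon_F$ term sits on one side:
\[
\varepsilon_F(\hat Z_k+\varepsilon_Z)<\hat Z_k(1-2\varepsilon_V)-(\varepsilon_\Sigma+\varepsilon_Z)-2\varepsilon_V\varepsilon_Z .
\]
The key algebraic step, which is the only one needing care, is to recognize the right-hand side as $(1-2\varepsilon_V)(\hat Z_k-\hat Z_\infty)$. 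This follows from the identity $\varepsilon_\Sigma+\varepsilon_Z+2\varepsilon_V\varepsilon_Z=2\varepsilon_V+\varepsilon_R+2\varepsilon_Z+2\varepsilon_V\varepsilon_Z$, which is exactly the numerator of $\hat Z_\infty$.

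The condition then reads $\varepsilon_F<(1-2\varepsilon_V)(\hat Z_k-\hat Z_\infty)/(\hat Z_k+\varepsilon_Z)$. When $\hat Z_k>\hat Z_\infty$, both sides are positive, so I can square, substitute $\varepsilon_F^2=2q_{F,k}L_F/n_F$, and solve for $n_F$ to obtain Eq.~\eqref{eq:formula-5-18}. When $\hat Z_k\le\hat Z_\infty$, the right-hand side is nonpositive while $\varepsilon_F>0$, so the condition is unsatisfiable. Divergence as $\hat Z_k\downarrow\hat Z_\infty$ is read off from the factor $(\hat Z_k-\hat Z_\infty)^{-2}$.

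The remaining remarks need only one observation each. If $\varepsilon_V$ depends on $n_F$ through the risk allocation, the same derivation holds pointwise in $n_F$ but no longer isolates $n_F$, so Eq.~\eqref{eq:formula-5-18} becomes an implicit inequality. The value $q_{F,k}=1$ is the first-failure construction of Proposition~\ref{prop:first-failure-sampling}, whose finite-horizon return probability is $B_{H_{F,k},n_F}(Z_k)$ by part (\ref{part:first-failure-return}) of that proposition.
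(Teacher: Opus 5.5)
Your proposal is correct and follows essentially the same route as the paper: you bound $\hat Z_\infty$ below by $2\varepsilon_V+\varepsilon_R+2\varepsilon_Z$ using the nonnegative cross term and the denominator in $(0,1]$, then invert the three radius formulas; for the quota you substitute $\varepsilon_\mu=2\varepsilon_V+\varepsilon_F$ into the unnormalized form of Eq.~\eqref{eq:formula-5-9}, identify the right-hand side as $(1-2\varepsilon_V)(\hat Z_k-\hat Z_\infty)$, and solve for $n_F$. Your treatment of the unsatisfiable case $\hat Z_k\le\hat Z_\infty$ and of the remarks on $n_F$-dependent $\varepsilon_V$ and $q_{F,k}=1$ also matches the paper's argument.
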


\begin{proof}[Proof of Corollary~\ref{cor:budget-requirements}.] \textbf{(\ref{part:measurement-budget})} By Eq.~\eqref{eq:formula-5-9-prime}, $\hat Z_k>\hat Z_\infty=(2\varepsilon_V+\varepsilon_R+2\varepsilon_Z+2\varepsilon_V\varepsilon_Z)/(1-2\varepsilon_V)\ge2\varepsilon_V+\varepsilon_R+2\varepsilon_Z$, the last step because the numerator dominates $2\varepsilon_V+\varepsilon_R+2\varepsilon_Z$ and the denominator lies in $(0,1]$. All three summands being non-negative, each is separately $<\hat Z_k$. With Lemmas~\ref{lem:value-estimation}, \ref{lem:failure-rate-estimation}, and \ref{lem:retained-change-estimation}: $2\varepsilon_V=2\sqrt{L_V/(2m)}<\hat Z_k$ rearranges to the first display of Eq.~\eqref{eq:formula-5-17}, $\varepsilon_R=\sqrt{L_R/(2n_R)}<\hat Z_k$ to the second, $2\varepsilon_Z=2\sqrt{L_Z/(2n_R)}<\hat Z_k$ to the third. 

\textbf{(\ref{part:failure-quota})} The proof of Corollary~\ref{cor:necessary-failure-rate}(\ref{part:necessary-observed-failure-rate}) reaches $\hat Z_k(1-\varepsilon_\mu)>\varepsilon_\Sigma+\varepsilon_Z+\varepsilon_Z\varepsilon_\mu$. Substituting $\varepsilon_\mu=2\varepsilon_V+\varepsilon_F$ and collecting the $\varepsilon_F$ terms,
\[
\hat Z_k(1-2\varepsilon_V)-\bigl(\varepsilon_\Sigma+\varepsilon_Z+2\varepsilon_V\varepsilon_Z\bigr) \;>\; \varepsilon_F\,(\hat Z_k+\varepsilon_Z),
\]
and the bracket equals $(1-2\varepsilon_V)\hat Z_\infty$ by Eq.~\eqref{eq:formula-5-9-prime}, so the left side is $(1-2\varepsilon_V)(\hat Z_k-\hat Z_\infty)$. Under the sampler-specific specialization $\varepsilon_F=\sqrt{2q_{F,k}L_F/n_F}$, division by the positive $\hat Z_k+\varepsilon_Z$ gives Eq.~\eqref{eq:formula-5-18}. The symbolic bound itself requires only a valid returned radius; the displayed relation for the quota is conditional on this rate specialization. If $\hat Z_k\le\hat Z_\infty$ and $\varepsilon_F>0$, no quota satisfies the inequality. \end{proof}

\begin{corollary}
\label{cor:finite-evaluation-threshold}
Under Assumptions~\ref{ass:bounded-reward} and \ref{ass:agent-independent-tasks}, on the concentration event $\mathcal E_L$ used by Definition~\ref{def:admissible-region}, if its admissible region $\mathcal R$ is nonempty at step $k$, then
\begin{equation}
J_{\text{user}}(A_k) \;<\; 1-\varepsilon_\Sigma, \qquad \varepsilon_\Sigma = 2\varepsilon_V+\varepsilon_R+\varepsilon_Z.
\label{eq:formula-5-19}
\end{equation}

\end{corollary}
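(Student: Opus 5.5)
The argument follows the proof of Corollary~\ref{cor:fixed-tolerance-constraint}, but drops the positive-$\delta$ term instead of keeping it. By Theorem~\ref{thm:admissibility}, the region $\mathcal R$ is nonempty exactly when $\delta_{\max}>0$. Because the denominator $1-\widehat Z_k+\varepsilon_Z$ in Eq.~\eqref{eq:formula-5-8} is strictly positive, this is equivalent to
\[
\Theta_k>\varepsilon_L+\varepsilon_Z\varepsilon_D+\varepsilon_\Sigma .
\]
Alternatively, Corollary~\ref{cor:necessary-failure-rate} already supplies the weaker necessary condition $\Theta_k>\varepsilon_\Sigma+\varepsilon_L$, which is all that is needed.

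The second step bounds $\Theta_k$ on $\mathcal E_L$. By definition $\Theta_k=\min\{\widehat Z_k,\,Z_k\bar a_k+\varepsilon_L\}$, and a minimum is at most each of its arguments, so $\Theta_k\le Z_k\bar a_k+\varepsilon_L$. Substituting this into the necessary condition and cancelling $\varepsilon_L$ gives $Z_k\bar a_k>\varepsilon_\Sigma$. This is exactly branch~(\ref{part:necessary-expected-reward}) of Corollary~\ref{cor:necessary-failure-rate}, Eq.~\eqref{eq:formula-5-10}, which may be cited directly.

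The final step uses Lemma~\ref{lem:failure-improvement-bound}(i). Eq.~\eqref{eq:formula-3-29} gives $1-J_{\mathrm{user}}(A_k)\ge Z_k\bar a_k$ under Assumptions~\ref{ass:bounded-reward} and~\ref{ass:agent-independent-tasks}, because the retained-task term $(1-Z_k)\bigl(1-\mathbb E_{\mathcal D_{R,k}}[V(A_k,\cdot)]\bigr)$ is nonnegative for rewards in $[0,1]$. Chaining the two inequalities gives $1-J_{\mathrm{user}}(A_k)>\varepsilon_\Sigma$, which is Eq.~\eqref{eq:formula-5-19}.

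No step is genuinely hard. The only point needing care is bookkeeping: the bound $\Theta_k\le Z_k\bar a_k+\varepsilon_L$ uses the true quantity $Z_k\bar a_k$ and holds only on $\mathcal E_L$. The conclusion is therefore stated on that event, and we must check that Eq.~\eqref{eq:formula-3-29} is used in its deterministic form and not its estimated one.
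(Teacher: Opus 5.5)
Your proof is correct and is essentially the paper's argument: you take the necessary condition $\Theta_k>\varepsilon_\Sigma+\varepsilon_L$ from Corollary~\ref{cor:necessary-failure-rate}, bound $\Theta_k\le Z_k\bar a_k+\varepsilon_L$ to get $Z_k\bar a_k>\varepsilon_\Sigma$, and finish with $1-J_{\mathrm{user}}(A_k)\ge Z_k\bar a_k$ from Eq.~\eqref{eq:formula-3-29}. One small correction to your bookkeeping remark: $\Theta_k\le Z_k\bar a_k+\varepsilon_L$ holds by the definition of $\Theta_k$ as a minimum, so the event $\mathcal E_L$ is not needed for that inequality but only to make $\mathcal R$ the relevant admissible region (through $\widehat L_k\le\Theta_k$).
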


\begin{proof}[Proof of Corollary~\ref{cor:finite-evaluation-threshold}.] Corollary~\ref{cor:necessary-failure-rate} gives $\Theta_k>\varepsilon_\Sigma+\varepsilon_L$ as necessary for $\mathcal{R}\neq\varnothing$, with $\Theta_k=\min\{\hat Z_k,\,Z_k\bar a_k+\varepsilon_L\}$. A minimum is at most either argument, so $Z_k\bar a_k+\varepsilon_L\ge\Theta_k>\varepsilon_\Sigma+\varepsilon_L$, whence $Z_k\bar a_k>\varepsilon_\Sigma$ irrespective of which branch binds. Identity Eq.~\eqref{eq:formula-3-29} gives $1-J_{\text{user}}(A_k)\ge Z_k\bar a_k$, since the discarded term $(1-Z_k)(1-\mathbb{E}_{\mathcal{D}_{R,k}}[V])$ is non-negative by Assumption~\ref{ass:bounded-reward}. \end{proof}
\begin{proposition}[Finite-data validation with Measured-margin]\label{prop:certification-beyond-threshold}
For the improving instance of Proposition~\ref{prop:positive-margin-instances}, fix $Z_k\in(0,1)$ and the total risk allocation.
One can choose $m,n_F,n_R$ each of order $\widetilde O(u_k^{-2})$ such that Rule~\ref{alg:measured-margin} validates the modification whenever the prescribed samples are ready and the joint error bounds hold. Here $m$ is the rollout count per agent per task, $n_F$ the returned failure-task count, and $n_R$ the stored-record count.
\end{proposition}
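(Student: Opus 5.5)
In the instance of Proposition~\ref{prop:positive-margin-instances}(\ref{part:positive-margin-instance}) we have $L=M=u/2$ with $u=u_k$, $D_R^-=0$, and $\mathcal D_{R,k}=\delta_{t_R}$, on which both agents return $1$ deterministically. The plan is to work on the joint coverage event of Proposition~\ref{prop:measured-margin-guarantee}(\ref{part:measured-margin-safety}) and show that $\widehat D^-_{R,k}\le\delta^-$ and $\widehat M_k>0$ hold deterministically there, once every radius is a small constant multiple of $u$.

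\textbf{Retained gate.} Every stored record is $t_R$ with mark $\phi=0$, so $|S_R|=n_R\ge1$ and $\widehat Z_k=0$. Since $Z_k$ is fixed, any sample drawn from $\mathcal D_{\mathrm{user}}$ has positive probability of containing both tasks, so I would not assume this. Instead I would use only coverage: $|\widehat D^-_{R,k}-0|\le\varepsilon_D^-$ gives $\widehat D^-_{R,k}\le\varepsilon_D^-$. It therefore suffices to make $\varepsilon_D^-\le\delta^-$.

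\textbf{Margin gate.} On the coverage event,
\[
\widehat M_k\ge L-2\varepsilon_L-(1-\widehat Z_k+\varepsilon_Z)\cdot 2\varepsilon_D^-,
\]
using $\widehat L_k\ge L-\varepsilon_L$ and $\widehat D^-_{R,k}\le\varepsilon_D^-$. Since $1-\widehat Z_k+\varepsilon_Z\le 2$, the bound reduces to
\[
\widehat M_k\ge u/2-2\varepsilon_L-4\varepsilon_D^-.
\]
Next, $\varepsilon_L=\varepsilon_Z+(\widehat Z_k+\varepsilon_Z)\varepsilon_\mu\le\varepsilon_Z+2\varepsilon_\mu$, where $\varepsilon_\mu=2\varepsilon_V+\varepsilon_F$. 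Also $\varepsilon_D^-=\varepsilon_\Sigma/(1-\widehat Z_k)$, and the denominator is bounded below by $1-Z_k-\varepsilon_Z\ge(1-Z_k)/2$ once $\varepsilon_Z\le(1-Z_k)/2$. Hence it suffices to make each of $\varepsilon_V,\varepsilon_F,\varepsilon_R,\varepsilon_Z$ at most $c(1-Z_k)u$ for a small absolute constant $c$ (say $c=1/100$). Because $u\le(1-Z_k)\delta^-$, the same choice also gives $\varepsilon_D^-\le\delta^-$. This yields $\widehat M_k>0$ strictly, and Rule~\ref{alg:measured-margin} validates.

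\textbf{Sample sizes.} The radius formulas already available are
\[
\varepsilon_V=\sqrt{\tfrac{\log(2/\beta_V^{(1)})}{2m}},\quad
\varepsilon_F=\sqrt{\tfrac{2\log(2/\beta_{F,k})}{n_F}}\ \text{(i.i.d.\ first-failure construction, Proposition~\ref{prop:first-failure-sampling})},
\]
\[
\varepsilon_R=\sqrt{\tfrac{\log(2/\widetilde\beta_R)}{2n_R}},\quad
\varepsilon_Z=\sqrt{\tfrac{\log(2/\widetilde\beta_Z)}{2n_R}}.
\]
Setting each below $c(1-Z_k)u$ gives $m,n_F,n_R=O\bigl(\log(\cdot)/((1-Z_k)^2u^2)\bigr)$, which is $\widetilde O(u_k^{-2})$ at fixed $Z_k$.

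\textbf{Main obstacle: circular risk allocation.} The per-rollout risk $\beta_V^{(1)}$ must be allocated so that the union-bound total $2(n_F+n_R)\beta_V^{(1)}$ stays within the fixed budget. This forces $\beta_V^{(1)}\propto1/(n_F+n_R)$, which puts a $\log(n_F+n_R)=O(\log u^{-1})$ factor into $m$; that factor is absorbed by the $\widetilde O$. In the mixing case, the condition $2n_R\beta(\ell)<\min\{\beta_Z,\beta_R\}$ must also hold. I would take i.i.d.\ records ($\beta(\ell)=0$), or choose the gap $\ell$ large enough, so that $\widetilde\beta_R$ and $\widetilde\beta_Z$ stay bounded below by constants. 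I would check this self-consistency explicitly, since the statement only requires validation on the joint coverage event, not readiness with high probability.
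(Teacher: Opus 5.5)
Your proposal is correct and follows the paper's argument. Both work on the joint coverage event with $1-\widehat Z_k\ge(1-Z_k)/2$, bound $\widehat M_k$ below by $u_k/2$ minus a constant combination of $\varepsilon_L$ and $\varepsilon_D^-$, and make each component radius $O(u_k)$ using $\widetilde O(u_k^{-2})$ samples. The one simplification you missed is that $\widehat D^-_{R,k}=0$ exactly: the task-deterministic detector places only $t_R$ records in $S_R$, and there both agents' fresh rollouts return $1$, so the paper needs neither your extra $\varepsilon_D^-$ term in the margin nor the separate check $\varepsilon_D^-\le\delta^-$ (your coverage-based bound is nonetheless valid).
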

\begin{proof}[Proof of Proposition~\ref{prop:certification-beyond-threshold}.] Take the improving member of Proposition~\ref{prop:positive-margin-instances}, where $M=L=u_k/2$ and both agents return reward $1$ on $t_R$. Choose the i.i.d. returned-sample design with $\varepsilon_Z\le(1-Z_k)/2$. On $\mathsf{Ready}_k$, when all the stated error bounds hold,
$1-\widehat Z_k\ge(1-Z_k)/2>0$ and $\widehat D^-_{R,k}=0$. By Eq.~\eqref{eq:formula-4-24} and $\widehat L_k\ge L-\varepsilon_L$,
\[
\widehat M_k
\ge
\frac{u_k}{2}
-
\underbrace{2\varepsilon_L+
(1-\widehat Z_k+\varepsilon_Z)\varepsilon_D^-}_{\text{total error bound}}.
\]
At fixed $Z_k$ and confidence allocation, Eq.~\eqref{eq:formula-4-26} gives
$\varepsilon_L=\varepsilon_Z+(\widehat Z_k+\varepsilon_Z)(2\varepsilon_V+\varepsilon_F)
\le\varepsilon_Z+2(2\varepsilon_V+\varepsilon_F)$.
The bound $1-\widehat Z_k\ge(1-Z_k)/2$ gives
$\varepsilon_D^-\le2(2\varepsilon_V+\varepsilon_R+\varepsilon_Z)/(1-Z_k)$,
while $1-\widehat Z_k+\varepsilon_Z\le2$. It suffices to choose the component radii so that $\varepsilon_L<u_k/8$ and $\varepsilon_D^-<u_k/8$; then the underbraced error is less than $u_k/2$. In this i.i.d. design each component radius has inverse-square-root dependence on its respective $m,n_F,n_R$, with the existing risk-allocation logarithms. Thus sufficiently large constant multiples of $\widetilde O(u_k^{-2})$ for each count meet these inequalities, giving $\widehat M_k>0$ and $\widehat D^-_{R,k}=0\le\delta^-$. These counts specify returned evaluation data and rollouts per task; they do not guarantee a data-return time.

\end{proof}

\subsection{Sample-complexity lower bound for validation}
\label{app:section-G-21}

\paragraph{Setup.} An \emph{instance} is a configuration satisfying the assumptions of \S{}\ref{sec:safe-self-evolution} and the task-stream evaluation setup in Appendix~\ref{app:proofs-evaluation} --- a task space with distribution $\mathcal{D}_{\text{user}}$, a failure detector, a current agent $A_k$, and a candidate agent $\tilde A$ --- together with the available evaluation data: the observed task sequence, the stored reference sample with its stored labels, and fresh rollouts of either agent on any tasks, adaptively chosen, each rollout returning a reward in $[0,1]$. A \textbf{validation procedure at level $\beta$} is a measurable map from the evaluation data to $\{\textsf{validate},\textsf{abstain}\}$ such that in \emph{every} instance, $\mathbb{P}\bigl[\textsf{validate}\ \wedge\ J_{\text{user}}(\tilde A)\le J_{\text{user}}(A_k)\bigr]\le\beta$. Its task choices, statistics, and stopping rule are measurable functions of these observations and its own instance-independent randomization.

\begin{theoremrestatement}[Cost of validating further improvement]{thm:certification-sample-lower-bound}
Fix \(Z\in(0,1)\), \(\beta\le1/8\), and \(u\in(0,Z/2]\). There exist two instances with \(Z_k=Z\) and \(1-J_{\mathrm{user}}(A_k)=u\), identical in the distribution of every observable quantity except the candidate agent's reward distribution under the failure-task distribution. In one,
\[
J_{\mathrm{user}}(\widetilde A)-J_{\mathrm{user}}(A_k)=u/2,
\qquad D_R(\widetilde A;A_k)=0,
\]
and in the other the overall reward change is \(-u/2\). Any level-\(\beta\) validation procedure that validates the improving modification with probability at least \(1/2\), under any task-selection policy and any data-dependent stopping rule, must satisfy
\[
\mathbb E_-[N_{\mathrm{eval}}]\ge\frac{Z}{10u},
\]
where \(N_{\mathrm{eval}}\) is the total number of candidate-agent rollouts and the expectation is under the non-improving instance. Consequently, for fixed \(Z\), maintaining the stated error control and validation probability at least \(1/2\) on the improving instance in each pair requires a worst-case expected rollout count growing at least as \(u^{-1}\).
\end{theoremrestatement}

\begin{proof}\emph{Construction.} Two tasks, $\mathcal{T}_{\text{user}}=\{t_F,t_R\}$, with $\mathcal{D}_{\text{user}}=Z\,\delta_{t_F}+(1-Z)\,\delta_{t_R}$ and i.i.d.\ task records, so Assumption~\ref{ass:mixing-task-process} holds with $\beta(\cdot)\equiv0$. The detector is task-deterministic: $\phi(t_F,\cdot)\equiv1$, $\phi(t_R,\cdot)\equiv0$, so $\psi_A(t_F)=1$ and $\psi_A(t_R)=0$ for every agent, $Z_k=Z\in(0,1)$, and the conditionals are $\mathcal{D}_{F,k}=\delta_{t_F}$, $\mathcal{D}_{R,k}=\delta_{t_R}$. Set $\bar a:=u/Z\le1/2$. The current agent's reward distribution is $\delta_1$ on $t_R$ and $\mathrm{Bernoulli}(1-\bar a)$ on $t_F$, so $J_{\text{user}}(A_k)=Z(1-\bar a)+(1-Z)=1-u$. The two instances differ only in the candidate agent: on $t_R$ it induces the current agent's reward distribution (hence $D_R^-=D_R^+=D_R=0$ in both), and on $t_F$ its reward distribution is $\nu_\pm:=\mathrm{Bernoulli}\bigl(1-\bar a\pm\tfrac{\bar a}{2}\bigr)$. Then $\Delta J=\pm Z\bar a/2=\pm u/2$, and in the $+$ instance the candidate agent's margin is $M=L_{A_k}=u/2>0$ with $D_R^-=0\le\delta^-$ for every declared retained-task decrease bound. These finite kernels are realized by measurable agent and composition maps. Draw task records independently, keep the stored reference sample separate from candidate evaluation, and generate fresh rollouts independently from the appropriate fixed kernel. With the task-deterministic detector, both instances satisfy Assumptions~\ref{ass:bounded-reward}, \ref{ass:agent-independent-tasks}, \ref{ass:mixing-task-process}, \ref{ass:failure-detector-locality}, \ref{ass:task-outcome-model}, \ref{ass:outcome-evaluation-independence}, and \ref{ass:trial-consistency} and Condition~\ref{cond:evaluation-data-separation} by construction.

\emph{Indistinguishability.} Conditional on the same observed history and chosen evaluation action, the two instances give the same observation kernel except for candidate rollouts on $t_F$. The task-selection and stopping kernels of a fixed validation procedure are the same under this conditioning. For the informative rollouts, with $p_-=1-\tfrac{3\bar a}{2}$ and $p_+=1-\tfrac{\bar a}{2}$, using $\mathrm{KL}(\mathrm{Ber}(p)\Vert\mathrm{Ber}(q))\le(p-q)^2/\bigl(q(1-q)\bigr)$ and $q(1-q)=(1-\tfrac{\bar a}{2})\tfrac{\bar a}{2}\ge\tfrac{3\bar a}{8}$ for $\bar a\le1/2$:
\[
\mathrm{KL}(\nu_-\Vert\nu_+)\;\le\;\frac{\bar a^2}{3\bar a/8}\;=\;\frac{8\bar a}{3}.
\]
For a transcript truncated after $n$ observations, the conditional KL chain rule therefore gives
\[
\mathrm{KL}(\mathbb P_-^{(n)}\Vert\mathbb P_+^{(n)})
\le\mathbb E_-[N_F^{(n)}]\,\mathrm{KL}(\nu_-\Vert\nu_+)
\le\frac{8u}{3Z}\mathbb E_-[N_{\mathrm{eval}}],
\]
where $N_F^{(n)}$ counts candidate rollouts on $t_F$ before stopping or truncation. Observations after stopping may be padded by a fixed symbol. Relative entropy on increasing transcript sigma-fields converges to the entropy of the full transcript; thus the same bound holds for the stopped experiment, including unbounded stopping. If the expected count is infinite the claimed cost bound already holds. Pinsker's inequality and data processing for the validation decision give
\[
\mathrm{TV}(\mathbb P_+,\mathbb P_-)
\le\sqrt{4u\,\mathbb E_-[N_{\mathrm{eval}}]/(3Z)}.
\]
\emph{Two-point argument.} In the $-$ instance, $\Delta J<0$, so soundness forces $\mathbb{P}_-[\textsf{validate}]\le\beta$. If the validation procedure validates the improving modification with probability at least $1/2$, then
\[
\tfrac12\;\le\;\mathbb{P}_+[\textsf{validate}]\;\le\;\mathbb{P}_-[\textsf{validate}]+\mathrm{TV}(\mathbb{P}_+,\mathbb{P}_-)\;\le\;\beta+\sqrt{4\,\mathbb{E}_-[N_{\mathrm{eval}}]\,\bar a/3},
\]
so $\sqrt{4\,\mathbb{E}_-[N_{\mathrm{eval}}]\,\bar a/3}\ge\tfrac12-\beta\ge\tfrac38$, hence $\mathbb{E}_-[N_{\mathrm{eval}}]\ge\tfrac{27}{256\,\bar a}\ge\tfrac{Z}{10\,u}$. \end{proof}

\begin{remark}\label{rem:lower-bound-interpretation}
For candidate reward on $t_F$ in the two constructed instances, the Bernoulli means differ by $\bar a=u/Z$ and their variances are at most $3\bar a/2$. With fixed $Z$, Bernstein concentration for this coordinate gives error at most
$O(\sqrt{(u/Z)\log(1/\beta)/n}+\log(1/\beta)/n)$.
Thus $n=O((Z/u)\log(1/\beta))$ observations suffice to resolve a fixed fraction of that gap on this coordinate.
\end{remark}

\subsection{A one-sided rule for cumulative reward changes}\label{app:one-sided-cumulative-rule}
Use the candidate and selector restrictions, readiness conditions, and simultaneous evaluation events of Theorem~\ref{thm:one-step-guarantee}, replacing the symmetric retained-task event by $\mathcal E_R^-$ from Appendix~\ref{app:one-sided-estimation}. Define a rule that adopts a selected candidate only when
\begin{equation}
\widehat L_k\ge\tau,\qquad
\widehat D^-_{R,k}\le\delta^-,
\qquad
\Delta^-_k:=
\tau-\varepsilon_L^{(k)}
-(1-\widehat Z_k+\varepsilon_Z^{(k)})
(\delta^-+\varepsilon_D^{-,(k)})>0.
\label{eq:one-sided-adoption-rule}
\end{equation}
The third check supplies a positive lower bound on improvement. Let $\mathsf{Acc}^-_k$ be this rule's adoption event. On its joint coverage event, Eq.~\eqref{eq:formula-3-17} and the one-sided estimate imply
\[
I_k\ge\widehat L_k-\varepsilon_L^{(k)}
-(1-\widehat Z_k+\varepsilon_Z^{(k)})
(\widehat D^-_{R,k}+\varepsilon_D^{-,(k)})
\ge\Delta^-_k>0.
\]
Consequently,
$\mathbb P(\mathsf{Acc}^-_k\cap\{I_k<\Delta^-_k\})
\le\beta_{\mathrm{step},k}$ with the original joint allocation: $\mathcal E_R^-$ replaces, rather than augments, the symmetric retained event. The required positive-gain inequality uses the same failure and rollout radii as Two-Gate; the retained radius and event are one-sided. Appendix~\ref{app:cumulative-change} bounds absolute changes on this rule's own adopted-step set.

\subsection{Calculation for Figure~\ref{fig:boundary-position}}\label{app:conditional-numerics}\label{app:section-I-1}
\begin{figure}[H]
\centering
\includegraphics[width=\textwidth]{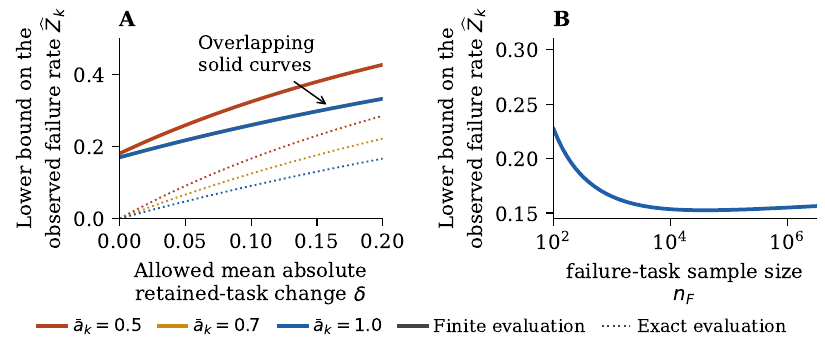}
\caption{\textbf{Necessary lower bounds on the failure rate for further validation.} (A) The allowed mean absolute retained-task change restricts validation even with exact evaluation (dotted); finite evaluation error further restricts it (solid). (B) The failure-task sample size varies while the per-task rollout count, retained-record count, and total risk allocations are fixed. The risk share per reward estimate consequently changes with the number of evaluated tasks. Curves use the i.i.d. design in Appendix~\ref{app:conditional-numerics}.}
\label{fig:boundary-position}
\end{figure}
Write $h=\widehat Z_k$ and let $\delta$ denote the allowed mean absolute retained-task change in Two-Gate. Setting $\delta^-=\delta$ gives the same numerical parameter for comparison with the one-sided rule.
On the relevant coverage event, the two necessary branches of the admissible-region inequality imply
\[
h>\varepsilon_L(h)+\varepsilon_Z\varepsilon_D(h)+\varepsilon_\Sigma
+\delta(1-h+\varepsilon_Z),
\]
\[
(h+\varepsilon_Z)\bar a_k>
\varepsilon_Z\varepsilon_D(h)+\varepsilon_\Sigma
+\delta(1-h+\varepsilon_Z).
\]
Panel A uses the larger lower root of these inequalities, at
$m=2\times10^4$, $n_R=2000$, $n_F=1000$, with i.i.d. records and
$(\beta_Z,\beta_R,\beta_F,\beta_V)=(0.014,0.024,0.02,0.05)$.
The rollout allocation is $\beta_V^{(1)}=\beta_V/[2(n_F+n_R)]$.
The solid curves for $\bar a_k=0.7$ and $1$ coincide because the branch independent of $\bar a_k$ determines their root; the exact-evaluation references $\delta/(\bar a_k+\delta)$ differ.
Panel B varies $n_F$ in the relaxed necessary bound Eq.~\eqref{eq:formula-5-9} at the same $m,n_R$ and total risks. The subsequent rise results from allocating the fixed total rollout risk over more tasks while the failure-task mean radius decreases.

\section{Multiple Updates and Task-Distribution Changes}\label{app:multistep}
\subsection{Cumulative improvement}\label{app:proofs-multistep}\label{app:section-I-4-1}
Use the adopted-step set $\mathcal A$, actual increments $I_k$, and pre-adoption bounds of Section~\ref{sec:multiple-updates}.
\begin{theoremrestatement}[Progressive improvement through evolution]{thm:finite-run-guarantee}
Fix $\mathcal D_{\mathrm{user}}$ and run Procedure~\ref{alg:finite-horizon-update} for $K$ steps under the hypotheses of Theorem~\ref{thm:one-step-guarantee} at every step. If its deterministic risks satisfy $\sum_{k<K}\beta_{\mathrm{step},k}\le\beta_{\mathrm{run}}$, then with probability at least $1-\beta_{\mathrm{run}}$,
\[
J_{\mathrm{user}}(A_K)-J_{\mathrm{user}}(A_0)
\ge\sum_{k\in\mathcal A}\Delta_k.
\]
Here $\Delta_k$ has the definition and conventions of Eq.~\eqref{eq:formula-4-28}.
\end{theoremrestatement}

\begin{proof}[Proof of Theorem~\ref{thm:finite-run-guarantee}.]
For each step define the quantitative bad-accept event
$\mathcal B_k:=\mathsf{Acc}_k\cap\{I_k<\Delta_k\}$.  Theorem~\ref{thm:one-step-guarantee} gives $\mathbb P(\mathcal B_k)\le\beta_{\mathrm{step},k}$ under the actual step protocol, so
\[
\mathbb P\!\left(\bigcup_{k<K}\mathcal B_k\right)
\le\sum_{k<K}\beta_{\mathrm{step},k}\le\beta_{\mathrm{run}}.
\]
Outside this union, every accepted step satisfies $I_k\ge\Delta_k$. Definition~\ref{def:evolution-step} gives $\sum_{k<K}I_k=\sum_{k\in\mathcal A}I_k$. Summing the increments and the pre-acceptance margins only over accepted steps gives
\[
J_{\text{user}}(A_K)-J_{\text{user}}(A_0)
=\sum_{k<K}I_k\ge\sum_{k\in\mathcal A}\Delta_k.
\]
The measured-margin statement follows identically from Proposition~\ref{prop:measured-margin-guarantee} using
$\mathsf{Acc}^M_k\cap\{I_k<\widehat M_k\}$. \end{proof}
For Measured-margin, on its own adopted-step set $\mathcal A^M$, the same argument gives
\[
J_{\mathrm{user}}(A_K)-J_{\mathrm{user}}(A_0)
\ge\sum_{k\in\mathcal A^M}\widehat M_k
\]
with probability at least $1-\beta_{\mathrm{run}}$, whenever Proposition~\ref{prop:measured-margin-guarantee} holds at every step and the deterministic step risks sum to at most $\beta_{\mathrm{run}}$.
Under equal risk allocation each step uses $\beta_{\mathrm{run}}/K$. In the i.i.d. marked-stream design, substitution in Proposition~\ref{prop:first-failure-sampling} adds the corresponding $\log K$ term to its sample requirement.
\begin{definition}[Distribution drift]
\label{def:distribution-drift}
For the task distributions $\mathcal D_{\mathrm{user}}^{(k)}$ and $\mathcal D_{\mathrm{user}}^{(k+1)}$ at adjacent self-evolution steps, define their drift by $\eta_k:=\mathrm{TV}(\mathcal D_{\mathrm{user}}^{(k+1)},\mathcal D_{\mathrm{user}}^{(k)})$.
Write $J_{\mathrm{user}}^{(k)}(A):=\mathbb E_{t\sim\mathcal D_{\mathrm{user}}^{(k)}}[V(A,t)]$ for expected reward under the step-$k$ distribution.
\end{definition}

\begin{lemma}
\label{lem:distribution-drift}
For any $A$:
\begin{equation}
|J_{\text{user}}^{(k+1)}(A) - J_{\text{user}}^{(k)}(A)| \leq \eta_k.
\label{eq:formula-6-18}
\end{equation}
\end{lemma}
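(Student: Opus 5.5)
The approach is to write both expected rewards as integrals of the same bounded function against the two task distributions. Then the variational (dual) characterization of total variation bounds the difference. Fix an agent $A$ and set $f(t):=V(A,t)$. By Eq.~\eqref{eq:formula-2-7} and Assumption~\ref{ass:bounded-reward}, $f$ is measurable and takes values in $[0,1]$. Definition~\ref{def:distribution-drift} then gives
\[
J_{\mathrm{user}}^{(k+1)}(A)-J_{\mathrm{user}}^{(k)}(A)=\int f\,d\bigl(\mathcal D_{\mathrm{user}}^{(k+1)}-\mathcal D_{\mathrm{user}}^{(k)}\bigr).
\]
The key point is that $f$ does not depend on the task distribution. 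Only the measure against which it is integrated changes between steps $k$ and $k+1$.

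The main step repeats the argument used for Eq.~\eqref{eq:formula-F-2} in the proof of Proposition~\ref{prop:kl-improvement-bound}. Take the Hahn--Jordan decomposition of the signed measure $\nu:=\mathcal D_{\mathrm{user}}^{(k+1)}-\mathcal D_{\mathrm{user}}^{(k)}=\nu^+-\nu^-$. Both distributions are probability measures, so $\nu(\mathcal T_{\mathrm{user}})=0$, and hence $\nu^+(\mathcal T_{\mathrm{user}})=\nu^-(\mathcal T_{\mathrm{user}})=\eta_k$. Here we use the convention $\mathrm{TV}(P,Q)=\sup_B|P(B)-Q(B)|$, which is the one used in Eq.~\eqref{eq:formula-F-2}. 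Since $0\le f\le1$, each of $\int f\,d\nu^+$ and $\int f\,d\nu^-$ lies in $[0,\eta_k]$. Their difference therefore has absolute value at most $\eta_k$, which is exactly Eq.~\eqref{eq:formula-6-18}.

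No step here is a real obstacle. The one point that needs care is the normalization of total variation. If TV were instead defined as the full $L^1$ norm $\|P-Q\|_1$, the general bound for functions with values in $[-1,1]$ would pick up a factor of $2$. Restricting to functions with values in $[0,1]$ avoids this: their range has length one, and the Jordan-decomposition argument above gives the bound $\eta_k$ directly under the supremum-over-events convention. Measurability of $f$ was already established after Eq.~\eqref{eq:formula-2-7}, so the integrals are well defined and nothing further is needed.
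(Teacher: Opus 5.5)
Your proof is correct and takes the same approach as the paper. The paper also integrates $V(A,\cdot)$ against the zero-mass signed measure $\mathcal D_{\mathrm{user}}^{(k+1)}-\mathcal D_{\mathrm{user}}^{(k)}$ and bounds the result by $\operatorname{osc}(V(A,\cdot))\,\eta_k\le\eta_k$; your Hahn--Jordan argument (as in Eq.~\eqref{eq:formula-F-2}) simply spells out that oscillation bound under the supremum-over-events convention for total variation, which is the convention the paper uses.
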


\begin{proof} The signed measure $\mathcal D_{\mathrm{user}}^{(k+1)}-\mathcal D_{\mathrm{user}}^{(k)}$ has total mass zero. Hence
\[
\left|\int V(A,t)\,d(\mathcal D_{\mathrm{user}}^{(k+1)}-\mathcal D_{\mathrm{user}}^{(k)})(t)\right|
\le\operatorname{osc}(V(A,\cdot))\,\eta_k\le\eta_k,
\]
by bounded rewards. \end{proof}

\begin{theorem}[Cumulative gain under changing task distributions]
\label{thm:drifting-task-guarantee}
Under Assumptions~\ref{ass:bounded-reward}, \ref{ass:failure-detector-locality}, \ref{ass:task-outcome-model}, \ref{ass:outcome-evaluation-independence}, and \ref{ass:trial-consistency} and a drift sequence $\{\eta_k\}_{k=0}^{K-1}$, suppose that the actual finite-horizon protocol at step $k$ supplies the returned failure-task coverage and stored-sample readiness required by Theorem~\ref{thm:one-step-guarantee} for $\mathcal D_{\mathrm{user}}^{(k)}$. Let $I_k=J_{\mathrm{user}}^{(k)}(A_{k+1})-J_{\mathrm{user}}^{(k)}(A_k)$ and use the pre-acceptance $\Delta_k$ from Eq.~\eqref{eq:formula-4-28}, with zero value outside $\mathsf{Cand}_k\cap\mathsf{Ready}_k$. Write
\[
\mathcal B_k:=\mathsf{Acc}_k\cap\{I_k<\Delta_k\},
\qquad \mathbb P(\mathcal B_k)\le\beta_{\mathrm{step},k},
\qquad \sum_{k<K}\beta_{\mathrm{step},k}\le\beta_{\mathrm{run}}.
\]
Then, without requiring independence across steps, with probability at least $1-\beta_{\mathrm{run}}$,
\begin{equation}
J_{\text{user}}^{(K)}(A_K)-J_{\text{user}}^{(0)}(A_0)
\ge\sum_{k\in\mathcal A}\Delta_k-\sum_{k=0}^{K-1}\eta_k.
\label{eq:formula-6-19}
\end{equation}
\end{theorem}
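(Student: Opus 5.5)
The plan is to follow the telescoping argument of Theorem~\ref{thm:finite-run-guarantee}, and to insert Lemma~\ref{lem:distribution-drift} wherever the reference distribution switches between consecutive steps. I would first define the bad-accept events $\mathcal B_k=\mathsf{Acc}_k\cap\{I_k<\Delta_k\}$. By hypothesis each satisfies $\mathbb P(\mathcal B_k)\le\beta_{\mathrm{step},k}$; this is exactly Theorem~\ref{thm:one-step-guarantee} applied at step $k$ with $\mathcal D_{\mathrm{user}}^{(k)}$ in place of the fixed distribution. A union bound then gives $\mathbb P(\bigcup_{k<K}\mathcal B_k)\le\beta_{\mathrm{run}}$. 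The argument uses no independence across steps, because the union bound needs none. From here on I work on the complement event $\mathcal G:=(\bigcup_k\mathcal B_k)^c$.

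On $\mathcal G$, I would decompose the endpoint difference with a two-part telescope:
\[
J^{(K)}_{\mathrm{user}}(A_K)-J^{(0)}_{\mathrm{user}}(A_0)=\sum_{k=0}^{K-1}\Bigl[\bigl(J^{(k)}_{\mathrm{user}}(A_{k+1})-J^{(k)}_{\mathrm{user}}(A_k)\bigr)+\bigl(J^{(k+1)}_{\mathrm{user}}(A_{k+1})-J^{(k)}_{\mathrm{user}}(A_{k+1})\bigr)\Bigr].
\]
This identity holds pathwise. The first bracket is $I_k$. The second bracket is bounded below by $-\eta_k$ through Lemma~\ref{lem:distribution-drift} applied to the agent $A_{k+1}$. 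That lemma holds for every fixed agent, so it holds pathwise for the random $A_{k+1}$, since $\eta_k$ is deterministic.

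For the $I_k$ terms I would split by branch. On non-adopted steps we have $A_{k+1}=A_k$ (Definition~\ref{def:evolution-step}), so $I_k=0$. On adopted steps, being on $\mathcal G$ gives $I_k\ge\Delta_k$. Summing then yields $\sum_{k<K}I_k\ge\sum_{k\in\mathcal A}\Delta_k$, and combining this with the drift terms gives Eq.~\eqref{eq:formula-6-19}.

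The main obstacle is checking that the per-step risk bound $\mathbb P(\mathcal B_k)\le\beta_{\mathrm{step},k}$ really holds under drift. The proof of Theorem~\ref{thm:one-step-guarantee} invokes Assumption~\ref{ass:agent-independent-tasks} and the stored-sample comparison for a single distribution. Here the validation data at step $k$ must be drawn from $\mathcal D_{\mathrm{user}}^{(k)}$, and both $Z_k$ and $\mathcal D_{F,k},\mathcal D_{R,k}$ must be defined relative to $\mathcal D_{\mathrm{user}}^{(k)}$. Under that convention, the decomposition of Lemma~\ref{lem:failure-decomposition} and Theorem~\ref{thm:expected-reward-improvement} hold verbatim with $J^{(k)}_{\mathrm{user}}$, since the within-step comparison uses a common distribution. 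The theorem explicitly takes the step-$k$ risk bound as a hypothesis on the actual protocol, so I would simply cite it, adding a remark that this is how the single-distribution proof transfers.
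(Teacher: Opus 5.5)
Your proposal is correct and matches the paper's proof essentially step for step. Both use a union bound over the bad-accept events $\mathcal B_k$, then add and subtract $J^{(k)}_{\mathrm{user}}(A_{k+1})$ so that Lemma~\ref{lem:distribution-drift} bounds the drift bracket below by $-\eta_k$, then use $I_k=0$ on non-adopted steps and $I_k\ge\Delta_k$ on adopted steps off the union, taking the per-step risk bound as a hypothesis. One minor correction: the lemma applies pathwise to the random $A_{k+1}$ because its bound holds uniformly over all agents, not because $\eta_k$ is deterministic.
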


\begin{proof}[Proof of Theorem~\ref{thm:drifting-task-guarantee}.]
Let $\mathcal B:=\bigcup_{k<K}\mathcal B_k$.  The assumed per-step quantitative bounds and a union bound give
$\mathbb P(\mathcal B)\le\sum_{k<K}\beta_{\mathrm{step},k}\le\beta_{\mathrm{run}}$.
Outside $\mathcal B$, at each step add and subtract $J_{\mathrm{user}}^{(k)}(A_{k+1})$:
\[
J_{\mathrm{user}}^{(k+1)}(A_{k+1})-J_{\mathrm{user}}^{(k)}(A_k)
=I_k+\bigl[J_{\mathrm{user}}^{(k+1)}(A_{k+1})-J_{\mathrm{user}}^{(k)}(A_{k+1})\bigr].
\]
Lemma~\ref{lem:distribution-drift} bounds the bracket below by $-\eta_k$ on every step. Definition~\ref{def:evolution-step} makes $I_k=0$ on nonadopted steps, so the within-step increments sum to $\sum_{k\in\mathcal A}I_k\ge\sum_{k\in\mathcal A}\Delta_k$. Summing the displayed identity telescopes and yields Eq.~\eqref{eq:formula-6-19}. No cross-step independence is used. \end{proof}

\begin{corollary}
\label{cor:accepted-step-count}
On the event of Theorem~\ref{thm:finite-run-guarantee}, suppose $\Delta_k\ge\Delta_\star>0$ for every accepted step $k<K$. Then
\begin{equation}
|\mathcal A|\le\frac{1-J_{\mathrm{user}}(A_0)}{\Delta_\star}.
\label{eq:formula-6-24}
\end{equation}
\end{corollary}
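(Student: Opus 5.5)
The plan is a telescoping argument combined with the boundedness of rewards, worked on the good event of Theorem~\ref{thm:finite-run-guarantee}. On that event every accepted step satisfies $I_k\ge\Delta_k$, and every nonadopted step has $A_{k+1}=A_k$ by Definition~\ref{def:evolution-step}, so $I_k=0$ there. Summing the increments therefore gives
\[
J_{\mathrm{user}}(A_K)-J_{\mathrm{user}}(A_0)=\sum_{k<K}I_k=\sum_{k\in\mathcal A}I_k\ge\sum_{k\in\mathcal A}\Delta_k\ge|\mathcal A|\,\Delta_\star ,
\]
where the last inequality uses the hypothesis $\Delta_k\ge\Delta_\star$ on accepted steps.

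Next I bound the left side from above. By Assumption~\ref{ass:bounded-reward}, Eq.~\eqref{eq:formula-2-8} gives $J_{\mathrm{user}}(A_K)\le1$. Since $\mathcal D_{\mathrm{user}}$ is fixed (Assumption~\ref{ass:agent-independent-tasks}), the same functional $J_{\mathrm{user}}$ applies to both endpoints. Hence $|\mathcal A|\Delta_\star\le1-J_{\mathrm{user}}(A_0)$. Dividing by $\Delta_\star>0$ yields Eq.~\eqref{eq:formula-6-24}.

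I do not expect a real obstacle here. The only care point is bookkeeping. The inequality $I_k\ge\Delta_k$ holds on all accepted steps simultaneously only on the single union-bound event of Theorem~\ref{thm:finite-run-guarantee}, not step by step in probability. The corollary is stated on that event, so this is fine. The other point is that $\mathcal A$ may be a random set depending on earlier steps. Because the argument is pathwise, that dependence causes no difficulty.
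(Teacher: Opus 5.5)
Your proposal is correct and follows essentially the same argument as the paper. The endpoint difference is bounded below by $\sum_{k\in\mathcal A}\Delta_k\ge|\mathcal A|\Delta_\star$ and above by $1-J_{\mathrm{user}}(A_0)$ using $J_{\mathrm{user}}(A_K)\le1$, and you then divide by $\Delta_\star>0$; the only difference is that you re-derive the telescoping on the good event rather than quoting the conclusion of Theorem~\ref{thm:finite-run-guarantee} directly.
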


The same positive lower bound $\Delta_\star$ must hold on every adopted step to obtain Eq.~\eqref{eq:formula-6-24}.

\begin{proof}[Proof of Corollary~\ref{cor:accepted-step-count}.] Theorem~\ref{thm:finite-run-guarantee} gives
\[
J_{\text{user}}(A_K)-J_{\text{user}}(A_0)
\ge\sum_{k\in\mathcal{A}}\Delta_k
\ge|\mathcal{A}|\Delta_\star.
\]
By Assumption~\ref{ass:bounded-reward}, $J_{\text{user}}(A_K)\le1$, so the same endpoint difference is at most $1-J_{\text{user}}(A_0)$. Combining the two bounds and dividing by $\Delta_\star>0$ gives Eq.~\eqref{eq:formula-6-24}. \end{proof}
\subsection{Cumulative absolute reward changes}
\label{app:cumulative-change}

The superscript $(k)$ identifies the evaluation radius at step $k$. The following bounds control both the accumulated absolute change along the update path and the endpoint change relative to the initial agent.

\paragraph{The symmetric Two-Gate bound.}
Fix $\mathcal D_{\mathrm{user}}$ throughout $K$ steps under Assumptions~\ref{ass:bounded-reward} and \ref{ass:agent-independent-tasks}, with $Z_k\in(0,1)$ on adopted steps. On the joint evaluation-coverage event, Proposition~\ref{prop:cumulative-retained-change} gives Eq.~\eqref{eq:cumulative-change-basic} for the Two-Gate adopted-step set $\mathcal A$.

\begin{proposition}[Cumulative absolute changes under a one-sided retained-loss rule]
\label{prop:one-sided-cumulative-change}
Fix the same task distribution and assumptions, and at each step use the candidate, readiness, selector, and risk conditions of the one-sided rule Eq.~\eqref{eq:one-sided-adoption-rule}. Let $\mathcal A^-:=\{k<K:\mathsf{Acc}^-_k\}$ be this rule's adopted-step set and
$\mathcal V_K^-:=\sum_{k\in\mathcal A^-}\mathbb E_{\mathcal D_{\mathrm{user}}}|V(A_{k+1},\cdot)-V(A_k,\cdot)|$.
On the common one-sided evaluation-coverage event,
\begin{equation}
\begin{aligned}
\mathcal V_K^-
&\le J_{\mathrm{user}}(A_K)-J_{\mathrm{user}}(A_0)
+\sum_{k\in\mathcal A^-}(Z_k-\tau+\varepsilon_L^{(k)})
+2\sum_{k\in\mathcal A^-}(1-Z_k)(\delta^-+\varepsilon_D^{-,(k)})\\
&\le1-J_{\mathrm{user}}(A_0)
+\sum_{k\in\mathcal A^-}(Z_k-\tau+\varepsilon_L^{(k)})
+2\sum_{k\in\mathcal A^-}(1-Z_k)(\delta^-+\varepsilon_D^{-,(k)}).
\end{aligned}
\label{eq:one-sided-cumulative-change}
\end{equation}
Moreover,
$\mathbb E_{\mathcal D_{\mathrm{user}}}|V(A_K,\cdot)-V(A_0,\cdot)|
\le\min\{1,\mathcal V_K^-\}$.
If the deterministic step risks sum to at most $\beta_{\mathrm{run}}$, the common event has probability at least $1-\beta_{\mathrm{run}}$.
\end{proposition}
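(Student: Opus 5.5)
The plan is to bound each adopted step's absolute change by its signed gain plus a correction, then telescope. Fix an adopted step $k\in\mathcal A^-$ and write $g_k(t):=V(A_{k+1},t)-V(A_k,t)$. Lemma~\ref{lem:failure-decomposition}, applied to $|g_k|$, splits
\[
\mathbb E_{\mathcal D_{\mathrm{user}}}|g_k|=\mathbb E_{\mathcal D_{\mathrm{user}}}[\psi_{A_k}|g_k|]+(1-Z_k)\mathbb E_{\mathcal D_{R,k}}|g_k|.
\]
We treat the two pieces separately.

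On the retained side, the identity $|x|=x+2(-x)_+$ gives
\[
(1-Z_k)\mathbb E_{\mathcal D_{R,k}}|g_k|=(1-Z_k)\mathbb E_{\mathcal D_{R,k}}[g_k]+2(1-Z_k)D_R^-(A_{k+1};A_k).
\]

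On the failure side, $|x|\le x+2(-x)_+$ also holds pointwise. For a more useful bound we use $\psi_{A_k}\le1$: the negative part satisfies
\[
\mathbb E[\psi_{A_k}(-g_k)_+]\le\mathbb E[\psi_{A_k}(|g_k|-g_k)]/2.
\]
This route is circular, so we instead bound directly: $\mathbb E[\psi_{A_k}|g_k|]\le \mathbb E[\psi_{A_k}]\cdot\sup|g_k|\le Z_k$, which yields
\[
\mathbb E[\psi_{A_k}|g_k|]\le L_{A_k}(A_{k+1})+(Z_k-L_{A_k}(A_{k+1})).
\]
Summing the two pieces gives
\[
\mathbb E_{\mathcal D_{\mathrm{user}}}|g_k|\le I_k+\bigl(Z_k-L_{A_k}(A_{k+1})\bigr)+2(1-Z_k)D_R^-,
\]
where $I_k=L_{A_k}+(1-Z_k)\mathbb E_{\mathcal D_{R,k}}g_k$ is the decomposition in Eq.~\eqref{eq:formula-3-8}.

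Next we bring in the coverage event. On the common one-sided evaluation-coverage event (the intersection of the events in Lemma~\ref{lem:improvement-estimation}, the one-sided retained estimate of Appendix~\ref{app:one-sided-estimation}, and $\mathcal E_Z$), the acceptance conditions of Eq.~\eqref{eq:one-sided-adoption-rule} give
\[
L_{A_k}\ge\widehat L_k-\varepsilon_L^{(k)}\ge\tau-\varepsilon_L^{(k)},
\qquad
D_R^-\le\widehat D^-_{R,k}+\varepsilon_D^{-,(k)}\le\delta^-+\varepsilon_D^{-,(k)}.
\]
Substituting these bounds yields
\[
\mathbb E|g_k|\le I_k+(Z_k-\tau+\varepsilon_L^{(k)})+2(1-Z_k)(\delta^-+\varepsilon_D^{-,(k)}).
\]

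We then sum over $k\in\mathcal A^-$. By Definition~\ref{def:evolution-step}, $I_k=0$ on non-adopted steps, so $\sum_{k\in\mathcal A^-}I_k=J_{\mathrm{user}}(A_K)-J_{\mathrm{user}}(A_0)$. This gives the first line of Eq.~\eqref{eq:one-sided-cumulative-change}, and the second line follows from $J_{\mathrm{user}}(A_K)\le1$.

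The endpoint bound follows from the triangle inequality along the path: non-adopted steps leave the agent unchanged, so
\[
\mathbb E|V(A_K,\cdot)-V(A_0,\cdot)|\le\mathcal V_K^-,
\]
and the left side is at most $1$ because rewards lie in $[0,1]$.

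For the probability claim, each step's coverage failure has probability at most $\beta_{\mathrm{step},k}$, by the same allocation as in Theorem~\ref{thm:one-step-guarantee} with $\mathcal E_R^-$ replacing the symmetric retained event. A union bound over steps, exactly as in the proof of Theorem~\ref{thm:finite-run-guarantee}, gives probability at least $1-\beta_{\mathrm{run}}$, and this needs no independence across steps.

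The main point requiring care is the failure-side bound, which uses only $|g_k|\le1$ and $\mathbb E\psi_{A_k}=Z_k$; this is precisely what produces the $Z_k-\tau$ term. We also need the coverage events to be applied to the realized adopted candidate, which the readiness conditions guarantee, since $\mathsf{Acc}^-_k\subseteq\mathsf{Ready}_k$.
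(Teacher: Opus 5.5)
Your proof is correct and follows the paper's argument. You split $\mathbb E|g_k|$ with Lemma~\ref{lem:failure-decomposition}, cap the failure-side mass by $Z_k$, rewrite the retained side via $|x|=x+2(-x)_+$ to isolate $I_k$ and $2(1-Z_k)D_R^-$, apply the acceptance and coverage bounds, and telescope over adopted steps. The paper handles the endpoint bound and the union-bound probability claim the same way. The only cosmetic difference is that the paper writes the failure side as $L_{A_k}+2F_k^-$ with $F_k^-:=Z_k\mathbb E_{\mathcal D_{F,k}}(-g_k)_+$ and then uses $2F_k^-\le Z_k-L_{A_k}$; this is exactly the route you dismissed as circular, and it is equivalent to your direct bound by $Z_k$.
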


\begin{proof}[Proof of Propositions~\ref{prop:cumulative-retained-change} and~\ref{prop:one-sided-cumulative-change}.]
For each adopted step write $a_k(t)=\mathrm{Adv}_{A_k}(A_{k+1},t)$ and
$F_k^-:=Z_k\mathbb E_{\mathcal D_{F,k}}(-a_k)_+$.
Lemma~\ref{lem:failure-decomposition}, applied to $|a_k|$, gives
\begin{equation}
\mathbb E_{\mathcal D_{\mathrm{user}}}|V(A_{k+1},\cdot)-V(A_k,\cdot)|
=Z_k\mathbb E_{\mathcal D_{F,k}}|a_k|
 +(1-Z_k)\mathbb E_{\mathcal D_{R,k}}|a_k|.
\label{eq:formula-6-20}
\end{equation}
Writing $|x|=x+2(-x)_+$ yields
\[
Z_k\mathbb E_{\mathcal D_{F,k}}|a_k|=L_{A_k}+2F_k^-.
\]
Since $\mathbb E_{\mathcal D_{F,k}}|a_k|\le1$, the condition $\widehat L_k\ge\tau$ in Eq.~\eqref{eq:formula-4-23} and the failure-task error bound imply
\begin{equation}
2F_k^-\le Z_k-L_{A_k}\le Z_k-\tau+\varepsilon_L^{(k)}.
\label{eq:formula-6-21}
\end{equation}
For Two-Gate, $\widehat D_R\le\delta$ in Eq.~\eqref{eq:formula-4-23} and the retained-task error bound give $D_R\le\delta+\varepsilon_D^{(k)}$. The reward decomposition in Eq.~\eqref{eq:formula-3-9} implies $L_{A_k}\le I_k+(1-Z_k)D_R$; hence Eqs.~\eqref{eq:formula-6-20}--\eqref{eq:formula-6-21} give
\[
\mathbb E_{\mathcal D_{\mathrm{user}}}|a_k|
\le I_k+Z_k-\tau+\varepsilon_L^{(k)}
+2(1-Z_k)(\delta+\varepsilon_D^{(k)}).
\]
On nonadopted steps $I_k=0$, so summing over $\mathcal A$ telescopes to
$\sum_{k\in\mathcal A}I_k=J_{\mathrm{user}}(A_K)-J_{\mathrm{user}}(A_0)\le1-J_{\mathrm{user}}(A_0)$.
This proves Eq.~\eqref{eq:cumulative-change-basic}.

For the one-sided rule, Eq.~\eqref{eq:formula-3-9} gives the exact identity
\[
\begin{aligned}
\mathbb E_{\mathcal D_{\mathrm{user}}}|a_k|
&=L_{A_k}+2F_k^-+(1-Z_k)(D^-_{R,k}+D^+_{R,k})\\
&=I_k+2F_k^-+2(1-Z_k)D^-_{R,k}.
\end{aligned}
\]
On the joint one-sided coverage event, Eq.~\eqref{eq:one-sided-adoption-rule} and $\mathcal E_R^-$ give
$D^-_{R,k}\le\delta^-+\varepsilon_D^{-,(k)}$.
Apply Eq.~\eqref{eq:formula-6-21}, sum only over $\mathcal A^-$, and telescope $\sum_{k\in\mathcal A^-}I_k=J_{\mathrm{user}}(A_K)-J_{\mathrm{user}}(A_0)$. Bounded reward gives $J_{\mathrm{user}}(A_K)\le1$, proving Eq.~\eqref{eq:one-sided-cumulative-change}. The taskwise triangle inequality bounds the endpoint absolute change by the path sum, while bounded rewards bound it by one. Finally, the per-step joint one-sided coverage event $\mathcal G_k^-$ in the proof of Proposition~\ref{prop:measured-margin-guarantee} has failure probability at most $\beta_{\mathrm{step},k}$; intersecting these events and applying a union bound gives the stated common-event probability. \end{proof}

Bounding the failure-task term in Eq.~\eqref{eq:formula-6-20} only by $|a_k|\le1$ contributes $Z_k$ on each adopted step. The condition $\widehat L_k\ge\tau$ instead yields the term $Z_k-\tau+\varepsilon_L^{(k)}$ through Eq.~\eqref{eq:formula-6-21}. The endpoint mean absolute difference is at most one, while the cumulative path change can exceed one.

\begin{remark}
\label{rem:cumulative-bound-residual}
Holding the realized sequence of agents, evaluation radii, and adopted-step set fixed, letting $\delta\downarrow0$ in the right-hand side of Eq.~\eqref{eq:cumulative-change-basic} leaves the retained-task contribution $2\sum_{k\in\mathcal A}(1-Z_k)\varepsilon_D^{(k)}$.
\end{remark}

\subsection{A successful update and later improvement opportunities}\label{app:later-step-proof}
\begin{proposition}[No later-step guarantee from step-0 quantities under unrestricted updates]
\label{prop:no-later-step-guarantee}
In the kernel setting of Appendix~\ref{app:section-A-1}, allow the composition map \((M,C)\mapsto A\) to be any measurable map consistent with Eqs.~\eqref{eq:formula-2-1}--\eqref{eq:formula-2-5}. For every pool size \(N\ge1\) and horizon \(K\ge2\), there exist a common target \(T\) and two instances with the same frozen LLM, initial agent, step-0 generation and evaluation experiment, validation and selection rules, and first successor agent. Both have \(P_0(T)=1\), and the first update is validated and strictly improves overall expected reward. In the first instance, \(P_1^+(T)>0\). In the second, every subsequently generated modification leaves expected reward unchanged on every user task, so, almost surely, \(P_k(T)=0\) for \(1\le k<K\), and
\[
J_{\mathrm{user}}(A_K)-J_{\mathrm{user}}(A_0)
=J_{\mathrm{user}}(A_1)-J_{\mathrm{user}}(A_0).
\]
The composition maps agree at the initial harness state and every step-0 candidate harness state, but may differ at later states.
\end{proposition}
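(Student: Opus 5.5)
The approach is an explicit construction. Because the composition map may be any measurable map agreeing at the initial harness state and at every step-0 candidate state, the two instances can share all step-0 structure and differ only in how later harness states are composed. I will use the two-task template from Proposition~\ref{prop:positive-margin-instances} and Proposition~\ref{prop:self-task-separation}. There are two user tasks $t_0,t_1$ with equal probability, binary outputs, reward $r(t,o)=o$, and detector $\phi(t,o)=1-o$.

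The initial agent $A_0$ is stochastic with success probability $p_0=1/4$ on $t_0$ and $p_1=1/2$ on $t_1$, so $0<Z_0<1$. The step-0 generation kernel deterministically returns $c_1$, which raises $t_0$ to success probability $1/2$ and leaves $t_1$ unchanged. Thus $P_0(T)=1$ for a target $T=(\lambda,\delta)$ with $\delta=0$ and small $\lambda>0$ chosen below.

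For step 0, $D_0(c_1)=0$ exactly, and $L_0(c_1)=\mathbb E_{\mathcal D_{\mathrm{user}}}[\psi_{A_0}(t)\,\mathrm{Adv}(t)]=\tfrac12\cdot\tfrac34\cdot\tfrac14>0$. I take $\lambda$ strictly below this value. By Corollary~\ref{cor:qualified-safe-improvement}, $J_{\mathrm{user}}(A_1)>J_{\mathrm{user}}(A_0)$.

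For "the first update is validated", I take the validation rule to be the pool rule $G$ of Theorem~\ref{thm:certified-selection}. Since $D=0$ is only checked through an upper confidence bound, I need $\delta$ slightly positive. So I set $\delta>0$ small but keep $\lambda>(1-Z_0)\delta$, and choose $\lambda$ and $\delta$ so that $c_1\in\mathcal P^+_{0,N,\mathbf n,\beta}(T)$ for sufficiently precise evaluation. Then validation succeeds with positive probability, on the coverage event. Taking the selector to be the one that picks a validated candidate, and possibly conditioning on that event (or having the "validation" be exact in the instance), the first update is validated and strictly improving.

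The success probability $1/2$ on both tasks for $A_1$ is chosen so that $Z_1=1/2$ and headroom remains. In the first instance, the generation kernel at $A_1$ returns $c_2$, which raises $t_1$ to success probability $3/4$. This gives $L_1(c_2)>0$ and $D_1(c_2)=0$. If $\lambda$ is also below $L_1(c_2)$ and the widths $\bar w_L,\bar w_D$ are small, then $c_2\in\mathcal P^+$, so $P_1^+(T)>0$.

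In the second instance, the composition map is altered at states reachable from $C_1$. Every later state $C_1\oplus c$ (and its descendants) is mapped to the same user-task kernel as $A_1$. Self-task behaviour may be arbitrary, but for definiteness I keep generation returning valid modifications. Then for every $k\ge1$ and every generated $c$, $\mathrm{Adv}_{A_k}(A_k\oplus c,\cdot)\equiv0$, so $L_k(c)=0<\lambda$ and $P_k(T)=0$. By induction, every $A_k$ with $k\ge1$ has the same user-task kernel as $A_1$. Hence $J_{\mathrm{user}}(A_K)=J_{\mathrm{user}}(A_1)$ whether or not later updates are adopted. Measurability is trivial on finite spaces.

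The main obstacle is reconciling "validated" with finite-sample validation, which is probabilistic. I would either state the first update on the coverage event or pick an evaluation experiment that has deterministic success because the rewards are near-deterministic. A second point of care is that the two instances share the same step-0 experiment: the composition maps must coincide on $C_0$ and $C_0\oplus c_1$ and differ only at depth $\ge2$. The self-task kernel of $A_1$ must also be allowed to differ between the instances, which holds because it is determined by the composition map at $C_1$ on $\mathcal T_{\mathrm{self}}$ and not by step-0 quantities.
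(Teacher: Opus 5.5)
Your overall strategy matches the paper's: keep everything identical at $C_0$ and $C_0\oplus c_1$, and change the composition map only at the later state $C_1\oplus c_2$ so that subsequent modifications are reward-neutral. The gap is in the stochastic instance you chose, which breaks two of the required claims.

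\emph{First, the claims $D_0(c_1)=0$ and $D_1(c_2)=0$ are false.} Since $A_0$ succeeds on $t_0$ with probability $1/4$, the task $t_0$ also carries retained weight. With $Z_0=5/8$, $\mathcal D_{R,0}$ puts mass $(\tfrac14\cdot\tfrac12)/\tfrac38=\tfrac13$ on $t_0$, so $D_0(c_1)=\tfrac13\cdot\tfrac14=\tfrac1{12}$. Similarly, $\mathcal D_{R,1}$ is uniform and $D_1(c_2)=\tfrac12\cdot\tfrac14=\tfrac18$. This is exactly the overlap noted after Eq.~\eqref{eq:formula-3-7}. With your ``small'' $\delta$, $c_1\notin\mathcal P_0(T)$, so $P_0(T)=0$ rather than $1$. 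This part can be patched by choosing parameters. Take $\delta$ slightly above $1/8$ and $\lambda$ slightly below $L_1(c_2)=1/16$. Then $\lambda>(1-Z_0)\delta$ still holds, since $(3/8)(1/8)=3/64<4/64$, and sufficiently small widths put both $c_1$ and $c_2$ in the corresponding $\mathcal P^+$ sets.

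\emph{Second, the validation claim cannot be fixed by any parameter choice in your design.} With genuinely stochastic agents, no finite-sample evaluation validates $c_1$ with probability one. So ``the first update is validated'' holds only with probability less than one. ``Near-deterministic'' rewards do not change this, and restricting to the coverage event proves a weaker statement than the one claimed.

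The paper removes both problems by using deterministic outputs. Each task is then purely in $\mathcal D_{F,k}$ or purely in $\mathcal D_{R,k}$, so the improving edits genuinely have $D=0$. Checking each task's output once gives exact singleton rectangles with zero width and coverage one, so $\mathcal P^+=\mathcal P$ and validation is certain. Concretely, it takes four equally likely tasks $t_0,t_1,t_2,t_R$ with $A_0=(0,0,0,1)$ and $A_1=(1,0,0,1)$. The state $C_2$ realizes $(1,1,0,1)$ in one instance and $A_1$'s full kernel in the other, and the target is $T=(1/8,1/16)$. The paper's first edit $c_0$ has $L_0(c_0)=1/4$ and $D_0(c_0)=0$. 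In the first instance, its second edit $c_1$ has $L_1(c_1)=1/4$ and $D_1(c_1)=0$.

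Finally, drop the idea that $A_1$'s self-task kernel may differ between the instances. The statement requires the same first successor agent and agreement of the composition maps at $C_1$. Your construction does not need this anyway: let $A_1$ generate $c_2$ in both instances, and put the difference only in the kernel assigned to $C_1\oplus c_2$.
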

\begin{proof}[Proof of Proposition~\ref{prop:no-later-step-guarantee}.] Take four user tasks $t_0,t_1,t_2,t_R$ of equal probability, binary outputs on these user tasks, reward $r(t,o)=o$, and detector $\phi(t,o)=1-o$. The initial agent has deterministic user-task outputs $A_0:(0,0,0,1)$ and outputs modification $c_0$ on every self-modification task. The common first successor has outputs $A_1:(1,0,0,1)$ and outputs $c_1$ on every self-modification task.

Use code states $C_0,C_1,C_2$ and modification space $\mathcal M=\{c_0,c_1\}$. The code edit $c_0$ sends $C_0$ to $C_1$ and otherwise leaves the code state unchanged; $c_1$ sends $C_1$ to $C_2$ and otherwise leaves it unchanged. Both instances use this same code-edit operation and the same frozen LLM $M$. Their measurable composition maps agree at $C_0$ and $C_1$, realizing the full kernels $A_0$ and $A_1$, respectively. At $C_2$, instance (a) has user-task outputs $(1,1,0,1)$ and self-task output $c_1$, whereas instance (b) has the same full kernel as $A_1$. All spaces and maps are finite and measurable, and Eq.~\eqref{eq:formula-2-5} holds because the current agent's self-task kernel generates the stated modification directly. In particular, all step-0 candidate agents agree between the two instances: $c_0$ gives $A_1$ and $c_1$ leaves $A_0$ unchanged.

Choose the common target $T=(1/8,1/16)$.
At step 0, $Z_0=3/4$, and applying $c_0$ gives
\[
L_0(c_0)=1/4,\qquad D_0(c_0)=0,\qquad
J_{\mathrm{user}}(A_1)-J_{\mathrm{user}}(A_0)=1/4.
\]
Every draw is $c_0$, so $P_0(T)=1$ for any pool size. At $A_1$, $Z_1=1/2$. In instance (a), applying $c_1$ improves expected reward only on $t_1$, giving $L_1(c_1)=1/4$ and $D_1(c_1)=0$.
In instance (b), applying $c_1$ preserves every user-task output, so $L_1(c_1)=D_1(c_1)=0$.

For these finite deterministic instances, the same evaluator checks the four user-task outputs exactly and returns singleton rectangles. Their total-width bounds are zero, so the set $\mathcal P^+$ in Eq.~\eqref{eq:formula-4-7} equals the true qualified set, with conditional coverage one. The validation rule in Eq.~\eqref{eq:formula-4-6}, followed by the same selector choosing a validated candidate whenever one exists, accepts $c_0$ at step 0 in both instances and $c_1$ at step 1 in instance (a). There $c_1$ meets $T$ and is generated with probability one, giving $P_1^+(T)=1$.

In instance (b), every later draw is $c_1$ and leaves expected reward unchanged on every user task, whether or not it is adopted. Under the stated validation rule it is rejected, and the full kernel remains $A_1$. Thus $P_k(T)=0$ for all $1\le k<K$ and
\[
J_{\mathrm{user}}(A_K)-J_{\mathrm{user}}(A_0)
=1/4
=J_{\mathrm{user}}(A_1)-J_{\mathrm{user}}(A_0)
\quad\text{almost surely}.
\]
Instance (a) instead increases expected reward by another $1/4$ at step 1 and then generates modifications that leave expected reward unchanged. The construction works for every $N\ge1$ and $K\ge2$. \end{proof}

\subsection{Agent-responsive task distributions}\label{app:agent-responsive}

All quantities in a same-distribution comparison below are constructed from the declared common reference measure.
\begin{assumption}[Agent-responsive user-task distribution]
\label{ass:agent-responsive-tasks}
At each step $k$ there is a task distribution $\mathcal{D}_{A,k}$ that obtains once the task stream has settled under agent $A$, depending on $A$ and on the ambient conditions at step $k$, but not otherwise on the sequence of updates that produced $A$. Write $\mathcal{D}_{A_k}:=\mathcal{D}_{A_k,k}$.
\end{assumption}

The indices $A,k$ allow both agent response and changes in ambient conditions. Tasks can be sampled under the current agent's distribution; the full distribution need not be known. A candidate's associated distribution is counterfactual before adoption.

\begin{definition}
\label{def:comparative-realized-improvement}
For $\tilde A=A_k\oplus\Delta C$, define improvement on the current agent's task distribution by
\[
\mathrm{Imp}^{\mathrm{cmp}}(\tilde A) := \mathbb{E}_{t\sim\mathcal{D}_{A_k}}\bigl[\mathrm{Adv}_{A_k}(\tilde A,t)\bigr],
\]
and let $J^{\sharp}_k(A):=\mathbb{E}_{t\sim\mathcal{D}_{A,k}}[V(A,t)]$ be the expected reward under the task distribution induced by $A$. Then $\mathrm{Imp}^{\mathrm{rea}}(\tilde A):=J^{\sharp}_k(\tilde A)-J^{\sharp}_k(A_k)$ compares the agents under their respective task distributions. Write $\varrho(\Delta C):=\mathrm{TV}(\mathcal D_{\tilde A,k},\mathcal D_{A_k,k})$ for the total-variation distance between these distributions.

\end{definition}

\begin{proposition}
\label{prop:comparative-improvement}
Under Assumption~\ref{ass:bounded-reward}, for any probability measure $\mathcal{D}$ on $\mathcal{T}_{\text{user}}$ with $Z_k\in(0,1)$ (the standing assumption of \S{}\ref{sec:qualified-modifications}), and with $L_{A_k},Z_k,\mathcal{D}_{F,k},\mathcal{D}_{R,k},D_R$ all constructed from $\mathcal{D}$ by Eq.~\eqref{eq:formula-3-6} and Eq.~\eqref{eq:formula-3-7},
\begin{equation}
\mathbb{E}_{t\sim\mathcal{D}}\bigl[\mathrm{Adv}_{A_k}(\tilde A,t)\bigr] \;\ge\; L_{A_k}(\tilde A)-(1-Z_k)\,D_R(\tilde A;A_k).
\label{eq:formula-H-1}
\end{equation}
In particular Eq.~\eqref{eq:formula-H-1} holds at $\mathcal{D}=\mathcal{D}_{A_k}$, whether or not Assumption~\ref{ass:agent-independent-tasks} does.
\end{proposition}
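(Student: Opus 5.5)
The argument reruns the proof of Theorem~\ref{thm:expected-reward-improvement} with $\mathcal D$ in place of $\mathcal D_{\mathrm{user}}$. Fix the measure $\mathcal D$ on $\mathcal T_{\mathrm{user}}$. Build $\psi_{A_k}$, $Z_k:=\mathbb E_{\mathcal D}[\psi_{A_k}]\in(0,1)$, the weighted distributions $\mathcal D_{F,k},\mathcal D_{R,k}$ of Eq.~\eqref{eq:formula-3-7}, and $L_{A_k}(\tilde A):=\mathbb E_{\mathcal D}[\psi_{A_k}\mathrm{Adv}_{A_k}(\tilde A,\cdot)]$ from $\mathcal D$. The proof of Lemma~\ref{lem:failure-decomposition} only uses $\psi+(1-\psi)=1$ and linearity of expectation under a probability measure. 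It therefore holds verbatim for $\mathcal D$, applied to the bounded measurable integrand $g=\mathrm{Adv}_{A_k}(\tilde A,\cdot)\in[-1,1]$. This gives
\[
\mathbb E_{t\sim\mathcal D}[\mathrm{Adv}_{A_k}(\tilde A,t)]
=L_{A_k}(\tilde A)+(1-Z_k)\,\mathbb E_{t\sim\mathcal D_{R,k}}[\mathrm{Adv}_{A_k}(\tilde A,t)].
\]
Here I use that $(1-\psi_{A_k})\,\mathcal D=(1-Z_k)\mathcal D_{R,k}$ by definition, which is valid because $Z_k<1$.

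Next I bound the retained term below exactly as in Eq.~\eqref{eq:formula-F-1}. By Jensen's inequality, $\mathbb E_{\mathcal D_{R,k}}[\mathrm{Adv}]\ge-\mathbb E_{\mathcal D_{R,k}}|\mathrm{Adv}|=-D_R(\tilde A;A_k)$, where $D_R$ is also built from $\mathcal D$. Multiplying by $1-Z_k>0$ and substituting into the decomposition yields Eq.~\eqref{eq:formula-H-1}. Measurability and integrability follow from Assumption~\ref{ass:bounded-reward}, since $V(\cdot,t)\in[0,1]$ is measurable in $t$.

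For the ``in particular'' clause, take $\mathcal D=\mathcal D_{A_k}$, the current agent's settled distribution from Assumption~\ref{ass:agent-responsive-tasks}. The left side is then $\mathrm{Imp}^{\mathrm{cmp}}(\tilde A)$. The argument compares both agents on this one fixed measure and never uses that $J_{\mathrm{user}}$ is agent-independent, so Assumption~\ref{ass:agent-independent-tasks} is not needed.

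There is no real obstacle; the only point needing care is bookkeeping. Every quantity must be constructed from the same reference measure $\mathcal D$, including $Z_k$ and the normalization of $\mathcal D_{R,k}$. Otherwise the identity $(1-\psi_{A_k})\mathcal D=(1-Z_k)\mathcal D_{R,k}$ fails and mixed-distribution terms appear, which is the gap the separate realized-improvement correction via $\varrho(\Delta C)$ is meant to handle.
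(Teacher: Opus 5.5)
Your proposal is correct and follows the paper's proof essentially step for step: you apply the pointwise decomposition of Lemma~\ref{lem:failure-decomposition} under the arbitrary reference measure $\mathcal D$ to $g=\mathrm{Adv}_{A_k}(\tilde A,\cdot)$, identify the retained term as $(1-Z_k)\mathbb E_{\mathcal D_{R,k}}[\mathrm{Adv}_{A_k}(\tilde A,\cdot)]$, and bound it below by $-(1-Z_k)D_R(\tilde A;A_k)$ via Jensen. Your closing remark tying $\varrho(\Delta C)$ to inconsistently constructed quantities is off target, since that correction in Corollary~\ref{cor:realized-improvement} accounts for the two agents' different induced task distributions, but this does not affect the argument.
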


\begin{corollary}
\label{cor:realized-improvement}
Under Assumptions~\ref{ass:bounded-reward} and \ref{ass:agent-responsive-tasks},
\[
\bigl|\mathrm{Imp}^{\mathrm{rea}}(\tilde A)-\mathrm{Imp}^{\mathrm{cmp}}(\tilde A)\bigr|\le\varrho(\Delta C), \qquad\text{hence}\qquad \mathrm{Imp}^{\mathrm{rea}}(\tilde A)\ge L_{A_k}-(1-Z_k)D_R-\varrho(\Delta C).
\]
\end{corollary}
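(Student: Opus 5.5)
The statement to prove is Corollary~\ref{cor:realized-improvement}. The plan is a two-term triangle decomposition. Write $\mathcal D:=\mathcal D_{A_k,k}$ and $\widetilde{\mathcal D}:=\mathcal D_{\tilde A,k}$. By Definition~\ref{def:comparative-realized-improvement},
\[
\mathrm{Imp}^{\mathrm{rea}}(\tilde A)-\mathrm{Imp}^{\mathrm{cmp}}(\tilde A)
=\mathbb E_{\widetilde{\mathcal D}}[V(\tilde A,t)]-\mathbb E_{\mathcal D}[V(A_k,t)]
-\mathbb E_{\mathcal D}[V(\tilde A,t)-V(A_k,t)].
\]
The $\mathbb E_{\mathcal D}[V(A_k,\cdot)]$ terms cancel, which leaves
\[
\int V(\tilde A,t)\,d(\widetilde{\mathcal D}-\mathcal D)(t).
\]
So only the candidate's value function is integrated against the signed difference of the two task distributions. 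Assumption~\ref{ass:agent-responsive-tasks} is what makes both $\mathcal D_{\tilde A,k}$ and $J^\sharp_k$ well defined, with the same ambient index $k$.

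Next, bound this integral exactly as in the proof of Lemma~\ref{lem:distribution-drift}. The signed measure $\widetilde{\mathcal D}-\mathcal D$ has total mass zero, and its Hahn--Jordan parts each have mass $\mathrm{TV}(\widetilde{\mathcal D},\mathcal D)=\varrho(\Delta C)$. Assumption~\ref{ass:bounded-reward} gives $V(\tilde A,\cdot)\in[0,1]$. Hence the integral lies in $[-\varrho,\varrho]$, since its oscillation is at most one. This is the same argument as Eq.~\eqref{eq:formula-F-2}, and it proves the first claim
\[
|\mathrm{Imp}^{\mathrm{rea}}-\mathrm{Imp}^{\mathrm{cmp}}|\le\varrho(\Delta C).
\]

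For the second claim, apply Proposition~\ref{prop:comparative-improvement} with $\mathcal D=\mathcal D_{A_k}$. All of $L_{A_k},Z_k,\mathcal D_{R,k},D_R$ are then built from $\mathcal D_{A_k}$, and we assume the standing condition $Z_k\in(0,1)$. This yields
\[
\mathrm{Imp}^{\mathrm{cmp}}(\tilde A)\ge L_{A_k}-(1-Z_k)D_R.
\]
Combining with $\mathrm{Imp}^{\mathrm{rea}}\ge\mathrm{Imp}^{\mathrm{cmp}}-\varrho$ from the first step gives the displayed lower bound.

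Proposition~\ref{prop:comparative-improvement} is stated without proof, so if needed I would supply it. Apply Lemma~\ref{lem:failure-decomposition} under $\mathcal D$ to get Eq.~\eqref{eq:formula-3-6}, then bound the retained term by Jensen's inequality as in Eq.~\eqref{eq:formula-F-1}. Nothing there uses Assumption~\ref{ass:agent-independent-tasks}.

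The step I expect to be the main obstacle is the bookkeeping in the first step. One must pair the terms so that only a single value function multiplies the distribution difference; a careless split would produce an extra $\varrho$. The TV convention (supremum over events, i.e.\ half the $L^1$ norm) must also match the factor-one bound.
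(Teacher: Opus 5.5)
Your proposal is correct and follows the paper's proof: the paper likewise adds and subtracts $\mathbb E_{\mathcal D_{A_k}}[V(\tilde A,\cdot)]$ to isolate $\mathcal W=\int V(\tilde A,\cdot)\,d(\mathcal D_{\tilde A,k}-\mathcal D_{A_k,k})$. It bounds $|\mathcal W|\le\varrho(\Delta C)$ by the total-mass-zero and oscillation argument of Lemma~\ref{lem:distribution-drift}, and then combines this with Proposition~\ref{prop:comparative-improvement}; the paper does prove that proposition, by the same decomposition-plus-Jensen argument you sketch.
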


\begin{remark}\label{rem:responsive-task-substitution}
Under Assumption~\ref{ass:agent-responsive-tasks}, put
$\varrho_k=\mathrm{TV}(\mathcal D_{A_{k+1},k},\mathcal D_{A_k,k})$
and $\eta_k^{\mathrm{exo}}=\sup_A\mathrm{TV}(\mathcal D_{A,k+1},\mathcal D_{A,k})$.
For $\mathcal D_{\mathrm{user}}^{(k)}=\mathcal D_{A_k,k}$ the triangle inequality through $\mathcal D_{A_{k+1},k}$ yields $\eta_k\le\varrho_k+\eta_k^{\mathrm{exo}}$.
Thus the drift in Theorem~\ref{thm:drifting-task-guarantee} includes both terms. With no adopted modification the response at fixed environment state is zero; the environment may still change.

For a single-step post-response expected-gain guarantee, a valid response bound $b_k$ available before adoption gives the sufficient check
$\tau>\varepsilon_L+(1-\widehat Z_k+\varepsilon_Z)(\delta+\varepsilon_D)+b_k$.
If the bound is probabilistic, its failure risk is included in the joint experiment's risk budget.
\end{remark}

\subsubsection{Estimation under the current task distribution}
\label{app:section-H-1}

Under the sampling and coverage conditions of Appendix~\ref{app:proofs-evaluation}, the same estimators compare the agents under $\mathcal D_{A_k,k}$. Independent fresh rollouts evaluate both agents on the same sampled tasks, while the failure weights and retained-task distribution refer to the current agent, as required by Proposition~\ref{prop:comparative-improvement}.

Procedure~\ref{alg:finite-horizon-update} recollects the stored sample after every accepted modification as described in Remark~\ref{rem:stored-label-recollection}. At the next step, new records and fresh evaluations define $\widehat Z_{k+1}$, $\widehat L_{k+1}$, and $\widehat D_{R,k+1}$ relative to the successor's task distribution. A response bound available before adoption is separately needed for a post-response expected-improvement guarantee.

\subsubsection{Bounding the reward response after an update}
\label{app:section-H-2}

The partition bound below controls the effect of post-update task-distribution response on expected reward when the stated uniform reward-resolution condition holds.

Define $\mathcal W=\int V(\widetilde A,t)\,d(\mathcal D_{\widetilde A,k}-\mathcal D_{A_k,k})$. Under the uniform resolution condition of Lemma~\ref{lem:partition-resolution} in Appendix~\ref{app:observation-lag-estimation}, $|\mathcal W|\le\mathrm{TV}_{\mathcal F}+\zeta$.
These partition probabilities compare the two distributions at the same environment state after response. Lemma~\ref{lem:partition-drift-estimation} in the same appendix estimates their cell discrepancy; the uniform within-cell reward condition in Lemma~\ref{lem:partition-resolution} is separate. The samples observed after response do not themselves provide the advance bound $b_k$ required by Remark~\ref{rem:responsive-task-substitution}.

\begin{proof}[Proof of Proposition~\ref{prop:comparative-improvement}.] The pointwise decomposition underlying Lemma~\ref{lem:failure-decomposition} holds for any reference measure $\mathcal D$. Integrating it for $g=\operatorname{Adv}_{A_k}(\tilde A,\cdot)$ gives
\[
\mathbb E_{\mathcal D}[\operatorname{Adv}_{A_k}(\tilde A,\cdot)]
=L_{A_k}(\tilde A)+E_{A_k}(\tilde A),
\]
where $E_{A_k}(\tilde A)=(1-Z_k)\mathbb E_{\mathcal D_{R,k}}[\operatorname{Adv}_{A_k}(\tilde A,t)]$. Since $|\operatorname{Adv}_{A_k}(\tilde A,t)|$ integrates to $D_R(\tilde A;A_k)$ under $\mathcal D_{R,k}$, Jensen's inequality gives
\[
E_{A_k}(\tilde A)\ge-(1-Z_k)D_R(\tilde A;A_k).
\]
Combining the two displays proves Eq.~\eqref{eq:formula-H-1} for the arbitrary reference measure $\mathcal D$. \end{proof}

\begin{proof}[Proof of Corollary~\ref{cor:realized-improvement}.] Add and subtract $\mathbb{E}_{\mathcal{D}_{A_k}}[V(\tilde A,\cdot)]$, so $\mathrm{Imp}^{\mathrm{rea}}=\mathrm{Imp}^{\mathrm{cmp}}+\mathcal{W}$ with $\mathcal{W}:=\int V(\tilde A,\cdot)\,d(\mathcal{D}_{\tilde A,k}-\mathcal{D}_{A_k,k})$. The signed measure has total mass $0$ \textbf{on task space}, so $\mathcal{W}$ is exactly the object Lemma~\ref{lem:distribution-drift} bounds: $|\mathcal{W}|\le\mathrm{osc}(V(\tilde A,\cdot))\cdot\varrho\le\varrho$, since $V\in[0,1]$ by Assumption~\ref{ass:bounded-reward} and Eq.~\eqref{eq:formula-2-7}. Combine with Proposition~\ref{prop:comparative-improvement}. \end{proof}

\subsection{Evaluation and deployment task distributions}
\label{app:section-G-15}

\subsubsection{Definitions}
\label{app:section-G-15-1}

At step $k$, distinguish:
\begin{itemize}
\item $\mathcal{D}^{(k)}_{\text{dep}}$ --- the deployment task distribution at agent state $k$;
\item $\mathcal{D}^{(k)}_{\text{prod}}$ --- the base task distribution represented by the evaluation design, from which failure and retained distributions are formed using their stated weights.
\end{itemize}

Expected reward under the two task distributions:
\[
J^{(k)}_{\text{dep}}(A) := \mathbb{E}_{t \sim \mathcal{D}^{(k)}_{\text{dep}}}[V(A, t)], \quad J^{(k)}_{\text{prod}}(A) := \mathbb{E}_{t \sim \mathcal{D}^{(k)}_{\text{prod}}}[V(A, t)].
\]

Track deployment expected reward as $X_k:=J^{(k)}_{\mathrm{dep}}(A_k)$.

\begin{definition}[Evaluation--deployment distribution discrepancy]
\label{def:observation-lag-drift}
At step $k$, let $\mathcal C_k^{\mathrm{step}}$ be a class containing the current agent and every possible adopted successor in the joint experiment, and define
\[
\eta^{\mathrm{obs}}_k := \sup_{A\in\mathcal C_k^{\mathrm{step}}} \bigl| J^{(k)}_{\mathrm{dep}}(A) - J^{(k)}_{\mathrm{prod}}(A) \bigr|.
\]
\end{definition}

The inter-step drift of Definition~\ref{def:distribution-drift} compares task distributions at adjacent steps; $\eta_k^{\mathrm{obs}}$ compares the evaluation and deployment distributions at the same step. Agent response in Appendix~\ref{app:agent-responsive} is one possible source of distribution change.

Applying Lemma~\ref{lem:distribution-drift} to each agent and taking the supremum gives
\[
\eta^{\text{obs}}_k \leq \mathrm{TV}(\mathcal{D}^{(k)}_{\text{dep}}, \mathcal{D}^{(k)}_{\text{prod}}).
\]
This bound avoids enumerating agents. Appendix~\ref{app:observation-lag-estimation} estimates the probabilities of a declared finite partition and bounds $\eta_k^{\mathrm{obs}}$ under the uniform reward-resolution condition of Lemma~\ref{lem:partition-resolution}.

\subsubsection{One-step expected-reward bound}
\label{app:section-G-15-2}

\begin{lemma}
\label{lem:observation-lag-pursuit}
Under Assumption~\ref{ass:bounded-reward}, let $\mathcal D_{\mathrm{prod}}^{(k)}$ be fixed before evaluating the candidate agent, let $\mathsf{Acc}_k\subseteq\mathsf{Cand}_k\cap\mathsf{Ready}_k$ be the step-$k$ acceptance event of Theorem~\ref{thm:one-step-guarantee} evaluated on that measure. All probabilities and expectations below are under the same joint experiment, including generation, evaluation, updating, and any history-dependent task distributions. Let $I_k:=J_{\mathrm{prod}}^{(k)}(A_{k+1})-J_{\mathrm{prod}}^{(k)}(A_k)$ be improvement on the evaluation distribution and $\Delta_k$ the pre-acceptance quantity Eq.~\eqref{eq:formula-4-28}, set to zero outside $\mathsf{Cand}_k\cap\mathsf{Ready}_k$, and suppose
\[
\mathbb P(\mathsf{Acc}_k\cap\{I_k<\Delta_k\})\le\beta_{\mathrm{step},k},
\qquad
\chi_k:=\mathbb E[\mathbf1\{\mathsf{Acc}_k\}\Delta_k].
\]
If $\mathrm{TV}(\mathcal D_{\mathrm{dep}}^{(k+1)},\mathcal D_{\mathrm{dep}}^{(k)})\le\eta_k^{\mathrm{step}}$ almost surely, with nonnegative integrable $\eta_k^{\mathrm{step}}$, then
\begin{equation}
\mathbb E[X_{k+1}-X_k]\ge\chi_k-2\beta_{\mathrm{step},k}-\mathbb E[\eta_k^{\mathrm{step}}]-2\mathbb E[\eta_k^{\mathrm{obs}}].
\label{eq:formula-G-24}
\end{equation}
\end{lemma}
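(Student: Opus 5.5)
The plan is a telescoping decomposition of the deployment increment into a same-step evaluation-distribution increment, a deployment-drift term, and two evaluation--deployment mismatch terms, followed by a split on the bad-accept event. Write
\[
X_{k+1}-X_k=\bigl[J^{(k+1)}_{\mathrm{dep}}(A_{k+1})-J^{(k)}_{\mathrm{dep}}(A_{k+1})\bigr]
+\bigl[J^{(k)}_{\mathrm{dep}}(A_{k+1})-J^{(k)}_{\mathrm{prod}}(A_{k+1})\bigr]
+I_k
+\bigl[J^{(k)}_{\mathrm{prod}}(A_k)-J^{(k)}_{\mathrm{dep}}(A_k)\bigr].
\]
The first bracket is handled by Lemma~\ref{lem:distribution-drift} applied to the fixed agent $A_{k+1}$ with the deployment distributions at steps $k$ and $k+1$. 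This bounds it below by $-\eta_k^{\mathrm{step}}$ almost surely. The second and fourth brackets compare evaluation and deployment at the same step for the agents $A_{k+1}$ and $A_k$. Both agents lie in $\mathcal C_k^{\mathrm{step}}$, since $A_{k+1}$ is either $A_k$ or an adopted successor. Definition~\ref{def:observation-lag-drift} therefore bounds each bracket below by $-\eta_k^{\mathrm{obs}}$, and together they contribute $-2\eta_k^{\mathrm{obs}}$.

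It remains to lower-bound $\mathbb E[I_k]$ by $\chi_k-2\beta_{\mathrm{step},k}$. On $\mathsf{Acc}_k^c$ we have $A_{k+1}=A_k$, so $I_k=0$ by Definition~\ref{def:evolution-step}. Hence $I_k=\mathbf 1\{\mathsf{Acc}_k\}I_k$, which we split on $\mathcal B_k:=\mathsf{Acc}_k\cap\{I_k<\Delta_k\}$. On $\mathsf{Acc}_k\setminus\mathcal B_k$ we have $I_k\ge\Delta_k$. On $\mathcal B_k$, bounded rewards give $I_k\ge-1$, while $\Delta_k\le\tau\le1$ (or, more crudely, $\Delta_k\le 1$ because $\Delta_k<\tau\le\widehat L_k\le\widehat Z_k\le1$ by Lemma~\ref{lem:observed-improvement-bound}). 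This gives $I_k\ge\Delta_k-2$ on $\mathcal B_k$. Combining the two cases,
\[
\mathbf 1\{\mathsf{Acc}_k\}I_k\ge\mathbf 1\{\mathsf{Acc}_k\}\Delta_k-2\cdot\mathbf 1_{\mathcal B_k}.
\]
Taking expectations and using $\mathbb P(\mathcal B_k)\le\beta_{\mathrm{step},k}$ yields $\mathbb E[I_k]\ge\chi_k-2\beta_{\mathrm{step},k}$.

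Finally, I will take expectations of the four-term decomposition. The drift and mismatch bounds hold pathwise. Integrability holds because every quantity is bounded in $[-1,1]$ except $\eta_k^{\mathrm{step}}$, which is assumed integrable, and $\eta_k^{\mathrm{obs}}\le 1$. Summing the pieces gives Eq.~\eqref{eq:formula-G-24}.

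I expect the main obstacle to be measurability and the validity of pathwise comparisons when distributions and agents are random and history-dependent. Lemma~\ref{lem:distribution-drift} must be applied conditionally on the realized $A_{k+1}$ and the realized distributions, which requires measurable kernels so that the expectations are defined. I also need to check that the supremum defining $\eta_k^{\mathrm{obs}}$ genuinely covers the realized successor, and that $\Delta_k\le1$ on acceptance; for the latter I will use Lemma~\ref{lem:observed-improvement-bound} as above.
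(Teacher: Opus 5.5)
Your proposal is correct and follows essentially the same route as the paper. The paper uses the same four-term decomposition: the deployment-drift term is bounded by Lemma~\ref{lem:distribution-drift} at the fixed agent $A_{k+1}$, and the two same-step mismatch terms are bounded by Definition~\ref{def:observation-lag-drift}. It then makes the same split on $\mathcal B_k=\mathsf{Acc}_k\cap\{I_k<\Delta_k\}$, using $I_k\ge-1$ and $\Delta_k\le\tau\le\widehat L_k\le1$ on acceptance to get $I_k\ge\mathbf 1\{\mathsf{Acc}_k\}\Delta_k-2\,\mathbf 1_{\mathcal B_k}$. Your first phrasing "$\tau\le1$" holds only on acceptance; the acceptance-based justification you give right after it is the correct one, and it is the one the paper uses.
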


After separating inter-step distribution movement, the comparison between evaluation and deployment distributions is made once at $A_k$ and once at $A_{k+1}$. Each contributes at most $\eta_k^{\mathrm{obs}}$, giving the factor two. Deterministic drift bounds are a special case of Eq.~\eqref{eq:formula-G-24}.

\subsubsection{Cumulative bound under the two task distributions}
\label{app:section-G-15-3}

\begin{theorem}
\label{thm:observation-lag-cumulative}
Suppose the hypotheses of Lemma~\ref{lem:observation-lag-pursuit} hold at every step under consideration, and define
\begin{equation}
\nu_k:=\chi_k-2\beta_{\mathrm{step},k}-\mathbb E[\eta_k^{\mathrm{step}}]-2\mathbb E[\eta_k^{\mathrm{obs}}].
\label{eq:formula-G-25}
\end{equation}
Then, for any fixed integers $s\ge0$ and $K\ge1$,
\begin{equation}
\sum_{k=s}^{s+K-1}\nu_k
\le\mathbb E[X_{s+K}]-\mathbb E[X_s].
\label{eq:formula-G-26}
\end{equation}
\end{theorem}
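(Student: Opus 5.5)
The plan is to sum the one-step inequality of Lemma~\ref{lem:observation-lag-pursuit} over the window and let the left side telescope. The lemma is an in-expectation statement under the single joint experiment, so no union bound or cross-step independence is needed; linearity of expectation suffices. First I will check that every term is well defined. Since $X_k=J^{(k)}_{\mathrm{dep}}(A_k)\in[0,1]$ by Assumption~\ref{ass:bounded-reward}, each $\mathbb E[X_k]$ is finite. The quantities $\chi_k$, $\mathbb E[\eta_k^{\mathrm{step}}]$ and $\mathbb E[\eta_k^{\mathrm{obs}}]$ are finite as well: $\Delta_k\mathbf 1\{\mathsf{Acc}_k\}$ is bounded because $\Delta_k$ is built from bounded estimates and finite radii, $\eta_k^{\mathrm{step}}$ is integrable by hypothesis, and $\eta_k^{\mathrm{obs}}\le1$. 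Hence each $\nu_k$ in Eq.~\eqref{eq:formula-G-25} is a finite real number, and Eq.~\eqref{eq:formula-G-24} reads $\nu_k\le\mathbb E[X_{k+1}]-\mathbb E[X_k]$ for every $k$ with $s\le k\le s+K-1$.

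Second, I will sum these $K$ inequalities. By linearity, the right side is $\sum_{k=s}^{s+K-1}(\mathbb E[X_{k+1}]-\mathbb E[X_k])$, which telescopes to $\mathbb E[X_{s+K}]-\mathbb E[X_s]$. This gives Eq.~\eqref{eq:formula-G-26}. The indices $s$ and $K$ are fixed integers, so the sum is finite and no limit interchange arises.

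The only point needing care is the hypothesis. The theorem assumes Lemma~\ref{lem:observation-lag-pursuit} at each step under the same joint experiment, including history-dependent distributions, and I will verify that this is what makes all the expectations refer to one probability space. If Lemma~\ref{lem:observation-lag-pursuit} itself must be justified, I would decompose
\[
X_{k+1}-X_k=\bigl[J^{(k+1)}_{\mathrm{dep}}(A_{k+1})-J^{(k)}_{\mathrm{dep}}(A_{k+1})\bigr]+\bigl[J^{(k)}_{\mathrm{dep}}(A_{k+1})-J^{(k)}_{\mathrm{prod}}(A_{k+1})\bigr]+I_k+\bigl[J^{(k)}_{\mathrm{prod}}(A_k)-J^{(k)}_{\mathrm{dep}}(A_k)\bigr].
\]
Lemma~\ref{lem:distribution-drift} bounds the first bracket below by $-\eta_k^{\mathrm{step}}$, and Definition~\ref{def:observation-lag-drift} bounds each of the other two brackets below by $-\eta_k^{\mathrm{obs}}$. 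For $I_k$, we have $I_k\ge\mathbf 1\{\mathsf{Acc}_k\}\Delta_k$ off the bad event, $I_k=0$ off acceptance, and on the bad event, which has probability at most $\beta_{\mathrm{step},k}$, the deficit is at most $2$. These bounds give the $-2\beta_{\mathrm{step},k}$ term. For the telescoping step itself, no obstacle is expected.
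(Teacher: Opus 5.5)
Your proposal is correct and follows the paper's proof: apply Lemma~\ref{lem:observation-lag-pursuit} at each $k=s,\ldots,s+K-1$ to get $\mathbb E[X_{k+1}-X_k]\ge\nu_k$, then sum and telescope by linearity of expectation; no cross-step independence is needed. Your sketch of the lemma also matches the paper's four-term decomposition, but tighten one justification: $\mathbf 1\{\mathsf{Acc}_k\}\Delta_k$ is bounded because $0<\Delta_k\le\tau\le\widehat L_k\le1$ on acceptance, not because of ``bounded estimates and finite radii,'' and this same bound is what caps the deficit on $\mathcal B_k$ at $2$.
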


The term $\chi_k$ is the expectation of the accepted-step contribution $\mathbf1\{\mathsf{Acc}_k\}\Delta_k$, which is zero on nonaccepted steps. If $\Delta_k\ge\gamma$ on $\mathsf{Acc}_k$, then $\chi_k\ge\gamma\,\mathbb P(\mathsf{Acc}_k)$ under the same joint experiment.

Appendix~\ref{app:observation-lag-estimation} gives a partition-based estimate of $\eta_k^{\mathrm{obs}}$ under its stated resolution and coverage conditions.

\begin{proof}[Proof of Lemma~\ref{lem:observation-lag-pursuit}.]
Write the tracked increment as
\[
\begin{aligned}
X_{k+1}-X_k
={}&\underbrace{J_{\mathrm{prod}}^{(k)}(A_{k+1})-J_{\mathrm{prod}}^{(k)}(A_k)}_{\mathrm{(I)}=I_k}\\
&+\underbrace{J_{\mathrm{dep}}^{(k+1)}(A_{k+1})-J_{\mathrm{dep}}^{(k)}(A_{k+1})}_{\mathrm{(II)}}\\
&+\underbrace{J_{\mathrm{dep}}^{(k)}(A_{k+1})-J_{\mathrm{prod}}^{(k)}(A_{k+1})}_{\mathrm{(III)}}
+\underbrace{J_{\mathrm{prod}}^{(k)}(A_k)-J_{\mathrm{dep}}^{(k)}(A_k)}_{\mathrm{(IV)}}.
\end{aligned}
\]
Let $\mathcal B_k:=\mathsf{Acc}_k\cap\{I_k<\Delta_k\}$. On $\mathcal B_k^c$, Definition~\ref{def:evolution-step} and the accepted-step bound give $I_k\ge\mathbf1\{\mathsf{Acc}_k\}\Delta_k$. On $\mathcal B_k$, bounded rewards give $I_k\ge-1$. The positive-gain check gives $\Delta_k>0$, its definition Eq.~\eqref{eq:formula-4-28} gives $\Delta_k\le\tau$, and $\widehat L_k\ge\tau$ in Eq.~\eqref{eq:formula-4-23} gives $\tau\le\widehat L_k\le1$ on acceptance. Thus $\Delta_k\le1$ there, and
\[
I_k\ge\mathbf1\{\mathsf{Acc}_k\}\Delta_k-2\mathbf1\{\mathcal B_k\},
\qquad
\mathbb E[I_k]\ge\chi_k-2\beta_{\mathrm{step},k}.
\]
Lemma~\ref{lem:distribution-drift} bounds term (II) by $-\eta_k^{\mathrm{step}}$, and Definition~\ref{def:observation-lag-drift} bounds terms (III) and (IV) together by $-2\eta_k^{\mathrm{obs}}$. Thus, pathwise,
\[
X_{k+1}-X_k\ge\mathbf1\{\mathsf{Acc}_k\}\Delta_k
-2\mathbf1\{\mathcal B_k\}-\eta_k^{\mathrm{step}}-2\eta_k^{\mathrm{obs}}.
\]
Taking expectations under the same joint distribution, with nonnegative integrable drift bounds, gives
\[
\mathbb E[X_{k+1}-X_k]\ge\chi_k-2\beta_{\mathrm{step},k}
-\mathbb E[\eta_k^{\mathrm{step}}]-2\mathbb E[\eta_k^{\mathrm{obs}}],
\]
which is Eq.~\eqref{eq:formula-G-24}. \end{proof}

\begin{proof}[Proof of Theorem~\ref{thm:observation-lag-cumulative}.] By Lemma~\ref{lem:observation-lag-pursuit}, for every $k$,
\[
\mathbb E[X_{k+1}-X_k]\ge\nu_k.
\]
Summing this inequality over $k=s,\ldots,s+K-1$ and using linearity of expectation gives
\[
\sum_{k=s}^{s+K-1}\nu_k
\le\mathbb E\!\left[\sum_{k=s}^{s+K-1}(X_{k+1}-X_k)\right]
=\mathbb E[X_{s+K}]-\mathbb E[X_s].
\]
This proves Eq.~\eqref{eq:formula-G-26}. \end{proof}

\subsection{Estimating the evaluation--deployment discrepancy}\label{app:observation-lag-estimation}

A finite partition estimates differences between the evaluation and deployment task distributions at a declared resolution. Lemma~\ref{lem:partition-resolution} states when these differences control expected reward uniformly over the agent class of Definition~\ref{def:observation-lag-drift}.

Fix in advance a finite measurable partition $\mathcal F=\{E_1,\ldots,E_m\}$ of $\mathcal T_{\mathrm{user}}$. For each source $\bullet\in\{P,D\}$, observe $n_\bullet$ i.i.d. tasks from $\mathcal D_{\mathrm{prod}}^{(k)}$ or $\mathcal D_{\mathrm{dep}}^{(k)}$, respectively, and record their cell frequencies. Independence between the two sources is not required.

\textbf{Estimator.} The $\mathcal{F}$-restricted empirical total variation:
\begin{equation}
\widehat{\mathrm{TV}}^{(k)}_{\mathcal{F}} := \tfrac{1}{2}\sum_{i=1}^{m}\Bigl|\,\widehat{\mathcal{D}}^{(k)}_{\text{dep}}(E_i) - \widehat{\mathcal{D}}^{(k)}_{\text{prod}}(E_i)\,\Bigr|,
\label{eq:formula-G-16}
\end{equation}
where $\widehat{\mathcal{D}}(E_i)$ are the empirical frequencies on the two samples.

\begin{lemma}
\label{lem:partition-drift-estimation}
Under this sampling scheme, let $\mathrm{TV}_{\mathcal F}:=\tfrac12\sum_i|\mathcal D_{\mathrm{dep}}^{(k)}(E_i)-\mathcal D_{\mathrm{prod}}^{(k)}(E_i)|$. With probability at least $1-\beta$,
\begin{equation}
\bigl|\widehat{\mathrm{TV}}^{(k)}_{\mathcal{F}} - \mathrm{TV}_{\mathcal{F}}\bigr| \;\le\; \varepsilon^{\text{obs}}(\beta) := \frac{1}{2}\sum_{\bullet\in\{P,D\}}\left(\sqrt{\frac{m}{n_\bullet}}+\sqrt{\frac{2\log(2/\beta)}{n_\bullet}}\right).
\label{eq:formula-G-17}
\end{equation}
\end{lemma}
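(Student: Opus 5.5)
The plan is to treat the two error sources separately and combine them with a triangle inequality. Write \(p_i:=\mathcal D_{\mathrm{prod}}^{(k)}(E_i)\) and \(q_i:=\mathcal D_{\mathrm{dep}}^{(k)}(E_i)\), with empirical counterparts \(\hat p_i,\hat q_i\). By the reverse triangle inequality applied cellwise,
\[
\bigl|\widehat{\mathrm{TV}}^{(k)}_{\mathcal F}-\mathrm{TV}_{\mathcal F}\bigr|
\le\tfrac12\sum_i|\hat q_i-q_i|+\tfrac12\sum_i|\hat p_i-p_i|
=\tfrac12\bigl(\|\hat q-q\|_1+\|\hat p-p\|_1\bigr).
\]
This bound holds pathwise, so no independence between the two samples is needed. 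It then suffices to show that each empirical multinomial \(\ell_1\) deviation satisfies \(\|\hat p-p\|_1\le\sqrt{m/n_P}+\sqrt{2\log(2/\beta)/n_P}\) with probability at least \(1-\beta/2\), and likewise for the deployment sample. A union bound over the two sources then gives total failure probability \(\beta\) and exactly the displayed \(\varepsilon^{\mathrm{obs}}(\beta)\).

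For a single source with \(n\) i.i.d. draws, I would first bound the mean. By Jensen and the cellwise variance \(p_i(1-p_i)/n\),
\[
\mathbb E\|\hat p-p\|_1\le\sum_i\sqrt{p_i(1-p_i)/n}\le\sqrt{m/n},
\]
where the last step uses Cauchy--Schwarz with \(\sum_i p_i\le1\).

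Next comes concentration. The map \((X_1,\dots,X_n)\mapsto\|\hat p-p\|_1\) has bounded differences \(2/n\), because changing one draw moves two cell frequencies by \(1/n\) each. McDiarmid's inequality therefore gives
\[
\mathbb P\bigl(\|\hat p-p\|_1-\mathbb E\|\hat p-p\|_1\ge s\bigr)\le\exp\!\bigl(-ns^2/2\bigr).
\]
Setting the right-hand side to \(\beta/2\) yields \(s=\sqrt{2\log(2/\beta)/n}\), which matches the stated radius.

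The only delicate point is the accounting of constants. I will use a one-sided McDiarmid bound per source with failure probability \(\beta/2\) each, rather than a two-sided bound, since only upper deviations of the \(\ell_1\) error are needed. I will also check that the \(\tfrac12\) prefactor in \(\varepsilon^{\mathrm{obs}}\) matches the \(\tfrac12\) in the TV definition, so that no extra factor appears. Everything else is routine.
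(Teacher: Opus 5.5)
Your proposal is correct and essentially matches the paper's proof. Both arguments reduce pathwise, by the triangle inequality, to $\tfrac12(\|\hat p^{P}-p^{P}\|_1+\|\hat p^{D}-p^{D}\|_1)$, bound each mean by $\sqrt{m/n_\bullet}$ via Cauchy--Schwarz, apply one-sided McDiarmid with bounded differences $2/n_\bullet$ at level $\beta/2$ per source, and finish with a union bound.
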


Under the corresponding condition of Lemma~\ref{lem:partition-resolution}, define
\begin{equation}
\widehat{\eta}^{\text{obs}}_k := \widehat{\mathrm{TV}}^{(k)}_{\mathcal{F}} + \varepsilon^{\text{obs}}(\beta)+\zeta,
\label{eq:formula-G-18}
\end{equation}
with $\zeta=0$ in case (\ref{part:constant-cell-resolution}). On the coverage event of Lemma~\ref{lem:partition-drift-estimation}, $\eta_k^{\mathrm{obs}}\le\widehat\eta_k^{\mathrm{obs}}$.

\begin{lemma}
\label{lem:partition-resolution}
Let $\mathcal F$ be the partition above and let the agent class be the one used in Definition~\ref{def:observation-lag-drift}.
\begin{resultparts}
\item\label{part:constant-cell-resolution} If $V(A,\cdot)$ is constant on every cell for every agent $A$ in this class, then $\eta_k^{\mathrm{obs}}\le\mathrm{TV}_{\mathcal F}$.
\item\label{part:bounded-cell-resolution} If $\sup_{t,t'\in E_i}|V(A,t)-V(A,t')|\le\zeta$ for every agent $A$ in the class and every cell $E_i$, then $\eta_k^{\mathrm{obs}}\le\mathrm{TV}_{\mathcal F}+\zeta$.
\end{resultparts}
\end{lemma}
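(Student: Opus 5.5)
Lemma~\ref{lem:partition-resolution} reduces to a pointwise comparison for a single agent followed by a supremum over the class. Fix $A$ in $\mathcal C_k^{\mathrm{step}}$ and write $P:=\mathcal D^{(k)}_{\mathrm{prod}}$, $Q:=\mathcal D^{(k)}_{\mathrm{dep}}$, $f:=V(A,\cdot)\in[0,1]$. Since the cells $E_i$ partition $\mathcal T_{\mathrm{user}}$, we split the integral cellwise:
\[
J^{(k)}_{\mathrm{dep}}(A)-J^{(k)}_{\mathrm{prod}}(A)=\sum_{i=1}^m\int_{E_i}f\,d(Q-P).
\]
In each cell we subtract a representative constant $c_i:=f(t_i)$ for some $t_i\in E_i$ (or $c_i:=\inf_{E_i}f$; nonempty cells suffice, and empty cells contribute zero). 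This gives
\[
\sum_i c_i\bigl(Q(E_i)-P(E_i)\bigr)+\sum_i\int_{E_i}(f-c_i)\,d(Q-P).
\]

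For the first term we use that the coarsened signed measure on the cells has total mass zero. Hence we may shift all $c_i$ by $1/2$ without changing the sum. Since $c_i-1/2\in[-1/2,1/2]$, the term is at most $\tfrac12\sum_i|Q(E_i)-P(E_i)|=\mathrm{TV}_{\mathcal F}$. This is the same oscillation argument as in Lemma~\ref{lem:distribution-drift}, applied to the cell-constant function; its oscillation is at most one because $f\in[0,1]$.

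In case (\ref{part:constant-cell-resolution}), $f-c_i\equiv0$ on $E_i$, so the second term vanishes and $|J_{\mathrm{dep}}-J_{\mathrm{prod}}|\le\mathrm{TV}_{\mathcal F}$.

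In case (\ref{part:bounded-cell-resolution}), the second term needs more care, and it is the main obstacle. The naive bound $|f-c_i|\le\zeta$ on $E_i$ together with $|Q-P|(E_i)\le Q(E_i)+P(E_i)$ yields $2\zeta$, not $\zeta$. To recover $\zeta$, I would not bound the two measures jointly. Instead I would write $\int_{E_i}(f-c_i)\,dQ-\int_{E_i}(f-c_i)\,dP$ and choose $c_i$ as the midpoint $(\sup_{E_i}f+\inf_{E_i}f)/2$. Then $|f-c_i|\le\zeta/2$ on $E_i$, each integral is bounded by $(\zeta/2)Q(E_i)$ and $(\zeta/2)P(E_i)$ respectively, and summing over $i$ gives $\zeta/2+\zeta/2=\zeta$. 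With the midpoint choice we still have $c_i\in[0,1]$, so the first-term bound is unaffected.

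Taking the absolute value and the supremum over $A\in\mathcal C_k^{\mathrm{step}}$ in Definition~\ref{def:observation-lag-drift} then gives $\eta_k^{\mathrm{obs}}\le\mathrm{TV}_{\mathcal F}+\zeta$, and the bound with $\zeta=0$ in case (\ref{part:constant-cell-resolution}).
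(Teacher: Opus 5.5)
Your proposal is correct and follows essentially the paper's argument: split $V(A,\cdot)$ into a cell-constant part, bounded by $\mathrm{TV}_{\mathcal F}$ using zero total mass and values in $[0,1]$, plus a within-cell residual bounded by $\zeta$, then take the supremum over the class. The only difference is cosmetic. The paper takes the cell infimum, so the residual lies in $[0,\zeta]$ and both expectations of it lie in $[0,\zeta]$, which already gives $\zeta$ without your midpoint centering; the $2\zeta$ obstacle you anticipate therefore does not arise.
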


\begin{proof}[Proof of Lemma~\ref{lem:partition-resolution}.] Fix an agent $A$ in the stated class and put $f(t)=V(A,t)$. On each nonempty cell $E_i$, let $a_i=\inf_{t\in E_i}f(t)$, set $g(t)=a_i$, and write $h=f-g$. Bounded reward and the within-cell condition give $a_i\in[0,1]$ and $h(t)\in[0,\zeta]$; no infimum need be attained. Write $p_i=\mathcal D_{\mathrm{dep}}^{(k)}(E_i)$ and $q_i=\mathcal D_{\mathrm{prod}}^{(k)}(E_i)$. Since $\sum_i(p_i-q_i)=0$ and $a_i\in[0,1]$,
\[
\left|\sum_i a_i(p_i-q_i)\right|\le\tfrac12\sum_i|p_i-q_i|=\mathrm{TV}_{\mathcal F}.
\]
Both expectations of $h$ lie in $[0,\zeta]$, so their difference has absolute value at most $\zeta$. Combining the $g$ and $h$ parts and taking the supremum over the same agent class proves part~(\ref{part:bounded-cell-resolution}). In the constant-cell case $h=0$, giving part~(\ref{part:constant-cell-resolution}). \end{proof}

For dependent records, compare each source's entire record vector with its stated i.i.d. reference vector. If these joint-TV errors are bounded by $d_P,d_D$ and $\widetilde\beta:=\beta-d_P-d_D>0$, using $\widetilde\beta$ in Eq.~\eqref{eq:formula-G-17} gives total risk at most $\beta$.

Since $0\le\eta_k^{\mathrm{obs}}\le1$, coverage with failure probability at most $\beta$ also gives
$\mathbb E[\eta_k^{\mathrm{obs}}]\le\mathbb E[\min\{1,\widehat\eta_k^{\mathrm{obs}}\}]+\beta$
in the same joint experiment. Indeed, for the coverage event $\mathcal E$ and $B=\min\{1,\widehat\eta_k^{\mathrm{obs}}\}$, one has $\eta_k^{\mathrm{obs}}\le B$ on $\mathcal E$ and $\eta_k^{\mathrm{obs}}\le1$ always, so
$\mathbb E[\eta_k^{\mathrm{obs}}]\le\mathbb E[B\mathbf1_{\mathcal E}]+\mathbb P(\mathcal E^c)\le\mathbb E[B]+\beta$.

\begin{proof}[Proof of Lemma~\ref{lem:partition-drift-estimation}.] Write $p^\bullet$ for the true $\mathcal{F}$-marginal of source $\bullet$ and $\hat p^\bullet$ for its empirical counterpart. By the triangle inequality applied to the $\tfrac12\sum_i|\cdot|$ form,
\[
\bigl|\widehat{\mathrm{TV}}_{\mathcal{F}}-\mathrm{TV}_{\mathcal{F}}\bigr|\;\le\;\tfrac12\bigl(\Vert \hat p^{P}-p^{P}\Vert _1+\Vert \hat p^{D}-p^{D}\Vert _1\bigr),
\]
so it suffices to bound the $L^1$ error of each empirical multinomial. For the mean, Cauchy--Schwarz gives
\[
\mathbb{E}\Vert \hat p-p\Vert _1=\sum_i\mathbb{E}|\hat p_i-p_i|\le\sum_i\sqrt{\tfrac{p_i(1-p_i)}{n}}\le\tfrac{1}{\sqrt n}\sum_i\sqrt{p_i}\le\sqrt{\tfrac{m}{n}}.
\]
For the deviation, $\Vert \hat p-p\Vert _1$ is a function of the $n$ samples that changes by at most $2/n$ when one sample is altered (it moves two cells by $1/n$ each), so McDiarmid's inequality gives $\mathbb{P}\bigl[\Vert \hat p-p\Vert _1\ge\mathbb{E}\Vert \hat p-p\Vert _1+t\bigr]\le e^{-nt^2/2}$; setting this to $\beta/2$ yields $t=\sqrt{2\log(2/\beta)/n}$. A union bound over the two sources gives Eq.~\eqref{eq:formula-G-17} with probability $\ge1-\beta$. \end{proof}

\section{Reachability Estimation}\label{app:measurement}
\subsection{Uniform confidence bounds and diagnostic quantities}\label{app:proofs-measurement}\label{app:measurement-precision}
The following restates Theorem~\ref{thm:uniform-reachability-bounds}.

\begin{theoremrestatement}[Uniform confidence bounds for reachability]{thm:uniform-reachability-bounds}
Fix $(\mathcal H_k,\Xi_k=\xi)$ and draw $C_1,\ldots,C_{N_{\mathrm{meas}}}$ independently from $\pi_{k,\xi}$. In this conditional experiment, suppose all evaluation rectangles cover their pairs $(L_k(C_j),D_k(C_j))$ with simultaneous probability at least $1-\beta_{\mathrm{rect}}$. The target sets $\{(l,d)\in\mathbb R^2:l\ge\lambda,d\le\delta\}$, indexed by $(\lambda,\delta)$, form a VC class of constant dimension; let $r_{\mathrm{VC}}(N_{\mathrm{meas}},\beta_{\mathrm{emp}})$ be any valid uniform empirical-process radius for this class. Then, with probability at least $1-\beta_{\mathrm{rect}}-\beta_{\mathrm{emp}}$, the bounds Eq.~\eqref{eq:formula-6-8} hold simultaneously for every target $T$:

\[
\max\{0,\widehat P_{\mathrm{in}}(T)-r_{\mathrm{VC}}\}
\le P_k(T\mid\mathcal H_k,\xi)\le
\min\{1,\widehat P_{\mathrm{out}}(T)+r_{\mathrm{VC}}\}.
\]
\end{theoremrestatement}

\begin{proof}[Proof of Theorem~\ref{thm:uniform-reachability-bounds}.] Write $L_j=L_k(C_j)$ and $D_j=D_k(C_j)$. Let $I_{\mathrm{in}}(j,T):=\mathbf1\{\underline L_j\ge\lambda,\overline D_j\le\delta\}$ and $I_{\mathrm{out}}(j,T):=\mathbf1\{\overline L_j\ge\lambda,\underline D_j\le\delta\}$ denote the indicators averaged in Definition~\ref{def:measurement-indicators}. For a failed output, the true, inner, and outer indicators are all zero because its rectangle is $\{(0,0)\}$ and $\lambda>0$. On the simultaneous rectangle event, for every recorded output $j$ and every target $T$,
\[
I_{\mathrm{in}}(j,T)
\le \mathbf1\{L_j\ge\lambda,D_j\le\delta\}
\le I_{\mathrm{out}}(j,T).
\]
Consequently, on this one rectangle event, their sample averages satisfy
\[
\widehat P_{\mathrm{in}}(T)\le
\frac1{N_{\mathrm{meas}}}\sum_j\mathbf1\{L_j\ge\lambda,D_j\le\delta\}
\le\widehat P_{\mathrm{out}}(T)\qquad\text{for all }T.
\]
Conditional on $(\mathcal H_k,\Xi_k=\xi)$, the latent pairs $(L_j,D_j)$ are i.i.d.\ pushforwards of $C_j\sim\pi_{k,\xi}$. The target sets are lower-right orthants in $\mathbb R^2$, a VC class of constant dimension, so the empirical-process event gives
\[
\sup_T\left|\frac1{N_{\mathrm{meas}}}\sum_j
\mathbf1\{L_j\ge\lambda,D_j\le\delta\}-P_k(T\mid\mathcal H_k,\xi)\right|
\le r_{\mathrm{VC}}.
\]
On the intersection of these two events, the sample-average inequalities and uniform deviation give both bounds in Eq.~\eqref{eq:formula-6-8} for every $T$. A union bound allocates $\beta_{\mathrm{rect}}+\beta_{\mathrm{emp}}$; no union over targets is needed. \end{proof}

\paragraph{Precision.}
Before truncation to \([0,1]\), the generation-probability interval in Eq.~\eqref{eq:formula-6-8} has width
\[
\bigl(\widehat P_{\mathrm{out}}(T)-\widehat P_{\mathrm{in}}(T)\bigr)+2r_{\mathrm{VC}}.
\]
The first term is the fraction of all sampled terminal outputs whose rectangles leave qualification unresolved; it depends on evaluation precision and on the distribution of effects near the target boundary. The second is generation-sampling uncertainty. At fixed confidence a larger generation sample reduces the empirical radius, while requiring all evaluation intervals to contain their true effects with the stated probability can make these intervals wider.

Uniformity covers the specified targets, including targets selected after the data. The conditional and marginal scopes of task-stream measurement are stated in Proposition~\ref{prop:drawn-measurement-coverage} in Appendix~\ref{app:drawn-measurement}.

\begin{definition}
\label{def:certification-selection-probabilities}
For a candidate pool \(C_{1:N}\), conditional on \((\mathcal H_k,\xi)\), let \(G\) be a possibly pool-dependent validation rule, with \(G_i(T)=0\) when \(C_i=\bot\). The expected proportion of terminal outputs that are both qualified and validated is
\[
Q^G_{k,N}(T):=\mathbb E\!\left[\frac1N\sum_{i=1}^N
\mathbf 1\{C_i\in\mathcal P_k(T),G_i(T)=1\}\,\middle|\,\mathcal H_k,\xi\right].
\]
The probability that the pool contains at least one such modification is
\[
H^G_{k,N}(T):=\mathbb P\!\left[\exists i:C_i\in\mathcal P_k(T),G_i(T)=1\,\middle|\,\mathcal H_k,\xi\right].
\]
Write \(R^{G,\sigma}_{k,N}(T)\) for the conditional probability that selector \(\sigma\) selects a member of \(\mathcal P_k(T)\) from this pool, with the conventions of Definition~\ref{def:pool-selector}.
\end{definition}

For selectors restricted to validated modifications, $H-R$ measures the loss due to selection when a qualified validated modification is available. For conditionally i.i.d. generation, $P-Q$ is the expected fraction of all outputs that are qualified but not validated; it differs from the fraction whose qualification remains unresolved by the evaluation intervals. Shared evaluation data can make validation events dependent, so $Q$ alone does not determine $H$.

For one generated modification, write \(\rho_k^G(T):=\mathbb P[G(T)=1\mid\mathcal H_k,\xi]\) for its validation probability.

\begin{proposition}
\label{prop:certification-probability-bounds}
Fix \((\mathcal H_k,\Xi_k=\xi)\) and a target \(T\), and draw \(C\sim\pi_{k,\xi}\). Let \(G\) be the exact-target rule Eq.~\eqref{eq:formula-4-6} with \(N=1\). Suppose its evaluation rectangle covers \((L_k(C),D_k(C))\) with probability at least \(1-\beta_{\mathrm{val}}\) conditional on \((\mathcal H_k,\xi,C)\), and its total widths satisfy the pre-draw deterministic bounds Eq.~\eqref{eq:formula-4-5a} on every evaluation outcome, including any vacuous rectangle recorded on nonreturn. Let \(P_k^+\) be the probability assigned to the set in Eq.~\eqref{eq:formula-4-7} defined by those same bounds. Then
\begin{equation}
P_k^+(T\mid\mathcal H_k,\xi)-\beta_{\mathrm{val}}
\le \rho_k^G(T)\le
P_k(T\mid\mathcal H_k,\xi)+\beta_{\mathrm{val}}.
\label{eq:formula-6-1}
\end{equation}
\end{proposition}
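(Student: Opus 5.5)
The plan is a sandwich argument on the single-draw experiment, following the proof of Theorem~\ref{thm:certified-selection} with $N=1$ but with one difference. Coverage is assumed only conditionally on $C$, and here we want a bound on the marginal validation probability. So we subtract the coverage failure probability rather than multiply by $1-\beta_{\mathrm{stat}}$.

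Let $\mathcal E$ be the coverage event, on which $L_k(C)$ and $D_k(C)$ lie in their evaluation intervals. The hypothesis gives $\mathbb P(\mathcal E^c\mid\mathcal H_k,\xi,C)\le\beta_{\mathrm{val}}$, and averaging over $C\sim\pi_{k,\xi}$ gives $\mathbb P(\mathcal E^c\mid\mathcal H_k,\xi)\le\beta_{\mathrm{val}}$. A failed output $C=\bot$ is handled deterministically. Its rectangle is $\{(0,0)\}$, so coverage holds. Since $\lambda>0$, the output is neither qualified nor in $\mathcal P^+$. By convention $G(T)=0$ for it. Any vacuous rectangle recorded on nonreturn must also fail the test Eq.~\eqref{eq:formula-4-6}. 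This is where the width bounds Eq.~\eqref{eq:formula-4-5a} holding on every outcome enter: a vacuous rectangle has too large a lower error on $L$ to pass.

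\emph{Upper bound.} Repeating the first half of the proof of Theorem~\ref{thm:certified-selection}, on $\mathcal E$ the event $G(T)=1$ forces $L_k(C)\ge\widehat L-e^\ell_L\ge\lambda$ and $D_k(C)\le\widehat D+e^u_D\le\delta$, so $C\in\mathcal P_k(T)$. Hence
\[
\rho_k^G(T)\le\mathbb P(C\in\mathcal P_k(T))+\mathbb P(\mathcal E^c)\le P_k(T\mid\mathcal H_k,\xi)+\beta_{\mathrm{val}}.
\]

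\emph{Lower bound.} By the converse half of the same proof, on $\mathcal E$ every $C\in\mathcal P^+_{k,1,\mathbf n,\beta}(T)$ is validated, because the realized widths are dominated by $\bar w_L$ and $\bar w_D$. Therefore $\{C\in\mathcal P^+\}\cap\mathcal E\subseteq\{G(T)=1\}$. Subtracting the failure probability gives
\[
\rho_k^G(T)\ge\mathbb P(C\in\mathcal P^+)-\mathbb P(\mathcal E^c)\ge P_k^+(T\mid\mathcal H_k,\xi)-\beta_{\mathrm{val}}.
\]
The set $\mathcal P^+$ is fixed before the draw because its width bounds are deterministic. This justifies treating $\mathbb P(C\in\mathcal P^+)$ as $P_k^+$.

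The main obstacle is the careful handling of nonreturn paths. When evaluation data do not return, a recorded vacuous rectangle must still be counted within the width bounds and within the coverage event. If a vacuous rectangle violated the width bound, $\mathcal P^+$ members might fail to validate outside $\mathcal E^c$, and the lower bound would break. The hypothesis of the proposition, that the width bounds hold on every evaluation outcome including vacuous ones, is exactly what excludes this case. Everything else follows directly from Theorem~\ref{thm:certified-selection}'s inclusions.
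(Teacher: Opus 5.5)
Your argument is correct and is essentially the paper's proof: sandwich $\mathbf 1\{C\in\mathcal P^+_{k,1,\mathbf n,\beta}(T)\}\le G(T)\le\mathbf 1\{C\in\mathcal P_k(T)\}$ on the coverage event $\mathcal E$, then integrate over $C\sim\pi_{k,\xi}$. The paper instead uses $\mathbb P(\mathcal E\mid C)\ge1-\beta_{\mathrm{val}}$ to get the sharper intermediate bound $(1-\beta_{\mathrm{val}})P_k^+\le\rho_k^G(T)$ and then relaxes it, so your stated reason for subtracting rather than multiplying is backwards (conditional coverage is exactly what licenses the multiplicative form); likewise the width hypothesis matters only for the lower-bound inclusion, not for making vacuous rectangles fail the test---neither point affects validity.
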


\begin{proof}[Proof of Proposition~\ref{prop:certification-probability-bounds}.] Fix $(\mathcal H_k,\Xi_k=\xi)$ and suppress this conditioning. Let $\mathcal E$ be the event that the rectangle for $C$ covers $(L_k(C),D_k(C))$. On $\mathcal E$, the validation rule in Eq.~\eqref{eq:formula-4-6} validates only qualified modifications. The total-width bounds imply
\[
\underline L(C)\ge L_k(C)-\bar w_L,\qquad
\overline D(C)\le D_k(C)+\bar w_D.
\]
Hence every sampled member of the set in Eq.~\eqref{eq:formula-4-7} is validated on $\mathcal E$, giving
\[
\mathbf1\{C\in\mathcal P_k^+(T)\}\le G(T)
\le\mathbf1\{C\in\mathcal P_k(T)\}\qquad\text{on }\mathcal E.
\]
Using $\mathbb P(\mathcal E\mid C)\ge1-\beta_{\mathrm{val}}$ and integrating over $C$ yields
\[
(1-\beta_{\mathrm{val}})P_k^+
\le \rho_k^G
\le P_k+\beta_{\mathrm{val}}.
\]
Since $(1-\beta_{\mathrm{val}})P_k^+\ge P_k^+-\beta_{\mathrm{val}}$, Eq.~\eqref{eq:formula-6-1} follows. \end{proof}

A high validation probability lower-bounds reachability up to $\beta_{\mathrm{val}}$; a low value can reflect either low reachability or low probability of validating qualified modifications.

\subsection{Fixed-suite effects and rollout bounds}\label{app:section-I-3-1}\label{app:finite-look-measurement}\label{app:absolute-rollout-bound}\label{app:pooled-rollouts}\label{app:section-I-3-2}
\begin{definition}[Fixed evaluation suite]
\label{def:fixed-suite}
A fixed evaluation suite consists of a finite nonempty set $F^{\mathrm{ev}}\subseteq\mathcal{T}_{\text{user}}$ of failure tasks and a finite nonempty set $R^{\mathrm{ev}}\subseteq\mathcal{T}_{\text{user}}$ of retained tasks, both disjoint from $F_k^{\mathrm{gen}}$, the failure batch encoded into $t_{F_{k}^{\mathrm{gen}}}$ and used to generate the modification.
\end{definition}

\begin{definition}
\label{def:fixed-suite-target}
For a candidate agent $\tilde A=A_k\oplus\Delta C$ write
\begin{equation}
L^{\mathrm{ev}}(\tilde A) := \frac{1}{|F^{\mathrm{ev}}|}\sum_{t\in F^{\mathrm{ev}}}\mathrm{Adv}_{A_k}(\tilde A,t), \qquad D^{\mathrm{ev}}_R(\tilde A) := \frac{1}{|R^{\mathrm{ev}}|}\sum_{t\in R^{\mathrm{ev}}}\bigl|\mathrm{Adv}_{A_k}(\tilde A,t)\bigr|,
\label{eq:formula-6-2}
\end{equation}
and write $L^{\mathrm{ev}}(c),D_R^{\mathrm{ev}}(c)$ for these quantities at $\tilde A=A_k\oplus c$ when $c\in\mathcal M$. Set $L^{\mathrm{ev}}(\bot)=D_R^{\mathrm{ev}}(\bot)=0$. For a fixed-suite target $T^{\mathrm{ev}}=(\lambda^{\mathrm{ev}},\delta^{\mathrm{ev}})$ with $\lambda^{\mathrm{ev}}>0$ and $\delta^{\mathrm{ev}}\ge0$, let $\mathcal{P}^{\mathrm{ev}}_k(T^{\mathrm{ev}}) := \{c\in\mathcal M : L^{\mathrm{ev}}(c)\ge\lambda^{\mathrm{ev}} \wedge D^{\mathrm{ev}}_R(c)\le\delta^{\mathrm{ev}}\}$ and
\[
P^{\mathrm{ev}}_k(T^{\mathrm{ev}}\mid\mathcal H_k,\xi)
:=\pi_{k,\xi}(\mathcal{P}^{\mathrm{ev}}_k(T^{\mathrm{ev}})).
\]
\end{definition}

Write $w := |F^{\mathrm{ev}}|/(|F^{\mathrm{ev}}|+|R^{\mathrm{ev}}|)$ for the \textbf{failure fraction of the suite} --- a quantity chosen by the system designer --- and
\[
J^{\mathrm{ev}}(A) \;:=\; w\cdot\frac{1}{|F^{\mathrm{ev}}|}\sum_{t\in F^{\mathrm{ev}}}V(A,t) \;+\; (1-w)\cdot\frac{1}{|R^{\mathrm{ev}}|}\sum_{t\in R^{\mathrm{ev}}}V(A,t)
\]
for the agent's average reward on the fixed suite.

\begin{remark}\label{rem:suite-distribution-interpretation}
The two nonempty task sets are specified by the evaluation design; they may overlap. Their names do not require deterministic failure or success of the current agent on each task. If they overlap, $J^{\mathrm{ev}}$ is the average weighted by each task's group occurrences, not the average over their deduplicated union. The quantity $L^{\mathrm{ev}}$ is an unweighted failure-group mean, while $L=Z_k\mu_F$ includes the failure-rate weight. Each target and its reachability is defined using its respective effects.
\end{remark}
\begin{remark}\label{rem:adaptive-fixed-suite}
If one suite is reused for adoption across steps, later states and generation may depend on it. Conditional on the full history, context, suite and current generation material, Theorem~\ref{thm:uniform-reachability-bounds} applies to the suite-specific effects when conditional i.i.d. generation and simultaneous rectangle coverage hold. The resulting probability describes that realized state and suite.
\end{remark}
The suite effects in Definition~\ref{def:fixed-suite-target} also give the fixed-suite improvement guarantee of Theorem~\ref{thm:fixed-suite-guarantee} in Appendix~\ref{app:suite-distribution-comparison}.

\begin{assumption}[Cross-task rollout independence]\label{ass:cross-task-rollout-independence}
For each evaluated agent, the fresh rollouts used by the pooling and measurement constructions are mutually independent within and across tasks, conditional on that agent and the evaluation tasks.
\end{assumption}
The constructions below additionally use independent candidate and current-agent fresh-rollout samples. This is local to these rollout bounds; the per-task union-bound analysis of Lemma~\ref{lem:improvement-estimation} does not require cross-task independence.

\paragraph{Linear means and shared current-agent evaluation.}
Let each valid candidate use $m$ rollouts per task and the current agent use $m_0$. Under the stated independent-rollout design, Hoeffding applied to the two pooled means gives
\begin{equation}
\varepsilon_L^{\mathrm{ev}}
=\sqrt{\frac{\log(8/\beta_{\mathrm{cls}})}{2m|F^{\mathrm{ev}}|}}
+\sqrt{\frac{\log(8/\beta_{\mathrm{cls}})}{2m_0|F^{\mathrm{ev}}|}}.
\label{eq:formula-6-10}
\end{equation}
The candidate and current-agent pooled means each consist of independent bounded rewards, so their deviations are bounded by the two displayed summands. The triangle inequality bounds the error of their difference by their sum. The current-agent mean can be evaluated once and shared across candidates; its error must be included in each candidate's rectangle to ensure that all rectangles contain their corresponding true values with the stated probability.

\paragraph{Absolute differences.}
Write $\widehat\delta_j=\widehat V(\widetilde A,t_j)-\widehat V(A_k,t_j)$.
Let $\sigma_V^2$ bound both agents' per-task reward variances uniformly over all pairs evaluated by the design, and put
$v\ge |R^{\mathrm{ev}}|^{-1}\sum_j\operatorname{Var}(\widehat\delta_j)$.
Independence between the two agents' samples gives
\[
\operatorname{Var}(\widehat\delta_j)\le\sigma_V^2(1/m+1/m_0).
\]
Proposition~\ref{prop:rollout-jensen-bounds} and Cauchy--Schwarz yield
\[
0\le\mathbb E|\widehat\delta_j|-|\delta_j|
\le\sqrt{\operatorname{Var}(\widehat\delta_j)}
\le\sigma_V\sqrt{1/m+1/m_0}.
\]
Indeed, writing $\widehat\delta_j=\delta_j+(\widehat\delta_j-\delta_j)$, Jensen gives the nonnegative left-hand difference, while the triangle inequality and Cauchy--Schwarz bound it by $\mathbb E|\widehat\delta_j-\delta_j|\le\sqrt{\operatorname{Var}(\widehat\delta_j)}$. Moreover $\operatorname{Var}|X|\le\operatorname{Var}X$, since for an independent copy $X'$,
$\frac12\mathbb E(|X|-|X'|)^2\le\frac12\mathbb E(X-X')^2$.
Thus $v$ also bounds the average true variances of the absolute differences. Bernstein concentration for the independent fixed-task centered absolute differences bounds their random average around its expectation; adding the Jensen bias gives the valid radius
\begin{equation}
\varepsilon_D^{\mathrm{ev}}
=\sigma_V\sqrt{1/m+1/m_0}
+\sqrt{\frac{2v\log(8/\beta_{\mathrm{cls}})}{|R^{\mathrm{ev}}|}}
+\frac{7\log(8/\beta_{\mathrm{cls}})}{3|R^{\mathrm{ev}}|}.
\label{eq:formula-6-11}
\end{equation}
The usual bounded-variable Bernstein additive allowance $2\log(8/\beta_{\mathrm{cls}})/(3|R^{\mathrm{ev}}|)$ is no larger than the displayed allowance. The task set is fixed, so the concentration is over rollout noise around the average task-specific expectations. The per-task Jensen allowance decreases with the per-task rollout budgets; averaging more tasks alone does not reduce that allowance.

\paragraph{Finite evaluation levels.}
Fix rollout budgets $m_1<\cdots<m_K$ in advance and allocate $\beta_{\mathrm{cls}}/K$ to each. If $E_h$ is the coverage event at level $h$, then $\Pr(\bigcap_{h=1}^K E_h)\ge1-\sum_{h=1}^K\beta_{\mathrm{cls}}/K=1-\beta_{\mathrm{cls}}$. Any data-dependent choice of one of these levels, including stopping when its rectangle determines membership in a queried target, therefore retains coverage. In the displayed radii this allocation replaces $\log(8/\beta_{\mathrm{cls}})$ by $\log(8K/\beta_{\mathrm{cls}})$. Other targets retain the chosen rectangle's inner and outer indicators.

\subsection{Measurement from randomly sampled tasks}\label{app:drawn-measurement}
\begin{definition}
\label{def:drawn-measurement}
Fix a stored-sample size $n_R$ before drawing the candidate pool. After that pool has been frozen, collect a disjoint measurement sample $\{(t_j^R,\phi_j^R)\}_{j=1}^{n_R}$.

Its two label counts are $N_1:=\nobreak\sum_{j=1}^{n_R}\phi_j^R$ and $N_0:=n_R-N_1$.
The measurement sample is not used during generation and is never used for a validation decision. In the independent reference experiment, conditional on $(\mathcal H_k,\Xi_k=\xi)$ and the frozen candidate pool, the records are independent with $t_j^R\stackrel{\mathrm{iid}}{\sim}\mathcal D_{\mathrm{user}}$ and $\phi_j^R\mid t_j^R\sim\mathrm{Bernoulli}(\psi_{A_k}(t_j^R))$; the labels are conditionally independent across records. Write $(F^{\mathrm{dr}},R^{\mathrm{dr}})$ for its two subsets, where $F^{\mathrm{dr}}$ contains all $N_1$ records with $\phi^R_j=1$ and $R^{\mathrm{dr}}$ all $N_0$ records with $\phi^R_j=0$; thus $\widehat Z_k=N_1/n_R$. Conditional on the label vector, classification of valid modifications uses count-adaptive radii. If $N_1=0$ or $N_0=0$, the corresponding coordinate for a valid modification receives its full feasible interval --- $[-1,1]$ for $L$ and $[0,1]$ for $D$ --- and the procedure abstains from nonvacuous classification on that coordinate. Failed outputs retain their known singleton rectangles.
\end{definition}

The protocol fixes $n_R$ before sampling and uses every sampled record. Its content is drawn from the task stream; Definition~\ref{def:fixed-suite} instead fixes the suite's task content.

Fix $n_R$ before drawing the candidate pool, draw $C_1,\dots,C_{N_{\mathrm{meas}}}\stackrel{\mathrm{iid}}{\sim}\pi_{k,\xi}$ conditional on $(\mathcal H_k,\Xi_k=\xi)$, freeze that pool, and then evaluate its valid modifications on the disjoint measurement sample of Definition~\ref{def:drawn-measurement}. Failed outputs use the singleton rectangle $\{(0,0)\}$. For valid modifications, on $N_1,N_0>0$, use the count-adaptive radii
\begin{equation}
\varepsilon^{\mathrm{dr}}_L=\varepsilon_Z+(\hat Z_k+\varepsilon_Z)\bigl(\varepsilon^{\mathrm{pool}}_V+\varepsilon^{\mathrm{dr}}_F\bigr),
\qquad
\varepsilon^{\mathrm{dr}}_D=\varepsilon^{\mathrm{Jen}}_V+\varepsilon^{\mathrm{dr}}_R,
\label{eq:formula-6-13}
\end{equation}
where
\[
\varepsilon^{\mathrm{dr}}_F(N_1)=\sqrt{\frac{2\log(8/\beta_{\mathrm{cls}})}{N_1}},
\quad
\varepsilon^{\mathrm{dr}}_R(N_0)=\sqrt{\frac{\log(8/\beta_{\mathrm{cls}})}{2N_0}},
\quad
\varepsilon_Z=\sqrt{\frac{\log(8/\beta_{\mathrm{cls}})}{2n_R}}.
\]
Here $\varepsilon^{\mathrm{pool}}_V$ is the radius in Eq.~\eqref{eq:formula-6-10}, evaluated at the realized count, and $\varepsilon^{\mathrm{Jen}}_V=\sigma_V\sqrt{1/m+1/m_0}$ is the non-pooling Jensen bias of Eq.~\eqref{eq:formula-6-11}. If either count is zero, use a vacuous rectangle on the corresponding coordinate.

\begin{proposition}
\label{prop:drawn-measurement-coverage}
Under this measurement setup, if $N_{\mathrm{meas}}\beta_{\mathrm{cls}}\le\beta_{\mathrm{rect}}$, then:
\begin{enumerate}
\item when, conditional on $(\mathcal H_k,\Xi_k=\xi)$, the frozen pool is independent of the audit records and those records are independent draws from the task-and-label law of Definition~\ref{def:drawn-measurement}, the uniform confidence bounds
\begin{equation}
\max\{0,\widehat P_{\mathrm{in}}(T)-r_{\mathrm{VC}}\}
\le P_k(T\mid\mathcal H_k,\xi)\le
\min\{1,\widehat P_{\mathrm{out}}(T)+r_{\mathrm{VC}}\}
\qquad\text{for every }T
\label{eq:formula-6-14}
\end{equation}
hold conditionally on $(\mathcal H_k,\Xi_k=\xi)$ with probability at least $1-\beta_{\mathrm{rect}}-\beta_{\mathrm{emp}}$;
\item if the joint law of history, context, frozen pool, all audit records and their evaluation marks is within total variation $\beta_{\mathrm{mix}}$ of a reference experiment with the same pre-audit marginal and the independent audit law in the first item, then Eq.~\eqref{eq:formula-6-14}, with the random realized $(\mathcal H_k,\Xi_k)$ on its right-hand side, holds with joint marginal probability at least $1-\beta_{\mathrm{rect}}-\beta_{\mathrm{emp}}-\beta_{\mathrm{mix}}$.
\end{enumerate}
\end{proposition}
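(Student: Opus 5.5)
The plan is to reduce both items to Theorem~\ref{thm:uniform-reachability-bounds}. That theorem needs only two ingredients: conditionally i.i.d.\ generation, and a simultaneous rectangle-coverage event of probability at least $1-\beta_{\mathrm{rect}}$. Generation is i.i.d.\ by construction, since the pool is drawn from $\pi_{k,\xi}$ before the audit sample. The work therefore lies in establishing coverage for the count-adaptive rectangles of Eq.~\eqref{eq:formula-6-13}.

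\textbf{Step 1: per-modification coverage.} For item~1, I would condition on $(\mathcal H_k,\xi)$, the frozen pool, and the label vector $(\phi^R_j)_j$. In the independent reference experiment, given the labels, the tasks with $\phi=1$ are i.i.d.\ from $\mathcal D_{F,k}$ and those with $\phi=0$ are i.i.d.\ from $\mathcal D_{R,k}$. This is the same Bayes computation as in Proposition~\ref{prop:first-failure-sampling}(\ref{part:first-failure-distribution}), and it holds independently across the two groups given $N_1,N_0$.

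Fix one valid $C_j$ and assume $N_1,N_0>0$. The vacuous coordinates when a count is zero, and failed outputs, cover trivially. I would allocate $\beta_{\mathrm{cls}}/4$ to each of four events:
\begin{itemize}
\item $\hat Z_k$ concentration, via Hoeffding over the $n_R$ i.i.d.\ Bernoulli$(Z_k)$ labels, giving $\varepsilon_Z$;
\item the failure-group task mean of $\mathrm{Adv}$, via Hoeffding with range $2$ conditional on $N_1$, giving $\varepsilon^{\mathrm{dr}}_F(N_1)$;
\item the retained-group mean of $|\mathrm{Adv}|$, via Hoeffding with range $1$ conditional on $N_0$, giving $\varepsilon^{\mathrm{dr}}_R(N_0)$;
\item the rollout errors, namely the pooled Hoeffding radius $\varepsilon^{\mathrm{pool}}_V$ of Eq.~\eqref{eq:formula-6-10} for the $L$ coordinate and the Jensen-bias-plus-concentration term of Eq.~\eqref{eq:formula-6-11} for $D$, under Assumption~\ref{ass:cross-task-rollout-independence}.
\end{itemize}
The product decomposition in the proof of Lemma~\ref{lem:improvement-estimation} then gives $|\hat L-L_k(C_j)|\le\varepsilon^{\mathrm{dr}}_L$.

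For $D$, I would estimate $D_R$ directly by the retained-group mean, so no division by $1-\hat Z_k$ is needed. The radius is the rollout term plus $\varepsilon^{\mathrm{dr}}_R$.

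\textbf{Main obstacle.} Matching $\varepsilon^{\mathrm{dr}}_D=\varepsilon^{\mathrm{Jen}}_V+\varepsilon^{\mathrm{dr}}_R$ exactly is the step I expect to be hardest. The rollout fluctuation term of Eq.~\eqref{eq:formula-6-11} has to be absorbed, and I would try to do this by applying Hoeffding jointly to the task-and-rollout variables $|\hat\delta_j|\in[0,1]$. These are i.i.d.\ across retained records given $N_0$, with mean $\mathbb E|\hat\delta|$, which lies within $\varepsilon^{\mathrm{Jen}}_V$ of $D_R$ by Proposition~\ref{prop:rollout-jensen-bounds}. One Hoeffding at radius $\varepsilon^{\mathrm{dr}}_R$ therefore covers both sources of fluctuation, and only the Jensen bias is added. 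Integrating the conditional bounds over the labels gives unconditional per-modification failure probability at most $\beta_{\mathrm{cls}}$.

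\textbf{Step 2: union bound and reduction.} A union bound over the at most $N_{\mathrm{meas}}$ valid modifications gives simultaneous coverage failure at most $N_{\mathrm{meas}}\beta_{\mathrm{cls}}\le\beta_{\mathrm{rect}}$. The $\hat Z_k$ event is shared across modifications, so counting it separately for each one only overcounts. Theorem~\ref{thm:uniform-reachability-bounds} then yields item~1 with probability at least $1-\beta_{\mathrm{rect}}-\beta_{\mathrm{emp}}$. Independence of the audit sample from the pool is essential here, because it makes the conditional statements valid for the frozen $C_j$.

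\textbf{Step 3: item~2.} First average the conditional guarantee of item~1 under the reference law over the common pre-audit marginal, yielding a joint failure probability of at most $\beta_{\mathrm{rect}}+\beta_{\mathrm{emp}}$. The event in Eq.~\eqref{eq:formula-6-14} is a measurable function of the history, context, pool, audit records, and evaluation marks. Transferring it to the actual law by the total-variation bound costs at most $\beta_{\mathrm{mix}}$, exactly as in Corollary~\ref{cor:mixing-certified-selection}.
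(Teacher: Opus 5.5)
Your route is the paper's. You condition on the label vector so the two groups are i.i.d.\ from $\mathcal D_{F,k}$ and $\mathcal D_{R,k}$, and you split each rectangle's risk into four $\beta_{\mathrm{cls}}/4$ events: $\widehat Z_k$, the failure-group task mean, the pooled rollout difference, and the retained absolute-difference mean. You handle $D$ by one Hoeffding bound on the i.i.d.\ variables $|\hat\delta_j|$ plus the Jensen bias, which is exactly the paper's step (b). You then take a union over the $N_{\mathrm{meas}}$ outputs, apply Theorem~\ref{thm:uniform-reachability-bounds}, and transfer the joint event once through the total-variation bound.

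One accounting step does not close as written: the claim that the $L$-coordinate rollout error fits in a single $\beta_{\mathrm{cls}}/4$ event at radius $\varepsilon^{\mathrm{pool}}_V$. The derivation you cite for Eq.~\eqref{eq:formula-6-10} applies Hoeffding separately to the candidate and current-agent pooled means, each at $\log(8/\beta_{\mathrm{cls}})$. That costs $\beta_{\mathrm{cls}}/2$, so each rectangle carries risk $5\beta_{\mathrm{cls}}/4$. The hypothesis $N_{\mathrm{meas}}\beta_{\mathrm{cls}}\le\beta_{\mathrm{rect}}$ then fails to close the union bound, and counting the shared $\widehat Z_k$ event only once still leaves you above $N_{\mathrm{meas}}\beta_{\mathrm{cls}}$.

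The paper fixes this with one two-sided Hoeffding bound on the whole signed sum of the $mN_1$ candidate and $m_0N_1$ current-agent rewards, taken conditional on the tasks and the frozen pool. The squared ranges of that sum total $a+c$, where $a=(mN_1)^{-1}$ and $c=(m_0N_1)^{-1}$. With $h=\log(8/\beta_{\mathrm{cls}})$, the radius $\sqrt{h(a+c)/2}$ has risk $\beta_{\mathrm{cls}}/4$ and is at most $\sqrt{ha/2}+\sqrt{hc/2}=\varepsilon^{\mathrm{pool}}_V$.

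Your fourth bullet also has to drop the concentration part of Eq.~\eqref{eq:formula-6-11}. Once the rollout fluctuation for $D$ is absorbed into the joint Hoeffding at radius $\varepsilon^{\mathrm{dr}}_R$, a separate Bernstein event would cost another $\beta_{\mathrm{cls}}/4$. It would also not reproduce $\varepsilon^{\mathrm{dr}}_D=\varepsilon^{\mathrm{Jen}}_V+\varepsilon^{\mathrm{dr}}_R$. With these two adjustments your proof coincides with the paper's.
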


The first item is also supplied by a reset followed by an independent task-and-label generator for every record, or by an $m$-dependent stream with all selected records spaced beyond its dependence range and separated from the pre-audit sigma-field. A reset alone removes dependence on pre-reset history but does not make later records mutually independent. For these stream-based examples, fix a deterministic task cutoff $b$: the pre-audit history, context, current agent, and frozen pool are determined by tasks through $b$ and auxiliary randomness independent of the task stream. Audit indices after $b$ are fixed in advance, or their complete schedule is sampled independently of the task stream; the schedule is not changed in response to observed audit tasks or labels. Conditional on the selected tasks and frozen agent, stored labels are generated independently by the Bernoulli kernels in Definition~\ref{def:drawn-measurement}, using randomness separate from the task stream and pre-audit information; fresh evaluation uses its prescribed independent rollout kernels. The second item averages over history, context, pool, and audit data. For a $\beta$-mixing stream with fixed audit indices $s_1<\cdots<s_{n_R}$ and first gap $g_0=s_1-b$, the joint-TV product comparison permits
\begin{equation}
\beta_{\mathrm{mix}}
\le \beta(g_0)+\sum_{j=1}^{n_R-1}\beta(s_{j+1}-s_j).
\label{eq:formula-6-14b}
\end{equation}
If all these gaps are at least $\ell$, this is at most $n_R\beta(\ell)$; the same bound holds for an independent random schedule satisfying those gaps, by conditioning on the schedule. If a reset starts a fresh stationary stream independent of the pre-audit information, it removes only the first term and gives $(n_R-1)\beta(\ell)$.

\begin{proof}[Proof of Proposition~\ref{prop:drawn-measurement-coverage}.] We first prove the result in the conditional reference experiment.  There it is legitimate to condition on $(\mathcal H_k,\Xi_k=\xi)$, which fixes $A_k$, $\pi_{k,\xi}$, $Z_k$, $\mathcal{D}_{F,k}$, and $\mathcal{D}_{R,k}$.  The joint-TV statement is transferred after this conditional proof, under the joint distribution.

\emph{Step 1: the reference sample supplies random-size samples from both conditional distributions.} In the reference experiment, $t^R_j\stackrel{\mathrm{iid}}{\sim}\mathcal{D}_{\text{user}}$, and the stored label satisfies $\phi^R_j\mid t^R_j\sim\mathrm{Bernoulli}(\psi_{A_k}(t^R_j))$ by Eq.~\eqref{eq:formula-3-1}. For measurable $B$,
\[
\mathbb{P}\bigl(t^R_j\in B\mid\phi^R_j=1\bigr)
=\frac{\int_B\psi_{A_k}\,d\mathcal{D}_{\text{user}}}{\int\psi_{A_k}\,d\mathcal{D}_{\text{user}}}
=\frac{\int_B\psi_{A_k}\,d\mathcal{D}_{\text{user}}}{Z_k}
=\mathcal{D}_{F,k}(B)
\]
by Eq.~\eqref{eq:formula-3-2} and Eq.~\eqref{eq:formula-3-7}, and the $\phi^R_j=0$ computation gives $\mathcal{D}_{R,k}$ likewise. Both require $Z_k\in(0,1)$, the standing assumption of \S{}\ref{sec:qualified-modifications}. Conditional on the full label vector, the tasks factorise: the $N_1$ failure-labelled tasks are i.i.d.\ from $\mathcal{D}_{F,k}$, the $N_0$ retained-labelled tasks are i.i.d.\ from $\mathcal{D}_{R,k}$, and the two groups are independent. Moreover $N_1\sim\mathrm{Binomial}(n_R,Z_k)$ and $\widehat Z_k=N_1/n_R$.
\begin{equation}
\Pr(t_j^R\in dt\mid\phi_j^R=1)=\mathcal D_{F,k}(dt),\qquad
\Pr(t_j^R\in dt\mid\phi_j^R=0)=\mathcal D_{R,k}(dt).
\label{eq:formula-6-12}
\end{equation}

\emph{Step 2: which implementations inherit the reference experiment.} An independent audit generator, including one started after a reset, gives Step 1 directly. An $m$-dependent stream sampled at gaps exceeding $m$ and separated by such a gap from the pre-audit sigma-field also gives exact independence. A reset without independent post-reset sampling does not.

For ordinary $\beta$-mixing under the cutoff and advance-scheduling conditions above, compare the whole joint experiment to the reference measure retaining the same pre-audit history, context and pool, with conditionally independent reference audit tasks. Successive product-measure comparisons over the initial and record-to-record gaps give Eq.~\eqref{eq:formula-6-14b} for fixed indices. For an independent random schedule, condition on its realized indices and average the comparison; a common lower gap bound gives the displayed $n_R\beta(\ell)$ allowance. Applying the common independent marking and fresh-evaluation kernels does not increase total variation. Each reference event therefore transfers to the actual joint experiment with one loss of at most $\beta_{\mathrm{mix}}$. A supplied whole-window coupling with that defect is also sufficient. This argument gives marginal event bounds.

Conditional on the reference label vector, the random-size groups earn the $q_F=1$ radii: no stopping rule has selected their sizes, and no blocking factor appears inside either conditional mean.  If a group is empty, assigning the corresponding coordinate its full feasible interval makes the rectangle valid without claiming informativeness.

\emph{Step 3: per-modification radii.} Failed outputs have deterministic rectangle coverage. Fix a valid drawn modification $\Delta C_i\in\mathcal M$; condition on the frozen pool in the stated reference experiment, so $\mathrm{Adv}_{A_k}(\tilde A_i,\cdot)$ is a fixed function on the measurement sample. Conditional on the label vector:

\emph{(a) $L$.} On $N_1>0$, $\frac1{N_1}\sum_{t\in F^{\mathrm{dr}}}\mathrm{Adv}_{A_k}(\tilde A_i,t)$ is, conditional on the label vector, an i.i.d.\ mean of a range-$2$ variable, so Hoeffding gives $\varepsilon^{\mathrm{dr}}_F(N_1)=\sqrt{2\log(8/\beta_{\mathrm{cls}})/N_1}$. There are $mN_1$ candidate rewards and $m_0N_1$ current-agent rewards. Given the tasks and frozen pool, their independent signed weighted sum has squared range lengths totaling $a+c$, where $a=(mN_1)^{-1}$ and $c=(m_0N_1)^{-1}$. For $h=\log(8/\beta_{\mathrm{cls}})$ its direct two-sided Hoeffding radius $\sqrt{h(a+c)/2}$ has failure risk $\beta_{\mathrm{cls}}/4$ and is at most the retained radius $\sqrt{ha/2}+\sqrt{hc/2}=\varepsilon_V^{\mathrm{pool}}$. Thus the entire pooled difference uses one $\beta_{\mathrm{cls}}/4$ event. Hence $|\hat\mu^{(i)}_F-\mu^{(i)}_F|\le\varepsilon^{\mathrm{pool}}_V+\varepsilon^{\mathrm{dr}}_F(N_1)$. Since $\hat L^{(i)}=\hat Z_k\hat\mu^{(i)}_F$ and $L^{(i)}=Z_k\mu^{(i)}_F$, the decomposition in the proof of Lemma~\ref{lem:improvement-estimation} applies verbatim --- $|\hat Z_k-Z_k|\,|\hat\mu_F|\le\varepsilon_Z$ against $|\hat\mu_F|\le1$, plus $Z_k|\hat\mu_F-\mu_F|$ bounded on $\mathcal{E}_Z$ by $(\hat Z_k+\varepsilon_Z)(\varepsilon^{\mathrm{pool}}_V+\varepsilon^{\mathrm{dr}}_F(N_1))$ --- which is the first display of Eq.~\eqref{eq:formula-6-13}.  For $N_1=0$ the declared vacuous interval for $L$ covers by construction.

\emph{(b) $D_R$.} On $N_0>0$, $\hat D^{(i)}=\frac1{N_0}\sum_{t\in R^{\mathrm{dr}}}|\hat\delta_i(t)|$ is, conditional on the label vector, an i.i.d.\ mean of a range-$1$ variable over $\mathcal{D}_{R,k}$, which by Definition~\ref{def:retained-reward-change} is the measure $D_R$ is defined against. Proposition~\ref{prop:rollout-jensen-bounds}(\ref{part:jensen-direction})--(\ref{part:jensen-magnitude}) gives $|\delta_i(t)|\le\mathbb{E}|\hat\delta_i(t)|\le|\delta_i(t)|+\sigma_V\sqrt{1/m+1/m_0}$ pointwise, so $\mathbb{E}[\hat D^{(i)}]\in[D^{(i)}_R,\,D^{(i)}_R+\varepsilon^{\mathrm{Jen}}_V]$, and Hoeffding around that mean gives $\varepsilon^{\mathrm{dr}}_R(N_0)=\sqrt{\log(8/\beta_{\mathrm{cls}})/(2N_0)}$. Hence $|\hat D^{(i)}-D^{(i)}_R|\le\varepsilon^{\mathrm{Jen}}_V+\varepsilon^{\mathrm{dr}}_R(N_0)$, the second display of Eq.~\eqref{eq:formula-6-13}.  For $N_0=0$ the vacuous interval for $D_R$ covers by construction.

\emph{Step 4: union, confidence bounds, and transfer.} The failure-task mean, the complete pooled rollout difference, the retained absolute-difference mean, and the failure-rate estimate each have failure risk at most $\beta_{\mathrm{cls}}/4$. The first three statements are integrated over the random label vector; the failure-rate event is bounded in the original reference sampling experiment. Hence each candidate rectangle has failure risk at most $\beta_{\mathrm{cls}}$. A union over the $N_{\mathrm{meas}}$ modification-wise rectangle events costs at most $N_{\mathrm{meas}}\beta_{\mathrm{cls}}\le\beta_{\mathrm{rect}}$; the shared $\widehat Z_k$ event need only be counted once, so this allocation is conservative.  On their intersection, the inside/true/outside indicator containment of Theorem~\ref{thm:uniform-reachability-bounds} holds simultaneously for all targets.  The latent pairs are i.i.d.\ pushforwards of $C_i\sim\pi_{k,\xi}$, so the VC event supplies the uniform empirical radius $r_{\mathrm{VC}}$ at error probability $\beta_{\mathrm{emp}}$.  This proves the first claim conditionally in each of the exact reference implementations.  In the ordinary-mixing case, transfer this \emph{joint event} through the joint-TV comparison of Step 2, subtracting $\beta_{\mathrm{mix}}$ once.  The result is marginal coverage of the random conditional reachability, not a pointwise-in-history guarantee. \end{proof}

\subsection{Additional fixed-suite guarantees}\label{app:suite-distribution-comparison}
The fixed-suite validation rule validates a modification when $\hat L^{\mathrm{ev}}\ge\tau$ and $\hat D_R^{\mathrm{ev}}\le\delta$. Let $\mathcal E_{\mathrm{suite}}$ be the joint event that, for every modification evaluated by this rule, $|\hat L^{\mathrm{ev}}-L^{\mathrm{ev}}|\le\varepsilon_L^{\mathrm{ev}}$ and $|\hat D_R^{\mathrm{ev}}-D_R^{\mathrm{ev}}|\le\varepsilon_D^{\mathrm{ev}}$. The following result uses radii for which $\mathbb P(\mathcal E_{\mathrm{suite}})\ge1-\beta_{\mathrm{gate}}$.
\begin{theorem}[Fixed-suite guarantee]
\label{thm:fixed-suite-guarantee}
For every candidate agent \(\widetilde A\),
\begin{equation}
J^{\mathrm{ev}}(\widetilde A)-J^{\mathrm{ev}}(A_k)
\ge wL^{\mathrm{ev}}(\widetilde A)-(1-w)D_R^{\mathrm{ev}}(\widetilde A).
\label{eq:formula-6-3}
\end{equation}
On $\mathcal E_{\mathrm{suite}}$, every modification validated by this validation rule satisfies
\begin{equation}
J^{\mathrm{ev}}(\tilde A)>J^{\mathrm{ev}}(A_k) \qquad\text{whenever}\qquad \tau \;>\; \varepsilon^{\mathrm{ev}}_L+\frac{1-w}{w}\bigl(\delta+\varepsilon^{\mathrm{ev}}_D\bigr).
\label{eq:formula-6-4}
\end{equation}
\end{theorem}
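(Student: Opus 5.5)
The plan is to mirror the population argument of Theorem~\ref{thm:expected-reward-improvement} on the finite suite, and then append the coverage step exactly as in the proof of Theorem~\ref{thm:one-step-guarantee}.

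\emph{Decomposition.} By definition of \(J^{\mathrm{ev}}\) and linearity,
\[
J^{\mathrm{ev}}(\widetilde A)-J^{\mathrm{ev}}(A_k)
= w\,\frac{1}{|F^{\mathrm{ev}}|}\sum_{t\in F^{\mathrm{ev}}}\mathrm{Adv}_{A_k}(\widetilde A,t)
+(1-w)\,\frac{1}{|R^{\mathrm{ev}}|}\sum_{t\in R^{\mathrm{ev}}}\mathrm{Adv}_{A_k}(\widetilde A,t).
\]
The first term is exactly \(wL^{\mathrm{ev}}(\widetilde A)\) by Eq.~\eqref{eq:formula-6-2}. For the second term, the pointwise inequality \(\mathrm{Adv}\ge-|\mathrm{Adv}|\) (equivalently, the triangle inequality applied to the finite average) bounds the retained average below by \(-D_R^{\mathrm{ev}}(\widetilde A)\). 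Together these give Eq.~\eqref{eq:formula-6-3} for every candidate agent, with no probabilistic content. Overlapping suites cause no difficulty, because \(J^{\mathrm{ev}}\) is defined group-wise; this is the convention of Remark~\ref{rem:suite-distribution-interpretation}.

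\emph{Validation step.} Work on \(\mathcal E_{\mathrm{suite}}\) and take a validated modification. Then \(\hat L^{\mathrm{ev}}\ge\tau\), so \(L^{\mathrm{ev}}\ge\tau-\varepsilon_L^{\mathrm{ev}}\). Likewise \(\hat D_R^{\mathrm{ev}}\le\delta\), so \(D_R^{\mathrm{ev}}\le\delta+\varepsilon_D^{\mathrm{ev}}\). Substituting both into Eq.~\eqref{eq:formula-6-3} gives
\[
J^{\mathrm{ev}}(\widetilde A)-J^{\mathrm{ev}}(A_k)\ge w(\tau-\varepsilon_L^{\mathrm{ev}})-(1-w)(\delta+\varepsilon_D^{\mathrm{ev}}).
\]
Since \(w\in(0,1)\) (both task sets are nonempty), the right-hand side is positive exactly when \(\tau>\varepsilon_L^{\mathrm{ev}}+\tfrac{1-w}{w}(\delta+\varepsilon_D^{\mathrm{ev}})\). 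This yields Eq.~\eqref{eq:formula-6-4}.

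\emph{Main obstacle.} The only delicate point is making sure \(\mathcal E_{\mathrm{suite}}\) is a single joint event covering every evaluated modification simultaneously, so that the conclusion holds for all validated candidates at once rather than per candidate. This is handled by the definition of \(\mathcal E_{\mathrm{suite}}\) and the radii assumed to give \(\mathbb P(\mathcal E_{\mathrm{suite}})\ge1-\beta_{\mathrm{gate}}\); the radii of Eqs.~\eqref{eq:formula-6-10}--\eqref{eq:formula-6-11} with a union bound supply it. No further argument is needed beyond noting that the deterministic implication holds pathwise on that event.
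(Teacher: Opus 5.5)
Your proposal is correct and follows the paper's proof essentially step for step: you split $J^{\mathrm{ev}}$ by group, bound the retained average below by $-D_R^{\mathrm{ev}}$ pointwise, then on $\mathcal E_{\mathrm{suite}}$ substitute $L^{\mathrm{ev}}\ge\tau-\varepsilon_L^{\mathrm{ev}}$ and $D_R^{\mathrm{ev}}\le\delta+\varepsilon_D^{\mathrm{ev}}$ and divide by $w>0$. Your ``main obstacle'' paragraph is unnecessary, since the theorem assumes the joint event $\mathcal E_{\mathrm{suite}}$ and the conclusion is a pathwise implication on it.
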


\begin{proof}[Proof of Theorem~\ref{thm:fixed-suite-guarantee}.] Splitting the suite average by Eq.~\eqref{eq:formula-6-2},
\[
J^{\mathrm{ev}}(\tilde A)-J^{\mathrm{ev}}(A_k)=w\,L^{\mathrm{ev}}+(1-w)\cdot\frac{1}{|R^{\mathrm{ev}}|}\sum_{t\in R^{\mathrm{ev}}}\mathrm{Adv}_{A_k}(\tilde A,t),
\]
and the second sum is bounded below by $-\frac{1}{|R^{\mathrm{ev}}|}\sum|\mathrm{Adv}|=-D^{\mathrm{ev}}_R$, giving Eq.~\eqref{eq:formula-6-3}. On the validated event $\hat L^{\mathrm{ev}}\ge\tau$ and $\hat D^{\mathrm{ev}}_R\le\delta$; on $\mathcal E_{\mathrm{suite}}$, $L^{\mathrm{ev}}\ge\tau-\varepsilon^{\mathrm{ev}}_L$ and $D^{\mathrm{ev}}_R\le\delta+\varepsilon^{\mathrm{ev}}_D$. Hence Eq.~\eqref{eq:formula-6-3} gives
\[
J^{\mathrm{ev}}(\tilde A)-J^{\mathrm{ev}}(A_k)
\ge w(\tau-\varepsilon^{\mathrm{ev}}_L)-(1-w)(\delta+\varepsilon^{\mathrm{ev}}_D),
\]
which is positive precisely under Eq.~\eqref{eq:formula-6-4}. \end{proof}

\section{Experimental Settings and Supplementary Results}
\label{app:experiments}

\subsection{DS-1000 repeated evaluation and diagnosis}
\label{app:section-I-3-4}

\paragraph{Setting.}
We study persistent Python solver-harness modifications on DS-1000 under three generation processes: independent generation, revision with label-free traces, and revision with outcome feedback. Each modification edits the solver harness while the LLM remains fixed. Label-free revision uses generated solutions and runtime status without correctness labels; outcome-feedback revision additionally receives pass/fail labels on generation tasks. Generation has access to the public task prompt and these traces, but not hidden answers, validation records, or audit results.

We use the API model \texttt{gpt-5.4-mini-2026-03-17}. Generation uses temperature $0.8$ and an 8192-token response limit; the seed harness makes one solver call at temperature $0$ with a 4096-token limit. Modifications may change solver prompts, decoding parameters, extraction, call count, and orchestration, so candidates are evaluated under their resulting execution policies rather than an equalized inference budget.

The task assignment was fixed before inspecting outcomes. Of the 1000 DS-1000 tasks, 516 used in the pilot studies are excluded; 480 of the remaining 484 are assigned once, with library composition proportional to the remaining task bank. Each of four contexts contains 8 generation tasks, 8 low-budget validation tasks, 48 high-budget validation tasks, and 56 independent audit tasks. For each context and generation process, six independent pools each contain four terminal outputs. The same pools are assessed using both validation budgets. The primary target is $T_0=(0.05,0.50)$, and the prespecified target grid is $\{0.05,0.10\}\times\{0.25,0.50\}$.

We repeat the audit using three fresh valid runs for every current-agent--task and valid-candidate--task pair, keeping the candidates and task assignments fixed. The original current-agent outcomes determine the failure and retained groups; they are not included in the new three-run means. Within each context, all candidates share the same three-run current-agent reference. The repeated audit contains 48,720 valid outcomes: $4\times56\times3$ for the current agent and $286\times56\times3$ for valid candidates. Wrong answers count as valid failures; infrastructure-invalid runs are replenished according to outcome-independent validity criteria. Two generated outputs are invalid and remain in the generation denominators with zero target-membership and validation indicators.

\paragraph{Repeated audit estimates.}
Fix one context and a valid candidate. Let $F$ and $R$ be the failure and retained groups determined by the original current-agent outcomes, with $n=|F|+|R|=56$ and $w=|F|/n$. For task $t$, write $X_{tr},Y_{tr}\in\{0,1\}$ for the success outcomes of the current and candidate agents on fresh run $r\in\{1,2,3\}$, and define their means $\bar x_t=3^{-1}\sum_{r=1}^3X_{tr}$ and $\bar y_t=3^{-1}\sum_{r=1}^3Y_{tr}$. Equation~\eqref{eq:ds1000-repeated-effects} gives the point estimates. Conditional on the fixed tasks, agents, and original groups, let $p_t$ and $q_t$ denote the two agents' success probabilities. The expected-reward effects are
\begin{equation}
L=\frac1n\sum_{t\in F}(q_t-p_t),\qquad
D=\frac1{|R|}\sum_{t\in R}|q_t-p_t|,\qquad
\Delta=\frac1n\sum_t(q_t-p_t).
\label{eq:ds1000-fixed-effects}
\end{equation}
Thus $L=wL^{\mathrm{ev}}$ and $D=D_R^{\mathrm{ev}}$ for the fixed-suite effects in Definition~\ref{def:fixed-suite-target}, Appendix~\ref{app:section-I-3-1}. The target $L\ge 0.05$ uses the weighted contribution; its unweighted equivalent is $L^{\mathrm{ev}}\ge 0.05/w$. The confidence analysis assumes fresh runs are independent within and across tasks and between the two agents, conditional on the fixed tasks, agents, and original records. Candidates may share current-agent observations; the joint bounds below do not require independence between candidates.

\paragraph{Original validation and adoption decisions.}
Validation retains its original data and rule. For a valid candidate, let $x_i,y_i\in\{0,1\}$ be the original current-agent and candidate outcomes on validation task $i$. With $n_{\mathrm v}$ validation tasks and $n_{R,\mathrm v}=\sum_i x_i$, the rule uses the observed improvement proportion $n_{\mathrm v}^{-1}\sum_i(1-x_i)y_i$ and retained-loss proportion $n_{R,\mathrm v}^{-1}\sum_i x_i(1-y_i)$. If $n_{R,\mathrm v}=0$, the retained-loss point value is zero and its interval is $[0,1]$. Clopper--Pearson intervals are applied to these proportions; a Hoeffding interval of range width two is also computed for mean reward change. The familywise risk is $0.05$ per four-output pool, divided by three times the output count, including invalid outputs. At target $T=(\lambda,\delta)$, the rule validates a modification when the improvement lower endpoint reaches $\lambda$ and the retained-loss upper endpoint is at most $\delta$. Among modifications passing the high-budget rule, selection maximizes observed mean reward change on the validation tasks. Repeated audit results do not alter these recorded decisions.

\paragraph{Point summaries and uncertainty.}
In the summaries below, write $G$ for the experimental validation rule and $\sigma$ for its selector based on the recorded validation scores. For output $j$ in pool $b$, let
\begin{align*}
Y_{bj}(T)&=\mathbf1\{\mathrm{valid}_{bj},\
\widehat L_{bj}\ge\lambda,\
\widehat D_{bj}\le\delta\},\\
A^G_{bj}(T)&=\mathbf1\{\underline L^G_{bj}\ge\lambda,\
\overline D^G_{bj}\le\delta\},
\end{align*}
where $\widehat L_{bj},\widehat D_{bj}$ are the repeated-audit estimates, and $\underline L^G_{bj},\overline D^G_{bj}$ are the original validation endpoints under rule $G$. Both indicators are zero for invalid outputs. In a context--process cell with $B=6$ pools and $M=4B=24$ outputs, compute
\begin{align*}
\widehat P_{\rm pt}(T)&=\frac1M\sum_{b,j}Y_{bj}(T),
&\widehat Q^G(T)&=\frac1M\sum_{b,j}Y_{bj}(T)A^G_{bj}(T),\\
\widehat H^G(T)&=\frac1B\sum_b\mathbf1\{\max_jY_{bj}(T)A^G_{bj}(T)=1\},
&\widehat R^{G,\sigma}(T)&=\frac1B\sum_bY_{b,\sigma(b)}(T),
\end{align*}
where the selected-output indicator is zero if no output is selected. The fraction of pools containing an output meeting the audit point target replaces $Y_{bj}A^G_{bj}$ by $Y_{bj}$ in the definition of $\widehat H^G$. Each reported summary averages the four context values.

For the empirical summary intervals, 5000 hierarchical-bootstrap replicates resample contexts, pools within context, and outputs within pool; each replicate recomputes the pool event and selection. We report 90\% percentile intervals using seed 20260815. Validation-budget comparisons use the same resampled pools and outputs for both budgets. These intervals describe variation in the empirical summaries obtained from the three-run means, rather than confidence bounds for the true generation probability. Resampling an all-zero summary produces $[0,0]$, which does not bound an unseen event probability.

\paragraph{Candidate-level diagnosis.}
In a retrospective analysis, we select all candidates whose original audit improvement lower bound is at least 0.05 and whose original retained-loss upper bound is at most 0.50. This selects three outcome-feedback candidates, labeled A, B, and C in Table~\ref{tab:ds1000-audit-diagnosis}. The original audit intervals use the original binary-outcome proportions with risk $0.05$ per context--process group of 24 outputs, divided by three times the output count. For this specified screen based on the original records, we use the fresh runs to calculate nine one-sided bounds: a lower bound for $L$, an upper bound for $D$, and a lower bound for $\Delta$ for each candidate. Each endpoint receives error probability $a=0.05/9$.

To account for different success probabilities across tasks, consider a sum $K$ of $N$ independent Bernoulli variables with mean success probability $\bar p$. Hoeffding's binomial comparison \citep[Theorem~4]{hoeffding1956distribution} gives
\[
\Pr(K\ge k)\le\Pr\{\operatorname{Bin}(N,p_0)\ge k\}
\quad\text{when }\bar p\le p_0\text{ and }k\ge Np_0+1.
\]
Writing $B^{-1}(a;s,t)$ for the $a$-quantile of the $\operatorname{Beta}(s,t)$ distribution, inversion gives a lower confidence bound for $\bar p$,
\begin{equation}
b(k,N,a)=
\begin{cases}
0,&k=0,\\
\min\{B^{-1}(a;k,N-k+1),(k-1)/N\},&k>0.
\end{cases}
\label{eq:ds1000-bernoulli-bound}
\end{equation}
The cap $(k-1)/N$ enforces the comparison's integer condition; an upper bound is $1-b(N-k,N,a)$, obtained by applying the same construction to $N-K$. These bounds allow different Bernoulli probabilities rather than requiring identically distributed observations.

For a task group $S$, put $h_S=3|S|$ and $K_S=\sum_{t\in S,r}\{Y_{tr}+(1-X_{tr})\}$. This is a sum of $2h_S$ independent Bernoulli variables with mean success probability $(1+|S|^{-1}\sum_{t\in S}(q_t-p_t))/2$. With $S=F$ and $S=F\cup R$, respectively, the lower bounds are
\[
\underline L=w\{2b(K_F,2h_F,a)-1\},\qquad
\underline\Delta=2b(K_{F\cup R},6n,a)-1.
\]
For retained tasks, pair current-agent and candidate runs by the fixed run index and let $K_D=\sum_{t\in R,r}|Y_{tr}-X_{tr}|$. Since $|q_t-p_t|\le\mathbb E|Y_{tr}-X_{tr}|$, an upper bound on the mean disagreement probability is also an upper bound on $D$:
\[
\overline D=1-b(3|R|-K_D,3|R|,a).
\]
A union bound over the nine endpoints gives at least 95\% joint coverage for this fixed screening procedure, conditional on the original records and the stated independence model. Shared current-agent observations across the three candidates do not affect this guarantee.

\begin{table}[H]
\centering
\small
\setlength{\tabcolsep}{4pt}
\caption{\textbf{Repeated-audit effects for the diagnostic candidates.} Point estimates use three fresh runs per agent and task on the fixed audit set. The nine one-sided bounds allocate total error probability 0.05 across the three candidates selected from the original audit records; displayed endpoints are rounded to the nearest four decimal places. Candidate A meets $L\ge 0.05$, $D\le 0.50$, and $\Delta>0$ under these bounds.}
\label{tab:ds1000-audit-diagnosis}
\begin{tabular*}{\linewidth}{@{\extracolsep{\fill}}lrrrrrr@{}}
\toprule
Candidate & $\widehat L$ & Lower bound & $\widehat D$ & Upper bound & $\widehat\Delta$ & Lower bound\\
\midrule
A & $0.1726$ & $0.0551$ & $0.1176$ & $0.2765$ & $0.1964$ & $0.0556$\\
B & $0.1071$ & $-0.0113$ & $0.0980$ & $0.3012$ & $0.1012$ & $-0.0402$\\
C & $0.1369$ & $0.0266$ & $0.1515$ & $0.3653$ & $0.1607$ & $0.0195$\\
\bottomrule
\end{tabular*}
\end{table}

For candidate A, the failure group has 39 tasks: across 117 fresh runs per agent, the current agent succeeds 12 times and the candidate 41 times. The retained group has 17 tasks and 6 disagreements among 51 paired runs. Across all 168 runs per agent, the current agent succeeds 56 times and the candidate 89 times. Bounds stated as inequalities in Section~\ref{sec:empirical-studies} are rounded outward to four decimal places to preserve the guarantees.

Candidate A's original high-budget validation improvement estimate is 0.1458, but its lower endpoint is 0.0378, below 0.05; the retained-loss upper endpoint is 0.4196, below 0.50. Its pool adopts no modification. B and C also fail the original validation rule. Thus repeated evaluation supports A's expected-reward improvement on the fixed audit set, and the original decision record identifies the improvement condition that prevented adoption. Validation and audit use different task sets; the example diagnoses this recorded decision and does not estimate reachability over the full user-task distribution.

\paragraph{Supplementary results.}
At $T_0$, the repeated-audit point estimates meet the target for $37/96$, $25/96$, and $33/96$ outputs under independent generation, label-free revision, and outcome-feedback revision. The numbers of pools containing such an output are $16/24$, $13/24$, and $16/24$. The high-budget rule validates two independently generated outputs, both meeting the audit point target and both selected from their respective pools. Table~\ref{tab:ds1000-pqhr-full} gives the full summaries.

\begin{table}[H]
\centering
\small
\setlength{\tabcolsep}{4pt}
\caption{\textbf{DS-1000 summaries from repeated audit estimates at $T_0=(0.05,0.50)$.} Target membership uses the mean of three fresh runs per agent and audit task; validation retains the original high-budget decisions. Brackets give 90\% hierarchical-bootstrap intervals for the empirical summaries.}
\label{tab:ds1000-pqhr-full}
\begin{tabular*}{\linewidth}{@{\extracolsep{\fill}}lcccc@{}}
\toprule
Generation process & $\widehat P_{\rm pt}$ & $\widehat Q$ & $\widehat H$ & $\widehat R$ \\
\midrule
Independent & $0.385$ & $0.021$ & $0.083$ & $0.083$ \\
& $[0.135,0.646]$ & $[0.000,0.073]$ & $[0.000,0.167]$ & $[0.000,0.167]$ \\
Label-free revision & $0.260$ & $0.000$ & $0.000$ & $0.000$ \\
& $[0.031,0.479]$ & $[0.000,0.000]$ & $[0.000,0.000]$ & $[0.000,0.000]$ \\
Outcome-feedback revision & $0.344$ & $0.000$ & $0.000$ & $0.000$ \\
& $[0.125,0.563]$ & $[0.000,0.000]$ & $[0.000,0.000]$ & $[0.000,0.000]$ \\
\bottomrule
\end{tabular*}
\end{table}

At the stricter target $T_1=(0.10,0.25)$, estimated target membership is $8/96$, $4/96$, and $14/96$, with fractions $0.083\ [0.021,0.156]$, $0.042\ [0.000,0.094]$, and $0.146\ [0.021,0.302]$. The corresponding numbers of pools are $8/24$, $4/24$, and $8/24$; $\widehat Q$, $\widehat H$, and $\widehat R$ are zero for all three processes. Figure~\ref{fig:ds1000-pqhr} presents the context-specific estimates and their comparison with validation outcomes.

\begin{figure}[htbp]
\centering
\includegraphics[width=\textwidth]{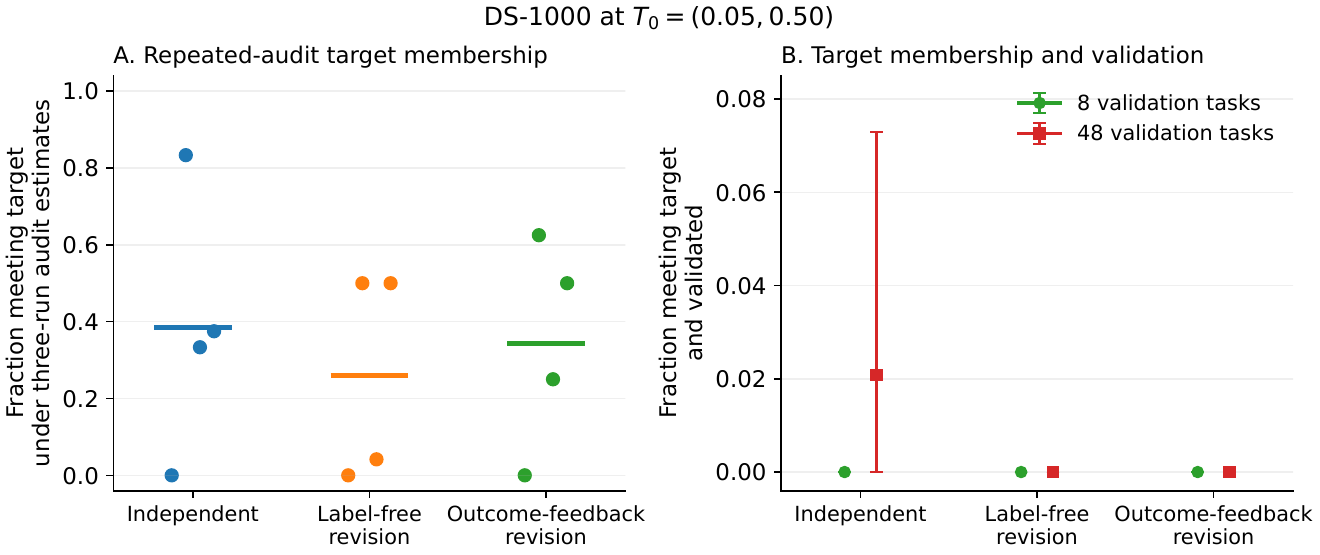}
\caption{\textbf{Repeated evaluation and validation budgets on DS-1000.} At $T_0=(0.05,0.50)$, \textbf{(A)} shows each context's fraction of outputs meeting the target under three-run audit estimates, with a horizontal line for the mean; \textbf{(B)} shows the fraction also validated using 8 or 48 validation tasks. Error bars give 90\% hierarchical-bootstrap intervals for the empirical summaries.}
\label{fig:ds1000-pqhr}
\end{figure}

\section{Further Directions for Harness Self-Evolution}
\label{app:scope-open-problems}

\paragraph{Generation and structured updates.}
The candidate-pool analysis permits complete generation runs with internal feedback and repair (Appendices~\ref{app:reachability-scope} and~\ref{app:proofs-generation}). Adapting later runs to earlier pool outputs or evaluation results raises the further problem of tracking their probabilities of producing qualified modifications. For fixed repair and finite assembly, estimating those output probabilities and exploiting dependence among assembled outputs could sharpen the available bounds for concrete search procedures (Appendix~\ref{app:section-G-6}).

After adoption, the modified harness affects user-task performance and the generation and effects of later modifications. Structural relations between these changes could support lower bounds on subsequent reachability under specified update mechanisms; the unrestricted-update construction in Appendix~\ref{app:later-step-proof} identifies why such relations matter.

\paragraph{Validation and measurement.}
The unresolved fraction in the reachability estimate depends on evaluation precision and on candidate effects near the improvement and retained-change requirements (Appendix~\ref{app:measurement-precision}). Quantifying this dependence could inform allocation between generating candidates and evaluating their effects. A related question is the validation probability of a given qualified modification at fixed error risk and evaluation budget, as its improvement and retained change vary relative to those requirements.

Variance-adaptive bounds and pooled rollout estimates (Appendices~\ref{app:section-D} and~\ref{app:pooled-rollouts}) provide starting points for allocating task samples and repeated rollouts under a fixed budget while controlling error risk and the validation probability of qualified candidates. The same allocation question extends to dependent task samples under the separation and coverage conditions in Appendix~\ref{app:proofs-evaluation}.

Repeated use of a fixed evaluation suite can make later agent states and generation depend on that suite. The conditional guarantee for the current state and suite is described in Remark~\ref{rem:adaptive-fixed-suite}. Extending these measurements to fresh tasks requires control of the difference between suite-specific effects and effects under the intended task distribution, with sampling and representativeness conditions matched to that claim. Allocating data between evolution and independent evaluation remains part of this question.

\paragraph{Retention requirements and application conditions.}
Tail constraints control average absolute expected-reward changes within every group having at least a specified probability under the retained-task distribution (Appendix~\ref{app:tail-retention}). Their implementation with finite rollouts and dependent task samples calls for corresponding estimation guarantees and evaluation requirements.

Applications also require ways to obtain the reward--utility discrepancy bound used in Appendix~\ref{app:reward-fidelity} and to model how user-task demand responds to agent changes. Partition-frequency estimates combined with reward-resolution conditions bound the expected-reward effect of task-distribution differences (Appendix~\ref{app:section-H-2}); predicting the response before adoption remains an application-modeling problem.

\end{document}